\documentclass{article}

\usepackage[main, final]{neurips_2026}
\usepackage{amsmath}
\usepackage{makecell}
\usepackage{bm}
\usepackage{graphicx}
\usepackage{algorithm}
\usepackage{algorithmicx}
\usepackage{algpseudocode}
\usepackage{multirow}
\usepackage{subcaption}
\usepackage{tikz}
\usepackage{wrapfig}
\usepackage{amsthm}
\usepackage{caption}

\newtheorem{theorem}{Theorem}[section]
\newtheorem{lemma}[theorem]{Lemma}
\newtheorem{proposition}[theorem]{Proposition}
\newtheorem{assumption}[theorem]{Assumption}
\newtheorem*{proposition*}{Proposition}
\newcommand{\DKL}{D_{\text{KL}}}

\usepackage[utf8]{inputenc} 
\usepackage[T1]{fontenc}    
\usepackage{hyperref}       
\usepackage{url}            
\usepackage{booktabs}       
\usepackage{amsfonts}       
\usepackage{nicefrac}       
\usepackage{microtype}      
\usepackage{xcolor}         
\usepackage{enumitem}

\title{Primal-Dual Flow Matching for Sample-Wise Constrained Generation}

\author{%
  Zhengyan Huan\textsuperscript{1,2}\qquad
  Peter Y. Lu\textsuperscript{1,2}\qquad
  Shuchin Aeron\textsuperscript{1,2}\\[3pt]
  {\normalfont\textsuperscript{1}Department of Electrical and Computer Engineering, Tufts University}\\{\normalfont\textsuperscript{2}NSF AI Institute for Artificial Intelligence and Fundamental Interactions}\\[3pt]
  {\normalfont\texttt{\{Zhengyan.Huan, Peter.Lu, Shuchin.Aeron\}@tufts.edu}}%
}

\begin{document}

\maketitle

\begin{abstract}

We consider the problem of learning to generate data while satisfying a prescribed \emph{sample-wise} constraint with a user-specified probability of constraint satisfaction. Starting with a population-level constrained negative log-likelihood minimization and using a flow-matching (FM) parameterization, under regularity assumptions and sufficient expressivity of the velocity field family, we derive guarantees on the probability of constraint satisfaction for the corresponding primal-dual optimization algorithm. We provide a practical implementation of this approach, called Primal-Dual Flow Matching (PDFM), which has direct control over constraint satisfaction rate and requires only binary membership oracle feedback for constraint satisfaction, avoiding the need for differentiable distances, projections, or special structure of the constraint set, such as convexity. We show that PDFM improves constraint satisfaction over vanilla FM and achieves competitive performance against constraint-aware baselines while maintaining distributional fidelity. Our code is available at \url{https://github.com/ZhengyanHuan/PDFM}.
\end{abstract}

\section{Introduction}

The utility of generative models for scientific applications highly depends on their ability to generate samples that obey scientific laws or satisfy an application-specific criterion, modeled via constraints that can be validated numerically. 
For example, flow- and diffusion-based generative models are increasingly used to emulate chaotic spatiotemporal systems \citep{lienen2024from,Du2024,Price2025}, where the chaotic dynamics result in high sensitivity to initial conditions and therefore unpredictable trajectories that can be interpreted and modeled probabilistically \citep{medio_lines_2001}. However, in these settings, distributional fidelity alone is rarely sufficient: a generated sample is useful only if it is also \emph{physically consistent}, e.g., satisfying boundary conditions, respecting conservation laws, or following a governing PDE \citep{PhysRevLett.126.098302,pcfm2025}. Another important example comes from molecular generation \citep{hoogeboom2022equivariant, song2023equivariant}, where one wants to generate new molecules such that they conform to structures with chemically useful properties \citep{bickerton2012quantifying,giaginis2018impact} that are numerically quantifiable.
\paragraph{Related work:} For constrained generation, one may divide existing literature into (a) Type of constraints, e.g., distributional constraints \citep{Khalafi} or sample-wise constraints [this work]. Among sample-wise constraints, one may have constraints with topological specification, such as convexity \citep{mirror}; constraints with known boundaries \citep{loureflect, fishmanreflect}; constraints with differentiable distances to them \citep{feng2025neural, pcfm2025}; or constraints that are known only via a membership oracle \citep{FMRE}; and (b) {Methods} used -- whether it is inference-time 
\citep{Christopher2024ConstrainedSW} \citep{pcfm2025} \citep{li2026hardflowhardconstrainedsamplingflowmatching} \citep{doobh} or training-time [this work], \citep{loureflect, fishmanreflect} \citep{reflectedfm} \citep{feng2025neural}  \citep{Giorgio} \citep{FMRE}. A more detailed review of these related works is provided in Appendix \ref{sec:relatedworks} and throughout the exposition we will highlight other closely related works.

For sample-wise constrained generation when the constraints are accessible via a membership oracle, we make the following \textbf{main contributions}:

\begin{enumerate}[leftmargin=*]
\setlength \itemsep{0pt}
    \item We develop a theoretically grounded approach, with provable guarantees on the probability of constraint satisfaction. In particular, leveraging an FM parameterization of a constrained negative log-likelihood minimization (Sections \ref{sec:setup}, \ref{sec:FMpara}), in Theorem \ref{theorem:constraintsatisfaction} we show that under mild assumptions the Total Variation (TV) norm between the optimal solution and the FM parameterized solution can be controlled in terms of approximation error between the flows due to parametric modeling, which in turn leads to overall control on constraint satisfaction. We then show that a \emph{theoretical} Primal-Dual Flow Matching (PDFM) Alg.~\ref{alg:pd} achieves this bound. 
    \item We then propose a practical implementation of PDFM  in Alg.~\ref{alg:pd_practical}, with theoretical guarantees discussed in Sec.~\ref{sec:PDFMguarantee}. Key innovations in this algorithm are motivated by deterministic policy gradient methods, where we build differentiable surrogates for  constraint satisfaction probability under the FM parameterization. This enables the method to work with \emph{only membership oracle access to the constraint set.} We note that this type of access to the constraint is the most flexible when the constraints are complex. 
    \item We provide an extensive evaluation of PDFM and show that compared to existing methods, it achieves competitive constraint satisfaction rates and distribution fidelity. For scientific applications, we consider the problem of learning to generate PDE-constrained spatiotemporal dynamics of chaotic systems and show that our method achieves superior performance while requiring only binary feedback. In a molecule generation task, PDFM improves molecule stability and distributional fidelity over FM over FM, without task-specific designs.
\end{enumerate}

\section{Constrained Generation Setup}

\label{sec:setup}

All notations used are summarized in Table~\ref{tab:notations} in Appendix~\ref{app:notations} for reference. 
Let $Q$ be the training-data distribution on $\mathbb{R}^d$ with density $q$ and let $P$ be the distribution on $\mathbb{R}^d$ to be modeled, with the corresponding density $p$. Let $\mathcal{C}\subset \mathbb{R}^d$ be a closed, application-specific constraint set, for example, the set of particle trajectories that satisfy a prescribed conservation-law tolerance. We denote $\bm{1}_{\mathcal{C}}(x)$ by the indicator function of the constraint set $\mathcal{C}$ that takes the value $1$ if $x \in \mathcal{C}$ and $0$ otherwise. 
Given a target constraint satisfaction rate $b\in(0,1)$, consider the following population-level problem:
\begin{equation}\label{eq:cvxoriginal}
\operatorname*{minimize}_{P}~~
   \mathbb{E}_{x\sim Q}[-\log p(x)]~~\mathrm{s.t.}~~ b-\int_\mathcal{C} p(x) dx\leq 0. \tag{OPT}
\end{equation}
\eqref{eq:cvxoriginal} requires that under the modeled distribution $P$, the probability of constraint satisfaction is at least $b$. In other words $\mathbb{E}_{x\sim P}[\mathbf{1}_{\mathcal{C}}(x)] = \int_{\mathcal{C}} p(x) dx \geq b$. Assume additionally that $\mathbb {E}_{x\sim Q}[\log q(x)]$ is finite, the problem can equivalently be restated as:
\begin{equation}
 \operatorname*{minimize}_{P}~~ \mathrm{KL}(Q||P) ~~\mathrm{s.t.}~~ b-\int_\mathcal{C} p(x) dx\leq 0,
\end{equation} where $\mathrm{KL}(Q||P) = \mathbb{E}_{x \sim Q}[\log \frac{q(x)}{p(x)}]$.

Define $Q(\mathcal{C}):= \int_{\mathcal C} q(x)dx, P(\mathcal{C}): = \int_{\mathcal C} p(x)dx $.
Let us consider two regimes: 1. $0<Q(\mathcal{C})<b$, and 2. $b\leq Q(\mathcal{C})\leq 1$. For the first regime, a quick observation shows that the solution $P=Q$ is infeasible due to the constraint. 
We additionally note that at the boundary $b=1$: if $Q(\mathcal{C})< 1$, then every feasible $P$ must satisfy $P(\mathcal{C})=1$ and $P(\mathcal{C}^c)=0$, while $Q(\mathcal{C}^{c})>0$ leads to infinite objective value. 
For the second regime, even though  $P=Q$ is the optimal solution, this problem remains practically non-trivial since the learned generative model may still violate the constraint due to the finite model capacity and approximation via limited samples.

One can immediately note that \ref{eq:cvxoriginal} is a convex optimization problem. 
Introducing a multiplier $\lambda \geq 0$, the Lagrangian is given by
\begin{equation}
    L(p,\lambda) = \mathbb{E}_{x\sim Q}[-\log p(x)]+\lambda\left(b-\int_{\mathcal{C}}p(x)dx\right).
\end{equation}
 
\begin{assumption}\label{ass:feasibility} 
Assume that $0<Q(\mathcal C)<b<1$, that there exists a strictly feasible
distribution $P$ satisfying
$b-\int_{\mathcal C}p(x)\,dx<0$,
and that, for every $\lambda\geq 0$ considered in the analysis, the minimum of
$L(\cdot,\lambda)$ is attained. We denote the corresponding minimizer by
$p^{(\lambda)}\in\arg\min_p L(p,\lambda)$, and let $P^{(\lambda)}$ be the
distribution with density $p^{(\lambda)}$.
\end{assumption}

\begin{lemma} \label{lemma:pstar}
Under Assumption~\ref{ass:feasibility},
    \begin{equation}
        p^{(\lambda^*)}(x) = \frac{bq(x)}{Q(\mathcal{C})}\bm{1}_{\mathcal{C}}(x)+\frac{(1-b)q(x)}{1-Q(\mathcal{C})}\bm{1}_{\mathcal{C}^c}(x),~~\lambda^* = \frac{1-Q(\mathcal{C})}{1-b}-\frac{Q(\mathcal{C})}{b}.
    \end{equation}
\end{lemma}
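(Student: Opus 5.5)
The plan is to compute the minimizer of $L(\cdot,\lambda)$ for a fixed $\lambda\ge 0$ in closed form, and then pick $\lambda^*$ by complementary slackness, using strong duality.

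\textbf{Step 1: reduce to two masses.} Write $m=P(\mathcal C)$. For fixed $m$, the objective separates into the restriction of $p$ to $\mathcal C$ and the restriction to $\mathcal C^c$. Conditioning on each piece and applying Gibbs' inequality (nonnegativity of KL between normalized restrictions) gives the optimal shapes
\begin{equation*}
p=\frac{m}{Q(\mathcal C)}\,q \ \text{ on } \mathcal C,
\qquad
p=\frac{1-m}{1-Q(\mathcal C)}\,q \ \text{ on } \mathcal C^c .
\end{equation*}
Here $p$ ranges over densities with $\int p=1$, and $Q(\mathcal C)\in(0,1)$ is guaranteed by Assumption~\ref{ass:feasibility}.

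\textbf{Step 2: one-dimensional problem.} Plugging these shapes in reduces $L(p,\lambda)$, up to constants, to
\begin{equation*}
g(m)=-Q(\mathcal C)\log m-(1-Q(\mathcal C))\log(1-m)+\lambda(b-m),
\end{equation*}
which is strictly convex on $(0,1)$. Its stationarity condition is
\begin{equation*}
\frac{Q(\mathcal C)}{m}-\frac{1-Q(\mathcal C)}{1-m}=-\lambda .
\end{equation*}
I will not solve this for general $\lambda$; I only need to show that it is satisfied at $m=b$ for the right $\lambda$.

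\textbf{Step 3: complementary slackness.} Slater's condition is part of Assumption~\ref{ass:feasibility}, and \eqref{eq:cvxoriginal} is convex. So strong duality holds and a dual optimum $\lambda^*$ exists, and any minimizer of $L(\cdot,\lambda^*)$ that is feasible and complementary slack is primal optimal.

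\begin{itemize}
\item Since $Q(\mathcal C)<b$, the choice $\lambda=0$ gives $m=Q(\mathcal C)$, which is infeasible. Hence $\lambda^*>0$, and the constraint is active: $m=b$.
\item Substituting $m=b$ into the stationarity condition gives
\begin{equation*}
\lambda^*=\frac{1-Q(\mathcal C)}{1-b}-\frac{Q(\mathcal C)}{b}.
\end{equation*}
\item This is positive because $Q(\mathcal C)<b$ (equivalently, $t\mapsto \frac{1-t}{1-b}-\frac tb$ is decreasing and vanishes at $t=b$). Substituting $m=b$ into the shapes of Step~1 gives the stated $p^{(\lambda^*)}$.
\end{itemize}

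\textbf{Main obstacle.} The delicate step is Step~1: justifying rigorously that the optimal shape within each region is proportional to $q$, and handling densities that vanish where $q>0$, which give infinite objective. I would handle this by writing, for any admissible $p$ with $P(\mathcal C)=m$,
\begin{equation*}
\mathbb E_Q[-\log p]=g_0(m)+Q(\mathcal C)\,\mathrm{KL}(Q_{\mathcal C}\|P_{\mathcal C})+(1-Q(\mathcal C))\,\mathrm{KL}(Q_{\mathcal C^c}\|P_{\mathcal C^c})+\text{const}.
\end{equation*}
Here $g_0(m)=-Q(\mathcal C)\log m-(1-Q(\mathcal C))\log(1-m)$, and $Q_{\mathcal C},P_{\mathcal C}$ (resp.\ $Q_{\mathcal C^c},P_{\mathcal C^c}$) are the normalized restrictions of $Q,P$ to $\mathcal C$ (resp.\ $\mathcal C^c$). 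Both KL terms are nonnegative and vanish exactly at the claimed shapes. This identity is what justifies the separation in Step~1.
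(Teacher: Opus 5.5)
Your argument is correct, but it reaches the optimal shape by a different mechanism than the paper. The paper adjoins a second multiplier $\mu$ for $\int p=1$ and writes the pointwise stationarity condition $-q(x)/p(x)-\lambda\mathbf{1}_{\mathcal C}(x)+\mu=0$, which gives $p=q/(\mu-\lambda\mathbf{1}_{\mathcal C})$. It then solves $Q(\mathcal C)/(\mu-\lambda)=b$ together with $Q(\mathcal C)/(\mu-\lambda)+(1-Q(\mathcal C))/\mu=1$ for $(\lambda^*,\mu^*)$, and cites strict convexity for uniqueness. You instead use the chain rule for relative entropy to reduce to the single scalar $m=P(\mathcal C)$. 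Your stationarity equation in $m$ is the paper's system with $\mu$ eliminated, via $\mu=(1-Q(\mathcal C))/(1-m)$ and $\mu-\lambda=Q(\mathcal C)/m$.

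Your route is the more rigorous one:
\begin{itemize}
\item it certifies global optimality of the shape without formally differentiating an infinite-dimensional problem under $p\ge 0$;
\item it handles densities that vanish on $\{q>0\}$ or put mass outside the support of $q$;
\item it gives uniqueness as measures;
\item it yields the form of $p^{(\lambda)}$ for every $\lambda$ (the paper's \eqref{eq:sol2}) at no extra cost.
\end{itemize}
The paper's route, in turn, produces the multiplier $\mu(\lambda)$ and the pointwise identity $q/p^{(\lambda)}=\mu(\lambda)-\lambda\mathbf{1}_{\mathcal C}$, which it reuses in Lemmas~\ref{lemma:kconcave} and~\ref{lemma:dualfuncgap}.

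One step can be tightened. Your inference ``$\lambda=0$ gives an infeasible minimizer, hence $\lambda^*>0$'' silently uses the necessity direction of the saddle-point conditions, namely primal attainment plus uniqueness of the Lagrangian minimizer. You only stated the sufficiency direction; the paper is equally terse at this point (``otherwise $p=q$ would be the solution''). Your reduction lets you avoid the issue entirely. The primal problem becomes minimizing the strictly convex $g_0(m)$ over $m\in[b,1)$, and its unconstrained minimizer $Q(\mathcal C)$ lies below $b$, so the optimum is $m=b$ with no duality needed. Then, since under strong duality the primal optimum minimizes $L(\cdot,\lambda^*)$ for any dual optimum, stationarity at $m=b$ forces $\lambda^*=g_0'(b)$. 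This also shows the dual optimum is unique. Finally, rename your scalar function, since $g$ already denotes the dual function in the paper.
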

This lemma, which will be proved in Appendix~\ref{sec:proofpart0}, states that the optimal population-level solution corresponds to the re-weighting of the density of $Q$ inside and outside of $\mathcal{C}$.

In the regime $0<Q(\mathcal C)<b<1$, a simple
alternative to enforce constraint satisfaction is to discard infeasible samples and train on the truncated
distribution $Q' = Q(\cdot\mid \mathcal C)$, whose density is $ q'(x)=\frac{q(x)\mathbf 1_{\mathcal C}(x)}{Q(\mathcal C)}.$
However, practically, especially when the training data are noisy,
this rejection step can substantially reduce the effective sample size and
degrade the training distribution. Moreover, even with sufficient truncated
samples, finite-capacity generative models may still fail to achieve the target
constraint satisfaction rate. This will be shown in Sec.~\ref{sec:2dtoy} and \ref{sec:PDE}. 
\subsection{Problem Regimes and Learning Objective}
We can now state our problem regimes and the learning objective. Let
$x^{(j)}\overset{\mathrm{i.i.d.}}{\sim}Q$, $j\in[n]$, be the training
samples, and let $b\in(0,1)$ be the target constraint satisfaction rate. When
$0<Q(\mathcal C)<b<1$, the goal is to train an FM model whose generated distribution approximates the optimal constrained distribution
$P^{(\lambda^*)}$ characterized in Lemma~\ref{lemma:pstar} with the desired
constraint satisfaction rate. This regime is
studied both theoretically and empirically. When
$0<b\leq Q(\mathcal C)\leq 1$, the distribution-level optimum is $Q$, and therefore the objective is to train an FM model that matches $Q$ while achieving the desired
constraint satisfaction rate, and we study this setting empirically. The constraint set $\mathcal{C}$ is specified only through a membership oracle: given a sample, the oracle determines whether it belongs to $\mathcal{C}$. Differentiable distances, projections, and structural properties such as convexity are unavailable.

\section{Constraint Guarantees with FM Parameterization}\label{sec:FMpara}

\eqref{eq:cvxoriginal} yields a convex distribution-level problem with a unique optimizer given by Lemma~\ref{lemma:pstar}. In practical generative modeling through FM, the distribution is represented with a velocity field, whose approximation error controls KL divergence through Lemma~\ref{lemma:TVgap1}. The development in this section analyzes the total variation (TV) between the FM-parameterized solution and the distribution-level optimum. The following Lemma~\ref{lemma:TVandconstraint}, proved in Appendix~\ref{sec:proofpart0}, translates TV into a bound on the constraint satisfaction rate.

\begin{lemma}\label{lemma:TVandconstraint}
   Let $P_1,P_2$ be probability measures defined on the same measurable space. If the TV distance between $P_1$ and $P_2$ is given as $\|P_1-P_2\|_{\mathrm{TV}}=\delta $,  then
\begin{equation}
    \max\{0, P_2(\mathcal{C})-\delta\} \leq P_1(\mathcal{C}) \leq \min\{1, P_2(\mathcal{C})+\delta\} .
\end{equation}
\end{lemma}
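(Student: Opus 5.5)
The plan is to use the variational characterization of total variation, $\|P_1-P_2\|_{\mathrm{TV}}=\sup_{A}|P_1(A)-P_2(A)|$, with the supremum over measurable sets $A$. This is the convention under which the stated bound is tight. If the paper's convention were instead $\int|p_1-p_2|$, one would get $2\delta$, so I will fix the sup-over-sets definition, equivalently $\tfrac12\int|p_1-p_2|\,d\mu$ for densities with respect to a common dominating measure $\mu$ (e.g.\ $\mu=P_1+P_2$). If the densities form is taken as the definition, I will first verify the identity $\sup_A|P_1(A)-P_2(A)|=\tfrac12\int|p_1-p_2|\,d\mu$. The supremum is attained at $A^*=\{p_1>p_2\}$, and $\int (p_1-p_2)\,d\mu=0$ gives $\int_{A^*}(p_1-p_2)\,d\mu=\tfrac12\int|p_1-p_2|\,d\mu$.

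The main step is immediate. Since $\mathcal{C}$ is closed, it is Borel measurable, so it is an admissible set in the supremum. Hence $|P_1(\mathcal{C})-P_2(\mathcal{C})|\le \delta$, which unpacks to
\begin{equation*}
P_2(\mathcal{C})-\delta\le P_1(\mathcal{C})\le P_2(\mathcal{C})+\delta .
\end{equation*}

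The final step combines this with the trivial bounds $0\le P_1(\mathcal{C})\le 1$, which hold because $P_1$ is a probability measure. Taking the larger of the two lower bounds and the smaller of the two upper bounds yields exactly $\max\{0,P_2(\mathcal{C})-\delta\}\le P_1(\mathcal{C})\le \min\{1,P_2(\mathcal{C})+\delta\}$.

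There is no real obstacle. The only point needing care is the normalization convention for TV, namely the factor $\tfrac12$ and the equivalence with the sup-over-sets form; I would state that convention explicitly at the start of the proof.
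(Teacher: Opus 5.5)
Your proposal is correct and follows essentially the same route as the paper: both use the sup-over-sets characterization $\delta=\sup_{A}|P_1(A)-P_2(A)|\ge|P_1(\mathcal{C})-P_2(\mathcal{C})|$ and then intersect the resulting interval with $[0,1]$. Your explicit check of the $\tfrac12$ normalization and of the measurability of $\mathcal{C}$ is a harmless addition that the paper leaves implicit. The $\tfrac12$ convention is also the one the paper uses later, in the proof of Lemma~\ref{lemma:lambdabound}.
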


Let $\theta \in \Theta$ denote the parameters of the FM model, and let
$u_\theta:\mathbb{R}^d \times [0,1] \to \mathbb{R}^d$ be the corresponding
parameterized velocity field. The induced velocity field class is defined as
$U_\Theta := \{u_\theta : \theta \in \Theta\}$. 
Given an initial distribution $P_0$, the velocity field $u_\theta$ defines a flow map $\Phi_{u_\theta, t}$, leading to the generated distribution $P_{u_\theta}$,
\begin{equation}
    \Phi_{u_\theta, t}(x) = x+\int_{0}^tu_\theta(\Phi_{u_\theta, s}(x),s)ds,\quad P_{u_\theta} := (\Phi_{u_\theta, 1})_{\#}P_0,
\end{equation}
where for any map $T$, $T(x)$ denotes the transported sample starting from $x$, while $(T)_{\#}P_0$ denotes the pushforward distribution obtained by transporting $P_0$ through $T$, i.e., if $x_0\sim P_0$ then $\Phi_{u_\theta, t}(x_0)\sim (\Phi_{u_\theta, t})_{\#}P_0$.
The FM-parameterized Lagrangian is defined as
\begin{equation}
    \bar{L}(u_\theta,\lambda)
    :=
    L(p_{u_\theta},\lambda)
    =
    \mathbb{E}_{x\sim Q}\big[-\log p_{u_\theta}(x)\big]
    +
    \lambda
    \left(
        b-\int_{\mathcal{C}} p_{u_\theta}(x)\,dx
    \right).
\end{equation}
The corresponding parameterized dual function is
$\bar{g}(\lambda):=\inf_{u\in U_\Theta} \bar{L}(u,\lambda)$, and we denote an optimal dual variable by
$  \bar{\lambda}^* \in \arg\max_{\lambda\geq 0} \bar{g}(\lambda)$.
For each fixed $\lambda\geq 0$, let $
    u_\theta^{(\lambda)}\in\arg\min_{u\in U_\Theta} \bar{L}(u,\lambda)$
denote a minimized parameterized velocity field, and let
$P_{u_\theta}^{(\lambda)} := (\Phi_{u_{\theta}^{(\lambda)}, 1})_{\#}P_0$ denote the distribution induced by
$u_\theta^{(\lambda)}$, with density $p_{u_\theta}^{(\lambda)}$. $\bar{L}(u_\theta,\lambda)$ is generally nonconvex in both $\theta$ and $u_\theta$.

Defining the velocity field $u^{(\lambda)}$ for $P^{(\lambda)}$ in Assumption~\ref{ass:feasibility} requires specifying both a coupling and an
interpolation strategy; although many choices are possible, we consider the
following construction. Let $\pi^{(\lambda)}$ be a coupling between the initial distribution $P_0$ and $P^{(\lambda)}$. For
$(x_0,x_1)\sim \pi^{(\lambda)}$, define the linear interpolant $x_t=(1-t)x_0+t x_1, t\in[0,1]$. The corresponding true velocity field is defined by
$u^{(\lambda)}(x,t):= \mathbb{E}_{(x_0,x_1)\sim\pi^{(\lambda)}}\left[x_1-x_0 \mid x_t=x\right]$.
\begin{assumption}\label{ass:esterror}
Assume that, for every dual variable $\lambda$ considered in the analysis, the
target probability path, the corresponding true
velocity field $u^{(\lambda)}$, and the parameterized velocity field
$u_\theta^{(\lambda)}$ satisfy the regularity conditions of Assumptions~3.2
and~3.3 in~\cite{su2025flowmatchingkldivergence}: the target path has uniformly
bounded score and score Jacobian, and the true and parameterized velocity fields
are sufficiently smooth with bounded velocity norms, Jacobians, divergence
discrepancy, and gradients of divergence.
In addition, the parameter family $\Theta$ is sufficiently rich so that the
FM parameterization error is bounded by $\nu>0$: for every $\lambda$ considered
in the analysis, 
\begin{equation}
     \begin{aligned}
\mathbb{E}_{t,x_t}\!\left[\|u^{(\lambda)}_\theta(x_t,t)-u^{(\lambda)}(x_t,t)\|_2^2\right] \le \nu^2.
     \end{aligned}
 \end{equation}
     \end{assumption}
\begin{assumption}
     \label{ass:dual_upper_bound}
          There exists a constant $\lambda_{\max}<\infty$ such that $\lambda^*,\bar{\lambda}^* \in [0,\lambda_{\max}]$.
     \end{assumption}
     
     \begin{assumption}\label{ass:densityrange}
For every $\lambda$ considered and every
$x$ such that $q(x)>0$, both $p_{u_\theta}^{(\lambda)}(x)>0$ and
$p^{(\lambda)}(x)>0$ hold. 
\end{assumption}

\begin{theorem}
    \label{theorem:constraintsatisfaction}
For problem \eqref{eq:cvxoriginal} with FM parameterization, under Assumptions \ref{ass:feasibility}, \ref{ass:esterror}, \ref{ass:dual_upper_bound}, \ref{ass:densityrange},
\begin{equation}
\begin{aligned}
        \|P^{(\bar{\lambda}^*)}_{u_{\theta}}-P^{(\lambda^*)}\|_{\mathrm{TV}}\leq \mathcal{E}_{\mathrm{para}},\quad P_{u_{\theta}}^{(\bar{\lambda}^*)}(\mathcal{C})\geq b -\mathcal{E}_{\mathrm{para}}   ,     
        \end{aligned}
        \end{equation}
\begin{equation}
    \mathcal{E}_{\mathrm{para}} = \frac{1}{\left((1-Q({\mathcal{C}}))^{1/3}+Q({\mathcal{C}})^{1/3}\right)^3}\sqrt{\frac{2}{k}\mu(\bar{\lambda}^*) (A_1\nu + A_2\nu^2)}+\sqrt{\frac{1}{2}(A_1 \nu + A_2 \nu^2 )}
\end{equation}
in which $A_1,A_2>0$ are constants that will be specified in Lemma~\ref{lemma:TVgap1}, $k$ is the strong concavity constant of $g(\lambda)$, which will be specified in  Lemma~\ref{lemma:kconcave}, and $\mu(\lambda) = \frac{1}{2}\left(\lambda+1+\sqrt{(\lambda+1)^2-4\lambda(1-Q(\mathcal{C}))}\right)$.
\end{theorem}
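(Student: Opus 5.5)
The plan is to split the TV distance with the triangle inequality through the intermediate distribution $P^{(\bar{\lambda}^*)}$, the population minimizer of $L(\cdot,\bar\lambda^*)$:
\begin{equation*}
\|P^{(\bar{\lambda}^*)}_{u_{\theta}}-P^{(\lambda^*)}\|_{\mathrm{TV}}\le \|P^{(\bar{\lambda}^*)}_{u_{\theta}}-P^{(\bar\lambda^*)}\|_{\mathrm{TV}}+\|P^{(\bar{\lambda}^*)}-P^{(\lambda^*)}\|_{\mathrm{TV}}.
\end{equation*}
The first term is a pure parameterization error at a fixed multiplier. The second term measures the dual mismatch. The constraint-satisfaction claim then follows at once from Lemma~\ref{lemma:TVandconstraint} with $P_1=P^{(\bar\lambda^*)}_{u_\theta}$ and $P_2=P^{(\lambda^*)}$, using $P^{(\lambda^*)}(\mathcal C)=b$ from Lemma~\ref{lemma:pstar}.

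\textbf{First term.} Lemma~\ref{lemma:TVgap1} (the result of \cite{su2025flowmatchingkldivergence}, under Assumption~\ref{ass:esterror}) should give $\mathrm{KL}(P^{(\lambda)}\|P^{(\lambda)}_{u_\theta})\le A_1\nu+A_2\nu^2$. Pinsker's inequality then yields exactly the second summand $\sqrt{\tfrac12(A_1\nu+A_2\nu^2)}$. Assumption~\ref{ass:densityrange} guarantees that the relevant KL quantities are finite.

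\textbf{Second term.} The plan has four steps.
\begin{enumerate}[leftmargin=*]
\item Solve $\min_p L(p,\lambda)$ explicitly. The minimizer reweights $q$ by $1/(1+\lambda\,\cdot)$-type factors inside and outside $\mathcal C$. Normalization gives a quadratic whose root is $\mu(\lambda)$, so $p^{(\lambda)}=q/\mu(\lambda)$ on $\mathcal C^c$ and $q/(\mu(\lambda)-\lambda)$ on $\mathcal C$. Consequently $g(\lambda)$ is explicit and, by Lemma~\ref{lemma:kconcave}, $k$-strongly concave on $[0,\lambda_{\max}]$ (Assumption~\ref{ass:dual_upper_bound}).
\item Bound the dual suboptimality $g(\lambda^*)-g(\bar\lambda^*)$ by the parameterization gap. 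For any $\lambda$, we have $\bar g(\lambda)\ge g(\lambda)$. Conversely, $\bar g(\lambda)\le \bar L(u^{(\lambda)}_\theta,\lambda)$ differs from $L(p^{(\lambda)},\lambda)$ by the NLL gap $\mathrm{KL}(Q\|P_{u_\theta})-\mathrm{KL}(Q\|P)$ plus a $\lambda$ times TV term. I would control the NLL gap by expressing $\log(p^{(\lambda)}/p^{(\lambda)}_{u_\theta})$ under $Q$. Since $dQ/dP^{(\lambda)}$ is piecewise constant with the value $\mu(\lambda)$ on $\mathcal C^c$ (and $\mu(\lambda)-\lambda\le\mu(\lambda)$ on $\mathcal C$), we get $\mathbb E_Q[\cdot]\lesssim \mu(\lambda)\,\mathrm{KL}(P^{(\lambda)}\|P_{u_\theta}^{(\lambda)})$, up to the needed sign handling. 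Chaining $g(\lambda^*)\le\bar g(\lambda^*)\le\bar g(\bar\lambda^*)\le g(\bar\lambda^*)+\mu(\bar\lambda^*)(A_1\nu+A_2\nu^2)$ then gives $g(\lambda^*)-g(\bar\lambda^*)\le \mu(\bar\lambda^*)(A_1\nu+A_2\nu^2)$.
\item Apply strong concavity, with $\lambda^*$ the maximizer: $\tfrac k2|\lambda^*-\bar\lambda^*|^2\le g(\lambda^*)-g(\bar\lambda^*)$, so $|\lambda^*-\bar\lambda^*|\le\sqrt{\tfrac2k\mu(\bar\lambda^*)(A_1\nu+A_2\nu^2)}$.
\item Convert the multiplier gap into TV. Using the closed form, $\|P^{(\lambda)}-P^{(\lambda^*)}\|_{\mathrm{TV}}=|P^{(\lambda)}(\mathcal C)-b|$. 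I would bound $\sup_\lambda|\tfrac{d}{d\lambda}P^{(\lambda)}(\mathcal C)|$ and show it equals $\left((1-Q(\mathcal C))^{1/3}+Q(\mathcal C)^{1/3}\right)^{-3}$. The cube roots suggest this arises from maximizing the derivative over $\lambda$ (or over $b$), via a Hölder-type optimization. The mean value theorem then gives the first summand.
\end{enumerate}

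\textbf{Main obstacle.} I expect the hardest parts to be step 2 and the Lipschitz constant in step 4. Step 2 requires turning a forward-KL flow bound into a bound on the $Q$-NLL gap with exactly the factor $\mu(\bar\lambda^*)$, and handling the sign of the constraint term. For step 4, I would first try to compute $\frac{d}{d\lambda}P^{(\lambda)}(\mathcal C)$ explicitly from $\mu(\lambda)$ and maximize it over $\lambda\ge0$. If that fails, I would instead compare the two distributions through the Lipschitz constant of the inverse map $b\mapsto\lambda^*(b)$.
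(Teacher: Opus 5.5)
Your outline matches the paper's proof. It uses the same triangle inequality through $P^{(\bar\lambda^*)}$, and Lemma~\ref{lemma:TVgap1} with Pinsker for the first term. It uses the chain $g(\lambda^*)\le\bar g(\lambda^*)\le\bar g(\bar\lambda^*)\le g(\bar\lambda^*)+\mu(\bar\lambda^*)(A_1\nu+A_2\nu^2)$ with strong concavity and $g'(\lambda^*)=0$ (Lemmas~\ref{lemma:kconcave} and~\ref{lemma:lambdagap1}). It ends with a Lipschitz bound on $\lambda\mapsto P^{(\lambda)}(\mathcal C)$ (Lemma~\ref{lemma:lambdabound}).

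\textbf{Gap in step 2.} The step you left open, $\bar g(\lambda)-g(\lambda)\le\mu(\lambda)(A_1\nu+A_2\nu^2)$ (Lemma~\ref{lemma:dualfuncgap}), does not close the way you sketch it. You cannot bound $\mathbb E_Q[\log(p^{(\lambda)}/p^{(\lambda)}_{u_\theta})]$ by $\mu(\lambda)\,\mathrm{KL}(P^{(\lambda)}\|P^{(\lambda)}_{u_\theta})$ using $q\le\mu(\lambda)p^{(\lambda)}$. The reason is that $p^{(\lambda)}\log(p^{(\lambda)}/p^{(\lambda)}_{u_\theta})$ is negative wherever $p^{(\lambda)}_{u_\theta}>p^{(\lambda)}$. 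If you instead bound the constraint term separately by $\lambda\,\mathrm{TV}$, you pick up an extra $O(\lambda\sqrt{A_1\nu+A_2\nu^2})$. That term is not in $\mathcal E_{\mathrm{para}}$, and it would degrade the multiplier gap to order $\nu^{1/4}$.

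The missing idea is to use the stationarity condition $\lambda\mathbf 1_{\mathcal C}=\mu(\lambda)-q/p^{(\lambda)}$ to fold the constraint term into the likelihood term. Since $\int\mu(\lambda)\,(p^{(\lambda)}-p^{(\lambda)}_{u_\theta})\,dx=0$, this gives
\begin{equation*}
\bar g(\lambda)-g(\lambda)=\int q(x)\left[\log\frac{p^{(\lambda)}(x)}{p^{(\lambda)}_{u_\theta}(x)}+\frac{p^{(\lambda)}_{u_\theta}(x)}{p^{(\lambda)}(x)}-1\right]dx .
\end{equation*}
The bracket is pointwise nonnegative, since $s-1-\log s\ge 0$. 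Only now may you replace $q$ by $\mu(\lambda)p^{(\lambda)}$, and integrating gives exactly $\mu(\lambda)\,\mathrm{KL}(P^{(\lambda)}\|P^{(\lambda)}_{u_\theta})$. The constraint term cancels rather than being bounded.

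\textbf{Smaller correction in step 4.} Write $\Lambda(r)=\frac{1-Q(\mathcal C)}{1-r}-\frac{Q(\mathcal C)}{r}$, so that $\frac{d}{d\lambda}P^{(\lambda)}(\mathcal C)=1/\Lambda'(P^{(\lambda)}(\mathcal C))$. The constant $\left((1-Q(\mathcal C))^{1/3}+Q(\mathcal C)^{1/3}\right)^3$ equals $\inf_{r\in(0,1)}\Lambda'(r)$, attained at $r=Q(\mathcal C)^{1/3}/\left(Q(\mathcal C)^{1/3}+(1-Q(\mathcal C))^{1/3}\right)$. But $\lambda\ge0$ only produces $P^{(\lambda)}(\mathcal C)\in[Q(\mathcal C),1)$, so this point is reached only when $Q(\mathcal C)\le 1/2$. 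In general the supremum of the derivative over $\lambda\ge0$ is therefore strictly below the reciprocal of that constant. You should prove ``at most'' rather than ``equals''; that is all the theorem needs. Your fallback through the inverse map $r\mapsto\Lambda(r)$ (that is, $b\mapsto\lambda^*(b)$) is exactly the paper's argument.
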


Theorem \ref{theorem:constraintsatisfaction}  shows that, provided the FM parameterization error $\nu$ is sufficiently small, the FM-parameterized constrained solution induces a distribution $P^{(\bar{\lambda}^*)}_{u_\theta}$ that remains close in total variation to the distribution-level optimal constrained distribution $P^{(\lambda^*)}$. This bound separates two main sources of error: the FM
approximation error, controlled by $A_1\nu+A_2\nu^2$, and the dual mismatch
between $\bar{\lambda}^*$ and $\lambda^*$, which is implicitly induced by the FM
approximation error and is controlled through the strong
concavity constant $k$. Consequently, $P^{(\bar{\lambda}^*)}_{u_\theta}$ satisfies the target
constraint rate $b$ up to an explicit error term that vanishes as $\nu$
decreases. The proof of Theorem~\ref{theorem:constraintsatisfaction} is deferred to Appendix~\ref{sec:proofpart1}.

\section{Primal-Dual Flow Matching (PDFM) for Constrained Generation}

The analysis in the previous section assumes access to the optimal parameterized velocity field $u_\theta^{(\lambda)}$ for each fixed dual variable $\lambda$. In practice, however, both the velocity field
and the dual variable must be learned through an implementable optimization procedure. In this section, we first introduce a theoretical PDFM algorithm for solving \ref{eq:cvxoriginal}, then propose
practical implementations of PDFM, and finally analyze the errors in the practical PDFM algorithm.

\subsection{Theoretical PDFM Algorithm}

To solve the problem practically, we begin with an idealized primal-dual procedure shown in Alg.~\ref{alg:pd} that serves as the theoretical template for PDFM. Primal-dual methods are a standard framework for constrained optimization
\citep{boyd2004convex} and have also been applied to generative modeling with
distributional constraints, i.e., requiring the generated distribution to remain
within a prescribed divergence from reference distributions \citep{Khalafi}.
At each iteration, \begin{wrapfigure}{r}{0.47\textwidth}
\centering
\begin{minipage}{0.45\textwidth}
\hrule height 0.8pt
\vspace{2pt}
\captionsetup[algorithm]{
    justification=raggedright,
    singlelinecheck=false,
    labelfont=bf
}
\captionof{algorithm}{Theoretical PDFM}
\label{alg:pd}
\vspace{-0.6em}
\hrule
\small
\begin{algorithmic}
    \State \textbf{Input:} $b$, learning rate $\eta_i$
    \State $\lambda(0)\gets 0$
    \For{$i=0,1,\cdots$}
        \State $u^{(\lambda(i))}_\theta
        \gets \arg\min_{u_\theta\in U_\Theta}
        \bar{L}(u_\theta,\lambda(i))$
        \State $\lambda(i+1)\gets
        [\lambda(i)+\eta_i(b-P^{(\lambda(i))}_{u_\theta}(\mathcal C))]_+$
    \EndFor
\end{algorithmic}
\vspace{2pt}
\hrule
\end{minipage}
\end{wrapfigure}the primal step aims to minimize the parameterized Lagrangian $\bar{L}(u_\theta,\lambda)$ over $U_\Theta$ for the current dual variable $\lambda$, 
while the dual step updates $\lambda$ according to $\nabla_{\lambda}\bar{L}(u_\theta,\lambda) = b-P_{u_\theta}(\mathcal{C})$, i.e., the constraint satisfaction gap. Since $\bar{L}(u_\theta,\lambda)$ is affine in $\lambda$ for every fixed $u_\theta$,
$\bar{g}$ is concave as the pointwise infimum of affine functions. Suppose that the infimum defining $\bar g(\lambda)$ is attained for all $\lambda$ considered in the analysis and that the learning rate $\eta_i$ satisfies
$\eta_i>0$, $\sum_i\eta_i=\infty$, and $\sum_i\eta_i^2<\infty$. And note that the supergradients $\nabla_{\lambda}\bar{L}(u^{(\lambda)}_\theta,\lambda) = b-P^{(\lambda)}_{u_\theta}(\mathcal{C})$ are uniformly bounded in $[b-1,b]$. The dual update is a projected supergradient ascent method for maximizing the concave function
$\bar{g}$. Thus, under Alg.~\ref{alg:pd} with exact primal minimization and exact dual supergradient updates, $\lambda$ converges to some
$\bar{\lambda}^*\in\arg\max_{\lambda\geq 0}\bar{g}(\lambda)$.
If, in addition, the induced primal solution at $\bar{\lambda}^*$ is unique and the primal solution correspondence is continuous at $\bar{\lambda}^*$, then the corresponding primal distributions converge to the parameterized solution
$P_{u_{\theta}}^{(\bar{\lambda}^*)}$
characterized in Sec.~\ref{sec:FMpara}.

\subsection{Practical Implementation of PDFM} \label{sec:gradimp}

\begin{wrapfigure}{r}{0.47\textwidth}
\centering
\begin{minipage}{0.45\textwidth}
\hrule height 0.8pt
\vspace{2pt}
\captionsetup[algorithm]{
    justification=raggedright,
    singlelinecheck=false,
    labelfont=bf
}
\captionof{algorithm}{PDFM}
\label{alg:pd_practical}
\vspace{-0.6em}
\hrule
\small
    \begin{algorithmic}
    \State \textbf{Input:}  $b$, parameter learning rate $\eta^{(\theta)},\eta^{(\phi)}$, dual variable learning rate $\eta_i$
    \State $\lambda'(0)\gets 0$
    \Repeat
    \State \resizebox{0.9\textwidth}{!}{$\theta \gets \theta-\eta^{(\theta)}\nabla_\theta\mathbb{E}_{t,x_0,x_1}\left[ \| u_\theta(x_t, t) - (x_1-x_0)\|^2\right]$}
    \Until converged
    \Repeat
    \State $\phi \gets \phi-\eta^{(\phi)}\text{grad}_\phi$
    \Until converged
    \State $\lambda'(1)\gets[\lambda'(0) +\eta_{0}(b-P_{u_{\theta}}(\mathcal{C}))]_{+}$
    \For{$i=1,2,\cdots$}
    \Repeat
    \State $\theta \gets \theta-\eta^{(\theta)}  \text{grad}_\theta$
    \State $\phi \gets \phi-\eta^{(\phi)}\text{grad}_\phi$
    \Until converged
\State $\lambda'(i+1)\gets[\lambda'(i) +\eta_{i}(b-P_{u_{\theta}}(\mathcal{C}))]_{+}$
    \EndFor
    \State Note: $\mathrm{grad}_{\phi},\mathrm{grad}_\theta$ are given in \eqref{eq:gradphi}, \eqref{eq:gradtheta}.
    \end{algorithmic}
\vspace{2pt}
\hrule
\end{minipage}
\end{wrapfigure}

For implementation, we replace the likelihood objective $\mathbb{E}_{x\sim Q}[-\log p_{u_\theta}(x)]$ with the conditional-FM (CFM) objective. This is motivated by the fact that CFM is highly tractable \citep{lipman2023flow}, and a recent result by \cite{su2025flowmatchingkldivergence}, where it is shown that under smoothness assumptions and sufficient spatio-temporal regularity of the velocity fields, a bounded velocity error yields a bounded KL divergence between the target distribution and the learned distribution. 

With the linear interpolant and random coupling strategy, i.e., $x_0\sim P_0, x_1\sim Q, x_t = tx_1 + (1-t)x_0$, in which $x_0$ and $x_1$ are independent,
we define the surrogate Lagrangian as follows:
\begin{equation}\label{eq:Lprime}
\resizebox{0.44\textwidth}{!}{$
\begin{aligned}
    L'(u_\theta,\lambda) =&w\mathbb{E}_{t,x_0,x_1}\left[\| u_\theta(x_t, t) - (x_1-x_0)\|^2\right]\\&+ \lambda  (b-P_{u_{\theta}}(\mathcal{C})),
    \end{aligned}$}
\end{equation}
in which $t\sim \mathrm{Unif}[0,1]$ and $w>0$ is a weighting constant that calibrates the scale of the
velocity error term relative to $\mathbb{E}_{x\sim Q}[-\log p_{u_\theta}(x)]$ in $\bar L$.

To minimize $L'$, a natural approach is to update $\theta$ by gradient descent,
which requires estimable gradients for each term in
\eqref{eq:Lprime}. The gradient of $\mathbb{E}_{t,x_0,x_1}\left[\| u_\theta(x_t, t) - (x_1-x_0)\|^2\right]$ follows from the standard FM optimization process. However, 
estimating $-\nabla_\theta P_{u_{\theta}}(\mathcal{C})$ requires further development, especially in the case where the constraints are available only via a membership oracle.

In this paper, we propose a gradient estimation strategy that avoids Gaussian randomization of the velocity field proposed in \cite{FMRE}, drawing intuition from deterministic policy gradient methods \citep{DPG}.

We consider an $N$-step forward Euler discretization of the flow. Let $\Delta t=1/N\ll1$  be the selected interval for discretization and restrict time to the grid
$\{t=i\Delta t:i=0,\ldots,N\}$. Starting from $x_0\sim P_0$, the forward
Euler discretization is:
$x_{t+\Delta t}=    x_t+u_\theta(x_t,t)\Delta t$.
Equivalently, 
\begin{equation}\label{eq:euler}
    x_{i\Delta t}= \Phi^{\Delta t}_{u_\theta, i\Delta t}(x_0) = \begin{cases}x_0,&i=0,\\
   x_0+\sum_{j=0}^{i-1} u_\theta(x_{j\Delta t},j\Delta t)\Delta t, &i = 1,2,\cdots, N.
    \end{cases}
\end{equation}

Note that $ P_{u_\theta}(\mathcal{C}) = \lim_{\Delta t \to 0}\int p_0(x_0) \mathbf{1}_{\mathcal{C}}\left[\Phi^{\Delta t}_{u_\theta, 1}(x_0)\right] dx_0$. For sufficiently small $\Delta t$, we use the finite-step quantity inside the limit to approximate $P_{u_\theta}(\mathcal{C})$. 
We now note the main difficulty. Direct path-wise differentiation of $P_{u_\theta}(\mathcal{C})$ with respect to $\theta$ requires differentiating the indicator function $\mathbf{1}_{\mathcal{C}}$, for which the gradient is $0$ almost everywhere and undefined on the constraint boundary. To address this issue, assume there exists a differentiable function $f^{(u_\theta)}:\mathbb{R}^d\times[0,1]\rightarrow[0,1]$ with the following properties: 
(a) \textbf{[Bellman consistency]} For every
$i\in\{0,1,\ldots,N-1\}$ and $x\in\mathbb R^d$,
\begin{equation}
        f^{(u_\theta)}(x,i\Delta t)
    =
    f^{(u_\theta)}
    \left(
        x+u_\theta(x,i\Delta t)\Delta t,
        (i+1)\Delta t
    \right);
\end{equation}
(b) \textbf{[No explicit terminal parameter dependence]} $\nabla_\theta f^{(u_\theta)}(x,1)=0$; and (c) \textbf{[A good estimator of the indicator}] $   \mathbb{E}_{x_0\sim P_0}\left[\left| f^{(u_\theta)}(x_0,0)-\mathbf{1}_{\mathcal{C}}\left[\Phi_{u_\theta, 1}(x_0)\right] \right|\right]\leq \rho$, in which $\rho>0$ is a small approximation tolerance. 
The main idea is that the function $f^{(u_\theta)}$ gives a surrogate for $P_{u_\theta}(\mathcal{C})$ via $\mathcal{F}_{u_\theta}(\mathcal{C}): = \int p_0(x_0) f^{(u_\theta)}(x_0,0) dx_0$. Indeed, 
\begin{equation}
\resizebox{0.98\textwidth}{!}{$
 |P_{u_\theta}(\mathcal{C}) - \mathcal{F}_{u_\theta}(\mathcal{C})| 
 \leq \int p_0(x_0)\left| f^{(u_\theta)}(x_0,0)-\mathbf{1}_{\mathcal{C}}\left[\Phi_{u_\theta, 1}(x_0)\right] \right|dx_0 
 = \mathbb{E}_{x_0\sim P_0}\left[\left| f^{(u_\theta)}(x_0,0)-\mathbf{1}_{\mathcal{C}}\left[\Phi_{u_\theta, 1}(x_0)\right] \right|\right]\leq \rho,
$}
\end{equation}
leads to
\begin{equation}
\label{eq:surrogate_obj}
        L'(u_\theta,\lambda) \leq w\mathbb{E}_{t,x_0,x_1}\left[\| u_\theta(x_t, t) - (x_1-x_0)\|^2\right]+ \lambda  (b-\mathcal{F}_{u_{\theta}}(\mathcal{C}))+\lambda \rho.
\end{equation}
We therefore minimize the objective on RHS of \eqref{eq:surrogate_obj} to control $ L'(u_\theta,\lambda)$ which is a good approximation when $\rho$ is small.

\begin{proposition} \label{prop:grad}
Under the assumption that
$\nabla_\theta f^{(u_\theta)}(x_{i\Delta t},i\Delta t)$ exists and is uniformly
bounded for all $i\in\{0,\ldots,N-1\}$, $i\sim \text{Unif}\{0,\cdots,N-1\}$ and 
$x_{i\Delta t}\sim(\Phi^{\Delta t}_{u_\theta,i\Delta t})_{\#}P_0$,
\begin{equation}
\resizebox{0.92\textwidth}{!}{$
     \nabla_\theta\mathcal{F}_{u_\theta}(\mathcal{C})= N\mathbb{E}_{i,x_{i\Delta t}}\left[\nabla_\theta u_\theta(x_{i\Delta t},i\Delta t)\nabla_uf^{(u_\theta)}(x_{i\Delta t}+u\Delta t, (i+1)\Delta t)|_{u = u_\theta(x_{i\Delta t},i\Delta t)}\right],$}
\end{equation}
in which $\nabla_\theta u_\theta(x_t,t)$ is a Jacobian matrix such that each
column is the gradient $\nabla_\theta [u_\theta(x_t,t)]_j$ of the $j$-th dimension of the $d$-dimensional velocity with respect to the velocity parameters, i.e.,
\begin{equation}
    \nabla_\theta u_\theta(x_t,t) = [\nabla_\theta [u_\theta(x_t,t)]_1\quad \nabla_\theta [u_\theta(x_t,t)]_2\quad \cdots \quad\nabla_\theta [u_\theta(x_t,t)]_d].
\end{equation}
\end{proposition}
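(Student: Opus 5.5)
The plan is to treat $\mathcal{F}_{u_\theta}(\mathcal{C})=\mathbb{E}_{x_0\sim P_0}[f^{(u_\theta)}(x_0,0)]$ in the manner of the deterministic policy gradient theorem. Bellman consistency (a) plays the role of the Bellman equation, and the Euler step plays the role of the deterministic dynamics. Since $x_0\sim P_0$ does not depend on $\theta$, the first step is to move the gradient inside the expectation. This gives $\nabla_\theta\mathcal{F}_{u_\theta}(\mathcal{C})=\mathbb{E}_{x_0}[\nabla_\theta f^{(u_\theta)}(x_0,0)]$. I would justify the interchange by dominated convergence, using the assumed uniform boundedness of $\nabla_\theta f^{(u_\theta)}$ along the grid. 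It then suffices to compute $\nabla_\theta f^{(u_\theta)}(x,i\Delta t)$ at a \emph{fixed} spatial point $x$. Here $\nabla_\theta$ always means the derivative at fixed argument, i.e.\ through both the superscript dependence of $f$ and the velocity.

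The key step is a one-step recursion. Fix $x$ and $i\in\{0,\dots,N-1\}$, and differentiate the identity in (a) with respect to $\theta$. The right-hand side depends on $\theta$ in two ways: through the superscript of $f$, and through the argument $x+u_\theta(x,i\Delta t)\Delta t$. The chain rule gives
\begin{equation*}
\nabla_\theta f^{(u_\theta)}(x,i\Delta t)=\nabla_\theta u_\theta(x,i\Delta t)\,\nabla_u f^{(u_\theta)}(x+u\Delta t,(i+1)\Delta t)\big|_{u=u_\theta(x,i\Delta t)}+\big(\nabla_\theta f^{(u_\theta)}\big)(y,(i+1)\Delta t)\big|_{y=x+u_\theta(x,i\Delta t)\Delta t}.
\end{equation*}
In the second term, $\nabla_\theta$ is taken at fixed $y$ and then evaluated at $y$. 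The Jacobian convention in the statement (columns are $\nabla_\theta[u_\theta]_j$) makes the first term exactly the product displayed in the proposition. Note that $\nabla_u f(x+u\Delta t,\cdot)=\Delta t\,\nabla_x f(\cdot)$; I keep it in the $\nabla_u$ form to match the statement.

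Next I would evaluate this recursion along an Euler trajectory. Put $x=x_{i\Delta t}=\Phi^{\Delta t}_{u_\theta,i\Delta t}(x_0)$, so that $y=x_{(i+1)\Delta t}$ by \eqref{eq:euler}. Unrolling from $i=0$ to $N-1$ telescopes the fixed-point terms and leaves the sum of the per-step chain-rule terms plus $(\nabla_\theta f^{(u_\theta)})(x_{N\Delta t},1)$. That remainder vanishes by property (b). Taking $\mathbb{E}_{x_0}$ turns the $i$-th summand into an expectation over $x_{i\Delta t}\sim(\Phi^{\Delta t}_{u_\theta,i\Delta t})_{\#}P_0$. Writing $\sum_{i=0}^{N-1}=N\,\mathbb{E}_{i\sim\mathrm{Unif}\{0,\dots,N-1\}}$ then yields the claimed formula.

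The main obstacle is bookkeeping rather than analysis: keeping the partial derivative at a fixed spatial point separate from the total derivative along the $\theta$-dependent trajectory. The telescoping only works because the recursion is first derived pointwise in $x$, which is valid for every $x$ by (a), and only afterwards evaluated at the trajectory points. If one instead differentiated $f(x_{i\Delta t},i\Delta t)$ along the path, the $\theta$-dependence of $x_{i\Delta t}$ would be double counted. The secondary technical point is the interchange of $\nabla_\theta$ with $\mathbb{E}_{x_0}$. This is covered by the uniform boundedness hypothesis, together with differentiability of $f$ in its spatial argument, which is implicit in the existence of $\nabla_u f$.
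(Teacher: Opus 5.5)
Your proposal is correct and follows essentially the same route as the paper. Both arguments differentiate the Bellman identity (a) at a fixed spatial point to get the one-step chain-rule recursion, unroll it along the Euler trajectory so that property (b) removes the terminal term, interchange $\nabla_\theta$ with $\mathbb{E}_{x_0\sim P_0}$ by dominated convergence, and rewrite $\sum_{i=0}^{N-1}$ as $N\,\mathbb{E}_{i}$. The differences are cosmetic: you interchange first rather than last, and you state explicitly the fixed-point versus along-the-path distinction that the paper handles implicitly through its $|_{u=u_\theta(x_t,t)}$ notation.
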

Proposition~\ref{prop:grad} with its proofs in Appendix~\ref{sec:proof4propgrad}, states that one can approximate $\nabla_\theta\mathcal{F}_{u_\theta}(\mathcal{C})$ via Monte Carlo. But we still need $f^{(u_\theta)}$ that satisfies properties (a), (b), and (c) above.
For this we propose to train $f_\phi$, an NN parameterized by $\phi$ to approximate $f^{(u_\theta)}$.
It is trained via supervised learning on sampled trajectories. Specifically, we
sample $x_0\sim P_0$, generate an Euler trajectory
$\{x_{i\Delta t}\}_{i\in\{0, 1,\cdots,N\}}$ under the current velocity field
$u_\theta$, and assign the terminal label
$   y=\mathbf{1}_{\mathcal C}(x_{N\Delta t})$.
Then $f_\phi$ is trained to predict $y$ from intermediate states by minimizing
the binary cross-entropy loss, thereby the gradient of $\phi$ can be given as
\begin{equation}\label{eq:gradphi}
\mathrm{grad}_{\phi} = \nabla_\phi
\mathbb{E}_{
i, x_{i\Delta t}}
\left[
    -y\log f_\phi(x_{i\Delta t},i\Delta t)
    -(1-y)\log\big(1-f_\phi(x_{i\Delta t},i\Delta t)\big)
\right].
\end{equation}
At each sampled trajectory, this structure encourages $\{f_\phi (x_{i\Delta t}, i\Delta t)\}_{i\in\{0, 1,\cdots,N\}}$ to target at the same terminal label $y=\mathbf{1}_{\mathcal{C}}(x_{N\Delta t})$, which is held fixed during the $\phi$-update, thereby encouraging to fulfill property (a),(b),(c).  $f_\phi$ serves as a differentiable predictor of terminal constraint
satisfaction from $(x_t,t)$.

Practically, we aim to minimize $\frac{1}{w} L'(u_\theta,\lambda) =\mathbb{E}_{t,x_0,x_1}\left[\| u_\theta(x_t, t) - (x_1-x_0)\|^2\right]+ \lambda'  (b-P_{u_{\theta}}(\mathcal{C}))$ with $\lambda'=\lambda/w$, which has the same minimizer as $L'(u_\theta,\lambda)$. Therefore, by substituting the learned predictor $f_\phi$ for $f^{(u_\theta)}$ in Proposition~\ref{prop:grad}, the gradient used in practice to update $\theta$ is given by
\begin{equation}\label{eq:gradtheta}
\resizebox{0.92\textwidth}{!}{$
\begin{aligned}
    \mathrm{grad}_{\theta} =& \nabla_\theta\mathbb{E}_{t,x_0,x_1}\left[ \| u_\theta(x_t, t) - (x_1-x_0)\|^2\right]\\&-
    \lambda'N\mathbb{E}_{i,x_{i\Delta t}}\left[\nabla_\theta u_\theta(x_{i\Delta t},i\Delta t)\nabla_uf_{\phi}(x_{i\Delta t}+u\Delta t, (i+1)\Delta t)|_{u = u_\theta(x_{i\Delta t},i\Delta t)}\right].
    \end{aligned}$}
\end{equation}

Therefore, we can approximately solve
$    \arg\min_{\theta} \frac{1}{w}L'(u_\theta,\lambda)$ by alternating between updating $\theta$ and $\phi$. With $\theta$ fixed, we update $f_\phi$ with $\mathrm{grad}_\phi$ in \eqref{eq:gradphi} to predict terminal constraint satisfaction following the current velocity field. With $\phi$ fixed, we update $\theta$ with $\mathrm{grad}_\theta$ in \eqref{eq:gradtheta}. 

With $\theta$ updated, we update the dual variable, for which the gradient is proportional to $b-P_{u_\theta}(\mathcal{C})$. In practice, $P_{u_\theta}(\mathcal{C})$ is estimated by Monte Carlo simulation, i.e., by generating terminal samples using $u_\theta$ and computing the probability of generating samples belonging to $\mathcal{C}$. The algorithm for practical PDFM is provided in Alg.~\ref{alg:pd_practical}.

\subsection{Constraint Guarantees on PDFM}\label{sec:PDFMguarantee}

Let the velocity field solved by minimizing $\frac{1}{w}L'$ at $\lambda' = \lambda/w$, which is analyzed in the previous section be denoted by $u'^{(\lambda)}_\theta$, the dual variable of $\bar{L}$ at the $i$-th iteration be denoted by $\lambda(i)$ and the distribution induced by the velocity field $u'^{(\lambda)}_\theta$ be denoted by $P_{u'_\theta}^{(\lambda)}:=(\Phi_{u'^{(\lambda)_\theta},1})_{\#}P_0$. In this subsection, we analyze the asymptotic gap between the distribution
$P_{u'_\theta}^{(\lambda^{(i)})}$ produced by the practical primal-dual
algorithm and the optimal distribution $P^{(\lambda^*)}$ in \eqref{eq:cvxoriginal} as $i\to\infty$.

\begin{assumption}\label{ass:lambdamax}
Note the scaling relationship $\lambda = w\lambda'$. For every iteration of Alg.~\ref{alg:pd_practical}, we assume that there exists a constant $\lambda_{\max}<\infty$ such that
    \begin{equation}
        \forall i, \lambda'(i)\leq \lambda_{\max}/w,\quad\mathrm{equivalently}, \quad \lambda(i)\leq \lambda_{\max}.
    \end{equation}
\end{assumption}

We next make the following assumption to quantify the optimization error introduced by the approximation of $P^{(\lambda)}_{u_\theta}$ via the strategy in Sec.~\ref{sec:gradimp}.
\begin{assumption}\label{ass:esterrorGD}
 For every fixed $0\leq\lambda\leq \lambda_{\max}$, $ P_{u'_{\theta}}^{(\lambda)}$ is $\varepsilon$-suboptimal with respect to the FM-parameterized optimum, 
    \begin{equation}
L\left(p_{u'_{\theta}}^{(\lambda)}, \lambda\right) - \bar{g}(\lambda)\leq \varepsilon,\quad\mathrm{equivalently}, \quad L\left(p_{u'_{\theta}}^{(\lambda)}, \lambda\right) - \inf_{u_\in U_\Theta} \bar{L}(u,\lambda)\leq \varepsilon.
    \end{equation}
\end{assumption}

For the dual update, we need to evaluate 
$b-P_{u'_\theta}(\mathcal C)$, where $P_{u'_\theta}(\mathcal C)$ is estimated in
practice by Monte Carlo simulation. To simplify the analysis, we make the
following assumption.
\begin{assumption}\label{ass:exactdualgrad}
The empirical estimator of $b-P_{u'_\theta}(\mathcal C)$ is exact at every dual
update.
\end{assumption}
If this assumption is relaxed to a weaker condition, such as unbiased Monte Carlo
estimation, then the convergence guarantee should be stated in expectation. In
particular, one can generally bound
$\liminf_{i\to\infty}\mathbb{E}\!\left[
P_{u'_{\theta}}^{(\lambda(i))}(\mathcal C)
\right]$
rather than the pathwise quantity
$\liminf_{i\to\infty}
P_{u'_{\theta}}^{(\lambda(i))}(\mathcal C)$. 
The corresponding statements and proofs are provided in Appendix~\ref{app:unbiased_constraint_estimation}.

\begin{theorem} \label{theorem:bound2}Under Assumptions~\ref{ass:feasibility}, \ref{ass:esterror}-\ref{ass:densityrange}, and \ref{ass:lambdamax}-\ref{ass:exactdualgrad}, with $\forall i, 0\leq \eta_i\leq\frac{2}{kw}$, $\sum_{i=1}^\infty\eta_i=\infty, \sum_{i=1}^\infty\eta_i^2<\infty $,
\begin{equation}
    \limsup_{i\to\infty}  \left\|{P_{u'_{\theta}}^{(\lambda(i))}}-P^{(\lambda^*)} \right\|_{\mathrm{TV}} \leq \mathcal{E}_{\mathrm{PDFM}}, \quad\liminf_{i\to\infty}P_{u'_{\theta}}^{(\lambda(i))}(\mathcal{C})\geq b-\mathcal{E}_{\mathrm{PDFM}},
    \end{equation}
    \begin{equation}
    \mathcal{E}_{\mathrm{PDFM}} =  e_{\max}+\frac{2e_{\max}}{k\left((1-Q({\mathcal{C}}))^{1/3}+Q({\mathcal{C}})^{1/3}\right)^3} ,\quad e(\lambda):=\sqrt{\frac{\varepsilon+\mu(\lambda) (A_1\nu + A_2\nu^2)}{2(\mu(\lambda) - \lambda)}}, 
\end{equation}
     in which $\mu(\lambda) = \frac{1}{2}\left(\lambda+1+\sqrt{(\lambda+1)^2-4\lambda(1-Q(\mathcal{C}))}\right)$ and $e_{\max} =\max_{\lambda\in[0,\lambda_{\max}]} e(\lambda)$.
\end{theorem}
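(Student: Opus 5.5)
The approach is to split the TV gap along the dual iterate. For each $i$, the triangle inequality gives
\begin{equation*}
\bigl\|P_{u'_\theta}^{(\lambda(i))}-P^{(\lambda^*)}\bigr\|_{\mathrm{TV}}\le \bigl\|P_{u'_\theta}^{(\lambda(i))}-P^{(\lambda(i))}\bigr\|_{\mathrm{TV}}+\bigl\|P^{(\lambda(i))}-P^{(\lambda^*)}\bigr\|_{\mathrm{TV}} .
\end{equation*}
The first term is a primal error at a fixed multiplier. The second term is a dual error, which I will control by the distance $|\lambda(i)-\lambda^*|$. The constraint bound then follows from the TV bound by Lemma~\ref{lemma:TVandconstraint}, using $P^{(\lambda^*)}(\mathcal C)=b$ (from Lemma~\ref{lemma:pstar}) and taking $\liminf$.

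\textbf{Primal term.} Fix $\lambda\in[0,\lambda_{\max}]$; Assumption~\ref{ass:lambdamax} guarantees every iterate lies in this range.

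1. Since $P^{(\lambda)}$ minimizes the convex functional $L(\cdot,\lambda)$, I will show a quadratic growth inequality
\begin{equation*}
L(p,\lambda)-L(p^{(\lambda)},\lambda)\ \ge\ 2(\mu(\lambda)-\lambda)\,\|P-P^{(\lambda)}\|_{\mathrm{TV}}^2 .
\end{equation*}
To get this, I first solve the first-order conditions for $p^{(\lambda)}$, mirroring the proof of Lemma~\ref{lemma:pstar}. They give $p^{(\lambda)}=q/(\mu-\lambda\mathbf 1_{\mathcal C})$, where $\mu=\mu(\lambda)$ is the normalization multiplier, and the quadratic formula for $\mu$ reproduces the stated $\mu(\lambda)$. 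Next, the excess Lagrangian can be written as
\begin{equation*}
\mathbb E_Q\!\left[\log\frac{p^{(\lambda)}}{p}\right]-\lambda\bigl(P(\mathcal C)-P^{(\lambda)}(\mathcal C)\bigr),
\end{equation*}
and after reweighting this equals $\mu(\lambda)$ times a KL-type divergence weighted by $(\mu-\lambda\mathbf 1_{\mathcal C})p^{(\lambda)}$. The weight is bounded below by $\mu-\lambda$, so Pinsker's inequality gives the displayed bound. This weighted-Pinsker step is where I expect the bookkeeping to be delicate.

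2. I then bound the left-hand side at $p=p_{u'_\theta}^{(\lambda)}$ by splitting it as
\begin{equation*}
\bigl[L(p_{u'_\theta}^{(\lambda)},\lambda)-\bar g(\lambda)\bigr]+\bigl[\bar g(\lambda)-L(p^{(\lambda)},\lambda)\bigr].
\end{equation*}
The first bracket is at most $\varepsilon$ by Assumption~\ref{ass:esterrorGD}. For the second, I plug $u_\theta^{(\lambda)}$ into $\bar g$ and use Lemma~\ref{lemma:TVgap1}, i.e.\ $\mathrm{KL}\le A_1\nu+A_2\nu^2$ via Assumption~\ref{ass:esterror}. The same reweighting controls both the likelihood change and the constraint change, which yields the factor $\mu(\lambda)$.

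3. Combining steps 1 and 2 gives $\|P_{u'_\theta}^{(\lambda)}-P^{(\lambda)}\|_{\mathrm{TV}}\le e(\lambda)\le e_{\max}$.

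\textbf{Dual term.} The ideal distribution $P^{(\lambda)}$ is the explicit reweighting of $Q$ inside and outside $\mathcal C$. Its TV distance to $P^{(\lambda^*)}$ is therefore $|P^{(\lambda)}(\mathcal C)-b|$, and this is Lipschitz in $\lambda$. Using the $k$-strong concavity of $g$ from Lemma~\ref{lemma:kconcave}, I expect the bound
\begin{equation*}
\|P^{(\lambda)}-P^{(\lambda^*)}\|_{\mathrm{TV}}\le \frac{|\lambda-\lambda^*|}{\bigl((1-Q(\mathcal C))^{1/3}+Q(\mathcal C)^{1/3}\bigr)^3}.
\end{equation*}
The constant here should be the worst-case derivative of $P^{(\lambda)}(\mathcal C)$ over $\lambda$, exactly as in the proof of Theorem~\ref{theorem:constraintsatisfaction}. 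It then remains to show
\begin{equation*}
\limsup_{i\to\infty}|\lambda(i)-\lambda^*|\le \frac{2e_{\max}}{k}.
\end{equation*}

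\textbf{Main obstacle: the inexact supergradient recursion.} The dual update in terms of $\lambda=w\lambda'$ is $\lambda(i+1)=[\lambda(i)+w\eta_i(b-P_{u'_\theta}^{(\lambda(i))}(\mathcal C))]_+$. The exact supergradient of $g$ is $b-P^{(\lambda)}(\mathcal C)$, and the inexactness is at most $e_{\max}$ by Lemma~\ref{lemma:TVandconstraint} and the primal bound.

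1. Project, expand, and use $k$-strong concavity of $g$:
\begin{equation*}
(\lambda^*-\lambda)\,g'(\lambda)\ \ge\ k(\lambda-\lambda^*)^2 .
\end{equation*}
This gives
\begin{equation*}
(\lambda(i+1)-\lambda^*)^2\le(\lambda(i)-\lambda^*)^2-2w\eta_i\bigl(k|\lambda(i)-\lambda^*|^2-e_{\max}|\lambda(i)-\lambda^*|\bigr)+w^2\eta_i^2 .
\end{equation*}
The step condition $\eta_i\le 2/(kw)$ keeps the contraction factor well behaved.

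2. With $\sum\eta_i=\infty$ and $\sum\eta_i^2<\infty$, a standard Robbins--Siegmund style argument shows that $|\lambda(i)-\lambda^*|$ eventually enters, and remains near, the region where $k|\lambda-\lambda^*|\le e_{\max}$. The precise factor $2$ in the limsup must absorb the overshoot from one step and the $\eta_i^2$ terms; I would get it by a careful two-case argument on $|\lambda(i)-\lambda^*|$.

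Assembling the primal term, the dual term, and this limsup bound gives $\mathcal E_{\mathrm{PDFM}}$.
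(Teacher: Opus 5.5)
Your proposal is correct and follows the paper's proof essentially step for step. It uses the same triangle split at $\lambda(i)$, the same weighted-Pinsker quadratic-growth bound combined with $\varepsilon$ and Lemma~\ref{lemma:dualfuncgap} for the primal term, Lemma~\ref{lemma:lambdabound} for the dual term, and the same inexact projected-supergradient recursion. One correction: the Lipschitz constant in Lemma~\ref{lemma:lambdabound} comes from an upper bound on $\frac{d}{d\lambda}P^{(\lambda)}(\mathcal C)=-g''(\lambda)$, not from $k$-strong concavity, which only gives the reverse inequality $|P^{(\lambda)}(\mathcal C)-b|\ge k|\lambda-\lambda^*|$.

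The only real difference is the closing step. The paper applies Young's inequality to get $|\lambda(i+1)-\lambda^*|^2-\frac{4e_{\max}^2}{k^2}\le\bigl(1-\frac{k}{2}w\eta_i\bigr)\bigl(|\lambda(i)-\lambda^*|^2-\frac{4e_{\max}^2}{k^2}\bigr)+(w\eta_i)^2$ and unrolls it. Its factor $2$ comes from using $-kw\eta_i|\lambda(i)-\lambda^*|^2$ where your (valid) $-2kw\eta_i|\lambda(i)-\lambda^*|^2$ is available, not from overshoot. The same Young-and-unroll step applied to your recursion, valid once $kw\eta_i\le 1$ (which holds for all large $i$), gives $\limsup_{i\to\infty}|\lambda(i)-\lambda^*|\le e_{\max}/k$. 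So no two-case argument is needed, and the stated bound follows a fortiori.
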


Theorem~\ref{theorem:bound2} gives the asymptotic guarantee for the practical primal-dual algorithm in Alg.~\ref{alg:pd_practical}, with detailed proofs in Appendix~\ref{sec:proofpart2}. Under the stated boundedness, approximation, and exact dual-gradient assumptions, $P_{u'_{\theta}}^{(\lambda(i))}$  remains within an explicit TV neighborhood of the distribution-level optimum $P^{(\lambda^*)}$. The radius of this neighborhood is controlled by $e_{\max}$, which is determined by the optimization error $\varepsilon$, the FM parameterization error $A_1 \nu + A_2 \nu^2$, and their worst-case behavior over the bounded dual range $[0,\lambda_{\max}]$. Consequently, the asymptotic constraint satisfaction rate is at least the target level $b$ minus the same TV error; the gap decreases as the practical optimization error $\varepsilon$ and FM parameterization error $\nu$ decrease.

\section{Experiments} 
\subsection{Illustrative $2$-D Constrained Generation ($Q(\mathcal{C})< b$)}\label{sec:2dtoy}

We begin with a synthetic $2$-D constrained generation example. Let
$z=(z_1,z_2)\in\mathbb R^2$ denote a sample. The clean distribution is a uniform distribution over two disconnected
regions,
\[
\mathcal C=\{(z_1,z_2):1\leq |z_1|\leq 5,\;1\leq |z_2|\leq 5,\;z_1z_2>0\}.
\]
Samples from the clean distribution are corrupted by
additive zero-mean Gaussian noise with variance $0.01$, yielding the training
distribution $Q$, for which $Q(\mathcal C)\approx 96\%$. Following the
formulation in \eqref{eq:cvxoriginal}, the goal is to learn a distribution that
remains as close as possible to $Q$ while achieving a constraint satisfaction
rate of at least $b=99.5\%$.


We compare PDFM
against two FM-based baselines: (1) vanilla FM trained on
samples from $Q$; and (2) FM trained on the truncated distribution
$Q'=Q(\cdot\mid\mathcal C)$, obtained by discarding infeasible training samples,
with density $
    q'(x)={q(x)\mathbf 1_{\mathcal C}(x)}/{Q(\mathcal C)}$. The visualization of the clean distribution, $Q$, $P^{(\lambda^*)}$ (computed for this simple case via Lemma~\ref{lemma:pstar}), the distributions induced by baselines (1), (2), and PDFM are shown in
Appendix~\ref{app:toy2d} Fig.~\ref{fig:2Dtoy}. We evaluate two metrics: the constraint satisfaction rate and the Sliced Wasserstein Distance (SWD)~\citep{swd} between samples from the trained model and samples from $P^{(\lambda^*)}$. SWD measures distributional
similarity, with smaller values indicating better similarity. Reported metrics use $50000$ generated samples, averaged over $100$ trials.

Baseline (1) shows that vanilla FM trained directly on $Q$ produces a distribution
with constraint satisfaction rate $96.37\%$, close to $Q(\mathcal C)$, but below
the target level $b=99.5\%$; its SWD is $0.1384$. Baseline (2) shows that even
when FM is trained on the truncated distribution, whose training samples are all
feasible, it still fails to fully enforce the constraint, achieving constraint
satisfaction rate $98.74\%$ and SWD $0.1156$. In this synthetic example,
discarding infeasible samples removes only about $4\%$ of the data; \emph{in more
realistic settings, as shown in Sec.~\ref{sec:PDE}, this rejection step can
discard a much larger fraction of the training set and thereby degrade distributional matching}. In contrast, PDFM achieves the target constraint level,
with a constraint satisfaction rate $99.51\%$, while also attaining the lowest SWD,
$0.1096$. We therefore conclude that, with samples from $Q$, PDFM achieves better
constraint satisfaction than FM baselines trained on either $Q$ or the truncated
distribution $Q'$, while attaining superior distributional fidelity.

\subsection{Constrained Generation with $\ell_2$-Ball Constraints ($Q(\mathcal{C})\geq b$)} \label{sec:l2ball}

We next evaluate PDFM on higher-dimensional
synthetic tasks in this regime, together with several baselines, MDM \citep{mirror}, NAMM \citep{feng2025neural}, PDM \citep{Christopher2024ConstrainedSW}, PCFM \citep{pcfm2025}, and FM-RE \citep{FMRE}, introduced in Appendix~\ref{sec:relatedworks}.

The target distribution $Q$ is a Gaussian mixture model, with a constraint defined as an $\ell_2$ ball constraint $
    \mathcal{C} = \left\{x\in \mathbb{R}^d\Big|\|x\|_2\leq 1\right\}$.
  Specifically, we consider $d$
isotropic Gaussians, each with variance $0.05$, centered at each of the $d$ standard unit vectors, and reject samples
outside $\mathcal{C}$, i.e., $Q(\mathcal{C}) = 1$. This setting is also considered in \citet{mirror}.

FM achieves $P(\mathcal{C})=97.633\%$ and $90.918\%$ in the 8-D and 20-D cases, respectively, showing that constrained generation remains nontrivial even when $Q(\mathcal{C}) \geq b$, in which case the population-level optimum is $Q$. For a direct comparison with FM-RE \citep{FMRE}, which also relies on membership oracle feedback, we set the target constraint satisfaction rate $b$ to match the $P(\mathcal{C})$ attained by FM-RE, namely $b=99.923\%$ for the $8$-D case and $b=99.748\%$ for the $20$-D case. The results in Table~\ref{tb:820l2ball} show that PDFM achieves $P(\mathcal{C})$ close to the target $b$ while yielding a smaller SWD than FM-RE, demonstrating superior performance in both constraint satisfaction and distributional matching. It is worth noting that PDFM can be configured to match the constraint satisfaction level achieved by FM-RE, whereas the reverse is not straightforward. The reason is that PDFM directly controls $P(\mathcal{C})$ through $b$, while FM-RE can only influence $P(\mathcal{C})$ indirectly by adjusting the weight of the constraint term. Therefore, compared with FM-RE, PDFM substantially reduces the burden of hyperparameter tuning. 

Compared with MDM, which requires convexity and a special structure of
$\mathcal C$, NAMM and PCFM, which both require a differentiable distance to $\mathcal C$,
and PDM, which requires projection onto $\mathcal C$, \emph{PDFM works with arguably the most flexible
requirement on the constraint set specification: access via a membership oracle.}
Despite this weaker requirement, PDFM achieves strong constraint satisfaction
while maintaining competitive distributional fidelity. 

\begin{table}[t]
\centering
\caption{Results on $8$-D and $20$-D constrained generation tasks. $P(\mathcal{C})$ denotes the mean constrained-satisfaction rate of $10000$ samples, averaged over $100$ generation trials. SWD is computed with $10000$ generated samples and samples from $Q$, averaged over $100$ generation trials.  }
\scalebox{0.9}{
\begin{tabular}{cc|ccccccc}
\hline
 &  & FM & MDM & NAMM & PDM & PCFM & FM-RE & PDFM \\
\hline
\multirow{2}{*}{$8$-D $\ell_2$ ball}
& $P(\mathcal C)$ $(\uparrow)$
& $97.633\%$ & $100\%$ & $99.986\%$ & $100\%$ & $100\%$ & $99.923\%$ & $99.906\%$ \\
& SWD $(\downarrow)$
& $0.0193$ & $0.0217$ & $0.0228$ & $0.0200$ & $0.0284$ & $0.0194$ & $0.0172$ \\
\hline
\multirow{2}{*}{$20$-D $\ell_2$ ball}
& $P(\mathcal C)$ $(\uparrow)$
& $90.918\%$ & $100\%$ & $99.711\%$ & $100\%$ & $100\%$ & $99.748\%$ & $99.755\%$ \\
& SWD $(\downarrow)$
& $0.0087$ & $0.0144$ & $0.0205$ & $0.0257$ & $0.0148$ & $0.0132$ & $0.0101$ \\
\hline
\end{tabular}}
\label{tb:820l2ball}
\end{table}

\subsection{Unconditional PDE-Constrained Generation of Spatiotemporal Dynamics} \label{sec:PDE}

A central challenge in applying generative modeling to applications involving dynamical systems, including chaotic systems, is generating samples that respect known physical laws such as the underlying governing equations. We illustrate PDFM on the Kuramoto--Sivashinsky (KS) equation, a canonical model of spatiotemporal chaos \citep{10.1143/PTP.55.356,SIVASHINSKY19771177}, where the constraint set $\mathcal C$ is defined by the residual of generated space-time patches against the numerical PDE solver. This setting is representative of generative emulation tasks in which a forward solver can evaluate whether a sample obeys the dynamics. 

Given a dataset of non-overlapping space--time patches $x\in\mathbb R^{64\times 64}$, the objective is to learn a generative model
that produces patches satisfying the KS dynamics up to a prescribed tolerance
with a user-specified probability. Specifically, a forward solver, which is assumed to be \textbf{non-differentiable},  provides the residual
feedback $\delta_{L_n}$, where $\delta_{L_n}=0$ indicates strict alignment with the KS
dynamics up to solver precision. A sample is considered feasible if
$\delta_{L_n}<\tau$, where $\tau$ denotes the prescribed error tolerance. Details of the dataset construction and residual computation are provided in Appendix~\ref{app:PDE}. Two generation cases are considered.

\paragraph{Case 1:} The clean patches $x \in \mathbb{R}^{64 \times 64}$ with $\delta_{L_n}=0$ from the solver are used as the training data. The threshold is $\tau = 0.033$, and the target constraint satisfaction rate is $98\%$. In this case, all samples from the training distribution satisfy the constraints, indicating $Q(\mathcal{C})=1$.
\paragraph{Case 2:} We create degraded samples from the space--time patches, simulating measurements from a sensor with reduced spatial resolution, where only part of the spatial grid is directly observed and the missing locations are reconstructed by simple linear interpolation. Specifically, for each sample, along the spatial dimension, every $9$-th position is replaced by a linearly interpolated value between the adjacent values. This leads to an average $\delta_{L_n}=0.0555$ for the training samples. Setting the threshold as $\tau = 0.05$, the training samples constraint satisfaction rate is $39.47\%$, i.e., $Q(\mathcal{C})\approx39.47\%$. The target constraint satisfaction rate is $90\%$.

\begin{table}[t!]
\centering
\caption{Comparison across the considered methods for Case 1 and Case 2. Results are means over $10$ trials, using $2000$ generated samples per method per trial.}\label{tab:PDE}
\resizebox{\textwidth}{!}{
\begin{tabular}{c|ccc|cccc}
\hline
\multirow{2}{*}{Method}
& \multicolumn{3}{c|}{Case 1 $(Q(\mathcal C)=1)$}
& \multicolumn{4}{c}{Case 2 $(Q(\mathcal C)\approx 39.47\%)$} \\
\cline{2-8}
& $P(\mathcal C)$ $(\uparrow)$
& Avg $\delta_{L_n}$ $(\downarrow)$
& HE $(\downarrow)$
& $P(\mathcal C)$ $(\uparrow)$
& Avg $\delta_{L_n}$ $(\downarrow)$
& $\mathrm{HE}_{\mathrm{clean}}$ $(\downarrow)$
& $\mathrm{HE}_{\mathrm{deg}}$ $(\downarrow)$ \\
\hline
FM ($Q$)
& {14.19\%}
& {0.0409}
& {(0.0087,0.0453)}
& {9.82\%}
& {0.0622}
& {(0.0094,0.0465)}
& {(0.0086,0.0565)} \\
FM ($Q'$)
& N/A
& N/A
& N/A
& {33.24\%}
& {0.0539}
& {(0.0135,0.0477)}
& {(0.0114,0.0489)}\\
PCFM
& {6.03\%}
& {0.0484}
& {(0.0156,0.1547)}
& {0.06\%}
& {0.1022}
& {(0.0173,0.0729)}
& {(0.0186,0.0823)} \\
FM-RE
& {43.17\%}
& {0.0339}
& {(0.0117,0.0329)}
& {46.33\%}
& {0.0513}
& {(0.0118,0.0558)}
& {(0.0097,0.0570)} \\
PDFM
& {97.74\%}
& {0.0235}
& {(0.0107,0.0269)}
& {90.48\%}
& {0.0406}
& {(0.0118,0.0435)}
& {(0.0103,0.0457)} \\
\hline
\end{tabular}}
\end{table}

The statistical results are reported in Table~\ref{tab:PDE}, and visualizations of the generated samples and corresponding histograms are provided in Appendix~\ref{app:PDE}. In Table~\ref{tab:PDE}, $P(\mathcal{C})$ denotes the empirical constraint satisfaction rate. To evaluate the distribution matching quality, we follow \citet{jiang2023training}, and provide the mean histogram error (denoted as $\mathrm{HE}$) -- the mean absolute difference between the generated and reference summary-statistic histograms over the bin grids, averaged over $2000$ samples.  $\mathrm{HE}$ in Table~\ref{tab:PDE} is in the format $(\mathrm{HE}^{(1)},\mathrm{HE}^{(2)})$, where $\mathrm{HE}^{(1)}$ corresponds to the histogram  error of
$\partial u/\partial t$ vs. $\partial u/\partial x$ and $\mathrm{HE}^{(2)}$
corresponds to the histogram error of
$\partial^2 u/\partial x^2$ vs. $\partial u/\partial x$. A lower histogram error denotes a better match in distribution.
In Case 2, ``$\mathrm{HE}_{\mathrm{clean}}$'' and ``$\mathrm{HE}_{\mathrm{deg}}$'' denote the error relative to the clean samples and the degraded training samples, respectively.

In Case~1, we compare PDFM with FM trained on $Q$, denoted by FM~$(Q)$, PCFM,
and FM-RE.  In Case~2, a natural alternative is to discard infeasible training samples and
train FM on the truncated distribution $Q'$, with density $q'(x)=\frac{q(x)\mathbf 1_{\mathcal C}(x)}{Q(\mathcal C)}.$
We additionally include this baseline, denoted by FM~$(Q')$. This rejection step removes
approximately $60\%$ of the training data. Although FM~$(Q')$ improves the
constraint satisfaction rate from $9.82\%$ for FM~$(Q)$ to $33.24\%$, it
remains far below the target rate of $90\%$. Moreover, fewer training samples lead to a degraded generated distribution, as reflected by the increased histogram error.

The results in Table~\ref{tab:PDE} illustrate four properties of PDFM that are particularly relevant in this setting. First, even when the training distribution itself satisfies the constraint perfectly (Case~1, $Q(\mathcal C)=1$), vanilla FM only achieves $P(\mathcal C)=14.19\%$, confirming that local consistency does not transfer for free from the training distribution to the trained model; PDFM raises this rate to $97.74\%$ at the prescribed target while keeping the marginal summary statistics close to the training data. Second, in the more realistic Case~2 with degraded training data ($Q(\mathcal C)\approx 39.47\%$), PDFM closes the gap from $9.82\%/33.24\%$ to $90.48\%$, demonstrating that the chance-constrained formulation lets the generative model produce samples of higher dynamical quality than the data it was trained on, by leveraging the solver-based residual oracle to suppress non-physical regions of the learned distribution. Third, while PCFM and NAMM, requiring the fully differentiable residual, fail in this task,  PDFM reaches these satisfaction rates without differentiating through the numerical stepper, which is essential when the stepper is non-differentiable, written in a legacy language, or otherwise available only as a black box. Fourth, compared with FM-RE, which also requires only membership oracle feedback, PDFM provides direct control over the constraint satisfaction rate and can therefore target a prescribed feasibility level rather than relying on an implicitly tuned penalty weight.

\subsection{Additional Applications}

\textbf{Fingerprint image generation.}
We further evaluate PDFM on SOCOFing fingerprint \citep{shehu2018detection} generation with a ridge-connectivity constraint, assessed by a non-differentiable image-processing oracle. The oracle applies thresholding and skeletonization, then checks whether the number of connected ridge components is below a prescribed threshold. PDFM achieves a mean constraint satisfaction rate of $99.28\%$, meeting the target $b=99\%$, compared with $72.74\%$ for FM and $88.14\%$ for FM-RE. Meanwhile, PDFM maintains NLLs comparable to FM and preserves visual fidelity. Experimental details and sample visualizations are provided in Appendix~\ref{app:fing}.

\textbf{$3$-D molecule generation.}
Trained on QM9 \citep{Ramakrishnan2014}, we aim to generate samples satisfying a molecule-stability constraint requiring all atoms to satisfy valid valency rules after bond inference. Using only binary feasibility feedback for constraint satisfaction, PDFM improves the mean molecule stability rate from $32.77\%$ to $48.75\%$ and reduces NLL from $-19.74$ to $-20.70$ relative to FM. Although the stability rate remains below the target $b=60\%$, these results demonstrate both improved feasibility and improved likelihood quality, without task-specific designs. Further details are provided in Appendix~\ref{App:qm9}.

\section{Conclusion}

In this paper, we present PDFM, a primal-dual framework for constrained generation that requires only membership oracle feedback, provides direct control over the constraint satisfaction rate, and admits theoretical guarantees. We also propose a non-stochastic constraint-guided approach for updating the velocity field, enabling efficient use of oracle feedback during primal optimization. 
One of the limitations of PDFM arises from the need to obtain sufficient positive constraint satisfaction feedback during training to learn $f_\phi$ reliably. Further details are discussed in Appendix~\ref{app:discussion}.

\section*{Acknowledgment}
Zhengyan Huan is supported via NSF under Cooperative Agreement PHY-2019786, PHY-2525568, and DOE DE-SC002396. Peter Y. Lu and Shuchin Aeron are supported via NSF under Cooperative Agreement PHY-2019786, PHY-2525568. Shuchin Aeron would also like to acknowledge funding via NSF DMS 2309519.

\newpage
\bibliographystyle{plainnat}
\bibliography{references}

\newpage
\appendix

\section{Related work}\label{sec:relatedworks}
Constrained generative modeling has been studied from several complementary perspectives, depending on whether the constraint is imposed at the distribution level or at the level of individual generated samples. We refer to these two settings as \textbf{distributional constraints} and \textbf{sample-wise constraints}, respectively. This paper investigates \textbf{sample-wise constraints} with the objective of generating samples within the constraint set $\mathcal{C}$.

A closely related work of constrained generative models in the domain of \textbf{distributional constraints} is \citet{Khalafi}, which requires the generated distribution to remain close in KL divergence, to one or more reference distributions. While \citet{Khalafi} and this work both utilize primal-dual frameworks for constrained generative modeling, their scopes and theoretical guarantees differ substantially due to different constraint settings. Algorithmically, this work's optimization strategy designed for sample-wise constraints in Sec.~\ref{sec:gradimp} deviates from the score/flow loss in classical generative models.

For sample-wise constraints, existing approaches can be broadly divided into \textbf{training-time} and \textbf{inference-time methods}. Training-time methods incorporate constraint information during model training, for example, by modifying the objective, architecture, or learned dynamics. This typically requires additional training effort, but they can generally use standard/slightly-modified sampling procedures at inference time. In contrast, inference-time methods can often be applied to regularly trained generative models, but they must steer, project, or correct the sampling trajectory during generation, which usually increases inference-time computational cost. This paper is a \textbf{training-time method}, that informs the FM model of the constraints during training and uses standard Euler sampling with the learned velocity field.

\subsection{Training-Time Methods}

Among training-time methods assuming $Q(\mathcal{C})=1$, approaches that enforce strict feasibility of the generated samples, i.e., $P(\mathcal{C})=1$, generally require stronger structures and knowledge of the constraint set $\mathcal{C}$. Reflection-based methods, such as the reflected diffusion model (RDM) \citep{loureflect, fishmanreflect} and reflected flow matching (RFM) \citep{reflectedfm} enforce constraint satisfaction via reflected generation trajectories within $\mathcal{C}$, requiring convexity of $\mathcal{C}$ and explicit knowledge of the constraint boundary $\partial\mathcal{C}$. Mirror-based approaches, such as mirror diffusion model (MDM) \citep{mirror}, enforce constraint satisfaction by performing DDPM training in the domain of an unconstrained mapping of $\mathcal{C}$, and obtaining generated samples from the inverse mapping of the unconstrained domain. MDM requires convexity of $\mathcal{C}$ and an available closed-form bijective mapping between $\mathcal{C}$ and the unconstrained space.

Convexity of $\mathcal C$, boundary information, and tractable bijective maps
are often unavailable for complex constraints in practical settings. To broaden
applicability, several existing methods relax strict feasibility guarantees and
instead aim to improve, but not necessarily ensure, constraint satisfaction. Neural approximate mirror map (NAMM) \citep{feng2025neural} aims to approximate the bijective projection in MDM via NNs, relaxing the requirement of convexity and a closed-form bijective map to a differentiable distance to $\mathcal{C}$.
Trajectory Alignment \citep{Giorgio} trains diffusion models to align their sampling paths with the intermediate iterates of a physics-based optimizer, improving constrained design generation but requiring access to optimizer-generated trajectories during training.
Flow Matching with Random Exploration (FM-RE)~\citep{FMRE} improves feasibility by adding an explicit constraint-aware term to the FM training objective and injecting Gaussian noise into the generation trajectories to enable likelihood-ratio gradient estimation, thereby requiring only membership oracle feedback for $\mathcal{C}$. However, FM-RE's $P(\mathcal{C})$ is only implicitly controlled by the weight of the constraint-aware term, and FM-RE does not provide a theoretical guarantee for feasibility. PDFM improves upon FM-RE while retaining the same weak requirement of $\mathcal{C}$, i.e.,
membership oracle access, and further provides theoretical
guarantees and direct control over the constraint satisfaction rate
$P(\mathcal C)$.

\subsection{Inference-Time Methods}
Inference-time methods improve or enforce constraint satisfaction by modifying the sampling procedure of a pretrained generative model, typically through guidance, projection, or trajectory correction during generation. Projected diffusion model (PDM) \citep{Christopher2024ConstrainedSW} enforces strict feasibility by repeatedly projecting stochastic diffusion updates onto $\mathcal{C}$ at each step, requiring an available projection function, and potentially causing deviations from the training distribution. Physics-constrained flow matching (PCFM) \citep{pcfm2025} enforces constraints through forward-shooting and Gauss--Newton correction of predicted terminal states, which requires a differentiable constraint residual and access to its Jacobian. HardFlow \citep{li2026hardflowhardconstrainedsamplingflowmatching} enforces hard constraints by reformulating FM sampling as a trajectory-optimization problem, requiring an explicit constraint function and a tractable constrained-optimization solver during sampling. Doob-Oriented Inference-time Transformation (DOIT)~\citep{doobh} improves constraint satisfaction 
by applying Doob's $h$-transform to add a dynamic correction to the diffusion sampling process, which can handle non-differentiable reward feedback but requires Monte Carlo reward-query estimates during sampling. While inference-time methods avoid retraining, they typically require a nontrivial design of the guidance, correction, or optimization procedure and impose additional computational burden during sampling.

\section{Notations}\label{app:notations}

\begin{table}[htb]
            \caption{Definitions and descriptions of the notations used in this paper. }
    \resizebox{\textwidth}{!}{
    \begin{tabular}{|c|c|c|}
        \hline
        Symbol &  Definition & Description\\
        \hline
        $P,Q$&& Measures\\\hline
        $p,q$&&Probability densities\\\hline$P(\mathcal{C}),Q(\mathcal{C})$&$P(\mathcal{C}) = \int_{\mathcal{C}}p(x) dx, Q(\mathcal{C}) = \int_{\mathcal{C}}q(x) dx$& constraint satisfaction rates of $P$ and $Q$
                \\\hline
        $L(p,\lambda)$ & $L(p,\lambda) = \mathbb{E}_{x\sim Q}[-\log p(x)]+\lambda \left(b-\int_\mathcal{C} p(x) dx\right)$ & The Lagrangian
        \\
        \hline
        $u$ &  &The velocity field
\\\hline 
        $g(\lambda)  $ & $g(\lambda) = \inf_{p}L(p,\lambda)$ & The dual function \\\hline
        $\lambda^*$ & $\lambda^* \in \arg\max_{\lambda\geq 0}g(\lambda)$  & The optimal dual variable \\
        \hline
        $p^{(\lambda)}$ & $p^{(\lambda)}  \in \arg\min_{p}L(p,\lambda)$ & Distribution-level optimal solution at ${\lambda}$
        \\\hline $u^{(\lambda)}$&\makecell{Let $\pi^{(\lambda)}$ be the coupling of $P_0$ and $P^{(\lambda)}$, i.e., \\$(x_0,x_1)\sim \pi^{(\lambda)}, x_t=(1-t)x_0+t x_1.$\\$u^{(\lambda)}(x,t):= \mathbb{E}_{(x_0,x_1)\sim\pi^{(\lambda)}}\left[x_1-x_0 \mid x_t=x\right]$}& The velocity field of  $P^{(\lambda)}$\\\hline $u_\theta$& & The velocity field parameterized by $\theta$\\\hline $\Phi_{u, t}$&$\Phi_{u, t}(x) = x+\int_{0}^tu(\Phi_{u, s}(x),s)ds$&Pushforward operator
                \\\hline $P_{u}$& $ P_{u} := (\Phi_{u, 1})_{\#}P_0$&Distribution induced by $u$\\\hline \rule{0pt}{2.5ex}
        $\Theta, U_{\Theta}$& $U_{\Theta}:=\{u_{\theta}:\theta \in\Theta\}$& \makecell{The parameter family \\and the image of the parameterization.} 
        \\\hline \rule{0pt}{2.5ex}
        $\bar{L}(u_\theta,\lambda)$& $\bar{L}(u_\theta,\lambda) = L(p_{u_\theta},\lambda)$& The Lagrangian after parameterization. 
        \\\hline \rule{0pt}{2.5ex}
        $\bar{g}(\lambda)$& $\bar{g}(\lambda) = \inf_{u_\in U_\Theta} \bar{L}(u,\lambda)$ & The dual function after parameterization. 
        \\\hline \rule{0pt}{2.5ex}
        $\bar{\lambda}^*$ & $\bar{\lambda}^* \in \arg\max_{\lambda\geq 0}\bar{g}(\lambda)$  &
        \\\hline
        \rule{0pt}{2.5ex}$u_\theta^{(\lambda)}$&$u_\theta^{(\lambda)}\in\arg\min_{u\in U_\Theta}\bar{L}(u,\lambda)$&
        \\\hline\rule{0pt}{2.5ex} {$P_{u_\theta}^{(\lambda)}$} &  $P_{u_\theta}^{(\lambda)} = (\Phi_{u_{\theta}^{(\lambda)}, 1})_{\#}P_0$&\makecell{{ Distribution induced by $u_{\theta}^{(\lambda)}$}}\\\hline$\Phi^{\Delta t}_{u_\theta, t}$& Defined in \eqref{eq:euler} &Euler sampler
        \\\hline $L'(u_\theta,\lambda)$& \makecell{
            $ L'(u_\theta,\lambda) =\lambda  (b-P_{u_{\theta}}(\mathcal{C}))$\\$+w\mathbb{E}_{t\sim \mathrm{Unif}[0,1],x_0,x_1}\left[\| u_\theta(x_t, t) - (x_1-x_0)\|^2\right]$}
         & \makecell{Surrogate Lagrangian\\ for practical optimization}
         \\\hline
    $ u'^{(\lambda)}_{\theta}$ && \makecell{The learned velocity field by \\the iterative method in Sec.~\ref{sec:gradimp}}
         \\\hline
         $ P^{(\lambda)}_{u'_{\theta}}$&$P_{u'_\theta}^{(\lambda)} = (\Phi_{{u'}_{\theta}^{(\lambda)}, 1})_{\#}P_0$&The distribution induced by ${u'}^{(\lambda)}_{\theta}$ \\
         \hline
         \end{tabular}}
    \label{tab:notations}
        \end{table}

\section{Proofs for Lemma~\ref{lemma:pstar} and Lemma~\ref{lemma:TVandconstraint}}\label{sec:proofpart0}

\paragraph{Statement of Lemma~\ref{lemma:pstar}.}\emph{
Under Assumption~\ref{ass:feasibility},}
    \begin{equation}
        p^{(\lambda^*)}(x) = \frac{bq(x)}{Q(\mathcal{C})}\bm{1}_{\mathcal{C}}(x)+\frac{(1-b)q(x)}{1-Q(\mathcal{C})}\bm{1}_{\mathcal{C}^c}(x),~~\lambda^* = \frac{1-Q(\mathcal{C})}{1-b}-\frac{Q(\mathcal{C})}{b}.
    \end{equation}

\begin{proof}
Under Assumption~\ref{ass:feasibility}, the primal problem is convex and
satisfies Slater's condition. Therefore, strong duality holds and the duality
gap is zero.

The optimization is over probability densities $p\geq 0$ satisfying $\int p(x)dx=1$. Every optimizer with finite objective satisfies $p>0$, $Q$-almost everywhere. We introduce the Lagrangian:
\begin{equation}
    \mathcal{J}(p,\mu) = \int -q(x)\log p(x)d{x}+\lambda \left(b-\int_\mathcal{C} p(x) dx\right)+\mu\left(\int p(x)dx-1\right),
\end{equation}
in which $\lambda$ is a fixed multiplier for the inequality constraint and $\mu$ is the multiplier for the equality constraint. The KKT condition shows
\begin{equation}
    \frac{\partial\mathcal{J}}{\partial p(x)} = -\frac{q(x)}{p(x)}-\lambda \bm{1}_{\mathcal{C}}(x)+\mu =0.
\end{equation}
The solution of $p(x)$ is
\begin{equation}
    p(x) = \frac{q(x)}{\mu-\lambda\bm{1}_{\mathcal{C}}(x)},\quad \text{equivalently},\quad p(x) = \frac{q(x)}{\mu-\lambda}\bm{1}_{\mathcal{C}}(x)+\frac{q(x)}{\mu}\bm{1}_{\mathcal{C}^c}(x).
\end{equation}
Since $Q(\mathcal{C})<b$, the constraint is active (otherwise $p=q$ would be the solution), therefore
\begin{equation}
\begin{cases}
    \int_{\mathcal{C}} p^*(x)dx = \frac{Q(\mathcal{C})}{\mu-\lambda}=b,\\
    \int p^*(x)dx =\frac{Q(\mathcal{C})}{\mu-\lambda}+\frac{1-Q(\mathcal{C})}{\mu} = 1,
    \end{cases} \Rightarrow\begin{cases}
        \lambda^* = \frac{1-Q(\mathcal{C})}{1-b}-\frac{Q(\mathcal{C})}{b},\\
        \mu^* = \frac{1-Q(\mathcal{C})}{1-b}.
    \end{cases}
\end{equation}
Given $\lambda^*,\mu^*$, we have 
\begin{equation}
     p^{(\lambda^*)}(x) = \frac{bq(x)}{Q(\mathcal{C})}\bm{1}_{\mathcal{C}}(x)+\frac{(1-b)q(x)}{1-Q(\mathcal{C})}\bm{1}_{\mathcal{C}^c}(x).
\end{equation}
Since the feasible set is convex and the objective
$\int -q(x)\log p(x)dx$
is strictly convex in $p$ on the set where $q(x)>0$, the KKT solution is unique
on $\{x:q(x)>0\}$ and the optimizer is unique.

\end{proof}
\paragraph{Statement of Lemma~\ref{lemma:TVandconstraint}.}\emph{
  Let $P_1,P_2$ be probability measures defined on the same measurable space $(\Omega,\mathcal{A})$ with $\|P_1-P_2\|_{\mathrm{TV}}=\delta $, }
\begin{equation}
    \max\{0, P_2(\mathcal{C})-\delta\} \leq P_1(\mathcal{C}) \leq \min\{1, P_2(\mathcal{C})+\delta\} .
\end{equation}

\begin{proof}
 Note that the definition of total variation is
\begin{equation}
    \delta =\|P_1-P_2\|_{\mathrm{TV}}=\sup_{A\in \mathcal{A}} |P_1(A)-P_2(A)|\geq |P_1(\mathcal{C})-P_2(\mathcal{C})|.
\end{equation}
Therefore $P_1(\mathcal{C})\in [P_2(\mathcal{C})-\delta,P_2(\mathcal{C})+\delta]$. Also consider that $P_1(\mathcal{C})\in[0,1]$ by definition, we can conclude that $    \max\{0, P_2(\mathcal{C})-\delta\} \leq P_1(\mathcal{C}) \leq \min\{1, P_2(\mathcal{C})+\delta\}$.
\end{proof}

\section{Proofs for Constraint guarantees with FM Parameterization}\label{sec:proofpart1}

To control the total variation distance between
the FM-parameterized solution $P^{({\bar{\lambda}^*})}_{u_{\theta}}$ and the optimal solution to \eqref{eq:cvxoriginal} $P^{(\lambda^*)}$, 
we decompose this error as
\begin{equation}
    \label{eq:TVsketch1}\|P^{({\bar{\lambda}^*})}_{u_{\theta}}- P^{(\lambda^*)}\|_{\mathrm{TV}}\leq \|P^{(\bar{\lambda}^*)}_{u_{{\theta}}}- P^{(\bar{\lambda}^*)}\|_{\mathrm{TV}}+\|P^{(\bar{\lambda}^*)}- P^{(\lambda^*)}\|_{\mathrm{TV}},
\end{equation}
in which the first term on the RHS captures the approximation error introduced by restricting the
solution to the FM-parameterized class $U_\Theta$ at fixed dual variable
$\bar{\lambda}^*$, while the second term captures the distributional error caused
by the dual-variable mismatch between $\bar{\lambda}^*$ and $\lambda^*$.

\subsection{Bound $\|P^{(\bar{\lambda}^*)}_{u_{{\theta}}}- P^{(\bar{\lambda}^*)}\|_{\mathrm{TV}}$}

\begin{lemma}\label{lemma:TVgap1}
Under Assumption~\ref{ass:esterror}, for any dual variable $\lambda$ considered
in the analysis, there exist uniform constants $A_1,A_2>0$ such that
\begin{equation}
    \mathrm{KL}\left(P^{(\lambda)}\middle\|P^{(\lambda)}_{u_\theta}\right)
    \leq A_1\nu + A_2\nu^2
    ,\quad
    \left\|P^{(\lambda)}_{u_\theta}-P^{(\lambda)}\right\|_{\mathrm{TV}}
    \leq
    \sqrt{\frac{1}{2}\left(A_1\nu+A_2\nu^2\right)} .
\end{equation}
where $A_1,A_2>0$ are uniform upper bounds on the regularity constants in Theorem~3.1 of~\citet{su2025flowmatchingkldivergence} over all dual variables $\lambda$ considered in the analysis. They depend only on the worst-case smoothness and boundedness of the corresponding probability paths and velocity fields in Assumption~\ref{ass:esterror}.
\end{lemma}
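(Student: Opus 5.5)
The plan is to obtain the KL bound by invoking the main result of \citet{su2025flowmatchingkldivergence} pointwise in $\lambda$, and then to make its constants uniform. Fix any dual variable $\lambda$ considered in the analysis. The target path is the linear-interpolant path built from the coupling $\pi^{(\lambda)}$ of $P_0$ and $P^{(\lambda)}$. Its true velocity field is $u^{(\lambda)}$, and the competitor field is $u_\theta^{(\lambda)}$. By Assumption~\ref{ass:esterror}, this triple satisfies the regularity hypotheses (Assumptions~3.2 and~3.3) of that paper, and the $L^2$ velocity mismatch along the path is at most $\nu^2$. Theorem~3.1 there gives, for the two flows started from the same $P_0$,
\begin{equation*}
\mathrm{KL}\big(P^{(\lambda)}\,\big\|\,P^{(\lambda)}_{u_\theta}\big)\le A_1(\lambda)\,\nu + A_2(\lambda)\,\nu^2 ,
\end{equation*}
where $A_1(\lambda),A_2(\lambda)$ are explicit in the regularity quantities: the score and score-Jacobian bounds of the path, the bounds on the velocity norms and Jacobians, the divergence discrepancy, and the gradient of divergence. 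The underlying mechanism, which I would simply cite, is the evolution of the KL divergence along the two continuity equations. Its time derivative equals an expectation of $(u_\theta-u)\cdot\nabla\log(p_t/p_{\theta,t})$. Splitting it with Cauchy--Schwarz gives a linear-in-$\nu$ term and a quadratic term, and then one integrates over $t\in[0,1]$.

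The second step makes the constants uniform. I set $A_i:=\sup_\lambda A_i(\lambda)$ over the dual variables considered. This is finite because Assumption~\ref{ass:esterror} requires the regularity bounds to hold uniformly over those $\lambda$, and $A_i(\lambda)$ depends monotonically, or at least continuously and boundedly, on these bounds. If needed, I would restrict attention to $\lambda\in[0,\lambda_{\max}]$ using Assumption~\ref{ass:dual_upper_bound}. I would also check that the direction of the KL matches the cited theorem, namely the true distribution in the first argument. Assumption~\ref{ass:densityrange} guarantees that the log-ratio is well defined wherever $q>0$.

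The final step is Pinsker's inequality, $\|P-P'\|_{\mathrm{TV}}\le\sqrt{\tfrac12\mathrm{KL}(P\|P')}$, applied with $P=P^{(\lambda)}$ and $P'=P^{(\lambda)}_{u_\theta}$. Since TV is symmetric, this gives the stated bound immediately.

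The main obstacle is the uniformity step. The cited theorem is stated for a single fixed path, so I must verify that its constants depend only on the listed regularity quantities and not on hidden features of $P^{(\lambda)}$. Note that $p^{(\lambda)}$ is a piecewise reweighting of $q$ that is discontinuous across $\partial\mathcal C$, so the score bounds are genuinely an assumption here and are not derived. I would handle this by treating Assumption~\ref{ass:esterror} as supplying exactly those uniform bounds and reading $A_1,A_2$ off the theorem's formula.
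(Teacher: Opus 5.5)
Your proposal is correct and follows essentially the same route as the paper: it applies Theorem~3.1 of \citet{su2025flowmatchingkldivergence} for each fixed $\lambda$ under Assumption~\ref{ass:esterror}, takes $A_1,A_2$ as uniform upper bounds of the resulting constants over the dual variables considered, and obtains the TV bound from Pinsker's inequality. The paper likewise takes the finiteness of these uniform constants as part of the assumption rather than deriving it, so your treatment of the uniformity step is consistent with the paper and slightly more explicit. The same holds for your caveat that the discontinuity of $p^{(\lambda)}$ across $\partial\mathcal C$ makes the score bounds a genuine assumption.
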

\begin{proof}
The KL bound $\mathrm{KL}\left(P^{(\lambda)}\middle\|P^{(\lambda)}_{u_\theta}\right)
    \leq A_1\nu + A_2\nu^2$ follows directly from Theorem~3.1 of~\cite{su2025flowmatchingkldivergence} under Assumption~\ref{ass:esterror}, with $A_1,A_2$
 chosen as uniform upper bounds over all dual variables $\lambda$ considered in the analysis. The total variation bound $\left\|P^{(\lambda)}_{u_\theta}-P^{(\lambda)}\right\|_{\mathrm{TV}}
    \leq
    \sqrt{\frac{1}{2}\left(A_1\nu+A_2\nu^2\right)}$ then follows immediately from Pinsker's inequality.
\end{proof}

Plugging in $\lambda = \bar{\lambda}^*$ in Lemma~\ref{lemma:TVgap1} leads to
\begin{equation}
    \|P^{(\bar{\lambda}^*)}_{u_{{\theta}}}- P^{(\bar{\lambda}^*)}\|_{\mathrm{TV}}\leq
    \sqrt{\frac{1}{2}\left(A_1\nu+A_2\nu^2\right)}.
\end{equation}

\subsection{Bound $\|P^{(\bar{\lambda}^*)}- P^{(\lambda^*)}\|_{\mathrm{TV}}$}

\begin{lemma}\label{lemma:lambdabound}
Under Assumption~\ref{ass:feasibility}, for dual variables  $\lambda_1,\lambda_2\geq 0$, 
$\|P^{(\lambda_1)}- P^{(\lambda_2)}\|_{\mathrm{TV}}$ is bounded in the sense that
\begin{equation}
    \|P^{(\lambda_1)}- P^{(\lambda_2)}\|_{\mathrm{TV}}\leq \frac{1}{\left((1-Q({\mathcal{C}}))^{1/3}+Q({\mathcal{C}})^{1/3}\right)^3}|\lambda_1-{\lambda_2}|.
\end{equation} 
\begin{proof}
    To compare the TV distance between $P^{(\lambda_1)}$ and $P^{(\lambda_2)}$, we first fix $\lambda$ and find the solution of 
    \begin{equation}
        \mathrm{minimize}~ \mathbb{E}_{x\sim Q}[-\log p(x)]+\lambda \left(b-\int_\mathcal{C} p(x) dx\right),~~\mathrm{s.t.}~~\int p(x)dx=1, p(x)\geq 0.
    \end{equation}
$p(x)\geq0$ automatically holds since the primal objective includes $\log p(x)$. Then we form a Lagrangian:
\begin{equation}
    \mathcal{J}(p,\lambda,\mu) = \int -q(x)\log p(x)d{x}+\lambda \left(b-\int_\mathcal{C} p(x) dx\right)+\mu\left(\int p(x)dx-1\right),
\end{equation}
in which $\lambda$ is a fixed multiplier for the inequality constraint and $\mu$ is the multiplier for the equality constraint. The KKT condition shows
\begin{equation}
    \frac{\partial\mathcal{J}}{\partial p(x)} = -\frac{q(x)}{p(x)}-\lambda \bm{1}_{\mathcal{C}}(x)+\mu =0.
\end{equation}
The solution of $p^{(\lambda)}(x)$ is
\begin{equation}\label{eq:sol1}
    p^{(\lambda)}(x) = \frac{q(x)}{\mu-\lambda\bm{1}_{\mathcal{C}}(x)},\quad \text{equivalently},\quad p^{(\lambda)}(x) = \frac{q(x)}{\mu-\lambda}\bm{1}_{\mathcal{C}}(x)+\frac{q(x)}{\mu}\bm{1}_{\mathcal{C}^c}(x).
\end{equation}
Integrate in $\mathcal{C}$ and $\mathcal{C}^c$ respectively leads to 
\begin{equation}
    \begin{cases}
        P^{(\lambda)}(\mathcal{C}) = \int_\mathcal{C}p^{(\lambda)}(x) dx = \frac{Q(\mathcal{C})}{\mu-\lambda},\\
        1-P^{(\lambda)}(\mathcal{C}) = \int_{\mathcal{C}^c}p^{(\lambda)}(x) dx =\frac{Q(\mathcal{C}^c)}{\mu}.
    \end{cases}
\end{equation}
This leads to a relationship between $\lambda,\mu$ and $P^{(\lambda)}(\mathcal{C})$:
\begin{equation} \label{eq:lambdaeq}
    \lambda = \frac{Q(\mathcal{C}^c)}{1-P^{(\lambda)}(\mathcal{C})}-\frac{Q(\mathcal{C})}{P^{(\lambda)}(\mathcal{C})}, ~~\mu=\frac{Q(\mathcal{C}^c)}{1-P^{(\lambda)}(\mathcal{C})}.
\end{equation}
Substituting back into \eqref{eq:sol1} leads to
\begin{equation}\label{eq:sol2}
    p^{(\lambda)}(x)= \frac{P^{(\lambda)}(\mathcal{C})}{Q(\mathcal{C})}q(x)\bm{1}_{\mathcal{C}}(x)+\frac{1-P^{(\lambda)}(\mathcal{C})}{Q(\mathcal{C}^c)}q(x)\bm{1}_{\mathcal{C}^c}(x).
\end{equation}
Now we evaluate the total variation using the expression in \eqref{eq:sol2},
\begin{equation}
\resizebox{0.92\textwidth}{!}{$
\begin{aligned}
    \|P^{(\lambda_1)}-P^{(\lambda_2)}\|_{\mathrm{TV}}&=\frac{1}{2}\int|p^{(\lambda_1)}(x)-p^{(\lambda_2)}(x)|dx\\
    &=\frac{1}{2}\left(\int_{\mathcal{C}}|p^{(\lambda_1)}(x)-p^{(\lambda_2)}(x)|dx+\int_{\mathcal{C}^c}|p^{(\lambda_1)}(x)-p^{(\lambda_2)}(x)|dx\right)\\&=
    \frac{1}{2}\left(\int_{\mathcal{C}}\left|\frac{P^{(\lambda_1)}(\mathcal{C})-P^{(\lambda_2)}(\mathcal{C})}{Q(\mathcal{C})}q(x)\right|dx+\int_{\mathcal{C}^c}\left|\frac{P^{(\lambda_1)}(\mathcal{C})-P^{(\lambda_2)}(\mathcal{C})}{Q(\mathcal{C}^{c}}))q(x)\right|dx\right)\\&=
    \frac{1}{2}\left(|P^{(\lambda_1)}(\mathcal{C})-P^{(\lambda_2)}(\mathcal{C})|+|P^{(\lambda_1)}(\mathcal{C})-P{(\lambda_2)}(\mathcal{C})|\right)
    \\&=|P^{(\lambda_1)}(\mathcal{C})-P^{(\lambda_2)}(\mathcal{C})|,
    \end{aligned}$}
\end{equation}
in which the third ``='' is by plugging in \eqref{eq:sol2}.

Based on \eqref{eq:lambdaeq},
define a function $\Lambda(r)$ as follows:
\begin{equation}
    \Lambda(r):=\frac{Q(\mathcal{C}^c)}{1-r}-\frac{Q(\mathcal{C})}{r},\quad\text{with derivative}\quad \Lambda'(r)=\frac{Q(\mathcal{C}^c)}{(1-r)^2}+\frac{Q(\mathcal{C})}{r^2}.
\end{equation}

Note that $\forall r\in(0,1), \Lambda'(r)>0$, the mean-value theorem shows 

$\exists\xi\in \left(\min\{P^{(\lambda_1)}(\mathcal{C}),P^{(\lambda_2)}(\mathcal{C})\},\max\{P^{(\lambda_1)}(\mathcal{C}),P^{(\lambda_2)}(\mathcal{C})\}\right),$
\begin{equation}
    \begin{aligned}
        |P^{(\lambda_1)}(\mathcal{C})-P^{(\lambda_2)}(\mathcal{C})|&=\frac{|\lambda_1-\lambda_2|}{\Lambda'(\xi)}\\&
\leq \frac{|\lambda_1-\lambda_2|}{\inf_{r\in \left(\min\{P^{(\lambda_1)}(\mathcal{C}),P^{(\lambda_2)}(\mathcal{C})\},\max\{P^{(\lambda_1)}(\mathcal{C}),P^{(\lambda_2)}(\mathcal{C})\}\right)} \frac{Q(\mathcal{C}^c)}{(1-r)^2}+\frac{Q(\mathcal{C})}{r^2}}
\\&\leq \frac{|\lambda_1-\lambda_2|}{\inf_{r\in (0,1)} \frac{Q(\mathcal{C}^c)}{(1-r)^2}+\frac{Q(\mathcal{C})}{r^2}}.
    \end{aligned}
\end{equation}
Let $\frac{\partial\Lambda'(r)}{\partial r}=0$ leads to $r_{min}=\frac{t}{1+t}, t=\left(\frac{Q(\mathcal{C})}{1-Q(\mathcal{C})}\right)^{1/3}$, $\Lambda'(r_{min}) = (1+t)^2(1-Q(\mathcal{C})+Q(\mathcal{C})t^{-2})$.
This leads to the bound 
\begin{equation}\label{eq:rgaptight}
    \|P^{(\lambda_1)}- P^{(\lambda_2)}\|_{\mathrm{TV}}\leq \frac{|\lambda_1-{\lambda_2}|}{(1+t)^2(1-Q(\mathcal{C})+Q(\mathcal{C})t^{-2})} = \frac{|\lambda_1-{\lambda_2}|}{\left((1-Q({\mathcal{C}}))^{1/3}+Q({\mathcal{C}})^{1/3}\right)^3}.
\end{equation}
\end{proof}

Substituting $\bar{\lambda}^*$ and $\lambda^*$ into Lemma~\ref{lemma:lambdabound}, we can obtain 
\begin{equation}\label{eq:tv2}
    \|P^{(\bar{\lambda}^*)}- P^{({\lambda}^*)}\|_{\mathrm{TV}}\leq \frac{1}{\left((1-Q({\mathcal{C}}))^{1/3}+Q({\mathcal{C}})^{1/3}\right)^3}|\bar{\lambda}^*-\lambda^*|.
\end{equation} 
\end{lemma}

\begin{lemma}\label{lemma:kconcave} Under Assumptions~\ref{ass:feasibility} and \ref{ass:dual_upper_bound},
    $g(\lambda)$ is $k$-strongly concave in $\lambda\in [0,\lambda_{\max}]$, in which
    \begin{equation}
    \begin{aligned}
        &k=\min_{\lambda\in [0,\lambda_{\max}]} -\frac{Q(\mathcal{C})}{(\mu(\lambda)-\lambda)^2}\left(\frac{-\mu(\lambda)+\lambda+Q(\mathcal{C})}{2\mu(\lambda)-\lambda-1}\right), \\
        &\mu(\lambda) = \frac{1}{2}\left(\lambda+1+\sqrt{(\lambda+1)^2-4\lambda(1-Q(\mathcal{C}))}\right).
        \end{aligned}
    \end{equation}
\end{lemma}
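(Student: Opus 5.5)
The plan is to compute the dual function $g(\lambda)=\inf_p L(p,\lambda)$ in closed form, using the minimizer already derived in the proof of Lemma~\ref{lemma:lambdabound}, and then differentiate twice. By Assumption~\ref{ass:feasibility} and the KKT computation there, $p^{(\lambda)}(x)=\frac{q(x)}{\mu-\lambda}\bm{1}_{\mathcal{C}}(x)+\frac{q(x)}{\mu}\bm{1}_{\mathcal{C}^c}(x)$. The function $\mu=\mu(\lambda)$ is fixed by normalization, $\frac{Q(\mathcal{C})}{\mu-\lambda}+\frac{1-Q(\mathcal{C})}{\mu}=1$. Clearing denominators gives the quadratic
\begin{equation}
\mu^2-(\lambda+1)\mu+\lambda(1-Q(\mathcal{C}))=0 .
\end{equation}
Its discriminant is $(\lambda+1)^2-4\lambda(1-Q(\mathcal{C}))=(\lambda-1)^2+4\lambda Q(\mathcal{C})>0$. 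I will take the larger root, which is the $\mu(\lambda)$ in the statement, and check that it is the admissible one. Positivity of $p^{(\lambda)}$ on $\mathcal{C}$ requires $\mu-\lambda>0$, and the smaller root fails this because the product of the roots gives $\mu_+\mu_-=\lambda(1-Q(\mathcal{C}))<\lambda\mu_+$ for $\lambda>0$.

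Substituting $p^{(\lambda)}$ into $L$ gives
\begin{equation}
g(\lambda)=-\mathbb{E}_Q[\log q]+Q(\mathcal{C})\log(\mu-\lambda)+(1-Q(\mathcal{C}))\log\mu+\lambda\Big(b-\frac{Q(\mathcal{C})}{\mu-\lambda}\Big).
\end{equation}
This is finite because $\mathbb{E}_Q[\log q]$ is finite, and it is smooth in $\lambda$ because $\mu(\lambda)$ is smooth (positive discriminant). Differentiating directly, the terms containing $\mu'$ cancel by the normalization identity; this is the envelope theorem. The result is $g'(\lambda)=b-\frac{Q(\mathcal{C})}{\mu-\lambda}=b-P^{(\lambda)}(\mathcal{C})$. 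Differentiating again gives
\begin{equation}
g''(\lambda)=\frac{Q(\mathcal{C})}{(\mu-\lambda)^2}\,(\mu'-1).
\end{equation}
Implicit differentiation of the quadratic gives $\mu'=\frac{\mu-1+Q(\mathcal{C})}{2\mu-\lambda-1}$, hence $\mu'-1=\frac{-\mu+\lambda+Q(\mathcal{C})}{2\mu-\lambda-1}$. This produces exactly the expression whose negative appears inside the minimum defining $k$.

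It remains to show $-g''>0$ on $[0,\lambda_{\max}]$, so that the minimum is positive. The denominator $2\mu-\lambda-1$ equals the square root of the discriminant, which is strictly positive. For the numerator, $-\mu+\lambda+Q(\mathcal{C})<0$ is equivalent to $P^{(\lambda)}(\mathcal{C})=\frac{Q(\mathcal{C})}{\mu-\lambda}<1$. This holds because $1-P^{(\lambda)}(\mathcal{C})=\frac{1-Q(\mathcal{C})}{\mu}>0$ when $Q(\mathcal{C})<1$. Since $-g''$ is continuous and strictly positive on the compact interval $[0,\lambda_{\max}]$ (Assumption~\ref{ass:dual_upper_bound}), its minimum $k$ is attained and positive. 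Then $g''\le -k$ on the interval, which is $k$-strong concavity: $g(\lambda)+\frac{k}{2}\lambda^2$ is concave there.

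The step I expect to need the most care is justifying that the KKT stationary point is the global minimizer of $L(\cdot,\lambda)$ for every $\lambda$, not only at $\lambda^*$. This matters because the closed form of $g$ and its smoothness (including selecting the $+$ root) rest on it. I would handle it by strict convexity of $p\mapsto -\int q\log p$ on $\{q>0\}$ together with linearity of the remaining terms, as in the proof of Lemma~\ref{lemma:pstar}, plus the attainment hypothesis in Assumption~\ref{ass:feasibility}.
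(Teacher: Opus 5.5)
Your proposal is correct and follows essentially the same route as the paper's proof: the KKT density with the larger root of the normalization quadratic, $g'(\lambda)=b-Q(\mathcal{C})/(\mu-\lambda)$, $g''=\frac{Q(\mathcal{C})}{(\mu-\lambda)^2}(\mu'-1)$, negativity from $Q(\mathcal{C})<\mu-\lambda$ (your $P^{(\lambda)}(\mathcal{C})<1$), and compactness of $[0,\lambda_{\max}]$; the remaining differences, such as obtaining $\mu'$ by implicit rather than explicit differentiation, are cosmetic. One small precision: if you leave the last term of $g$ as $-\lambda Q(\mathcal{C})/(\mu-\lambda)$, the $\mu'$-terms cancel only after you also use the differentiated identity $\lambda Q(\mathcal{C})/(\mu-\lambda)=\mu-1$ (the paper substitutes this before differentiating); alternatively, apply the envelope theorem to the Lagrangian that includes the normalization multiplier.
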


\begin{proof}
    Continue from \eqref{eq:sol1}
    $p^{(\lambda)}(x) = \frac{q(x)}{\mu-\lambda\bm{1}_{\mathcal{C}}(x)},\mu>\lambda$.
The requirement $\int p(x) dx=1$ leads to
\begin{equation}\label{eq:kkt}
    \frac{Q(\mathcal{C})}{\mu-\lambda}+\frac{1-Q(\mathcal{C})}{\mu}=1,\quad \mu^2-(\lambda+1)\mu+\lambda(1-Q(\mathcal{C}))=0.
\end{equation}
The solutions for $\mu$ are
\begin{equation}
    \mu(\lambda) = \frac{1}{2}\left(\lambda+1\pm\sqrt{(\lambda+1)^2-4\lambda(1-Q(\mathcal{C}))}\right).
\end{equation}
Note that $\mu-\lambda>0$ must hold. If $\mu(\lambda) = \frac{1}{2}\left(\lambda+1-\sqrt{(\lambda+1)^2-4\lambda(1-Q(\mathcal{C}))}\right)$,
\begin{equation}
    \begin{aligned}
        \mu(\lambda)-\lambda=\frac{1}{2}\left(-\lambda+1-\sqrt{(\lambda-1)^2+4\lambda Q(\mathcal{C})}\right)\leq0.
    \end{aligned}
\end{equation}
This root satisfies $\mu(\lambda)-\lambda\leq 0$, with equality at $\lambda=0$, and therefore violates the requirement $\mu>\lambda$.
This eliminates one solution, and the solution to $\mu$ is
\begin{equation}\label{eq:musol}
    \mu(\lambda) = \frac{1}{2}\left(\lambda+1+\sqrt{(\lambda+1)^2-4\lambda(1-Q(\mathcal{C}))}\right).
\end{equation}
Substitute the solutions for $\lambda,p^{(\lambda)}$ into $g(\lambda)$ and note that $\lambda Q(\mathcal C) = (\mu-1)(\mu-\lambda)$ from \eqref{eq:kkt},
\begin{equation}
\begin{aligned}
    g(\lambda) &= \mathbb{E}_{x\sim Q}[-\log p^{(\lambda)}(x)]+\lambda (b-\int_\mathcal{C}p^{(\lambda)}(x)dx)\\
    &=\mathbb{E}_{x\sim Q}[-\log q(x)]+\mathbb{E}_{x\sim Q}[\log(\mu(\lambda)-\lambda\bm{1}_{\mathcal{C}}(x))]+\lambda b-\lambda\frac{Q(\mathcal{C})}{\mu(\lambda)-\lambda}
    \\& = 1+\mathbb{E}_{x\sim Q}[-\log q(x)]+\mathbb{E}_{x\sim Q}[\log(\mu(\lambda)-\lambda\bm{1}_{\mathcal{C}}(x))]+\lambda b-\mu(\lambda).
    \end{aligned}
\end{equation}
Then we can check the concavity of $g$,
\begin{equation}
    \begin{aligned}
     g'(\lambda)=   \frac{\partial g}{\partial\lambda} &= \mathbb{E}_{x\sim Q}\left[\frac{\frac{\partial \mu}{\partial \lambda}-\bm{1}_\mathcal{C}(x)}{\mu(\lambda)-\lambda\bm{1}_\mathcal{C}(x)}\right]+b-\frac{\partial \mu}{\partial \lambda}\\&=
        -\frac{Q(\mathcal{C})}{\mu(\lambda)-\lambda}+b+\frac{\partial \mu}{\partial \lambda}\left(\frac{Q(\mathcal{C})}{\mu(\lambda)-\lambda}+\frac{1-Q(\mathcal{C})}{\mu(\lambda)}-1\right)
        \\& = -\frac{Q(\mathcal{C})}{\mu(\lambda)-\lambda}+b.
    \end{aligned}
    \end{equation}
    \begin{equation}
        \begin{aligned}
            g''(\lambda)=\frac{\partial^2g}{\partial \lambda^2}&=\frac{Q(\mathcal{C})}{(\mu(\lambda)-\lambda)^2}(\frac{\partial\mu}{\partial\lambda}-1)
            \\&=\frac{Q(\mathcal{C})}{(\mu(\lambda)-\lambda)^2}\left(\frac{1}{2}\left(1+\frac{\lambda+1-2(1-Q(\mathcal{C}))}{\sqrt{(\lambda+1)^2-4\lambda(1-Q(\mathcal{C}))}}\right)-1\right)
            \\&=\frac{Q(\mathcal{C})}{(\mu(\lambda)-\lambda)^2}\left(\frac{1}{2}\left(-1+\frac{\lambda-1+2Q(\mathcal{C})}{2\mu(\lambda)-\lambda-1}\right)\right)
            \\&=\frac{Q(\mathcal{C})}{(\mu(\lambda)-\lambda)^2}\left(\frac{-\mu(\lambda)+\lambda+Q(\mathcal{C})}{2\mu(\lambda)-\lambda-1}\right).
        \end{aligned}
    \end{equation}
$2\mu(\lambda)-\lambda-1>0$ since
it equals $\sqrt{(\lambda+1)^2-4\lambda(1-Q(\mathcal{C}))}$. Note that $0<Q(\mathcal{C})<b<1$,
\eqref{eq:kkt} shows that 
\begin{equation}
    \frac{\mu(\lambda)-\lambda}{Q(\mathcal{C})}=\frac{\mu(\lambda)}{\mu(\lambda)-1+Q(\mathcal{C})}>1.
\end{equation}
Therefore $-\mu(\lambda)+\lambda+Q(\mathcal{C})<0$ and $g''(\lambda)<0$. Moreover, $-g''(\lambda)$ is continuous and strictly positive on the
compact interval $[0,\lambda_{\max}]$. Therefore,
    $k:=\min_{\lambda\in[0,\lambda_{\max}]}-g''(\lambda)>0$ and this proves that $g$ is $k$-strongly concave on this interval.
\end{proof}

\begin{lemma}\label{lemma:dualfuncgap}
Under Assumptions~\ref{ass:feasibility}, \ref{ass:esterror} and \ref{ass:densityrange}, 
    for fixed $\lambda$, 
    \begin{equation}\label{eq:densityrange}
           \bar{g}(\lambda) -  g(\lambda)\leq \mu(\lambda) (A_1\nu + A_2\nu^2),
    \end{equation}
    in which $\mu(\lambda) = \frac{1}{2}\left(\lambda+1+\sqrt{(\lambda+1)^2-4\lambda(1-Q(\mathcal{C}))}\right)$.
\end{lemma}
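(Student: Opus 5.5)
Since $\bar g(\lambda)=\inf_{u\in U_\Theta}\bar L(u,\lambda)$, it suffices to evaluate $\bar L$ at one element of $U_\Theta$ and compare it with $g(\lambda)=L(p^{(\lambda)},\lambda)$. The natural candidate is $u_\theta^{(\lambda)}$, whose error against the true field $u^{(\lambda)}$ is controlled by Assumption~\ref{ass:esterror}, and hence by Lemma~\ref{lemma:TVgap1}. So
\[
\bar g(\lambda)-g(\lambda)\le L(p^{(\lambda)}_{u_\theta},\lambda)-L(p^{(\lambda)},\lambda)
=\mathbb{E}_{x\sim Q}\Big[\log\frac{p^{(\lambda)}(x)}{p^{(\lambda)}_{u_\theta}(x)}\Big]+\lambda\big(P^{(\lambda)}(\mathcal C)-P^{(\lambda)}_{u_\theta}(\mathcal C)\big),
\]
and the task reduces to bounding this difference by $\mu(\lambda)$ times a KL divergence. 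Assumption~\ref{ass:densityrange} makes the log ratio well defined on $\{q>0\}$.

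The key step is to change the measure from $Q$ to $P^{(\lambda)}$ using the explicit form \eqref{eq:sol1}, $q(x)=(\mu-\lambda\mathbf 1_{\mathcal C}(x))p^{(\lambda)}(x)$, with $\mu=\mu(\lambda)$ given by \eqref{eq:musol}. Then
\[
\mathbb{E}_Q\Big[\log\tfrac{p^{(\lambda)}}{p^{(\lambda)}_{u_\theta}}\Big]=\mu\,\mathbb{E}_{P^{(\lambda)}}\Big[\log\tfrac{p^{(\lambda)}}{p^{(\lambda)}_{u_\theta}}\Big]-\lambda\int_{\mathcal C}p^{(\lambda)}\log\tfrac{p^{(\lambda)}}{p^{(\lambda)}_{u_\theta}}\,dx .
\]
The first term equals $\mu\,\mathrm{KL}(P^{(\lambda)}\|P^{(\lambda)}_{u_\theta})\le\mu(A_1\nu+A_2\nu^2)$ by Lemma~\ref{lemma:TVgap1}. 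It remains to show that the leftover, namely
\[
-\lambda\Big(\int_{\mathcal C}p^{(\lambda)}\log\tfrac{p^{(\lambda)}}{p^{(\lambda)}_{u_\theta}}\,dx+P^{(\lambda)}_{u_\theta}(\mathcal C)-P^{(\lambda)}(\mathcal C)\Big),
\]
is nonpositive. Writing $a=p^{(\lambda)}$ and $c=p^{(\lambda)}_{u_\theta}$, the bracket is $\int_{\mathcal C}\big(a\log(a/c)-a+c\big)\,dx$. Its integrand is pointwise nonnegative by $\log y\le y-1$ applied to $y=c/a$, so the bracket is nonnegative. Since $\lambda\ge0$, the leftover is $\le0$ and the claim follows.

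The main obstacle is integrability and bookkeeping. We need $q$-a.e. positivity of both densities, which is Assumption~\ref{ass:densityrange}, and finiteness of the split integrals so that the expectations can be separated. The bound is also trivial wherever $\mathrm{KL}=\infty$. Apart from these points, the whole argument rests on the inequality $a\log(a/c)-a+c\ge0$, which absorbs the $\lambda$-weighted constraint term exactly.
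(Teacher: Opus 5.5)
Your proof is correct and is essentially the paper's argument: both use the KKT identity $q=(\mu(\lambda)-\lambda\mathbf{1}_{\mathcal C})\,p^{(\lambda)}$ to reduce the gap to $\mu(\lambda)\,\mathrm{KL}(P^{(\lambda)}\|P^{(\lambda)}_{u_\theta})-\lambda\int_{\mathcal C}\bigl(p^{(\lambda)}\log\frac{p^{(\lambda)}}{p^{(\lambda)}_{u_\theta}}-p^{(\lambda)}+p^{(\lambda)}_{u_\theta}\bigr)\,dx$, then drop the second term using $a\log(a/c)-a+c\ge 0$. The paper instead writes the gap as $\int q\,\bigl[\log\frac{p^{(\lambda)}}{p^{(\lambda)}_{u_\theta}}+\frac{p^{(\lambda)}_{u_\theta}}{p^{(\lambda)}}-1\bigr]dx$ and bounds the weight $q/p^{(\lambda)}\le\mu(\lambda)$; your direct change of measure never divides by $p^{(\lambda)}$, so it handles the set $\{q=0\}$ slightly more cleanly.
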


\begin{proof}
    From \eqref{eq:sol1} and \eqref{eq:musol}, we can obtain
    \begin{equation}\label{eq:mulambda}
        \frac{q(x)}{p^{(\lambda)}(x)}=\mu(\lambda)-\lambda\mathbf 1_{\mathcal{C}}(x)\le \mu(\lambda),
    \end{equation}
    in which $\mu(\lambda) = \frac{1}{2}\left(\lambda+1+\sqrt{(\lambda+1)^2-4\lambda(1-Q(\mathcal{C}))}\right)$.
Therefore,
\begin{equation}\label{eq:gbarg}
    \begin{alignedat}{2}
        \bar{g}(\lambda) -  g(\lambda) &=  
         \int q(x)\log \frac{p^{(\lambda)}(x)}{p_{u_\theta}^{(\lambda)}(x)}dx+\lambda\int_{\mathcal{C}} (p^{(\lambda)}(x)-p_{u_\theta}^{(\lambda)}(x)) dx\\
         & = \int q(x)\log \frac{p^{(\lambda)}(x)}{p_{u_\theta}^{(\lambda)}(x)}dx+\int \lambda\bm{1}_{\mathcal{C}}(x)(p^{(\lambda)}(x)-p_{u_\theta}^{(\lambda)}(x)) dx
         \\
         & = \int q(x)\log \frac{p^{(\lambda)}(x)}{p_{u_\theta}^{(\lambda)}(x)}dx+\int\left(\mu-\frac{q(x)}{p^{(\lambda)}(x)}\right)(p^{(\lambda)}(x)-p_{u_\theta}^{(\lambda)}(x)) dx
\\&\quad\text{(Since $\lambda\bm{1}_{\mathcal{C}}(x) = \mu-\frac{q(x)}{p^{(\lambda)}(x)}$ from \eqref{eq:sol1})}
         \\
         & = \int q(x)\log \frac{p^{(\lambda)}(x)}{p_{u_\theta}^{(\lambda)}(x)}dx+\int\mu(p^{(\lambda)}(x)-p_{u_\theta}^{(\lambda)}(x)) dx -\int\left(\frac{q(x)}{p^{(\lambda)}(x)}\right)(p^{(\lambda)}(x)-p_{u_\theta}^{(\lambda)}(x)) dx
         \\
         & = \int q(x)\log \frac{p^{(\lambda)}(x)}{p_{u_\theta}^{(\lambda)}(x)}dx+\int q(x) \left(\frac{{p_{u_\theta}^{(\lambda)}(x)}}{p^{(\lambda)}(x)}\right) dx-1
         \\& = \int q(x)\left[\log \frac{p^{(\lambda)}(x)}{p_{u_\theta}^{(\lambda)}(x)}dx+\left(\frac{{p_{u_\theta}^{(\lambda)}(x)}}{p^{(\lambda)}(x)}\right) -1 \right]dx
                  \\& = \int p^{(\lambda)} (x)\frac{q(x)}{p^{(\lambda)}(x)}\left[\log \frac{p^{(\lambda)}(x)}{p_{u_\theta}^{(\lambda)}(x)}dx+\left(\frac{{p_{u_\theta}^{(\lambda)}(x)}}{p^{(\lambda)}(x)}\right) -1 \right]dx
                  \\& \leq\mu(\lambda) \int p^{(\lambda)} (x)\left[\log \frac{p^{(\lambda)}(x)}{p_{u_\theta}^{(\lambda)}(x)}dx+\left(\frac{{p_{u_\theta}^{(\lambda)}(x)}}{p^{(\lambda)}(x)}\right) -1 \right]dx\\&\quad \text{(Since \eqref{eq:mulambda} and $\forall a>0, -\log a+a-1\geq 0$)}
\\& =\mu(\lambda) \int\left[ p^{(\lambda)} (x)\log \frac{p^{(\lambda)}(x)}{p_{u_\theta}^{(\lambda)}(x)}dx+{{p_{u_\theta}^{(\lambda)}(x)}} -p^{(\lambda)} (x)\right]dx
\\& =\mu(\lambda)\left( \DKL(P^{(\lambda)}\|P_{u_\theta}^{(\lambda)})+1-1\right)
\\&\leq \mu(\lambda) (A_1\nu + A_2\nu^2)\quad\text{(By Lemma~\ref{lemma:TVgap1})}
    \end{alignedat}
\end{equation}
\end{proof}

\begin{lemma}\label{lemma:lambdagap1}
Under Assumptions \ref{ass:feasibility}, \ref{ass:esterror}, \ref{ass:dual_upper_bound}, \ref{ass:densityrange}, the dual variable gap between the FM-parameterized Lagrangian's $\bar{\lambda}^*$ and the distribution-level Lagrangian's $\lambda^*$ can be bounded by
    \begin{equation}
        |\bar{\lambda}^*-\lambda^*|^2\leq \frac{2}{k}\mu(\bar{\lambda}^*) (A_1\nu + A_2\nu^2).
    \end{equation}
\end{lemma}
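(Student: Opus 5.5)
The plan is to combine the $k$-strong concavity of $g$ from Lemma~\ref{lemma:kconcave} with the one-sided dual-function gap of Lemma~\ref{lemma:dualfuncgap}. The argument will use three facts.

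First, $\lambda^*$ maximizes $g$ over $\lambda\geq 0$. By Assumption~\ref{ass:dual_upper_bound} both $\lambda^*$ and $\bar\lambda^*$ lie in $[0,\lambda_{\max}]$, where $g$ is $k$-strongly concave.

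Second, from Lemma~\ref{lemma:pstar}, $\lambda^*>0$ is an interior point, so $g'(\lambda^*)=0$. Indeed, the formula $g'(\lambda)=b-Q(\mathcal C)/(\mu(\lambda)-\lambda)$ obtained in the proof of Lemma~\ref{lemma:kconcave} vanishes exactly when $P^{(\lambda)}(\mathcal C)=b$, which happens at $\lambda^*$. Even without interiority, the first-order optimality condition $g'(\lambda^*)(\lambda-\lambda^*)\le 0$ holds for all feasible $\lambda$. Strong concavity then gives the quadratic-growth inequality
\begin{equation*}
g(\lambda^*)-g(\lambda)\;\geq\; \frac{k}{2}|\lambda-\lambda^*|^2,\qquad \lambda\in[0,\lambda_{\max}].
\end{equation*}
I will apply this at $\lambda=\bar\lambda^*$.

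Third, the gap $g(\lambda^*)-g(\bar\lambda^*)$ must be bounded above using the parameterized problem. Because $U_\Theta$ is a restricted class, every $u\in U_\Theta$ has $\bar L(u,\lambda)=L(p_u,\lambda)\ge \inf_p L(p,\lambda)$, so $\bar g(\lambda)\ge g(\lambda)$ for all $\lambda$. This is also the sign implicitly used in Lemma~\ref{lemma:dualfuncgap}. Since $\bar\lambda^*$ maximizes $\bar g$, we have $\bar g(\bar\lambda^*)\ge\bar g(\lambda^*)\ge g(\lambda^*)$. Chaining these,
\begin{equation*}
g(\lambda^*)-g(\bar\lambda^*)\;\le\;\bar g(\bar\lambda^*)-g(\bar\lambda^*)\;\le\;\mu(\bar\lambda^*)(A_1\nu+A_2\nu^2),
\end{equation*}
where the last step is Lemma~\ref{lemma:dualfuncgap} evaluated at $\lambda=\bar\lambda^*$. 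Combining this with the quadratic-growth inequality yields $\frac{k}{2}|\bar\lambda^*-\lambda^*|^2\le \mu(\bar\lambda^*)(A_1\nu+A_2\nu^2)$, which is the claim after multiplying by $2/k$. The chain is arranged so that the dual-gap lemma is only ever evaluated at $\bar\lambda^*$, which is what produces the factor $\mu(\bar\lambda^*)$ rather than $\mu(\lambda^*)$.

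The main obstacle is justifying the quadratic-growth step rigorously. Strong concavity on the interval $[0,\lambda_{\max}]$ comes from $-g''\ge k$ there, and Lemma~\ref{lemma:kconcave} shows $g$ is twice differentiable. The growth inequality then follows from Taylor's theorem with integral remainder around $\lambda^*$, using $g'(\lambda^*)=0$. The other delicate point is the inequality $\bar g\ge g$, which needs Assumption~\ref{ass:feasibility} (attainment of the infimum defining $g$) and Assumption~\ref{ass:densityrange} (to keep all log terms finite). I would state both explicitly before assembling the chain.
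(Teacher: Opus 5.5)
Your proposal is correct and follows essentially the same route as the paper: quadratic growth of $g$ around $\lambda^*$ from $k$-strong concavity together with $g'(\lambda^*)=0$, then the chain $g(\lambda^*)\le\bar g(\lambda^*)\le\bar g(\bar\lambda^*)$, and finally Lemma~\ref{lemma:dualfuncgap} evaluated at $\bar\lambda^*$, which produces the factor $\mu(\bar\lambda^*)$. One small remark: $\bar g\ge g$ holds simply because an infimum over the restricted class $\{p_u:u\in U_\Theta\}$ dominates the unrestricted infimum, so attainment of the infimum defining $g$ is not needed for that step.
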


\begin{proof}
    By $k$-strong concavity,
    \begin{equation}\label{eq:firstorderconcave}
        g(\bar{\lambda}^*)\leq g(\lambda^*)+\nabla_\lambda g(\lambda^*)(\bar{\lambda}^*-\lambda^*)-\frac{k}{2}|\bar{\lambda}^*-\lambda^*|^2.
    \end{equation}
    First check $\nabla_\lambda g(\lambda^*)(\bar{\lambda}^*-\lambda^*)$.
    Note that $\nabla_\lambda g(\lambda^*)=0$, \eqref{eq:firstorderconcave} becomes $g(\bar{\lambda}^*)\leq g(\lambda^*)-\frac{k}{2}|\bar{\lambda}^*-\lambda^*|^2$, i.e.,
    \begin{equation}
    \begin{aligned}
        \frac{k}{2}|\bar{\lambda}^*-\lambda^*|^2
        &\leq g(\lambda^*)-g(\bar{\lambda}^*)\leq g(\lambda^*)-\left(\bar{g}(\bar{\lambda}^*)-\mu(\bar{\lambda}^*) (A_1\nu + A_2\nu^2)\right).
        \end{aligned}
    \end{equation}

Consider the fact that 
\begin{equation}
    g(\lambda^*)\leq \bar{g}(\lambda^*)\leq \bar{g}(\bar{\lambda}^*), \quad\text{equivalently,}\quad g(\lambda^*)-\bar{g}(\bar{\lambda}^*)\leq 0,
\end{equation}
we can finish this proof.
\end{proof}

\subsection{Proof of Theorem~\ref{theorem:constraintsatisfaction}}

\begin{proof}
Using \eqref{eq:TVsketch1}, Lemma~\ref{lemma:TVgap1}, Lemma~\ref{lemma:lambdabound} and Lemma~\ref{lemma:lambdagap1},
the total variation bound can be given by
\begin{equation}
    \|P^{(\bar{\lambda}^*)}_{u_{\theta}}-P^{(\lambda^*)}\|_{\mathrm{TV}}\leq\mathcal{E}_{\mathrm{para}},
\end{equation}
in which 
\begin{equation}
    \mathcal{E}_{\mathrm{para}} = \frac{1}{\left((1-Q({\mathcal{C}}))^{1/3}+Q({\mathcal{C}})^{1/3}\right)^3}\sqrt{\frac{2}{k}\mu(\bar{\lambda}^*) (A_1\nu + A_2\nu^2)}+\sqrt{\frac{1}{2}(A_1 \nu + A_2 \nu^2 )}
\end{equation}

From Lemma~\ref{lemma:TVandconstraint}, the
constraint satisfaction rate $P_{u_{\theta}}^{(\bar{\lambda}^*)}(\mathcal{C})$ relates to the same $\mathcal{E}_{\mathrm{para}}$,
\begin{equation}
    P_{u_{\theta}}^{(\bar{\lambda}^*)}(\mathcal{C})\geq b-\mathcal{E}_{\mathrm{para}}.
\end{equation}
\end{proof}

\vspace{-10pt}

\section{Proof for Proposition~\ref{prop:grad}} \label{sec:proof4propgrad}

\paragraph{Statement of Proposition~\ref{prop:grad}.}\emph{
Under the assumption that
$\nabla_\theta f^{(u_\theta)}(x_{i\Delta t},i\Delta t)$ exists and is uniformly
bounded for all $i\in\{0,\ldots,N-1\}$, $i\sim \text{Unif}\{0,\cdots,N-1\}$ and 
$x_{i\Delta t}\sim(\Phi^{\Delta t}_{u_\theta,i\Delta t})_{\#}P_0$,
\begin{equation}
\resizebox{0.92\textwidth}{!}{$
     \nabla_\theta\mathcal{F}_{u_\theta}(\mathcal{C})= N\mathbb{E}_{i,x_{i\Delta t}}\left[\nabla_\theta u_\theta(x_{i\Delta t},i\Delta t)\nabla_uf^{(u_\theta)}(x_{i\Delta t}+u\Delta t, (i+1)\Delta t)|_{u = u_\theta(x_{i\Delta t},i\Delta t)}\right],$}
\end{equation}
in which $\nabla_\theta u_\theta(x_t,t)$ is a Jacobian matrix such that each
column is the gradient $\nabla_\theta [u_\theta(x_t,t)]_j$ of the $j$-th dimension of the $d$-dimensional velocity with respect to the velocity parameters, i.e.,
\begin{equation}
    \nabla_\theta u_\theta(x_t,t) = [\nabla_\theta [u_\theta(x_t,t)]_1\quad \nabla_\theta [u_\theta(x_t,t)]_2\quad \cdots \quad\nabla_\theta [u_\theta(x_t,t)]_d].
    \end{equation}}

\begin{proof}

For $t = i\Delta t, i \in \{0,1,\cdots, N-1\}$,
  \begin{equation}
  \resizebox{0.92\textwidth}{!}{$
\begin{aligned}
    &\nabla_\theta f^{(u_\theta)}(x_t,t) \\& = \nabla_\theta f^{(u_\theta)}(x_t+u_\theta(x_t,t)\Delta t, t+\Delta t) \qquad \text{(by property (a))}
    \\& = \nabla_\theta\ u_\theta(x_t,t)\nabla_u f^{(u_\theta)}(x_t+u\Delta t,t+\Delta t)|_{u=u_\theta(x_t,t)}+\nabla_\theta f^{(u_\theta)}(x_t+u\Delta t,t+\Delta t)|_{u=u_\theta(x_t,t)}
    \\&=\nabla_\theta\ u_\theta(x_t,t)\nabla_u f^{(u_\theta)}(x_t+u\Delta t,t+\Delta t)|_{u=u_\theta(x_t,t)}+\nabla_\theta f^{(u_\theta)}(x_{t+\Delta t} ,t+\Delta t).
    \end{aligned}$}
\end{equation}
Repeating this expansion yields
\begin{equation}\resizebox{0.92\textwidth}{!}{$
    \begin{aligned}
        &\nabla_\theta f^{(u_\theta)}(x_0,0) \\&= \sum_{i=0}^{N-1}\nabla_\theta\ u_\theta(x_{i\Delta t},i\Delta t)\nabla_u f^{(u_\theta)}(x_{i\Delta t}+u\Delta t,(i+1)\Delta t)|_{u=u_\theta(x_{i\Delta t},i\Delta t)}+\nabla_\theta f^{(u_\theta)}(x_1 ,1)
        \\& =\sum_{i=0}^{N-1}\nabla_\theta\ u_\theta(x_{i\Delta t},i\Delta t)\nabla_u f^{(u_\theta)}(x_{i\Delta t}+u\Delta t,(i+1)\Delta t)|_{u=u_\theta(x_{i\Delta t},i\Delta t)}. \qquad \text{(by property (b))}
    \end{aligned}$}
\end{equation}

Under the stated assumptions, by applying the dominated convergence theorem, one can exchange the
integration and differentiation,
\begin{equation}
\resizebox{0.92\textwidth}{!}{$
\begin{aligned}
    \nabla_\theta \mathcal{F}_{u_\theta}(\mathcal{C}) &= \nabla_\theta\int p_0(x_0) f^{(u_\theta)}(x_0,0) dx_0
    \\&=\int p_0(x_0) \nabla_\theta f^{(u_\theta)}(x_0,0) dx_0\\&=\int p_0(x_0) \sum_{i=0}^{N-1}\nabla_\theta\ u_\theta(x_{i\Delta t},i\Delta t)\nabla_u f^{(u_\theta)}(x_{i\Delta t}+u\Delta t,(i+1)\Delta t)|_{u=u_\theta(x_{i\Delta t},i\Delta t)}  dx_0\\
    &=\mathbb{E}_{ x_{i\Delta t}}\left[\sum_{i=0}^{N-1}\nabla_\theta\ u_\theta(x_{i\Delta t},i\Delta t)\nabla_u f^{(u_\theta)}(x_{i\Delta t}+u\Delta t,(i+1)\Delta t)|_{u=u_\theta(x_{i\Delta t},i\Delta t)} \right]\\
    & = N\mathbb{E}_{i,x_{i\Delta t}}\left[\nabla_\theta\ u_\theta(x_{i\Delta t},i\Delta t)\nabla_u f^{(u_\theta)}(x_{i\Delta t}+u\Delta t,(i+1)\Delta t)|_{u=u_\theta(x_{i\Delta t},i\Delta t)}\right].
\end{aligned}$}
\end{equation}

\end{proof}
\vspace{-12pt}
\section{Proofs for Constraint guarantees on PDFM }\label{sec:proofpart2}
To assess the TV gap between the distribution ${P_{u'_{\theta}}^{(\lambda(i))}}$  induced by $u'^{(\lambda(i))}_\theta$ solved by Alg.~\ref{alg:pd_practical} at the $i$-th iteration and the optimal solution $P^{(\lambda^*)}$ to \eqref{eq:cvxoriginal}, 
we decompose this gap as

\begin{equation}
    \left\|{P_{u'_{\theta}}^{(\lambda(i))}}-P^{(\lambda^*)} \right\|_{\mathrm{TV}}\leq \left\|{P_{u'_{\theta}}^{(\lambda(i))}}-P^{(\lambda(i))} \right\|_{\mathrm{TV}}+\left\|{P^{(\lambda(i))}}-P^{(\lambda^*)} \right\|_{\mathrm{TV}}.
\end{equation}
Similar to the decomposition in Appendix~\ref{sec:proofpart1},
the first term on the RHS captures the approximation error introduced by both restricting the solution to the FM-parameterized class $U_\Theta$ and solving the minimizer with the approach in Sec.~\ref{sec:gradimp}, while the second term captures the distributional error caused
by the dual-variable mismatch between $\lambda(i)$ and $\lambda^*$.

\subsection{Bound $\left\|{P_{u'_\theta}^{(\lambda)}}-P^{(\lambda)} \right\|_{\mathrm{TV}}$}

\begin{lemma} \label{lemma:TVgapthetai}
    Under Assumptions~\ref{ass:feasibility}, \ref{ass:esterror}, \ref{ass:dual_upper_bound}, \ref{ass:densityrange}, \ref{ass:lambdamax} and \ref{ass:esterrorGD}, for all $\lambda\in [0,\lambda_{\max}]$,
\begin{equation}
    \left\|{P_{u'_\theta}^{(\lambda)}}-P^{(\lambda)} \right\|_{\mathrm{TV}}\leq e(\lambda)\leq e_{\max},
\end{equation}
in which
$
e(\lambda):=\sqrt{\frac{\varepsilon+\mu(\lambda) (A_1\nu + A_2\nu^2)}{2(\mu(\lambda) - \lambda)}}$, $\mu(\lambda) = \frac{1}{2}\left(\lambda+1+\sqrt{(\lambda+1)^2-4\lambda(1-Q(\mathcal{C}))}\right)$, $e_{\max}: =\max_{\lambda\in[0,\lambda_{\max}]} e(\lambda).$
\end{lemma}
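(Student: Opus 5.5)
The plan is to sandwich the suboptimality gap $L(p_{u'_\theta}^{(\lambda)},\lambda)-g(\lambda)$. From above it is controlled by $\varepsilon$ and the FM parameterization error. From below it dominates a multiple of $\mathrm{KL}(P^{(\lambda)}\|P^{(\lambda)}_{u'_\theta})$. Pinsker's inequality then turns the KL bound into the TV bound.

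\textbf{Upper bound.} Fix $\lambda\in[0,\lambda_{\max}]$ and write $p'=p^{(\lambda)}_{u'_\theta}$. Split
\[
L(p',\lambda)-g(\lambda)=\big(L(p',\lambda)-\bar g(\lambda)\big)+\big(\bar g(\lambda)-g(\lambda)\big).
\]
The first bracket is at most $\varepsilon$ by Assumption~\ref{ass:esterrorGD}. The second is at most $\mu(\lambda)(A_1\nu+A_2\nu^2)$ by Lemma~\ref{lemma:dualfuncgap}. Hence
\[
L(p',\lambda)-g(\lambda)\le \varepsilon+\mu(\lambda)(A_1\nu+A_2\nu^2).
\]

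\textbf{Lower bound.} Here I reuse the algebra of Lemma~\ref{lemma:dualfuncgap} with $p'$ in place of $p^{(\lambda)}_{u_\theta}$, noting that it only uses the form of $p^{(\lambda)}$ from \eqref{eq:sol1}. Since $g(\lambda)=L(p^{(\lambda)},\lambda)$, substituting $\lambda\mathbf 1_{\mathcal C}=\mu(\lambda)-q/p^{(\lambda)}$ and using $\int p'=\int p^{(\lambda)}=1$ gives
\[
L(p',\lambda)-g(\lambda)=\int p^{(\lambda)}(x)\,\frac{q(x)}{p^{(\lambda)}(x)}\big[-\log a(x)+a(x)-1\big]\,dx,\qquad a=\frac{p'}{p^{(\lambda)}}.
\]
The bracket is nonnegative. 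This time I bound the weight from \emph{below}: by \eqref{eq:mulambda}, $q/p^{(\lambda)}=\mu(\lambda)-\lambda\mathbf 1_{\mathcal C}\ge \mu(\lambda)-\lambda$. The quantity $\mu(\lambda)-\lambda$ is strictly positive, as shown in the proof of Lemma~\ref{lemma:kconcave}. Therefore
\[
L(p',\lambda)-g(\lambda)\ge(\mu(\lambda)-\lambda)\Big(\mathrm{KL}(P^{(\lambda)}\|P')+\int p'-\int p^{(\lambda)}\Big)=(\mu(\lambda)-\lambda)\,\mathrm{KL}(P^{(\lambda)}\|P').
\]

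\textbf{Conclusion.} Combining the two bounds gives
\[
\mathrm{KL}(P^{(\lambda)}\|P')\le\frac{\varepsilon+\mu(\lambda)(A_1\nu+A_2\nu^2)}{\mu(\lambda)-\lambda}.
\]
Pinsker's inequality then yields $\|P'-P^{(\lambda)}\|_{\mathrm{TV}}\le e(\lambda)$. Finally, $e$ is continuous on the compact interval $[0,\lambda_{\max}]$, because $\mu$ is continuous and $\mu-\lambda>0$ there. So the maximum $e_{\max}$ exists and $e(\lambda)\le e_{\max}$.

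\textbf{Main obstacle.} The delicate point is the support and positivity bookkeeping in the lower bound: the ratio $a$ and the logarithm must be well defined. Assumption~\ref{ass:densityrange} gives $p^{(\lambda)}>0$ wherever $q>0$. For $p'$ I would argue that $L(p',\lambda)\le\bar g(\lambda)+\varepsilon<\infty$ forces $p'>0$ $Q$-a.e. Off the support of $q$ we also have $p^{(\lambda)}=0$, so the integrals $\int p'$ and $\int p^{(\lambda)}$ can both be taken over $\{q>0\}$. There the mass of $p'$ is at most $1$, so the cancellation $\int p'-\int p^{(\lambda)}$ is at worst $\le 0$; the lower bound needs the opposite sign. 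I would resolve this by writing the identity with the full integrals, since $\int\mu(p^{(\lambda)}-p')\,dx=0$ over all of $\mathbb R^d$. That keeps the equality intact, at the cost of an extra nonnegative term that only helps.
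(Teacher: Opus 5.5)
Your proposal is correct and matches the paper's proof step for step. Both use the upper bound $L(p',\lambda)-g(\lambda)\le\varepsilon+\mu(\lambda)(A_1\nu+A_2\nu^2)$ from Assumption~\ref{ass:esterrorGD} and Lemma~\ref{lemma:dualfuncgap}, and the lower bound $(\mu(\lambda)-\lambda)\,\mathrm{KL}(P^{(\lambda)}\|P')$ from rerunning the algebra of Lemma~\ref{lemma:dualfuncgap} with the weight $q/p^{(\lambda)}\ge\mu(\lambda)-\lambda$, followed by Pinsker and continuity on $[0,\lambda_{\max}]$. Your support bookkeeping is extra care the paper omits and it works, with one precision: the off-support term $\int_{\{q=0\}}(\mu(\lambda)-\lambda\mathbf{1}_{\mathcal C})\,p'\,dx$ needs to be at least $(\mu(\lambda)-\lambda)\int_{\{q=0\}}p'\,dx$, not merely nonnegative; it is, and that exactly offsets the mass deficit.
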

\begin{proof}
From Assumption~\ref{ass:esterrorGD} and Lemma~\ref{lemma:dualfuncgap},
\begin{equation}\label{eq:epsgd}\resizebox{0.92\textwidth}{!}{$
\begin{aligned}
     L(p_{u'_{\theta}}^{(\lambda)}, \lambda) - g(\lambda)&=L(p_{u'_{\theta}}^{(\lambda)}, \lambda) - \bar{g}(\lambda)+ \bar{g}(\lambda)-g(\lambda)\leq \varepsilon+\mu(\lambda) (A_1\nu + A_2\nu^2).
     \end{aligned}$}
\end{equation}

 From \eqref{eq:sol1} and \eqref{eq:musol}, we can obtain
 \begin{equation}
     \frac{q(x)}{p^{(\lambda)}(x)} = {\mu-\lambda \bm{1}_{\mathcal{C}}(x)}\geq \mu(\lambda)-\lambda,
 \end{equation}
    in which $\mu(\lambda) = \frac{1}{2}\left(\lambda+1+\sqrt{(\lambda+1)^2-4\lambda(1-Q(\mathcal{C}))}\right)$.
Therefore,
\begin{equation}\label{eq:epsgd2}
    \begin{aligned}
        & L(p_{u'_{\theta}}^{(\lambda)}, \lambda) - g(\lambda)\\
                  \\& = \int p^{(\lambda)} (x)\frac{q(x)}{p^{(\lambda)}(x)}\left[\log \frac{p^{(\lambda)}(x)}{p_{u'_\theta}^{(\lambda)}(x)}dx+\left(\frac{{p_{u'_\theta}^{(\lambda)}(x)}}{p^{(\lambda)}(x)}\right) -1 \right]dx \quad\text{(With the same procedure as \eqref{eq:gbarg})}
\\& \geq(\mu(\lambda) - \lambda) \int p^{(\lambda)} (x)\left[\log \frac{p^{(\lambda)}(x)}{p_{u'_\theta}^{(\lambda)}(x)}dx+\left(\frac{{p_{u'_\theta}^{(\lambda)}(x)}}{p^{(\lambda)}(x)}\right) -1 \right]dx
\\& =(\mu(\lambda) - \lambda) \int\left[ p^{(\lambda)} (x)\log \frac{p^{(\lambda)}(x)}{p_{u'_\theta}^{(\lambda)}(x)}dx+{{p_{u'_\theta}^{(\lambda)}(x)}} -p^{(\lambda)} (x)\right]dx
\\& =(\mu(\lambda) - \lambda)\left( \DKL(P^{(\lambda)}\|P_{u'_\theta}^{(\lambda)})+1-1\right)
\\&\geq (\mu(\lambda) - \lambda) \cdot 2\left\|{P_{u'_\theta}^{(\lambda)}}-P^{(\lambda)} \right\|_{\mathrm{TV}}^2  
\\&= 2(\mu(\lambda) - \lambda) \left\|{P_{u'_\theta}^{(\lambda)}}-P^{(\lambda)} \right\|_{\mathrm{TV}}^2 .
\end{aligned}
\end{equation}
Since $\mu(\lambda)-\lambda>0$ and $e(\lambda)$ is continuous on the
compact interval $[0,\lambda_{\max}]$, its finite maximum
$e_{\max}$ is attained.
\end{proof}

\subsection{Bound $\|P^{(\lambda(i))}- P^{(\lambda^*)}\|_{\mathrm{TV}}$}

\begin{lemma} \label{lemma:dualgap2}
    Under Assumption~\ref{ass:esterrorGD} and follow Lemma~\ref{lemma:TVgapthetai}, 
    \begin{equation}
\begin{aligned}
|\nabla_{\lambda}L(p_{u'_\theta}^{(\lambda)}, \lambda)-\nabla_\lambda g(\lambda)|&\leq e(\lambda).
\end{aligned}
\end{equation}

\end{lemma}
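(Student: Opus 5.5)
The plan is to compute both $\lambda$-derivatives explicitly and compare them through the TV bound of Lemma~\ref{lemma:TVgapthetai}. Since $L(p,\lambda)$ is affine in $\lambda$ for a fixed density $p$, the derivative with the primal held fixed at $p_{u'_\theta}^{(\lambda)}$ is
\begin{equation*}
\nabla_\lambda L(p_{u'_\theta}^{(\lambda)},\lambda)=b-P_{u'_\theta}^{(\lambda)}(\mathcal C).
\end{equation*}
This is the quantity the dual step of Alg.~\ref{alg:pd_practical} uses. On the other side, the proof of Lemma~\ref{lemma:kconcave} already gives
\begin{equation*}
\nabla_\lambda g(\lambda)=b-\frac{Q(\mathcal C)}{\mu(\lambda)-\lambda}.
\end{equation*}
By the integration identity used in the proof of Lemma~\ref{lemma:lambdabound}, $\frac{Q(\mathcal C)}{\mu(\lambda)-\lambda}=P^{(\lambda)}(\mathcal C)$. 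Hence $\nabla_\lambda g(\lambda)=b-P^{(\lambda)}(\mathcal C)$, which is Danskin's formula evaluated at the exact minimizer $p^{(\lambda)}$.

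Subtracting the two expressions, the $b$ terms cancel and
\begin{equation*}
|\nabla_\lambda L(p_{u'_\theta}^{(\lambda)},\lambda)-\nabla_\lambda g(\lambda)|=|P^{(\lambda)}(\mathcal C)-P_{u'_\theta}^{(\lambda)}(\mathcal C)|.
\end{equation*}
Lemma~\ref{lemma:TVandconstraint}, or directly the definition of TV as a supremum over measurable sets applied to $A=\mathcal C$, bounds the right-hand side by $\|P_{u'_\theta}^{(\lambda)}-P^{(\lambda)}\|_{\mathrm{TV}}$. Lemma~\ref{lemma:TVgapthetai} then bounds this by $e(\lambda)$ for every $\lambda\in[0,\lambda_{\max}]$, and Assumption~\ref{ass:lambdamax} guarantees that the iterates stay in that range.

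The only step needing care is the identification of $\nabla_\lambda g$ with $b-P^{(\lambda)}(\mathcal C)$. In the derivative computation of Lemma~\ref{lemma:kconcave}, the terms involving $\partial\mu/\partial\lambda$ must vanish because of the normalization constraint \eqref{eq:kkt}, and one must remember that $p^{(\lambda)}$ is given by \eqref{eq:sol1} with $\mu=\mu(\lambda)$ from \eqref{eq:musol}. Both facts are already established earlier, so the argument reduces to combining these identities with the TV bound.
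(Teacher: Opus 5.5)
Your proof is correct and follows the paper's argument: both identify the two gradients as $b-P_{u'_\theta}^{(\lambda)}(\mathcal C)$ and $b-P^{(\lambda)}(\mathcal C)$, bound their difference by the TV distance evaluated on the set $\mathcal C$, and then apply Lemma~\ref{lemma:TVgapthetai}. You also justify $\nabla_\lambda g(\lambda)=b-P^{(\lambda)}(\mathcal C)$ explicitly, via Lemma~\ref{lemma:kconcave} and the identity $P^{(\lambda)}(\mathcal C)=Q(\mathcal C)/(\mu(\lambda)-\lambda)$, which is a step the paper uses without comment.
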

\begin{proof}
    \begin{equation}
\begin{aligned}
|\nabla_{\lambda}L(p_{u'_\theta}^{(\lambda)}, \lambda)-\nabla_\lambda g(\lambda)|&=\left|P_{u'_\theta}^{(\lambda)}(\mathcal{C})-P^{(\lambda)}(\mathcal{C})\right|\leq \left\|P_{u'_\theta}^{(\lambda)}-P^{(\lambda)}\right\|_{\mathrm{TV}}\leq e(\lambda)
\end{aligned}
\end{equation}

\end{proof}

\begin{lemma}\label{lemma:pracricalPDconvergence}
Under Assumption~\ref{ass:exactdualgrad}, with $\forall i, 0\leq \eta_i\leq\frac{2}{kw}$, $\sum_{i=1}^\infty\eta_i=\infty, \sum_{i=1}^\infty\eta_i^2<\infty $,
    \begin{equation}  
    \limsup_{i \rightarrow \infty}|\lambda(i)-{\lambda}^*|^2\leq \frac{4e_{\max}^2}{k^2},
    \end{equation}
\end{lemma}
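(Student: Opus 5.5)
The plan is to treat the dual update of Alg.~\ref{alg:pd_practical} as an inexact projected supergradient ascent on the $k$-strongly concave dual function $g$ of Lemma~\ref{lemma:kconcave}. Then run the standard Robbins--Monro/Robbins--Siegmund style distance recursion for $d_i:=|\lambda(i)-\lambda^*|$.

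\textbf{Rewriting the update.} Since $\lambda=w\lambda'$, the update reads
\[
\lambda(i+1)=\bigl[\lambda(i)+w\eta_i h_i\bigr]_+,\qquad h_i:=b-P^{(\lambda(i))}_{u'_\theta}(\mathcal C),
\]
where Assumption~\ref{ass:exactdualgrad} makes $h_i$ exact. From the proof of Lemma~\ref{lemma:kconcave}, $g'(\lambda)=b-Q(\mathcal C)/(\mu(\lambda)-\lambda)=b-P^{(\lambda)}(\mathcal C)$. Hence $h_i=g'(\lambda(i))+\epsilon_i$, and Lemma~\ref{lemma:dualgap2} together with Lemma~\ref{lemma:TVgapthetai} gives $|\epsilon_i|\le e(\lambda(i))\le e_{\max}$. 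This uses Assumption~\ref{ass:lambdamax} to keep $\lambda(i)\in[0,\lambda_{\max}]$. Moreover $|h_i|\le 1$, since $b\in(0,1)$ and $P(\mathcal C)\in[0,1]$.

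\textbf{One-step inequality.} Projection onto $[0,\infty)$ is nonexpansive and $\lambda^*\ge0$, so
\[
d_{i+1}^2\le d_i^2+2w\eta_i(\lambda(i)-\lambda^*)h_i+w^2\eta_i^2 .
\]
Since $\lambda^*$ is interior with $g'(\lambda^*)=0$ (the constraint is active by Lemma~\ref{lemma:pstar}), $k$-strong concavity on $[0,\lambda_{\max}]$ gives $(\lambda-\lambda^*)g'(\lambda)\le -k|\lambda-\lambda^*|^2$. Combining this with the error bound yields
\[
d_{i+1}^2\le d_i^2-2w\eta_i\bigl(k d_i^2-e_{\max}d_i\bigr)+w^2\eta_i^2 .
\]
Set $D:=2e_{\max}/k$. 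Whenever $d_i\ge D$ we have $k d_i^2-e_{\max}d_i\ge \tfrac{k}{2}d_i^2\ge \tfrac{k}{2}D^2$, so the iterate strictly contracts up to the $w^2\eta_i^2$ term. The condition $\eta_i\le 2/(kw)$ ensures that the factor $1-wk\eta_i$ appearing in this regime stays in $[0,1]$, so no overshoot amplifies $d_i$.

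\textbf{Limiting argument.} Fix $\delta>0$; I will show $\limsup_i d_i\le D+\delta$.

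First, the iterates enter the band $\{d\le D+\delta\}$ infinitely often. Otherwise, for all large $i$,
\[
d_{i+1}^2\le d_i^2-wk\eta_i (D+\delta)^2+w^2\eta_i^2 .
\]
Summing and using $\sum\eta_i=\infty$ and $\sum\eta_i^2<\infty$ would force $d_i^2\to-\infty$, a contradiction.

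Second, excursions above $D+\delta$ eventually become small. When $d_i\le D+\delta$, a single step raises $d^2$ by at most $2w\eta_i e_{\max}(D+\delta)+w^2\eta_i^2\to0$, since $\eta_i\to0$. During an excursion above $D$, $d^2$ can increase only through the $w^2\eta_j^2$ terms, whose tail sum $\sum_{j\ge i}w^2\eta_j^2\to0$. Hence for large $i$ every excursion stays below $(D+\delta)^2+o(1)$.

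Letting $\delta\to0$ gives $\limsup d_i^2\le D^2=4e_{\max}^2/k^2$.

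\textbf{Main obstacle.} I expect this excursion-control step to be the main difficulty. The boundary crossing must be handled carefully: the step that crosses from below $D+\delta$ to above it, and then the accumulated $\eta^2$ tail. I would write this as a standard lemma on nonnegative sequences satisfying $a_{i+1}\le a_i-c\eta_i\mathbf 1\{a_i>D^2\}+\beta_i$, with $\beta_i\to0$ and $\sum_i\beta_i\mathbf 1\{a_i>D^2\}<\infty$ along excursions. A secondary point to verify is that strong concavity is used only on $[0,\lambda_{\max}]$, which Assumptions~\ref{ass:dual_upper_bound} and \ref{ass:lambdamax} guarantee for both $\lambda^*$ and every $\lambda(i)$.
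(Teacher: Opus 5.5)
Your proof is correct. Up to the one-step inequality it matches the paper's: nonexpansive projection, $|h_i|\le 1$, $g'(\lambda^*)=0$ combined with strong concavity, and $|h_i-g'(\lambda(i))|\le e(\lambda(i))\le e_{\max}$ via Lemmas~\ref{lemma:dualgap2} and~\ref{lemma:TVgapthetai}. The paper keeps only $-kw\eta_i d_i^2$ from strong concavity, whereas you keep $-2kw\eta_i d_i^2$; both are valid.

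The routes diverge at the limiting step. The paper removes the cross term with Young's inequality, $2w\eta_i e_{\max}d_i\le \tfrac{k}{2}w\eta_i d_i^2+\tfrac{2}{k}w\eta_i e_{\max}^2$. This turns the one-step bound into the affine recursion $a_{i+1}\le(1-\tfrac{k}{2}w\eta_i)a_i+(w\eta_i)^2$ for $a_i:=d_i^2-4e_{\max}^2/k^2$. Unrolling multiplies earlier inequalities by the factors $1-\tfrac{k}{2}w\eta_j$, which must be nonnegative. That is exactly where the cap $\eta_i\le 2/(kw)$ enters. The conditions $\sum\eta_i=\infty$ and $\sum\eta_i^2<\infty$ then make both the product and the weighted tail of $\eta_n^2$ vanish.

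What each route buys:
\begin{itemize}
\item The paper's route is shorter. It avoids the excursion bookkeeping you flagged as your main obstacle, and it gives an explicit finite-$i$ bound.
\item Your threshold/excursion argument is longer but never uses the step-size cap; it needs only $\eta_i\ge 0$, $\sum\eta_i=\infty$ and $\sum\eta_i^2<\infty$.
\item Your argument only needs the drift $kd^2-e_{\max}d$ to be positive above the threshold. Running it with threshold $e_{\max}/k$ instead of $2e_{\max}/k$ would therefore give the sharper $\limsup_i d_i^2\le e_{\max}^2/k^2$, still without any cap.
\end{itemize}

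One small inaccuracy: $\eta_i\le 2/(kw)$ only places $1-wk\eta_i$ in $[-1,1]$, not $[0,1]$; the latter would need $\eta_i\le 1/(kw)$. This does no harm, because in the regime $d_j\ge D$ your argument only uses $d_{j+1}^2\le d_j^2+w^2\eta_j^2$, so the sign of that factor never matters.
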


\begin{proof} 
Note that $\lambda(0)=\lambda'(0)=0$, and in Alg.~\ref{alg:pd_practical}, $\lambda'=\lambda/w$ is updated as $\lambda'(i+1)\gets[\lambda'(i) +\eta_{i}(b-P^{(\lambda(i))}_{u'_{\theta}}(\mathcal{C}))]_{+}$. The update in $\lambda$ is equivalent to  
\begin{equation}
    \lambda(i+1) = \left[\lambda(i)+w\eta_i\left(b-P^{(\lambda(i))}_{u'_\theta}(\mathcal{C})\right)\right]_+ = \left[\lambda(i)+w\eta_{i}\nabla_{\lambda}L(p^{(\lambda(i))}_{u'_\theta}, \lambda(i))\right]_+.
\end{equation}
Next we analyze the gap between $\lambda(i)$ and $\lambda^*$,
\begin{equation}\label{eq:exactlambdagap}
\resizebox{1\textwidth}{!}{$
    \begin{alignedat}{2}
        &|\lambda(i+1)-{\lambda}^*|^2\\&=|[\lambda (i)+w\eta_{i}\nabla_{\lambda}L(p^{(\lambda(i))}_{u'_\theta}, \lambda(i))]_{+}-{\lambda}^*|^2\\
        &\leq|\lambda (i)+w\eta_{i}\nabla_{\lambda}L(p^{(\lambda(i))}_{u'_\theta}, \lambda(i))-{\lambda}^*|^2~~\text{(Nonexpansiveness of projection)}
        \\&=|\lambda(i)-{\lambda}^*|^2+2w\eta_{i}(\lambda(i)-{\lambda}^*)\nabla_{\lambda}L(p^{(\lambda(i))}_{u'_\theta}, \lambda(i))+{(w\eta_i)^2}\left[\nabla_{\lambda}L(p^{(\lambda(i))}_{u'_\theta}, \lambda(i))\right]^2
        \\&\leq |\lambda(i)-{\lambda}^*|^2+2w\eta_i(\lambda(i)-{\lambda}^*)\nabla_{\lambda}L(p^{(\lambda(i))}_{u'_\theta}, \lambda(i))+{(w\eta_i)^2}\\&\qquad\left(\text{ Note that $\left[\nabla_{\lambda}L(p^{(\lambda(i))}_{u'_\theta}, \lambda(i))\right]^2\leq 1$ since $\nabla_{\lambda}L(p^{(\lambda(i))}_{u'_\theta}, \lambda(i)) = b-P^{(\lambda(i))}_{u'_\theta}(\mathcal{C})\in[b-1,b]$}\right)
        \\&= |\lambda(i)-{\lambda}^*|^2+2w\eta_i(\lambda(i)-{\lambda}^*)\left(\nabla_{\lambda}g(\lambda(i))+\nabla_{\lambda}L(p^{(\lambda(i))}_{u'_\theta}, \lambda(i))-\nabla_{\lambda}g(\lambda(i))\right)+{(w\eta_i)^2}
        \\& = |\lambda(i)-{\lambda}^*|^2+2w\eta_i(\lambda(i)-{\lambda}^*)\nabla_{\lambda}g(\lambda(i))+2w\eta_i(\lambda(i)-{\lambda}^*)\left(\nabla_{\lambda}L(p^{(\lambda(i))}_{u'_\theta}, \lambda(i))-\nabla_{\lambda}g(\lambda(i))\right)+{(w\eta_i)^2}
        \\& = |\lambda(i)-{\lambda}^*|^2+2w\eta_i(\lambda(i)-{\lambda}^*)(\nabla_{\lambda}g(\lambda(i))-\nabla_{\lambda}g(\lambda^*))\\&~~~~+2w\eta_i(\lambda(i)-{\lambda}^*)\left(\nabla_{\lambda}L(p^{(\lambda(i))}_{u'_\theta}, \lambda(i))-\nabla_{\lambda}g(\lambda(i))\right)+{(w\eta_i)^2}
        \quad(\text{Since } \nabla_{\lambda}g(\lambda^*)=0)\\
        &\leq |\lambda(i)-{\lambda}^*|^2-kw\eta_i|\lambda(i)-{\lambda}^*|^2+2w\eta_i(\lambda(i)-{\lambda}^*)\left(\nabla_{\lambda}L(p^{(\lambda(i))}_{u'_\theta}, \lambda(i))-\nabla_{\lambda}g(\lambda(i))\right)+{(w\eta_i)^2}\\&\qquad\text{($k$-strong concavity, Lemma~\ref{lemma:kconcave})}\\&
        \leq  |\lambda(i)-{\lambda}^*|^2-kw\eta_i|\lambda(i)-{\lambda}^*|^2+2w\eta_i|\lambda(i)-{\lambda}^*|\left|\nabla_{\lambda}L(p^{(\lambda(i))}_{u'_\theta}, \lambda(i))-\nabla_{\lambda}g(\lambda(i))\right|+{(w\eta_i)^2}
        \\&
        \leq  |\lambda(i)-{\lambda}^*|^2-kw\eta_i|\lambda(i)-{\lambda}^*|^2+2w\eta_i|\lambda(i)-{\lambda}^*|e(\lambda(i))+{(w\eta_i)^2}\quad\text{(Lemma~\ref{lemma:dualgap2})}
        \\&\leq  (1-kw\eta_i)|\lambda(i)-{\lambda}^*|^2+2w\eta_i|\lambda(i)-{\lambda}^*|e_{\max}+{(w\eta_i)^2}\quad\text{(Lemma~\ref{lemma:TVgapthetai})}
        \\&\leq (1-kw\eta_i)\left|\lambda(i)-{\lambda}^*\right|^2+\frac{k}{2}w\eta_i|\lambda(i)-{\lambda}^*|^2+\frac{2}{k}w\eta_i e_{\max}^2+(w\eta_i)^2
        \\&\leq  (1-\frac{k}{2}w\eta_i)\left|\lambda(i)-{\lambda}^*\right|^2+w\eta_i\frac{2}{k}e_{\max}^2+(w\eta_i)^2 .
    \end{alignedat}$}
\end{equation}

Rearranging the terms,
\begin{equation}
    |\lambda(i+1)-{\lambda}^*|^2-\frac{4e_{\max}^2}{k^2}\leq (1-\frac{k}{2}w\eta_i)(\left|\lambda(i)-{\lambda}^*\right|^2-\frac{4e_{\max}^2}{k^2})+(w\eta_i)^2.
\end{equation}
Unrolling this recursion yields
\begin{equation} \label{eq:recur1}
\resizebox{0.98\textwidth}{!}{$\displaystyle
     |\lambda(i+1)-{\lambda}^*|^2-\frac{4e_{\max}^2}{k^2}\leq \left(|\lambda(0)-\lambda^*|^2-\frac{4e_{\max}^2}{k^2}\right)\prod_{j=0}^i (1-\frac{k}{2}w\eta_j)+\sum_{n=0}^i\left(\prod_{j=n+1}^i(1-\frac{k}{2}w\eta_j)\right)(w\eta_n)^2,$}
\end{equation}
and we define the empty product $\prod_{j=i+1}^i(\cdot) = 1$ for the last term.

With $\forall i, 0\leq \eta_i\leq\frac{2}{kw}$, $\sum_{i=1}^\infty\eta_i=\infty, \sum_{i=1}^\infty\eta_i^2<\infty $, \eqref{eq:recur1} implies that 
\begin{equation}
    \displaystyle \limsup_{i\rightarrow \infty}|\lambda(i)-{\lambda}^*|^2\leq \frac{4e_{\max}^2}{k^2} .
\end{equation}
 \end{proof}

\subsection{ Proof of Theorem~\ref{theorem:bound2}}

 \begin{proof}
By Lemma~\ref{lemma:TVgapthetai} and Lemma~\ref{lemma:lambdabound},
\begin{equation}
    \begin{alignedat}{2}
              \left\|{P_{u'_{\theta}}^{(\lambda(i))}}-P^{(\lambda^*)} \right\|_{\mathrm{TV}}&\leq \left\|{P_{u'_{\theta}}^{(\lambda(i))}}-P^{(\lambda(i))} \right\|_{\mathrm{TV}}+\left\|{P^{(\lambda(i))}}-P^{(\lambda^*)} \right\|_{\mathrm{TV}}\\
          &\leq e(\lambda(i))+\frac{1}{\left((1-Q({\mathcal{C}}))^{1/3}+Q({\mathcal{C}})^{1/3}\right)^3}|{\lambda(i)}-{\lambda^*}|.
    \end{alignedat}
\end{equation}
By Lemma~\ref{lemma:pracricalPDconvergence},
\begin{equation}
    \limsup_{i\to\infty}  \left\|{P_{u'_{\theta}}^{(\lambda(i))}}-P^{(\lambda^*)} \right\|_{\mathrm{TV}}\leq  \mathcal{E}_{\mathrm{PDFM}},\quad \mathcal{E}_{\mathrm{PDFM}} =e_{\max}+\frac{2e_{\max}}{k\left((1-Q({\mathcal{C}}))^{1/3}+Q({\mathcal{C}})^{1/3}\right)^3} .
\end{equation}
and by Lemma~\ref{lemma:TVandconstraint} the constraint satisfaction rate satisfies
     \begin{equation}
        \liminf_{i\to\infty}P_{u'_{\theta}}^{(\lambda(i))}(\mathcal{C})\geq b-\mathcal{E}_{\mathrm{PDFM}}.
     \end{equation}

\end{proof}

\section{Convergence under Unbiased Constraint Estimation}
\label{app:unbiased_constraint_estimation}

Assumption~\ref{ass:exactdualgrad} simplifies the analysis by treating the empirical estimate of the constraint satisfaction gap as exact at every dual update. In practice, however, $P_{u_\theta}(\mathcal{C})$ is estimated using a finite number of generated samples and is therefore stochastic. We relax Assumption~\ref{ass:exactdualgrad} by requiring the Monte Carlo estimator to be conditionally unbiased given the current dual variable.

\begin{assumption}[Unbiased Monte Carlo estimation]
\label{ass:unbiasedMC}
For each fixed dual variable $\lambda$, let $u_{\theta}'^{(\lambda)}$ denote the corresponding velocity field. The dual gradient $
    \nabla_\lambda L\!\left(
        p_{u_{\theta}'}^{(\lambda)},\lambda
    \right)
    =
    b-P_{u_{\theta}'}^{(\lambda)}(\mathcal{C})
$
can be estimated via Monte Carlo approximation. Let $\mathrm{grad}(\lambda)$ denote the resulting estimator. At iteration $i$, define $\mathcal{F}_i=(\lambda(i),u_{\theta}'^{(\lambda(i))})$, which consists of the current dual variable and the corresponding velocity field. We assume that the estimator is conditionally unbiased given $\mathcal{F}_i$, i.e.,
\begin{equation}
    \mathbb{E}\left[
        \mathrm{grad}(\lambda(i))
        \,\middle|\,
        \mathcal{F}_i
    \right]
    =
    \nabla_\lambda L(
        p_{u_{\theta}'}^{(\lambda(i))},\lambda(i)
    )
    =
    b-P_{u_{\theta}'}^{(\lambda(i))}(\mathcal{C}).
\end{equation}
\end{assumption}

\begin{lemma}\label{lemma:pracricalPDconvergence2}
    Under Assumption~\ref{ass:unbiasedMC} instead of Assumption~\ref{ass:exactdualgrad}, with $\forall i, 0\leq \eta_i\leq\frac{2}{kw}$, $\sum_{i=1}^\infty\eta_i=\infty, \sum_{i=1}^\infty\eta_i^2<\infty $,
\begin{equation}
    \displaystyle \limsup_{i\rightarrow \infty}\mathbb{E}[|\lambda(i)-{\lambda}^*|^2]\leq \frac{4e_{\max}^2}{k^2} .
\end{equation}
\end{lemma}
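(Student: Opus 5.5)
The plan is to repeat the one-step recursion from the proof of Lemma~\ref{lemma:pracricalPDconvergence}, now conditioning on $\mathcal F_i$ and then taking total expectation. Write the dual update as
\[
\lambda(i+1)=\bigl[\lambda(i)+w\eta_i\,\mathrm{grad}(\lambda(i))\bigr]_+,
\qquad
\mathrm{grad}(\lambda(i))=\nabla_\lambda L\bigl(p^{(\lambda(i))}_{u'_\theta},\lambda(i)\bigr)+\xi_i ,
\]
where $\mathbb E[\xi_i\mid\mathcal F_i]=0$ by Assumption~\ref{ass:unbiasedMC}. The Monte Carlo estimator is an empirical frequency of constraint satisfaction, so $\mathrm{grad}(\lambda(i))\in[b-1,b]$ and $|\mathrm{grad}(\lambda(i))|^2\le 1$ almost surely. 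This plays the role of the bounded-supergradient fact used in the deterministic proof.

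First, nonexpansiveness of the projection onto $[0,\infty)$ gives, pathwise,
\[
|\lambda(i+1)-\lambda^*|^2\le|\lambda(i)-\lambda^*|^2+2w\eta_i(\lambda(i)-\lambda^*)\,\mathrm{grad}(\lambda(i))+(w\eta_i)^2 .
\]
Next I take the conditional expectation given $\mathcal F_i$. Since $\lambda(i)$ is $\mathcal F_i$-measurable, the cross term becomes $2w\eta_i(\lambda(i)-\lambda^*)\bigl(b-P^{(\lambda(i))}_{u'_\theta}(\mathcal C)\bigr)$, which is exactly the deterministic quantity from before.

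I then handle this cross term exactly as in \eqref{eq:exactlambdagap}. Add and subtract $\nabla_\lambda g(\lambda(i))$ and use $\nabla_\lambda g(\lambda^*)=0$. Apply $k$-strong concavity on $[0,\lambda_{\max}]$ (Lemma~\ref{lemma:kconcave}, with $\lambda(i)\le\lambda_{\max}$ by Assumption~\ref{ass:lambdamax}). Then bound the gradient mismatch by $e(\lambda(i))\le e_{\max}$ (Lemmas~\ref{lemma:dualgap2} and \ref{lemma:TVgapthetai}), and finish with Young's inequality $2ab\le \tfrac k2 a^2+\tfrac2k b^2$. This yields
\[
\mathbb E\bigl[|\lambda(i+1)-\lambda^*|^2\mid\mathcal F_i\bigr]\le\Bigl(1-\tfrac k2 w\eta_i\Bigr)|\lambda(i)-\lambda^*|^2+\tfrac2k w\eta_i e_{\max}^2+(w\eta_i)^2 .
\]
Taking total expectation (tower property) gives a deterministic scalar recursion for $a_i:=\mathbb E|\lambda(i)-\lambda^*|^2$. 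Everything on the right is linear in $|\lambda(i)-\lambda^*|^2$, so no Jensen-type issue arises. Writing $a_i-\tfrac{4e_{\max}^2}{k^2}$ and using $0\le 1-\tfrac k2 w\eta_i\le1$ (which follows from $\eta_i\le 2/(kw)$), I unroll as in \eqref{eq:recur1}.

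The main obstacle is the final limit step; everything else is a verbatim copy of the deterministic proof. Let $c_i:=\max\{a_i-4e_{\max}^2/k^2,0\}$, which satisfies $c_{i+1}\le(1-\alpha_i)c_i+\beta_i$ with $\alpha_i=\tfrac k2 w\eta_i$, $\sum\alpha_i=\infty$, and $\sum\beta_i<\infty$. The homogeneous term vanishes because $\prod_j(1-\alpha_j)\le\exp(-\sum_j\alpha_j)\to0$. For the forcing term, split $\sum_{n\le i}\prod_{j=n+1}^i(1-\alpha_j)\beta_n$ at an index $M$. The tail $\sum_{n>M}\beta_n$ is small by square-summability of $\eta_i$, and the head is killed by the vanishing products. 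This gives $\limsup_i c_i=0$, i.e.\ $\limsup_{i\to\infty}\mathbb E[|\lambda(i)-\lambda^*|^2]\le 4e_{\max}^2/k^2$.
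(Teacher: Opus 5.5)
Your proposal is correct and follows the paper's proof essentially step for step: nonexpansiveness of the projection, conditioning on $\mathcal F_i$ so that unbiasedness turns the cross term into the deterministic one, and reusing the strong-concavity/Young's-inequality chain from Lemma~\ref{lemma:pracricalPDconvergence}, followed by total expectation and unrolling. The differences are cosmetic: you pass directly to total expectation (the paper's intermediate conditioning on $\lambda(i)$ is unnecessary), and you spell out the $\max\{\cdot,0\}$ and split-at-$M$ argument for the final limit, which the paper only asserts.
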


\begin{proof}
Note that $\lambda(0)=\lambda'(0)=0$, and in Alg.~\ref{alg:pd_practical}, $\lambda'=\lambda/w$ is updated in the way $\lambda'(i+1)\gets[\lambda'(i) +\eta_{i}\mathrm{grad}(\lambda(i))]_{+}$. The update in $\lambda$ is equivalent to  
\begin{equation}
    \lambda(i+1) = \left[\lambda(i)+w\eta_i\mathrm{grad}(\lambda(i))\right]_+.
\end{equation}
Therefore, 
    \begin{equation}
    \begin{alignedat}{2}
        |\lambda(i+1)-{\lambda}^*|^2&=|[\lambda (i)+w\eta_{i}\mathrm{grad}(\lambda(i))]_{+}-{\lambda}^*|^2\\
        &\leq|\lambda(i) +w\eta_{i}\text{grad}(\lambda(i))-{\lambda}^*|^2
        \\&=|\lambda(i)-{\lambda}^*|^2+2w\eta_i(\lambda(i)-{\lambda}^*)\mathrm{grad}(\lambda(i))+(w\eta_i)^2\mathrm{grad}(\lambda(i))^2
    \end{alignedat}
\end{equation}

Take the expectation conditioning on $\mathcal{F}_i$. After applying the bounds from the proof of Lemma~F.3, we condition the resulting inequality on $\lambda(i)$ using the tower property to obtain the recurrence below.
\begin{equation}\label{eq:expectlambdagap}
    \begin{alignedat}{2}
        & \mathbb{E}[|\lambda(i+1)-{\lambda}^*|^2|\mathcal{F}_i]\\
       &\leq\mathbb{E}\left[|\lambda(i)-{\lambda}^*|^2+2w\eta_i(\lambda(i)-{\lambda}^*)\text{grad}(\lambda(i))+(w\eta_i)^2\text{grad}(\lambda(i))^2|\mathcal{F}_i\right]\\
       & = |\lambda(i)-{\lambda}^*|^2+2w\eta_i(\lambda(i)-{\lambda}^*)\mathbb{E}[\text{grad}(\lambda(i))|\mathcal{F}_i]+(w\eta_i)^2\mathbb{E}[\text{grad}(\lambda(i))^2|\mathcal{F}_i]\\
   & = |\lambda(i)-{\lambda}^*|^2+2w\eta_i(\lambda(i)-{\lambda}^*)\nabla_{\lambda}L(p^{(\lambda(i))}_{u'_\theta}, \lambda(i))+(w\eta_i)^2\mathbb{E}[\text{grad}(\lambda(i))^2|\mathcal{F}_i] \\&\quad\text{(According to Assumption~\ref{ass:unbiasedMC})}
   \\& \leq |\lambda(i)-{\lambda}^*|^2+2w\eta_i(\lambda(i)-{\lambda}^*)\nabla_{\lambda}L(p^{(\lambda(i))}_{u'_\theta}, \lambda(i))+(w\eta_i)^2
   \\&\quad\text{($\text{grad}(\lambda(i))^2\leq 1$ since it is a Monte Carlo estimate of $\left(b-P_{u_{\theta}'}^{(\lambda)}(\mathcal{C})\right)^2\leq \max\{b^2,(b-1)^2\}$)}
    \end{alignedat}
\end{equation}

Note that the last line of \eqref{eq:expectlambdagap} arrives at the fifth line of \eqref{eq:exactlambdagap}. With the same procedure as in the proof of Lemma~\ref{lemma:pracricalPDconvergence}, we first obtain the recursion conditioned on $\mathcal{F}_i$. Since $\lambda(i)$ is a component of $\mathcal{F}_i$ and the right-hand side depends only on $\lambda(i)$, conditioning this recursion on $\lambda(i)$ and applying the tower property gives
\begin{equation}\label{eq:conditional_recursion}
    \mathbb{E}\left[|\lambda(i+1)-{\lambda}^*|^2|\lambda(i)\right]-\frac{4e_{\max}^2}{k^2}\leq (1-\frac{k}{2}w\eta_i)(\left|\lambda(i)-{\lambda}^*\right|^2-\frac{4e_{\max}^2}{k^2})+(w\eta_i)^2.
\end{equation}
Taking expectation on both sides of \eqref{eq:conditional_recursion} and applying the tower property gives
\begin{equation}
\mathbb{E}\left[
    \mathbb{E}\left[
        |\lambda(i+1)-\lambda^*|^2
        \,\middle|\,
        \lambda(i)
    \right]
\right]
-\frac{4e_{\max}^2}{k^2}
\leq
\left(1-\frac{k}{2}w\eta_i\right)
\left(
    \mathbb{E}\left[
        |\lambda(i)-\lambda^*|^2
    \right]
    -\frac{4e_{\max}^2}{k^2}
\right)
+(w\eta_i)^2.    
\end{equation}
By the law of total expectation,
\begin{equation}
    \mathbb{E}\left[
        \mathbb{E}\left[
            |\lambda(i+1)-\lambda^*|^2
            \,\middle|\,
            \lambda(i)
        \right]
    \right]
    =
    \mathbb{E}\left[
        |\lambda(i+1)-\lambda^*|^2
    \right].
\end{equation}
Therefore,
\begin{equation}
\mathbb{E}\left[
    |\lambda(i+1)-\lambda^*|^2
\right]
-\frac{4e_{\max}^2}{k^2}
\leq
\left(1-\frac{k}{2}w\eta_i\right)
\left(
    \mathbb{E}\left[
        |\lambda(i)-\lambda^*|^2
    \right]
    -\frac{4e_{\max}^2}{k^2}
\right)
+(w\eta_i)^2.
\label{eq:unconditional_recursion}
\end{equation}

Unrolling this recursion yields
\begin{equation} \label{eq:recur}
\resizebox{0.98\textwidth}{!}{$\displaystyle
     \mathbb{E}[|\lambda(i+1)-{\lambda}^*|^2]-\frac{4e_{\max}^2}{k^2}\leq \left(\mathbb{E}[|\lambda(0)-\lambda^*|^2]-\frac{4e_{\max}^2}{k^2}\right)\prod_{j=0}^i (1-\frac{k}{2}w\eta_j)+\sum_{n=0}^i\left(\prod_{j=n+1}^i(1-\frac{k}{2}w\eta_j)\right)(w\eta_n)^2,$}
\end{equation}
and we define the empty product $\prod_{j=i+1}^i(\cdot) = 1$ for the last term.

With $\forall i, 0\leq \eta_i\leq\frac{2}{kw}$, $\sum_{i=1}^\infty\eta_i=\infty, \sum_{i=1}^\infty\eta_i^2<\infty $, \eqref{eq:recur} implies that 
\begin{equation}
    \displaystyle \limsup_{i\rightarrow \infty}\mathbb{E}[|\lambda(i)-{\lambda}^*|^2]\leq \frac{4e_{\max}^2}{k^2} .
\end{equation}
\end{proof}

\begin{theorem} \label{theorem:bound3} (Theorem~\ref{theorem:bound2} under unbiased Monte Carlo assumption instead of exact Monte Carlo assumption) Under  Assumptions~\ref{ass:feasibility}, \ref{ass:esterror}- \ref{ass:densityrange}, \ref{ass:lambdamax}, \ref{ass:esterrorGD}, and \ref{ass:unbiasedMC}, with $\forall i, 0\leq \eta_i\leq\frac{2}{kw}$, $\sum_{i=1}^\infty\eta_i=\infty, \sum_{i=1}^\infty\eta_i^2<\infty $,
\begin{equation}
    \limsup_{i\to\infty} \mathbb{E}\left[ \left\|{P_{u'_{\theta}}^{(\lambda(i))}}-P^{(\lambda^*)} \right\|_{\mathrm{TV}}\right] \leq \mathcal{E}_{\mathrm{PDFM}}, \quad\liminf_{i\to\infty}\mathbb{E}\left[P_{u'_{\theta}}^{(\lambda(i))}(\mathcal{C})\right]\geq b-\mathcal{E}_{\mathrm{PDFM}},
    \end{equation}
    \begin{equation}
    \mathcal{E}_{\mathrm{PDFM}} =  e_{\max}+\frac{2e_{\max}}{k\left((1-Q({\mathcal{C}}))^{1/3}+Q({\mathcal{C}})^{1/3}\right)^3} ,\quad e(\lambda):=\sqrt{\frac{\varepsilon+\mu(\lambda) (A_1\nu + A_2\nu^2)}{2(\mu(\lambda) - \lambda)}}, 
\end{equation}
     in which $\mu(\lambda) = \frac{1}{2}\left(\lambda+1+\sqrt{(\lambda+1)^2-4\lambda(1-Q(\mathcal{C}))}\right)$ and $e_{\max} =\max_{\lambda\in[0,\lambda_{\max}]} e(\lambda)$.
\end{theorem}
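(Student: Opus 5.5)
The plan is to reproduce the deterministic argument of Theorem~\ref{theorem:bound2} inside an expectation, with Lemma~\ref{lemma:pracricalPDconvergence2} taking the place of Lemma~\ref{lemma:pracricalPDconvergence}. Everything is pathwise until we take expectations. For every realization of the random dual iterate $\lambda(i)\in[0,\lambda_{\max}]$ (Assumption~\ref{ass:lambdamax}), the triangle inequality, Lemma~\ref{lemma:TVgapthetai} and Lemma~\ref{lemma:lambdabound} give
\begin{equation*}
\left\|P_{u'_\theta}^{(\lambda(i))}-P^{(\lambda^*)}\right\|_{\mathrm{TV}}\leq e_{\max}+\frac{|\lambda(i)-\lambda^*|}{\left((1-Q(\mathcal{C}))^{1/3}+Q(\mathcal{C})^{1/3}\right)^3}.
\end{equation*}
Both lemmas are deterministic statements that hold for each fixed $\lambda$ in the range. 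They therefore apply to the random $\lambda(i)$ with no further work.

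Next we take expectations. The first term on the right is the constant $e_{\max}$. For the second term we convert the mean-square control into control of the mean using Jensen's inequality (equivalently Cauchy--Schwarz):
\begin{equation*}
\mathbb{E}|\lambda(i)-\lambda^*|\leq\sqrt{\mathbb{E}|\lambda(i)-\lambda^*|^2}.
\end{equation*}
Lemma~\ref{lemma:pracricalPDconvergence2} gives $\limsup_i\mathbb{E}|\lambda(i)-\lambda^*|^2\leq 4e_{\max}^2/k^2$. Since the square root is continuous and monotone, $\limsup_i\mathbb{E}|\lambda(i)-\lambda^*|\leq 2e_{\max}/k$. Taking $\limsup$ of the expected TV bound then yields exactly $\mathcal{E}_{\mathrm{PDFM}}$, which proves the first claim.

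For the constraint rate, Lemma~\ref{lemma:TVandconstraint} applied pathwise gives
\begin{equation*}
P_{u'_\theta}^{(\lambda(i))}(\mathcal{C})\geq P^{(\lambda^*)}(\mathcal{C})-\left\|P_{u'_\theta}^{(\lambda(i))}-P^{(\lambda^*)}\right\|_{\mathrm{TV}},
\end{equation*}
and Lemma~\ref{lemma:pstar} gives $P^{(\lambda^*)}(\mathcal{C})=b$. We take expectations and then $\liminf$, using $\liminf(-a_i)=-\limsup a_i$. This gives $\liminf_i\mathbb{E}[P_{u'_\theta}^{(\lambda(i))}(\mathcal{C})]\geq b-\mathcal{E}_{\mathrm{PDFM}}$.

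The main point needing care is measurability and integrability of the random quantities $e(\lambda(i))$ and $\|P_{u'_\theta}^{(\lambda(i))}-P^{(\lambda^*)}\|_{\mathrm{TV}}$. Both are bounded: the TV distance lies in $[0,1]$ and $e\leq e_{\max}$. So the expectations exist as long as the map $\lambda\mapsto u'^{(\lambda)}_\theta$ yields a measurable quantity. We will assume this implicitly, as the filtration $\mathcal{F}_i$ in Assumption~\ref{ass:unbiasedMC} already does. Beyond this, the only nontrivial step is the Jensen passage from the second moment to the first moment, and it costs nothing in the constant.
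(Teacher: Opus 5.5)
Your proposal is correct and matches the paper's proof step for step. Both bound the TV distance pathwise via Lemma~\ref{lemma:TVgapthetai} and Lemma~\ref{lemma:lambdabound}, take expectations with Jensen's inequality to reach the second moment controlled by Lemma~\ref{lemma:pracricalPDconvergence2}, and finish with Lemma~\ref{lemma:TVandconstraint} using $P^{(\lambda^*)}(\mathcal{C})=b$. The only differences are cosmetic: you replace $e(\lambda(i))$ by $e_{\max}$ before taking expectations rather than after, and you state explicitly the measurability and boundedness point that the paper leaves implicit.
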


 \begin{proof}
By Lemma~\ref{lemma:TVgapthetai} and Lemma~\ref{lemma:lambdabound},
\begin{equation}
    \begin{alignedat}{2}
              \left\|{P_{u'_{\theta}}^{(\lambda(i))}}-P^{(\lambda^*)} \right\|_{\mathrm{TV}}&\leq \left\|{P_{u'_{\theta}}^{(\lambda(i))}}-P^{(\lambda(i))} \right\|_{\mathrm{TV}}+\left\|{P^{(\lambda(i))}}-P^{(\lambda^*)} \right\|_{\mathrm{TV}}\\
          &\leq e(\lambda(i))+\frac{1}{\left((1-Q({\mathcal{C}}))^{1/3}+Q({\mathcal{C}})^{1/3}\right)^3}|{\lambda(i)}-{\lambda^*}|.
    \end{alignedat}
\end{equation}
Therefore,
\begin{equation}
\begin{aligned}
    \mathbb{E}\left[\left\|{P_{u'_{\theta}}^{(\lambda(i))}}-P^{(\lambda^*)} \right\|_{\mathrm{TV}}\right]&\leq \mathbb{E}[e(\lambda(i))]+\frac{1}{\left((1-Q({\mathcal{C}}))^{1/3}+Q({\mathcal{C}})^{1/3}\right)^3}\mathbb{E}\left[|{\lambda(i)}-{\lambda^*}|\right]\\&
    \leq \mathbb{E}[e(\lambda(i))]+\frac{\sqrt{\mathbb{E}\left[|{\lambda(i)}-{\lambda^*}|^2\right]}}{\left((1-Q({\mathcal{C}}))^{1/3}+Q({\mathcal{C}})^{1/3}\right)^3}
    \end{aligned}
\end{equation}

By $\mathbb{E}[e(\lambda(i))]\leq e_{\max}$ and Lemma~\ref{lemma:pracricalPDconvergence2},
\begin{equation}
    \limsup_{i\to\infty} \mathbb{E}\left[ \left\|{P_{u'_{\theta}}^{(\lambda(i))}}-P^{(\lambda^*)} \right\|_{\mathrm{TV}}\right]\leq  \mathcal{E}_{\mathrm{PDFM}},\quad \mathcal{E}_{\mathrm{PDFM}} =e_{\max}+\frac{2e_{\max}}{k\left((1-Q({\mathcal{C}}))^{1/3}+Q({\mathcal{C}})^{1/3}\right)^3} .
\end{equation}
and by Lemma~\ref{lemma:TVandconstraint} the expected constraint satisfaction rate satisfies
     \begin{equation}
        \liminf_{i\to\infty}\mathbb{E}\left[P_{u'_{\theta}}^{(\lambda(i))}(\mathcal{C})\right]\geq b-\mathcal{E}_{\mathrm{PDFM}}.
     \end{equation}

\end{proof}

\section{Experiment Details }\label{app:exp}
All experiments are run on an NVIDIA A100 GPU with $80$ GB of memory.

We implement standard FM using independent random coupling and linear interpolants, following Algorithm 2 (Simplified Conditional Flow Matching) in \cite{tong2024improving}. The network architecture for each experiment is specified in the corresponding subsection. During FM training, the learning rate decays from $10^{-4}$ to $10^{-5}$ during training. During subsequent primal dual training, we use the settings specified below.

\subsection{Illustrative $2$-D Constrained Generation ($Q(\mathcal{C})< b$)}\label{app:toy2d}

\paragraph{Network architectures:}
The velocity field $u_\theta(x_t,t)$ is parameterized by a fully connected network taking $(x_t,t)\in\mathbb{R}^3$ as input. It contains five hidden layers of width $64$ with SELU activations and a two-dimensional linear output representing $u_\theta(x_t,t)\in\mathbb{R}^2$. The feasibility predictor $f_\phi(x_t,t)$ uses the same hidden architecture with a scalar output. A sigmoid function is applied to the output logit to obtain the predicted probability of terminal constraint satisfaction.

\paragraph{Hyperparameters:} learning rate for the velocity field $\eta^{(\theta)} = 1\times 10^{-5}$, learning rate for the classifier $\eta^{(\phi)} = 1\times 10^{-4}$, dual variable learning rate $\eta_i = 500$. Large $\eta_i$ is applied because a high constraint satisfaction target corresponds to a large optimal dual variable. A larger update step therefore accelerates convergence.

The visualizations of the clean distribution, $Q$, $P^{(\lambda^*)}$, the distributions induced by baselines (1), (2), and PDFM are shown in Fig.~\ref{fig:2Dtoy}.

\begin{figure}[ht]
    \centering
   \resizebox{0.8\textwidth}{!}{
    \begin{tikzpicture}

    \def\imgw{0.3\textwidth}
    \def\imgh{0.3\textwidth}
    \def\xsep{0.35cm}
    \def\ysep{0.55cm}
    \def\xlabsep{-0.08cm}
    \def\ylabsep{-0.15cm}

    \pgfmathsetlengthmacro{\xstep}{\imgw + \xsep}
    \pgfmathsetlengthmacro{\ystep}{\imgh + \ysep}

    \node at (0*\xstep,0) {\includegraphics[width=\imgw,height=\imgh]{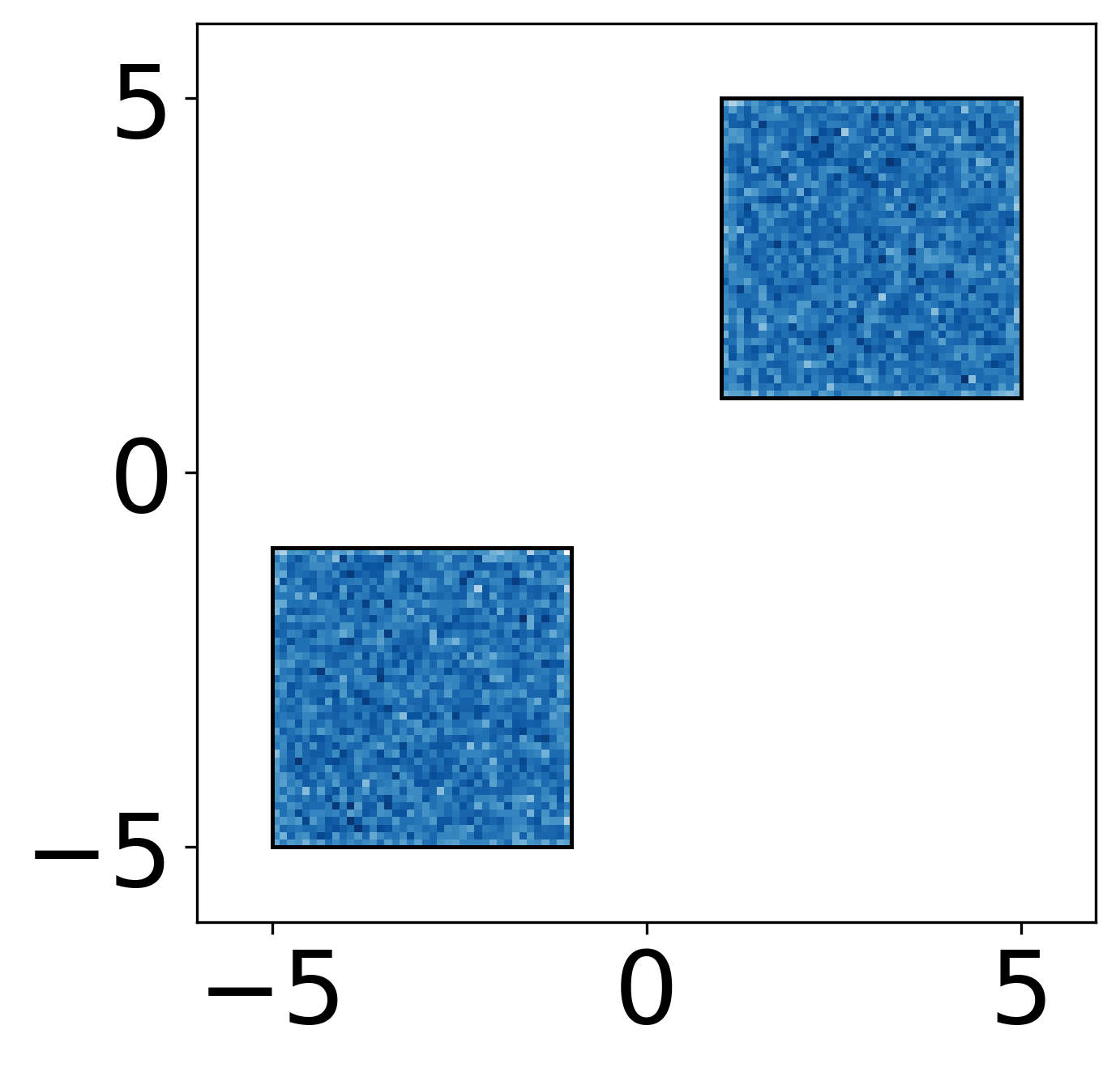}};
    \node[anchor=north, font=\small] at (0*\xstep,{0 - 0.5*\imgh - \xlabsep}) {(a) Clean distribution};

    \node at (1*\xstep,0) {\includegraphics[width=\imgw,height=\imgh]{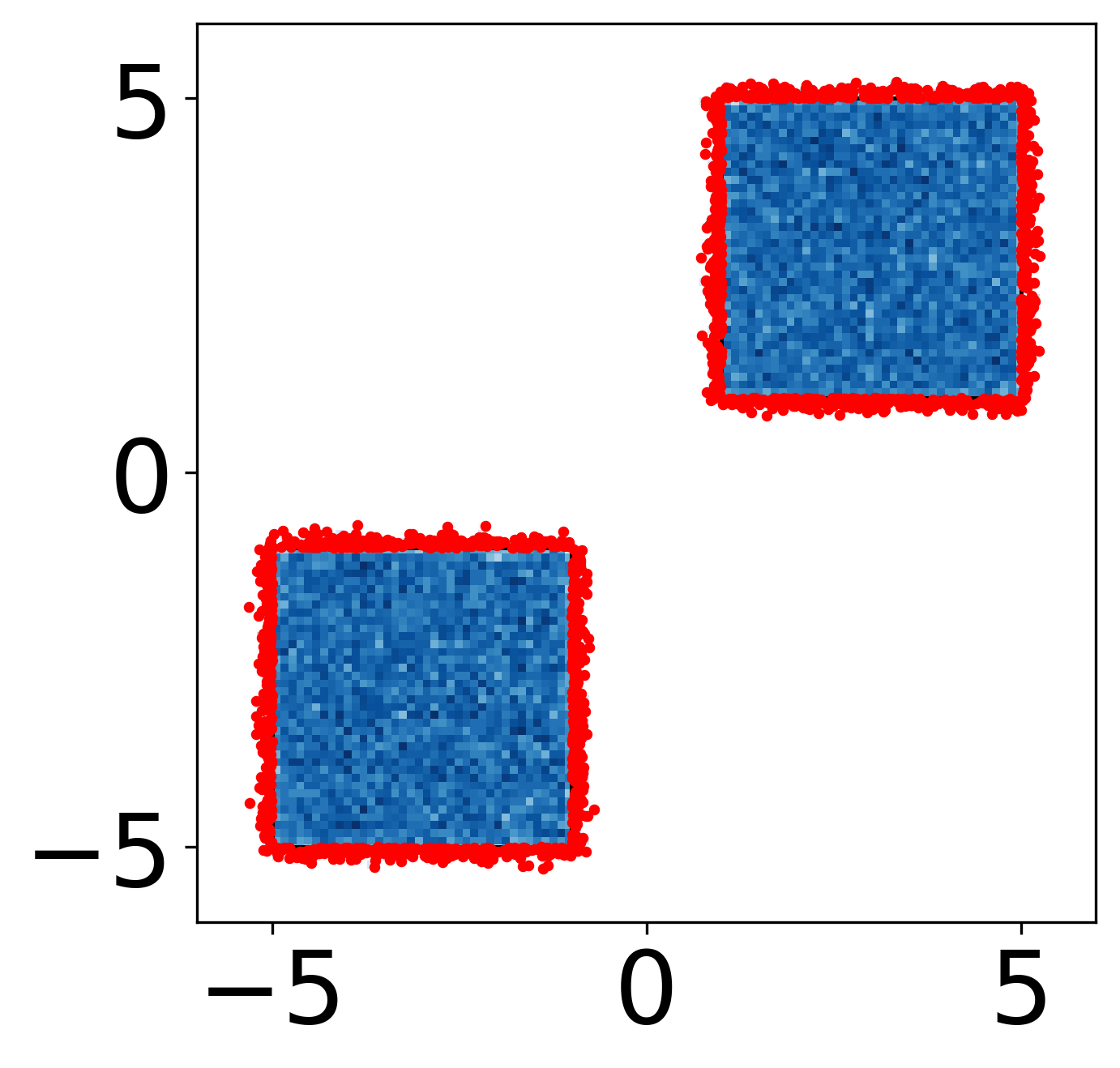}};
    \node[anchor=north, font=\small] at (1*\xstep,{0 - 0.5*\imgh - \xlabsep}) {(b) Training distribution $Q$};
        \node[anchor=north, font=\small] at (1*\xstep,{0 - 0.6*\imgh - \xlabsep}) {$Q(\mathcal{C})\approx96\%$};

    \node at (2*\xstep,0) {\includegraphics[width=\imgw,height=\imgh]{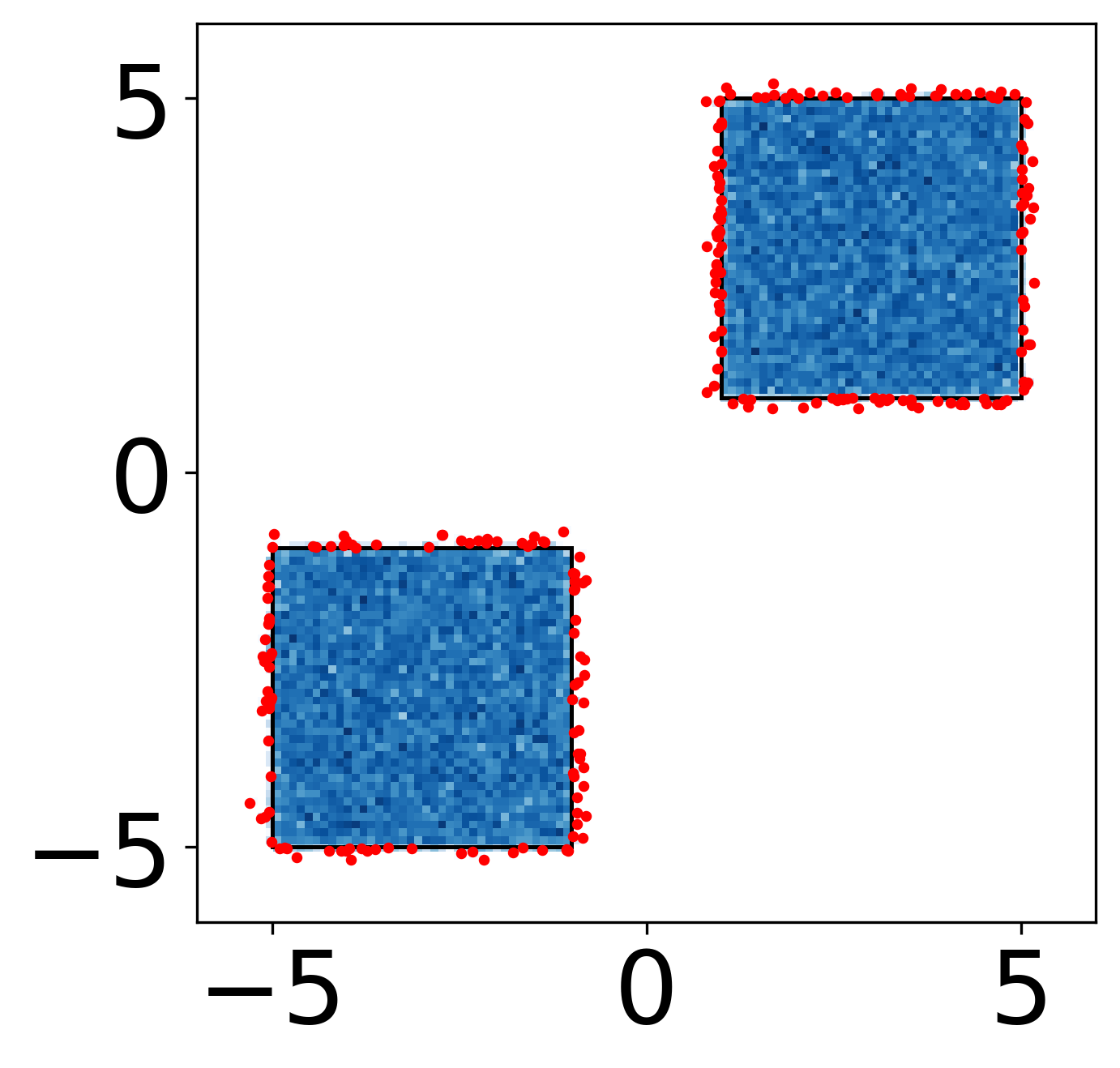}};
    \node[anchor=north, font=\small] at (2*\xstep,{0 - 0.5*\imgh - \xlabsep}) {(c) Optimal solution $P^{(\lambda^*)}$,};
    \node[anchor=north, font=\small] at (2*\xstep,{0 - 0.6*\imgh - \xlabsep}) {$ P^{(\lambda^*)}(\mathcal{C}) = 99.5\%$};

    \node at (0*\xstep,-1.1*\ystep) {\includegraphics[width=\imgw,height=\imgh]{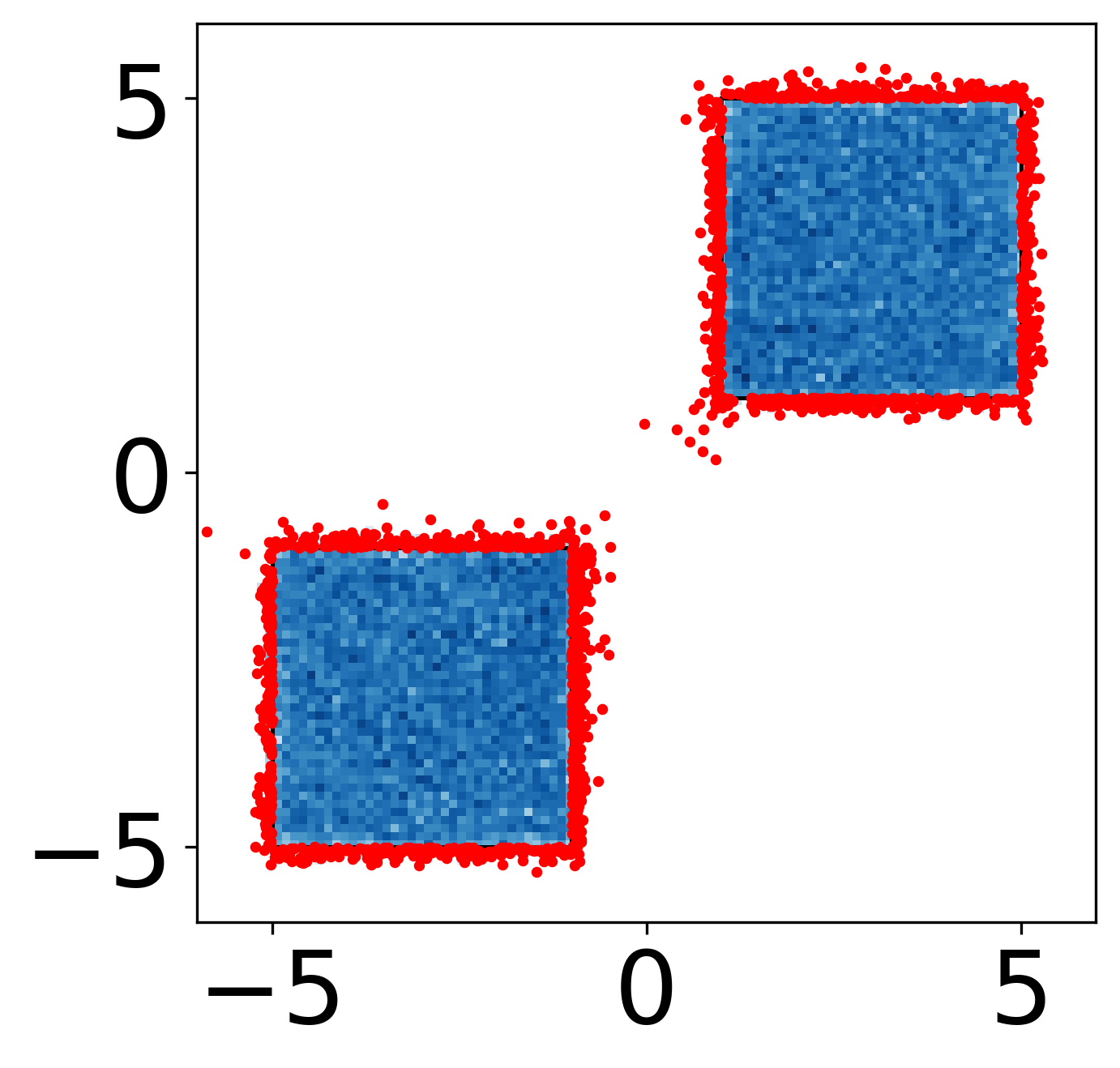}};
    \node[anchor=north, font=\small] at (0*\xstep,{-\ystep - 0.6*\imgh - \xlabsep}) {(d) FM trained on $Q$,};
        \node[anchor=north, font=\small] at (0*\xstep,{-\ystep - 0.7*\imgh - \xlabsep}) {$P(\mathcal{C}) =96.37\%$,};
        \node[anchor=north, font=\small] at (0*\xstep,{-\ystep - 0.8*\imgh - \xlabsep}) {$\mathrm{SWD} = 0.1384$};

    \node at (1*\xstep,-1.1*\ystep) {\includegraphics[width=\imgw,height=\imgh]{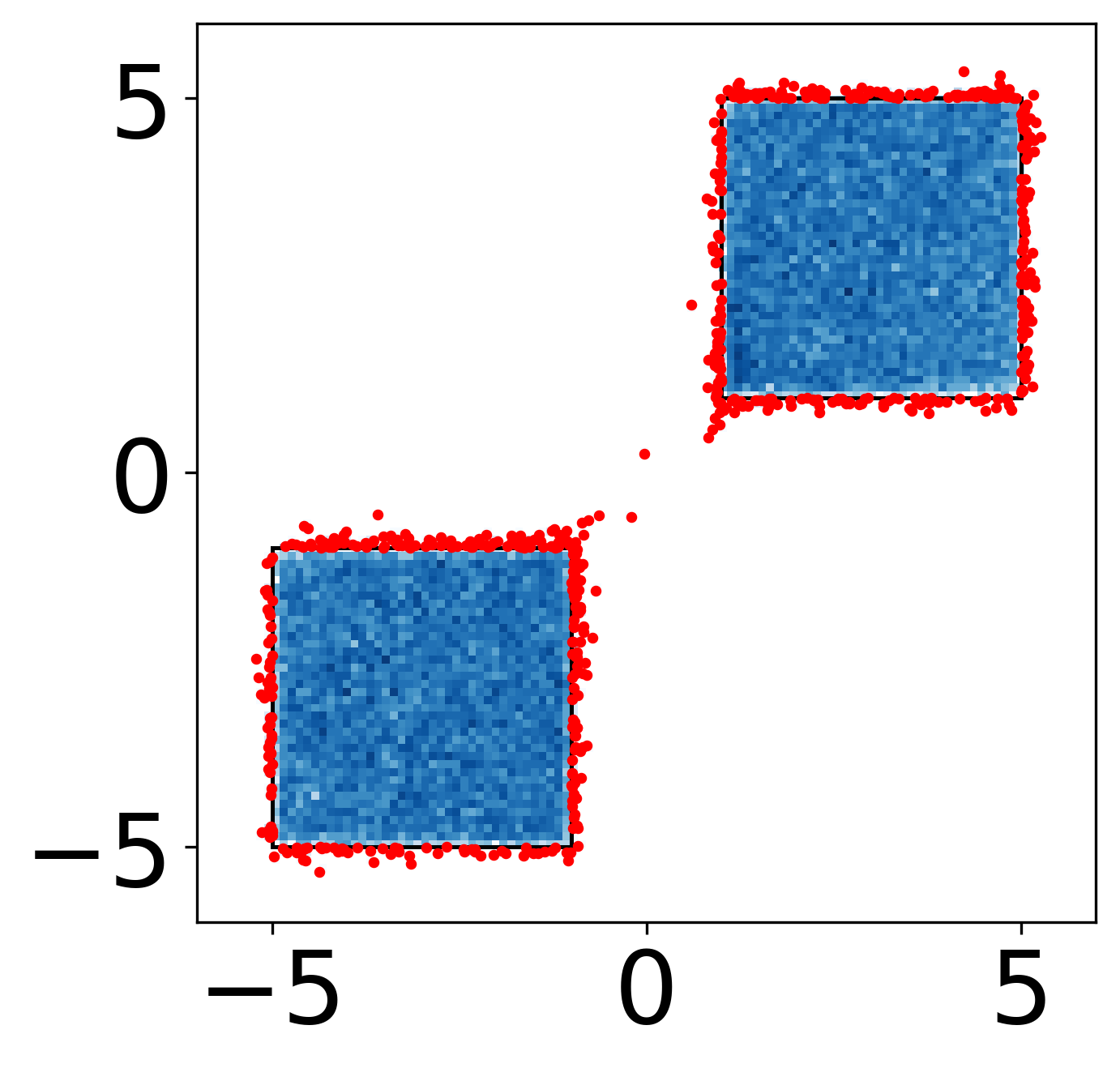}};
    \node[anchor=north, font=\small] at (1*\xstep,{-\ystep - 0.6*\imgh - \xlabsep}) {(e) FM trained on $Q'$,};
    \node[anchor=north, font=\small] at (1*\xstep,{-\ystep - 0.7*\imgh - \xlabsep}) {$P(\mathcal{C}) =98.74\%$,};
     \node[anchor=north, font=\small] at (1*\xstep,{-\ystep - 0.8*\imgh - \xlabsep}) {$\mathrm{SWD} = 0.1156$};

    \node at (2*\xstep,-1.1*\ystep) {\includegraphics[width=\imgw,height=\imgh]{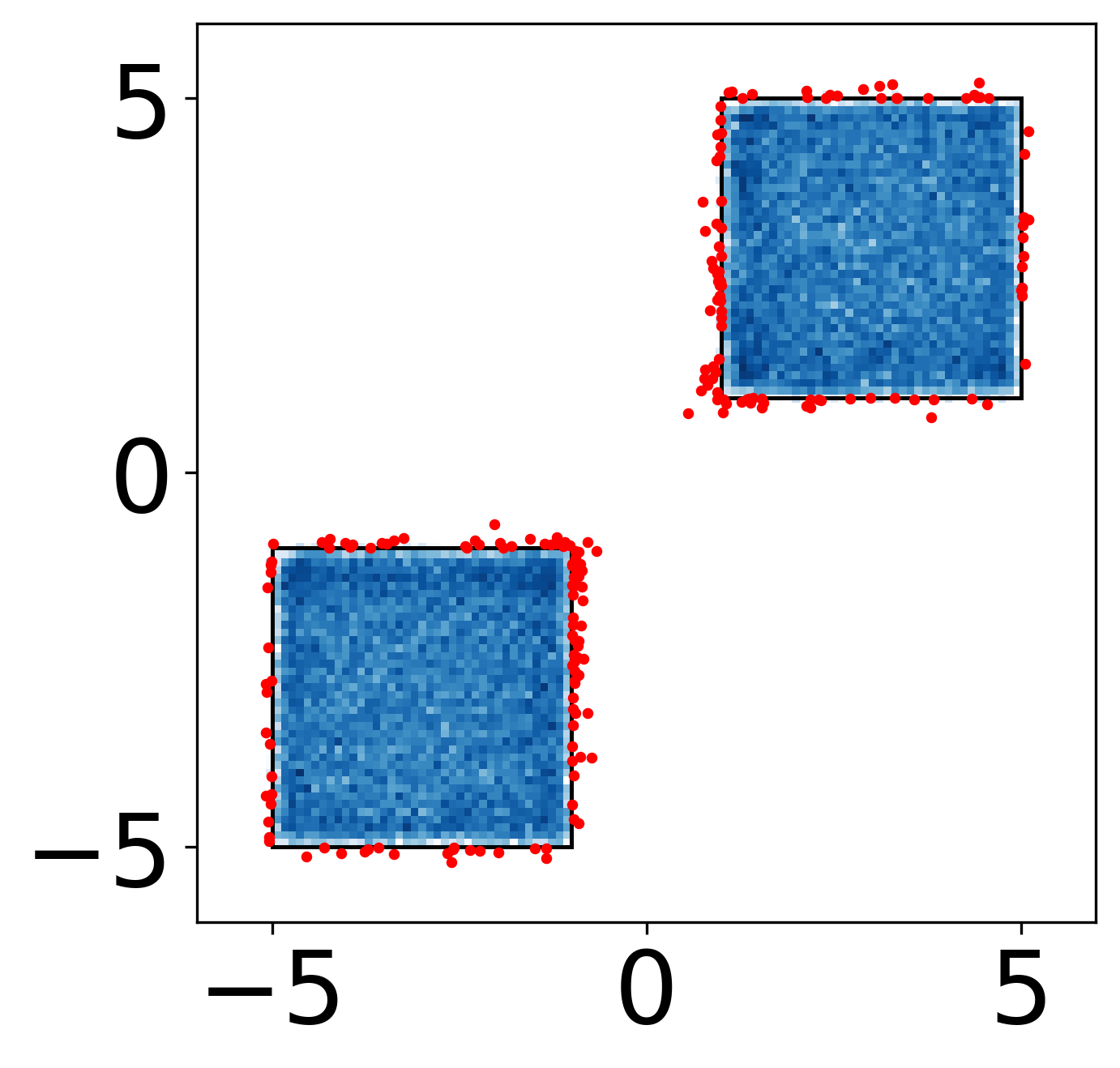}};
    \node[anchor=north, font=\small] at (2*\xstep,{-\ystep - 0.6*\imgh - \xlabsep}) {(f) PDFM trained on $Q$ ($b=99.5\%$),};
    \node[anchor=north, font=\small] at (2*\xstep,{-\ystep - 0.7*\imgh - \xlabsep}){$P(\mathcal{C})=99.51\%$,};
        \node[anchor=north, font=\small] at (2*\xstep,{-\ystep - 0.8*\imgh - \xlabsep}){$\mathrm{SWD} = 0.1096$};

    \end{tikzpicture}}
    \caption{Histogram plots of $50000$ samples from the compared distributions.}
    \label{fig:2Dtoy}
\end{figure}

\subsection{Constrained Generation with $\ell_2$-Ball Constraints ($Q(\mathcal{C})\geq b$)}

\paragraph{Network architectures:}
For the $\ell_2$-ball experiments, where $d\in\{8,20\}$, the velocity network separately embeds $x_t\in\mathbb{R}^d$ and $t\in\mathbb{R}$. The time encoder contains two linear layers, while the state encoder contains three linear layers; both produce $64$-dimensional representations. These representations are added and passed through a SELU activation, followed by a three-layer prediction network with hidden width $64$ and SELU activations. Its output is $u_\theta(x_t,t)\in\mathbb{R}^d$. The feasibility predictor $f_\phi(x_t,t)$ uses the same architecture with separate parameters and a scalar output logit. A sigmoid function converts this logit into the predicted probability of terminal constraint satisfaction.

\paragraph{Hyperparameters:} learning rate for the velocity field $\eta^{(\theta)} = 1\times 10^{-5}$, learning rate for the classifier $\eta^{(\phi)} = 1\times 10^{-4}$, dual variable learning rate $\eta_i = 500$. Large $\eta_i$ is applied because a high constraint satisfaction target corresponds to a large optimal dual variable. A larger update step therefore accelerates convergence.

\begin{table}[t]
\centering
\caption{Results on $8$-D and $20$-D constrained generation tasks with standard deviations. $P(\mathcal{C})$ denotes the mean constrained-satisfaction rate of $10000$ samples, averaged over $100$ generation trials. SWD is computed with $10000$ generated samples and samples from $Q$, averaged over $100$ generation trials.  }
\newcommand{\meanstd}[2]{%
    \shortstack{$#1$\\[-1pt]{\scriptsize $\pm\,#2$}}%
}
\scalebox{0.9}{
\begin{tabular}{cc|ccccccc}
\hline
& & FM & MDM & NAMM & PDM & PCFM & FM-RE & PDFM \\
\hline
\multirow{2}{*}{$8$-D $\ell_2$ ball}
& $P(\mathcal C)$ $(\uparrow)$
& \meanstd{97.633\%}{0.143\%}
& \meanstd{100\%}{0}
& \meanstd{99.986\%}{0.008\%}
& \meanstd{100\%}{0}
& \meanstd{100\%}{0}
& \meanstd{99.923\%}{0.025\%}
& \meanstd{99.906\%}{0.026\%} \\
& SWD $(\downarrow)$
& \meanstd{0.0193}{0.0029}
& \meanstd{0.0217}{0.0030}
& \meanstd{0.0228}{0.0037}
& \meanstd{0.0200}{0.0031}
& \meanstd{0.0284}{0.0042}
& \meanstd{0.0194}{0.0029}
& \meanstd{0.0172}{0.0028} \\
\hline
\multirow{2}{*}{$20$-D $\ell_2$ ball}
& $P(\mathcal C)$ $(\uparrow)$
& \meanstd{90.918\%}{0.318\%}
& \meanstd{100\%}{0}
& \meanstd{99.711\%}{0.053\%}
& \meanstd{100\%}{0}
& \meanstd{100\%}{0}
& \meanstd{99.748\%}{0.052\%}
& \meanstd{99.755\%}{0.0051\%} \\
& SWD $(\downarrow)$
& \meanstd{0.0087}{0.0008}
& \meanstd{0.0144}{0.0012}
& \meanstd{0.0205}{0.0008}
& \meanstd{0.0257}{0.0018}
& \meanstd{0.0148}{0.0009}
& \meanstd{0.0132}{0.0010}
& \meanstd{0.0101}{0.0007} \\
\hline
\end{tabular}}
\label{tb:820l2ballwstd}
\end{table}

\subsection{Unconditional PDE-Constrained Generation of Spatiotemporal Dynamics}\label{app:PDE}

\paragraph{Dataset construction and residual computation:}
We construct a dataset of Kuramoto--Sivashinsky dynamics by numerically solving the conservative PDE with fixed solver parameters and randomly sampled truncated-Fourier initial conditions, using Exponax \citep{koehler2024apebench}. From each long trajectory, we extract non-overlapping space--time patches $x \in \mathbb{R}^{d \times d}$, where $x[:,i]$ denotes the spatial state at time index $i$. Let $F(\cdot)$ denote one application of the numerical stepper. Let $\hat{x}$ denote the generated sample, and for each $i=0,\dots,d-2$, we regard $\hat{x}[:,i]$ as the true current state, compute the next true state ${x}[:,{i+1}]=F(\hat{x}[:,i])$, and compare it with the next estimated state $\hat{x}[:,i+1]$, yielding the residual $r_i = x[:,i+1] - \hat{x}[:,{i+1}]$.
 The resulting residual sequence $\{r_i\}_{i=0}^{d-2}$ defines the size-normalized residual
 $\delta_{L_{n}}:=\left(\frac{1}{(d-1)d}\sum_{i=0}^{d-2}\|r_i\|_n^{n}\right)^{1/n} $, with the usual extension for $n=\infty$. A generated sample is defined as satisfying the constraint if $\delta_{L_{n}}< \tau$, in which 
 $\tau$ is a prescribed tolerance. In this task, we set $d=64, n=1$.

 \paragraph{Network architectures:}
The velocity network $u_\theta$ takes $x_t\in\mathbb{R}^{1\times64\times64}$ and the time $t\in\mathbb{R}$ as inputs. A $128$-dimensional sinusoidal time embedding is projected to dimension $256$, while $x_t$ is projected to $64$ channels. The network then applies five residual space-time blocks, each combining $5\times1$ spatial, $1\times5$ physical-time, and $3\times3$ joint convolutions with group normalization, SiLU activations, and time-conditioned transformations. A final $3\times3$ convolution produces $u_\theta(x_t,t)\in\mathbb{R}^{1\times64\times64}$. The feasibility predictor uses the same backbone with separate parameters, followed by global average pooling, a scalar linear output, and a sigmoid function producing $f_\phi(x_t,t)\in(0,1)$.

\paragraph{Hyperparameters:} for both cases, learning rate for the velocity field $\eta^{(\theta)} = 1\times 10^{-5}$, learning rate for the classifier $\eta^{(\phi)} = 1\times 10^{-4}$, dual variable learning rate $\eta_i = 2$. 

 \paragraph{Significance for chaotic PDE emulation.}
The Kuramoto–Sivashinsky setting is representative of generative emulation tasks for chaotic systems, where samples are drawn from the emergent chaotic attractor. High-quality emulation requires both local dynamical accuracy and statistical fidelity to that attractor. The two terms in the PDFM objective parallel this decomposition: the FM loss matches the attractor distribution, while the chance constraint enforces local consistency with the dynamics, providing a principled probabilistic alternative to the combination of next-step prediction error and statistical-fidelity losses common in deterministic emulators of chaotic systems \citep{jiang2023training, melo2026learning}.

The visualization of the generated space-time patch samples in Case 1 ($\tau = 0.033, b = 98\%$) is provided in Fig.~\ref{fig:FMsamples} and Fig.~\ref{fig:PDFMsamples}. To quantify the discrepancy between generated samples and the simulator predictions, we extract the first column of each generated sample as the initial state and use the simulator to predict the subsequent $63$ states. The comparisons between the generated samples and the simulator-predicted truth, along with the histograms of the summary statistic, are provided in Fig.~\ref{fig:case1_individual} and Fig.~\ref{fig:case2_individual}. The visualizations of the mean histogram over $2000$ samples are provided in Fig.~\ref{fig:case1_batch} and Fig.~\ref{fig:case2_batch}.

\begin{table}[t!]
\vspace{-2mm}
\centering
\caption{Comparison across the considered methods for Case 1 and Case 2, evaluated using $2000$ generated samples per method in each of $10$ trials. Results are reported as the mean and standard deviation.}\label{tab:PDEwstd}
\newcommand{\meanstdpde}[2]{%
    \shortstack{$#1$\\[-1pt]{\scriptsize $\pm\,#2$}}%
}
\resizebox{\textwidth}{!}{
\begin{tabular}{c|ccc|cccc}
\hline
\multirow{2}{*}{Method}
& \multicolumn{3}{c|}{Case 1 $(Q(\mathcal C)=1)$}
& \multicolumn{4}{c}{Case 2 $(Q(\mathcal C)\approx 39.47\%)$} \\
\cline{2-8}
& $P(\mathcal C)$ $(\uparrow)$
& Avg $\delta_{L_n}$ $(\downarrow)$
& HE $(\downarrow)$
& $P(\mathcal C)$ $(\uparrow)$
& Avg $\delta_{L_n}$ $(\downarrow)$
& $\mathrm{HE}_{\mathrm{clean}}$ $(\downarrow)$
& $\mathrm{HE}_{\mathrm{deg}}$ $(\downarrow)$ \\
\hline
FM ($Q$)
& \meanstdpde{14.19\%}{0.732\%}
& \meanstdpde{0.0409}{0.0002}
& \meanstdpde{(0.0087,0.0453)}{(0.0002,0.0005)}
& \meanstdpde{9.82\%}{0.767\%}
& \meanstdpde{0.0622}{0.0005}
& \meanstdpde{(0.0094,0.0465)}{(0.0002, 0.0004)}
& \meanstdpde{(0.0086,0.0565)}{(0.0002, 0.0005)} \\
FM ($Q'$)
& N/A
& N/A
& N/A
& \meanstdpde{33.24\%}{0.470\%}
& \meanstdpde{0.0539}{0.0001}
& \meanstdpde{(0.0135,0.0477)}{(0.0002,0.0004)}
& \meanstdpde{(0.0114,0.0489)}{(0.0002,0.0004)} \\
PCFM
& \meanstdpde{6.03\%}{0.383\%}
& \meanstdpde{0.0484}{0.0001}
& \meanstdpde{(0.0156,0.1547)}{(0.0002,0.0008)}
& \meanstdpde{0.06\%}{0.054\%}
& \meanstdpde{0.1022}{0.0010}
& \meanstdpde{(0.0173,0.0729)}{(0.0002,0.0005)}
& \meanstdpde{(0.0186,0.0823)}{(0.0002,0.0005)} \\
FM-RE
& \meanstdpde{43.17\%}{1.496\%}
& \meanstdpde{0.0339}{0.0002}
& \meanstdpde{(0.0117,0.0329)}{(0.0002,0.0003)}
& \meanstdpde{46.33\%}{1.007\%}
& \meanstdpde{0.0513}{0.0003}
& \meanstdpde{(0.0118,0.0558)}{(0.0002, 0.0004)}
& \meanstdpde{(0.0097,0.0570)}{(0.0002, 0.0004)} \\
PDFM
& \meanstdpde{97.74\%}{0.531\%}
& \meanstdpde{0.0235}{0.0001}
& \meanstdpde{(0.0107,0.0269)}{(0.0003,0.0003)}
& \meanstdpde{90.48\%}{0.525\%}
& \meanstdpde{0.0406}{0.0002}
& \meanstdpde{(0.0118,0.0435)}{(0.0002, 0.0004)}
& \meanstdpde{(0.0103,0.0457)}{(0.0002, 0.0004)} \\
\hline
\end{tabular}}
\vspace{-4mm}
\end{table}

\subsection{SOCOFing Fingerprint Generation with  Ridge-Connectivity Constraint}\label{app:fing}

We consider the task of generating fingerprint images with a ridge-connectivity constraint, using the Sokoto Coventry Fingerprint Dataset (SOCOFing) \citep{shehu2018detection} for training. SOCOFing is a public grayscale fingerprint image dataset containing real fingerprint samples and synthetically altered variants. It includes fingerprints collected from multiple subjects and provides altered versions with varying levels of distortion.

Collected fingerprint images and the altered variants can be corrupted by sensor noise, partial occlusion, pressure variation, or preprocessing artifacts, which may break continuous ridge structures into disconnected fragments. We therefore consider a ridge-connectivity constraint, which requires the main fingerprint ridges to remain sufficiently continuous; satisfying this constraint indicates that the image preserves coherent ridge structure.

Concretely, we evaluate this constraint using a non-differentiable image-processing oracle. For each grayscale fingerprint image, we first apply Otsu thresholding to separate ridge pixels from the background, treating ridges as the foreground. We then skeletonize the foreground ridge map into a one-pixel-wide representation and count the number of connected components in the skeletonized image. A sample is considered feasible if the number of connected ridge components is below the prescribed threshold $50$. We use the Real and Altered-Easy partitions of the SOCOFing as the training set. Among the $23931$ training samples, $16235$ of them satisfy this ridge-connectivity constraint, indicating $Q(\mathcal{C})\approx 67.84\%$. The target constraint satisfaction rate is set to $b=99\%$.

\paragraph{Network architectures:}
The velocity network takes $x_t\in\mathbb{R}^{1\times96\times96}$ and $t\in\mathbb{R}$ as inputs. It uses a four-level U-Net with channel dimensions $64$, $128$, $256$, and $512$, three downsampling and three upsampling layers, skip connections, and two residual blocks at the bottleneck. A $256$-dimensional sinusoidal embedding of $t$ is projected and added to the features in each residual block. A final $3\times3$ convolution produces $u_\theta(x_t,t)\in\mathbb{R}^{1\times96\times96}$. The feasibility predictor uses a three-level encoder with channel dimensions $64$, $128$, and $256$, followed by two residual blocks, global average pooling, and a sigmoid output $f_\phi(x_t,t)\in(0,1)$.

\paragraph{Hyperparameters:} learning rate for the velocity field $\eta^{(\theta)} = 1\times 10^{-5}$, learning rate for the classifier $\eta^{(\phi)} = 1\times 10^{-4}$, dual variable learning rate $\eta_i = 5$. 

For this oracle-constrained task, only constraint-aware methods compatible with membership oracle feedback are applicable. Therefore, we compare PDFM against FM-RE and the vanilla FM baseline. The performances are evaluated in two aspects: constraint satisfaction rate and negative log-likelihood (NLL). NLL is computed by treating the finite-step sampler as a sequence of invertible transformations from a base noise distribution to the data space. For each evaluation sample, the discrete updates are inverted to recover the corresponding latent noise, the log density of this latent under the base distribution is evaluated, and the log-determinant correction associated with each discrete transformation is accumulated to obtain the sample log likelihood; the NLL is then the negative average log likelihood over the evaluation set. Lower NLL indicates that the model assigns a higher probability density to the evaluated samples under this discrete likelihood. In this setting, we estimate NLL using $100$ Euler discretization steps on two evaluation sets: the full training set, denoted by $\mathcal{D_{\mathrm{train}}}$, which includes both feasible and infeasible samples, and the feasible subset $\{x\in D_{\mathrm{train}}:x\in \mathcal{C}\}$, obtained by discarding infeasible samples. We report these as $\mathrm{NLL}_{\mathrm{train}}$ and $\mathrm{NLL}_{\mathrm{feas}}$, respectively.

The statistical results in Table~\ref{tab:nll_constraint} show that while FM-RE improves the feasible rate over FM, a degraded induced distribution is observed.
PDFM successfully achieves the target constraint satisfaction rate of $99\%$, while maintaining NLLs comparable to those of FM. The visualizations of the training samples and the generated samples are provided in Fig.~\ref{fig:fingerprint}. The generated samples from FM-RE display a slightly perturbed background, indicating a degraded induced distribution. PDFM and FM generate samples that are visually close to the training samples. This further validates that PDFM is able to achieve the desired feasibility control while preserving distributional quality and visual fidelity.

\begin{table}[t]
\centering
\renewcommand{\arraystretch}{1.25}
\caption{Constraint satisfaction is evaluated using 1000 generated samples per trial. $\mathrm{NLL}_{\mathrm{train}}$ and $\mathrm{NLL}_{\mathrm{feas}}$ are evaluated using 1000 samples randomly drawn from the full training set and its feasible subset, respectively, per trial. Results are reported as the mean and standard deviation over 10 trials.}
\begin{tabular}{l|ccc}
\hline
\textbf{Method} & $P(\mathcal{C})(\uparrow)$ & $\mathrm{NLL}_{\mathrm{train}}(\downarrow)$ & $\mathrm{NLL}_{\mathrm{feas}}(\downarrow)$ \\
\hline
FM    & $72.74\%\pm 1.57\%$ & $-4.0851\pm 0.0063$  & $-4.1525\pm 0.0061$ \\
\hline
FM-RE & $88.14\% \pm 1.37\%$ & $-0.9254\pm 0.0039$ &  $-0.9368\pm 0.0057$\\
\hline
PDFM  & $99.28\%\pm 1.33\%$ & $-3.8904\pm 0.0122$ & $-4.0768\pm 0.0130$ \\
\hline
\end{tabular}
\label{tab:nll_constraint}
\end{table}

\subsection{QM9 $3$-D Molecule Generation}
\label{App:qm9}
We evaluate PDFM on $3$-D molecule generation using the QM9 dataset \citep{Ramakrishnan2014}, a standard benchmark of small organic molecules with atom types and $3$-D coordinates. This task is important for the discovery of new drug molecules. The molecules are represented as $(x,h)$, in which $x$ represents the $3$-D atom coordinates and $h$ contains atom features such as atom types and charges.
Here, the sample-wise constraint is molecule stability: a generated molecule is considered feasible if its atoms satisfy valid valency rules after bond inference, so that the generated structure is chemically stable at the atom level. In this dataset, $95.2\%$ of the samples are stable, indicating $Q(\mathcal{C})\approx 95.2\%$. Note that the stability constraint is a membership oracle constraint, without differentiable distances to the feasible set or available projections.

Existing works such as~\citep{song2023equivariant} introduce molecule-specific
architectural and training designs, including hybrid probability paths for
coordinates $x$ and atom features $h$, equivariant optimal transport for
coordinates, Hungarian--Kabsch alignment, and information-aligned paths for atom
features, achieving superior stability rates. In this section, we investigate
whether PDFM can improve constraint satisfaction using only binary
feasibility feedback, without relying on such domain-specific knowledge.

\paragraph{Network architectures.}
Following the EGNN backbone of \citet{satorras21a}, each molecular state $x_t\in\mathbb{R}^{N\times8}$ is represented as a graph with three coordinates and five atom features per node. For both FM and PDFM, the velocity network uses four time-conditioned EGNN blocks with hidden dimension $64$, where pairwise relative coordinates and distances are used to update node features and coordinates. A $64$-dimensional sinusoidal embedding of $t$ conditions each block, and the network outputs $u_\theta(x_t,t)\in\mathbb{R}^{N\times8}$. The feasibility predictor uses a separate pairwise graph network with hidden dimension $128$, a $32$-dimensional time embedding, and two interaction blocks. Pairwise messages are constructed from node features, relative coordinates, and squared distances and are aggregated by averaging. The pooled graph representation and summary statistics of pairwise distances are passed through an MLP and a sigmoid function to produce $f_\phi(x_t,t)\in(0,1)$.

\paragraph{Hyperparameters:} learning rate for the velocity field $\eta^{(\theta)} = 1\times 10^{-5}$, learning rate for the classifier $\eta^{(\phi)} = 1\times 10^{-4}$, dual variable learning rate $\eta_i = 10$. 

Following EDM~\citep{hoogeboom2022equivariant}, we use NLL as a likelihood-based measure of distributional alignment $\mathbb{E}_{x\sim Q}[-\log p_\theta(x)]$, together with
molecule stability to evaluate sample-wise chemical feasibility. Lower NLL indicates that the model assigns a larger probability density to the observed data and therefore achieves a better distributional match.
In practice, NLL is approximated with a finite-step Euler sampler by reversing the discrete updates to recover the latent noise, evaluating its prior probability, and accumulating each step’s density/volume-change correction. In this task, NLL is evaluated using $100$ discrete steps.

The visualizations of the generated samples are provided in Fig.~\ref{fig:FM_mole} and Fig.~\ref{fig:PDFM_mole}. 
Over $10$ trials, constraint satisfaction is evaluated using $1000$ generated samples per trial, while NLL is evaluated using $100$ randomly selected samples from the training set per trial. FM achieves $P(\mathcal{C})=(32.77\pm1.22)\%$ and $\mathrm{NLL}=-19.74\pm0.44$. PDFM, trained with the target $b=60\%$, achieves $P(\mathcal{C})=(48.75\pm1.56)\%$ and $\mathrm{NLL}=-20.70\pm0.25$. Results are reported as the mean and standard deviation. While PDFM is unable to achieve the target constraint satisfaction rate due to the design limitations, including the generic backbone and probability path without domain-specific knowledge, it substantially improves constraint satisfaction over the FM baseline. Moreover, the improvement is achieved without sacrificing likelihood quality: PDFM obtains a lower NLL, indicating higher distributional alignment with the training distribution. This suggests that PDFM is able to improve constraint satisfaction while preserving, or even improving, distributional fidelity without task-specific designs and with only membership oracle feedback.

\section{Discussion}\label{app:discussion}
\subsection{Why Flow Matching?}

The procedure of starting from \ref{eq:cvxoriginal} then turning to the parameterized $\bar{L}$ and solving ${L'}$ practically can also be applied to other types of generative models, such as DDPM and score-based models. For example, in DDPM with a parameterized score denoted by $s_\theta$, one can generally replace Assumption~\ref{ass:esterror} with a score estimation error in the form $\mathbb{E}_{t,x_t}\left[\|s^{{(\lambda)}}_{\theta}(x_t,t)-\nabla\log p^{(\lambda)}(x_t,t)\|_2^2\right]\leq \nu^2$, leading to a similar conclusion of bounded TV as in Lemma~\ref{lemma:TVgap1} shown in \citet{li2025odt}. As for practically computing the gradient for $P_{s_\theta}{(\mathcal{C})}$, one can apply FM-RE's \citep{FMRE} optimization framework due to DDPM's inherent stochasticity.

However, DDPM's fixed stochasticity, especially the large noise steps at the start of the sampling trajectory, together with hard binary feedback in the FM-RE framework, make the optimization hard to perform in practice. Empirical evidence shows that this approach is able to deal with simple constraints, e.g., $\ell_2$-ball constraints in Sec.~\ref{sec:l2ball}, but may induce degraded distribution when the complexity of the constraint increases. In contrast, the FM framework provides more stable sampling trajectories, thereby being able to adapt to more complex constraints while preserving the distributional fidelity with the optimization strategy in Sec.~\ref{sec:gradimp}.

\subsection{Implementation Tips and Relationship with FM-RE}

From an implementation perspective, a single PDFM iteration with fixed $\lambda' = \lambda/w$ is closely related to FM-RE, if the dual variable in PDFM is interpreted as the weight of the constraint-aware term in FM-RE. Although the procedures for estimating $\nabla_{\theta}P_{u_\theta}(\mathcal C)$
differ, both methods share the same underlying idea: they provide feedback to
$u_\theta(x_t,t)$ that is explicitly or implicitly derived from the terminal binary
feasibility feedback. Since FM-RE requires sampling complete trajectories to obtain terminal oracle feedback, a natural question is whether PDFM also requires complete trajectory
sampling during its updates.

Only partially. PDFM maintains both $u_\theta$ and $f_\phi$ during the
primal update. Training $f_\phi$ requires complete trajectory sampling, since the
terminal oracle value $y=\mathbf 1_{\mathcal C}(x_1)$ serves as the training label
for the intermediate predictions $f_\phi(x_t,t)$. However, complete trajectory
sampling is not strictly required to compute $\mathrm{grad}_\theta$ in
\eqref{eq:gradtheta}, because the feedback is provided directly by the fixed
estimator $f_\phi$. 

One subtlety is that evaluating \eqref{eq:gradtheta} still requires samples of
the intermediate $x_{i\Delta t}$. From an implementation perspective,
these states can be approximated using random couplings and linear interpolants, i.e.,
$    x_{i\Delta t}=(1-i\Delta t)x_0+i\Delta t x_1$, 
provided that the probability path does not deviate significantly, as in the PDE-constrained and molecule
generation tasks.

Note that $f_\phi$, which is fundamentally trained as a classifier, is generally
cheaper to train than the more complex velocity field $u_\theta$ and does not
require highly accurate prediction to provide useful feedback. Since PDFM does
not require complete trajectory sampling for the $\theta$-update, it has a
computational advantage over FM-RE. In single-iteration evaluations for
$i> 0$, PDFM training is approximately $2$--$3\times$ faster than
FM-RE; for example, the runtime is about $1.5$ minutes vs. $4$ minutes for
the $8$-D $\ell_2$ ball constraint, and $25$ minutes vs. $70$ minutes for
Case~1 in Sec.~\ref{sec:PDE}.
Users seeking faster training can repeat multiple $\theta$-updates between
the $\phi$-updates, provided that these $\theta$-updates do not
substantially change the constraint satisfaction behavior.
\subsection{Limitations}
The limitations of PDFM are summarized as follows:
\begin{itemize}
    \item [1. ] Since $f_\phi$ is fundamentally trained as a classifier, it requires both positive
and negative feedback. Highly imbalanced feedback, especially when the feedback is
overwhelmingly negative, meaning that most samples generated by the current
$u_\theta$ violate the constraint, can prevent $f_\phi$ from learning reliable
decision boundaries and lead to inaccurate feedback. Therefore, at the start of
iteration $i=1$ in Alg.~\ref{alg:pd_practical}, PDFM requires the current
velocity field to generate at least some feasible samples.
    \item[2. ] PDFM requires the feasible region to be sufficiently represented or reachable
under the data and model distributions. This means one cannot generally
expect PDFM to learn a near-optimal constrained distribution when $Q$ deviates significantly from  $\mathcal C$. In such cases, the FM objective can become
severely misaligned with the constraint satisfaction objective, leading to a
large optimization error in Assumption~\ref{ass:esterrorGD}.
\end{itemize}


\begin{figure}[h]
    \centering
    \includegraphics[width=0.9\linewidth]{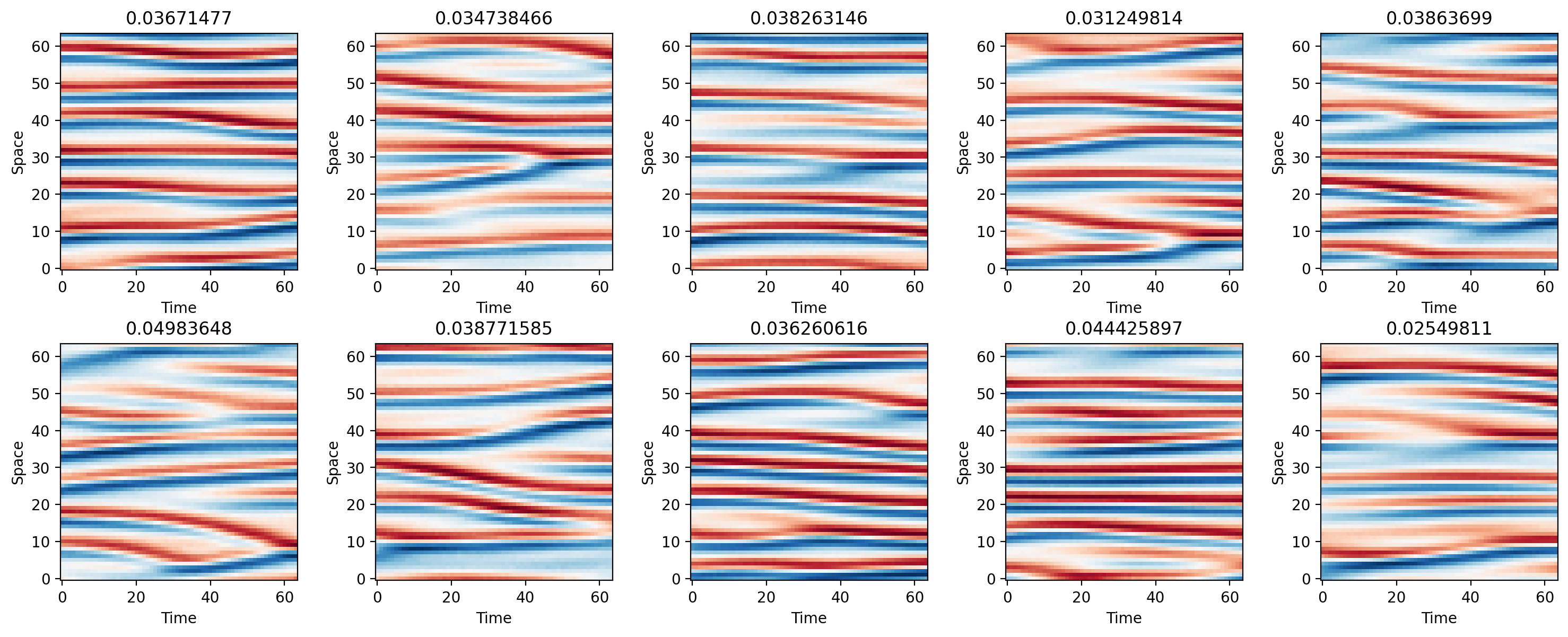}
    \caption{Samples generated by FM in Case 1 ($\tau = 0.033, b = 98\%$). The titles denote the residual $\delta_{L_n}$. The constraint satisfaction rate is $14.19\%$.}
    \label{fig:FMsamples} 
\end{figure}
\begin{figure}[h]
    \centering
    \includegraphics[width=0.9\linewidth]{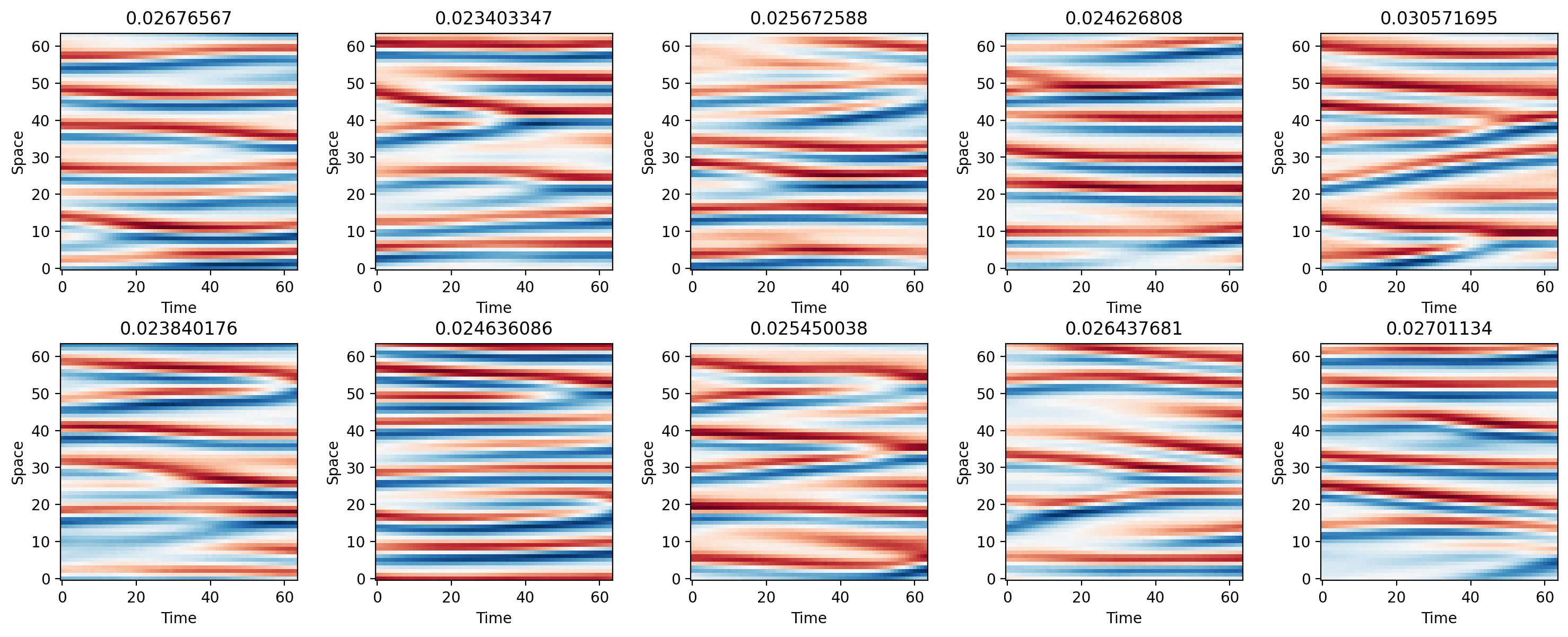}
    \caption{Samples generated by PDFM in Case 1 ($\tau = 0.033, b = 98\%$). The titles denote the residual $\delta_{L_n}$.  The constraint satisfaction rate is $97.74\%$.}
    \label{fig:PDFMsamples} 
\end{figure}

\begin{figure}[ht]
    \centering
    \resizebox{\textwidth}{!}{$
    \begin{tikzpicture}

    \def\imgw{0.16\textwidth}
    \def\imgh{2.2cm}
    \def\xsep{0.2cm}
    \def\ysep{0.45cm}
    \def\xlabsep{-0.08cm}
    \def\ylabsep{-0.15cm}

    \def\gpadx{0.12cm}
    \def\gpady{0.12cm}

    \pgfmathsetlengthmacro{\xstep}{\imgw + \xsep}
    \pgfmathsetlengthmacro{\ystep}{\imgh + \ysep}


    \node at (0*\xstep,0) {\includegraphics[width=\imgw,height=\imgh]{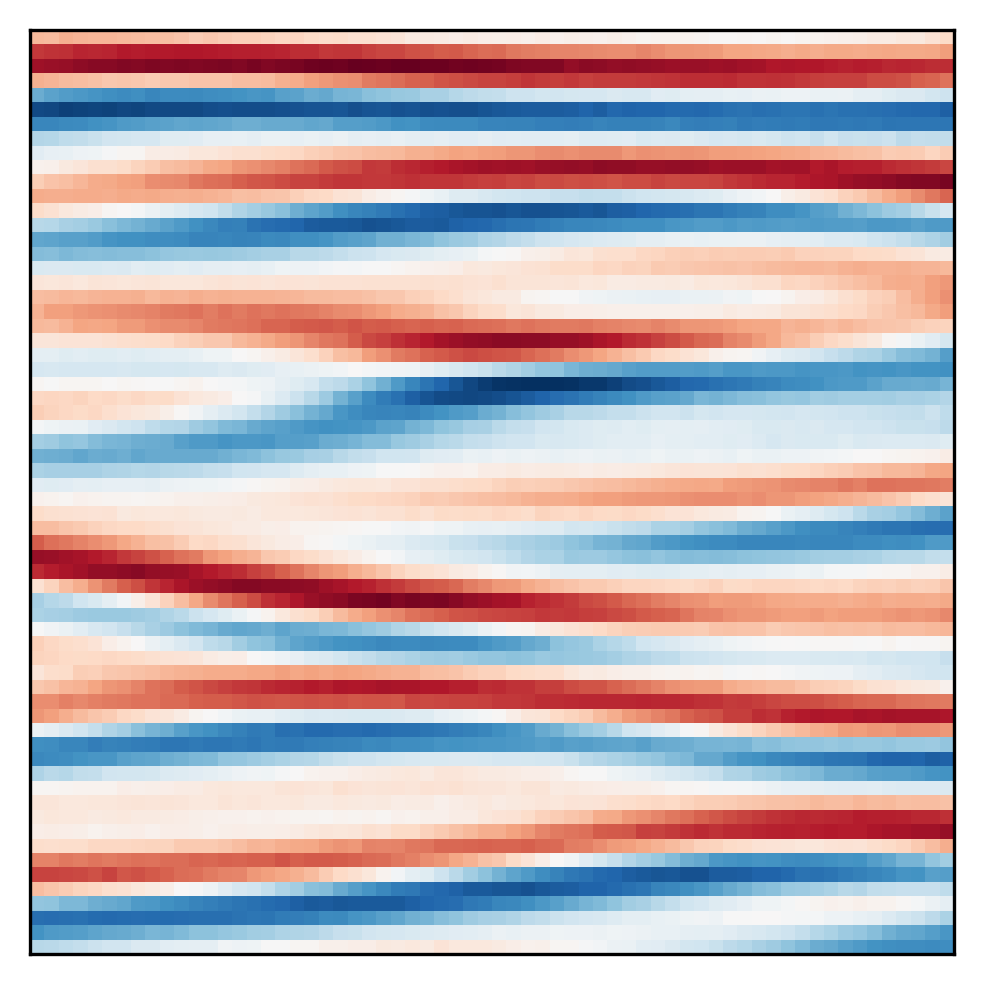}};
    \node[anchor=north, font=\tiny] at (0*\xstep,{0 - 0.5*\imgh - \xlabsep}) {$t$};
    \node[rotate=90, anchor=south, font=\tiny] at ({0*\xstep - 0.5*\imgw - \ylabsep},0) {$x$};

        \node[rotate=90, anchor=south, font=\small] at ({0*\xstep - 0.6*\imgw - \ylabsep},0){Generated};

    \node at (1*\xstep,0) {\includegraphics[width=\imgw,height=\imgh]{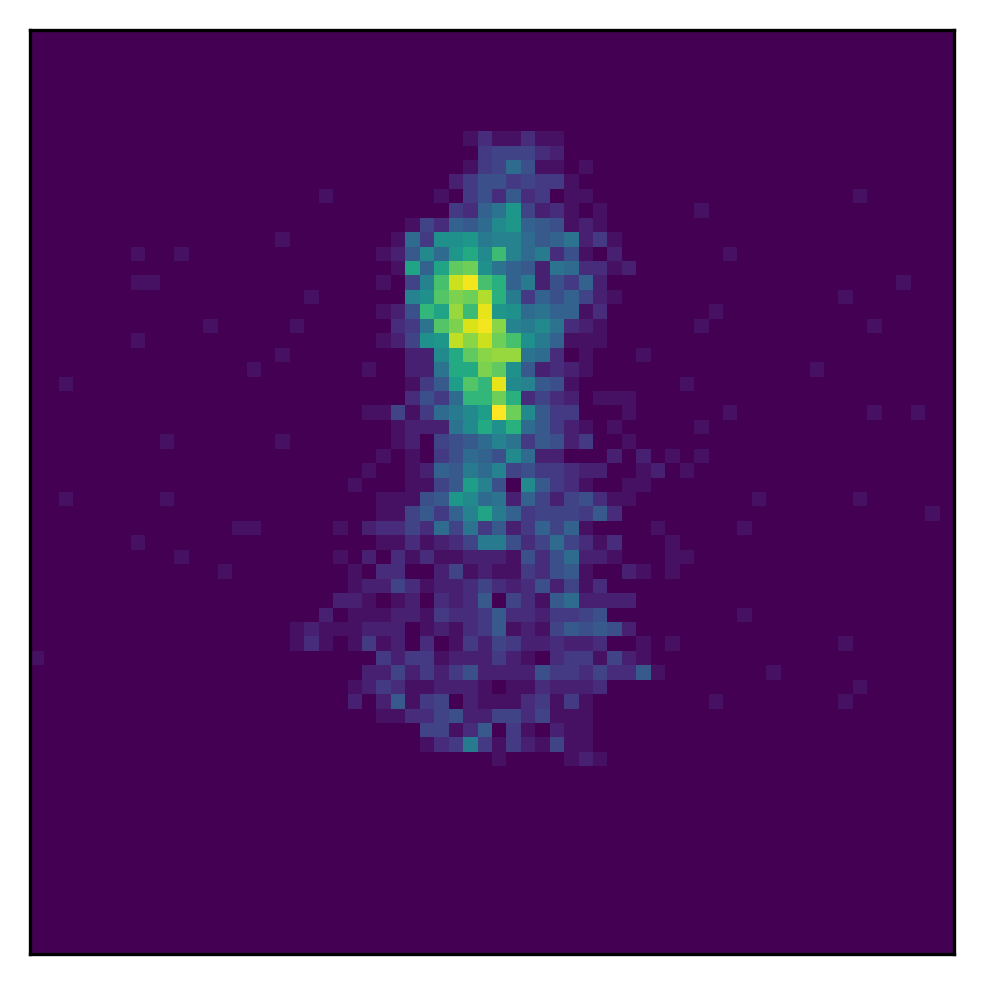}};
    \node[anchor=north, font=\tiny] at (1*\xstep,{0 - 0.5*\imgh - \xlabsep}) {$\partial u/\partial x$};

    \node[anchor=north, font=\small] at (1*\xstep,{0 + 0.7*\imgh - \xlabsep}) {FM};
    
    \node[rotate=90, anchor=south, font=\tiny] at ({1*\xstep - 0.5*\imgw - \ylabsep},0) {$\partial u/\partial t$};

    \node at (2*\xstep,0) {\includegraphics[width=\imgw,height=\imgh]{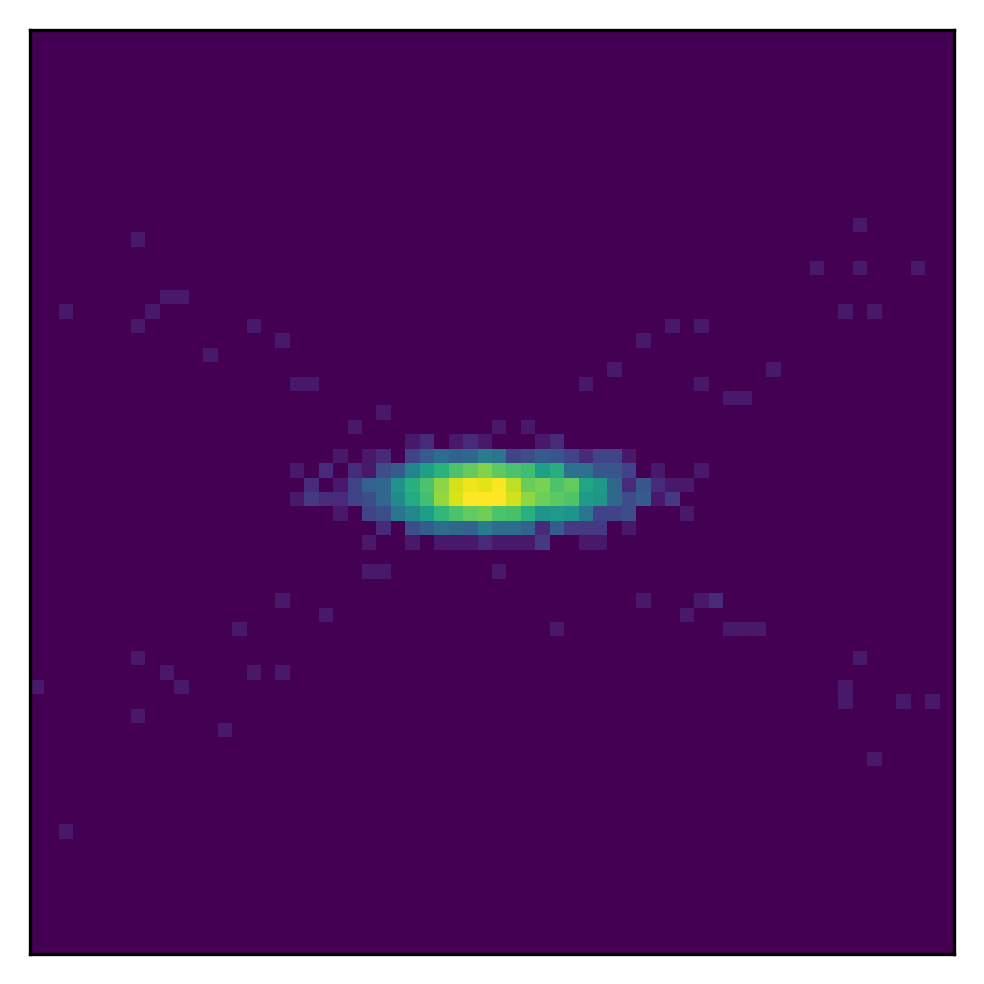}};
    \node[anchor=north, font=\tiny] at (2*\xstep,{0 - 0.5*\imgh - \xlabsep}) {$\partial u/\partial x$};
    \node[rotate=90, anchor=south, font=\tiny] at ({2*\xstep - 0.5*\imgw - \ylabsep},0) {$\partial^2 u/\partial x^2$};

    \node at (3*\xstep,0) {\includegraphics[width=\imgw,height=\imgh]{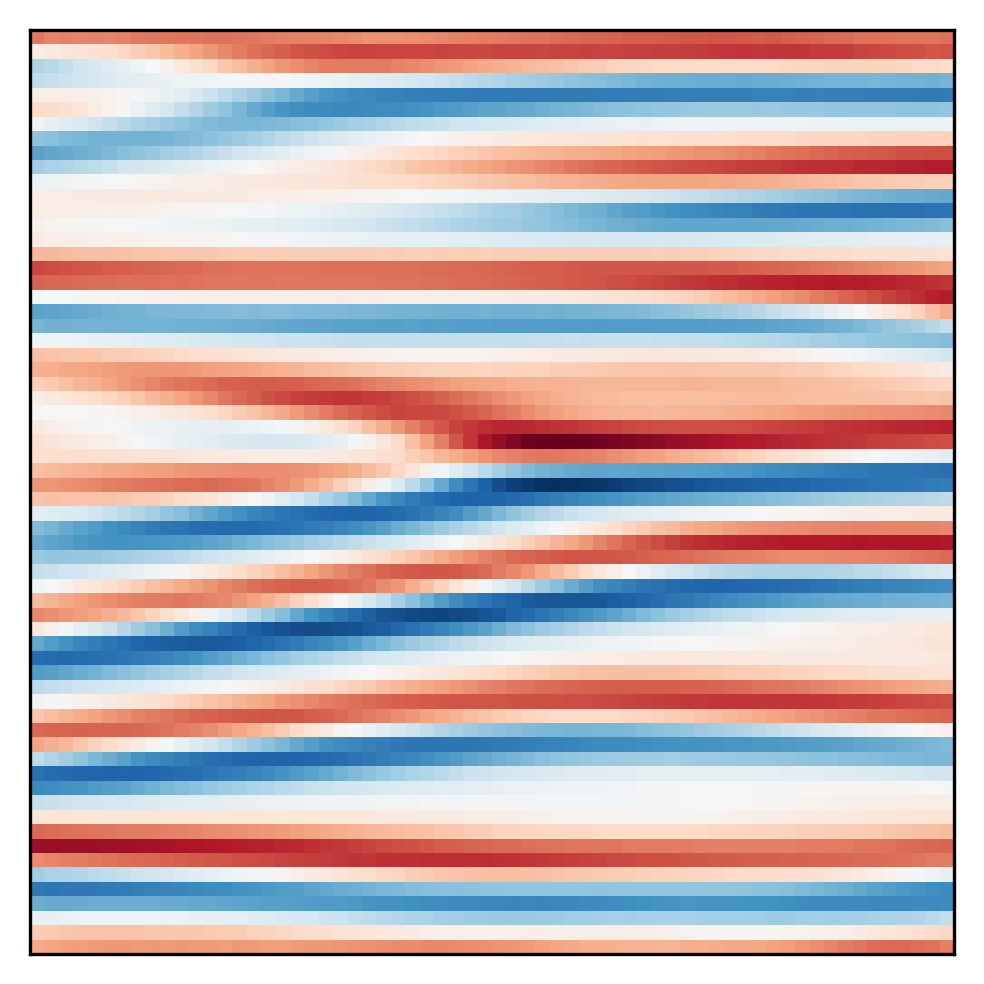}};
    \node[anchor=north, font=\tiny] at (3*\xstep,{0 - 0.5*\imgh - \xlabsep}) {$t$};
    \node[rotate=90, anchor=south, font=\tiny] at ({3*\xstep - 0.5*\imgw - \ylabsep},0) {$x$};

    \node at (4*\xstep,0) {\includegraphics[width=\imgw,height=\imgh]{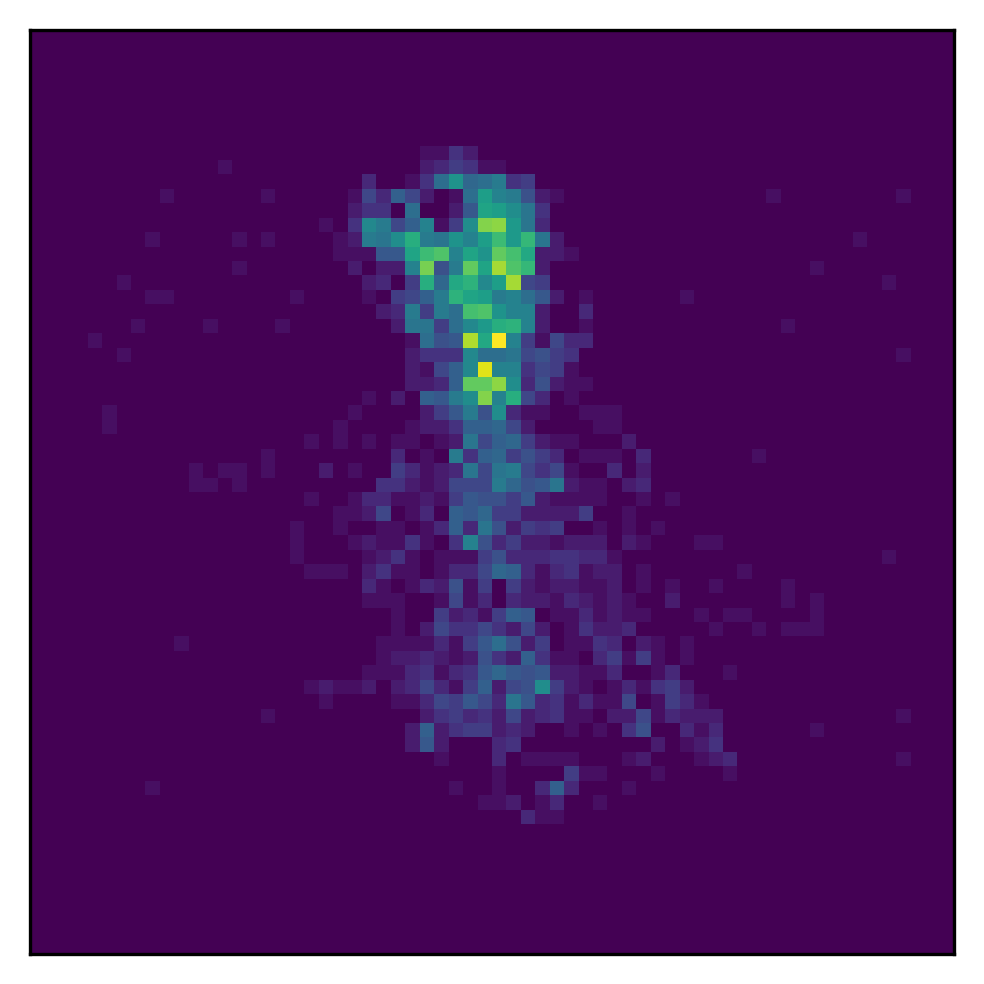}};
    \node[anchor=north, font=\tiny] at (4*\xstep,{0 - 0.5*\imgh - \xlabsep}) {$\partial u/\partial x$};

       \node[anchor=north, font=\small] at (4*\xstep,{0 + 0.7*\imgh - \xlabsep}) {PDFM};
       
    \node[rotate=90, anchor=south, font=\tiny] at ({4*\xstep - 0.5*\imgw - \ylabsep},0) {$\partial u/\partial t$};

    \node at (5*\xstep,0) {\includegraphics[width=\imgw,height=\imgh]{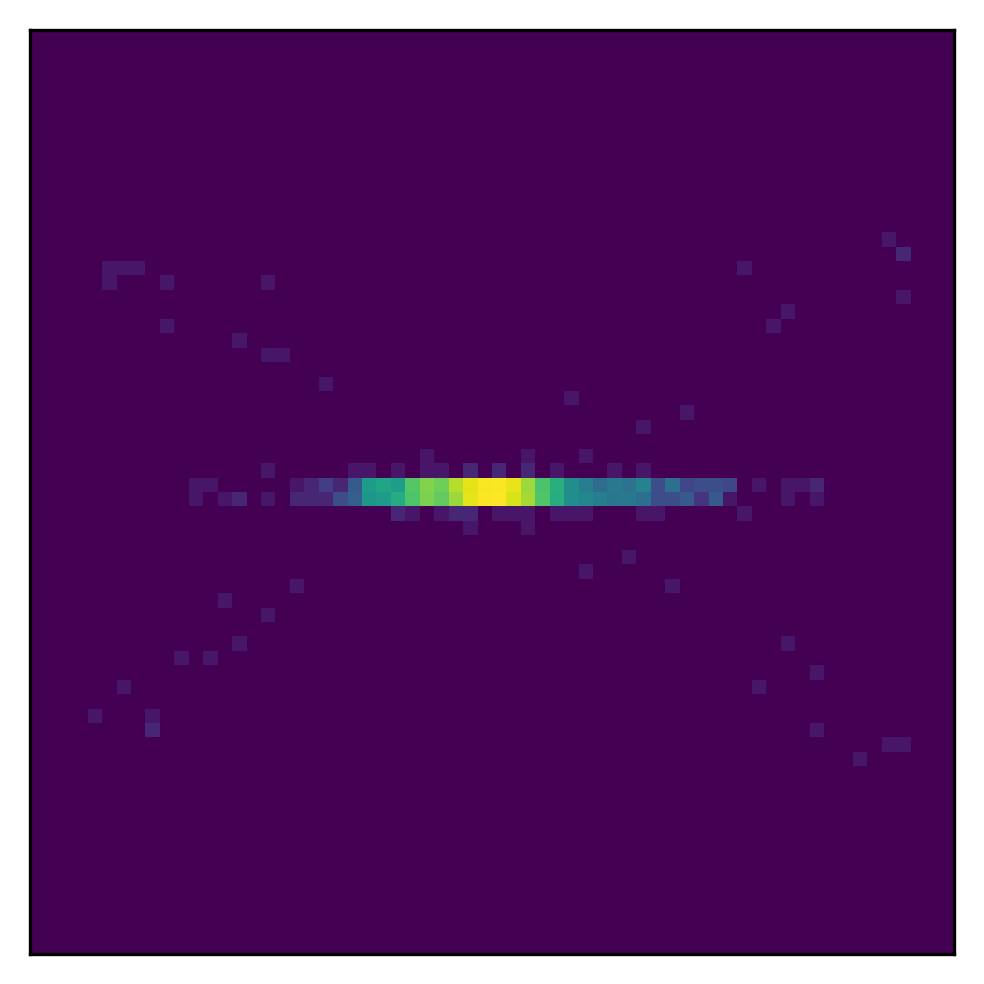}};
    \node[anchor=north, font=\tiny] at (5*\xstep,{0 - 0.5*\imgh - \xlabsep}) {$\partial u/\partial x$};
    \node[rotate=90, anchor=south, font=\tiny] at ({5*\xstep - 0.5*\imgw - \ylabsep},0) {$\partial^2 u/\partial x^2$};


    \node[rotate=90, anchor=south, font=\small] at ({0*\xstep - 0.6*\imgw - \ylabsep},-\ystep) {Truth};
    
    \node at (0*\xstep,-\ystep) {\includegraphics[width=\imgw,height=\imgh]{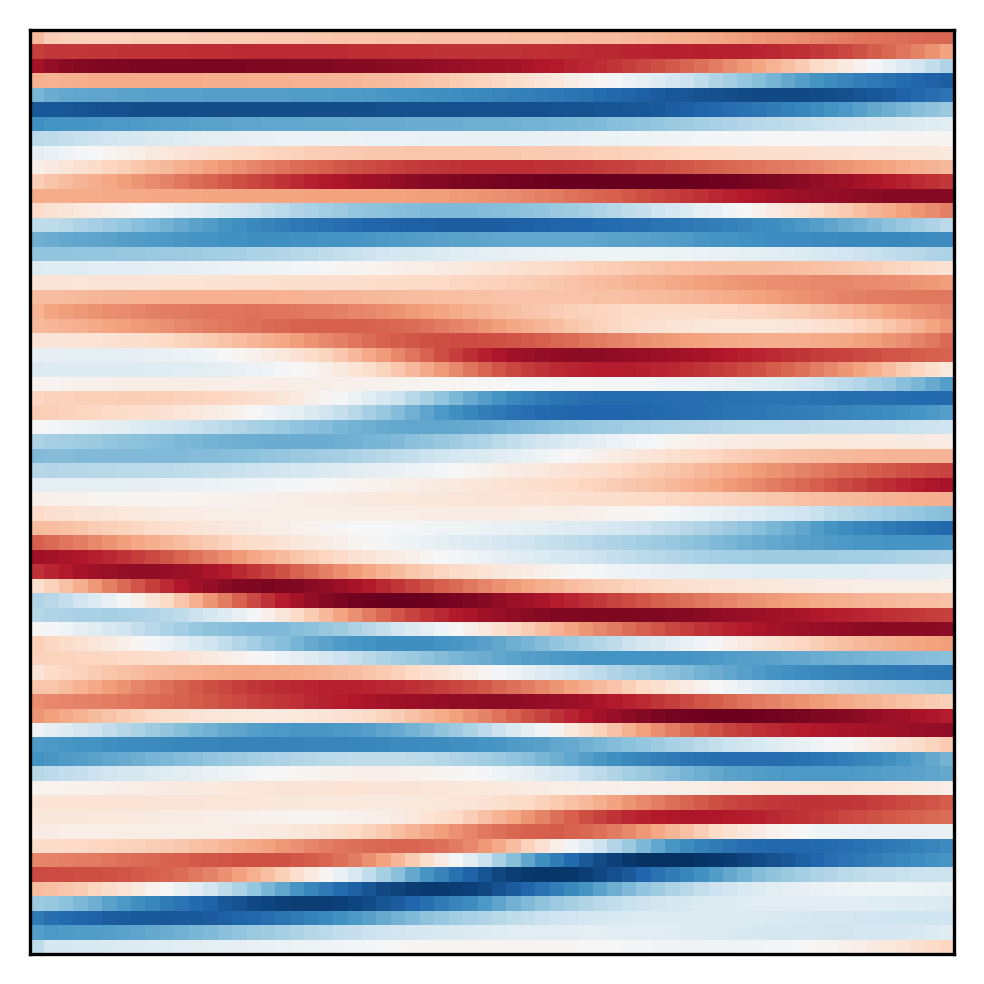}};
    \node[anchor=north, font=\tiny] at (0*\xstep,{-\ystep - 0.5*\imgh - \xlabsep}) {$t$};
    \node[rotate=90, anchor=south, font=\tiny] at ({0*\xstep - 0.5*\imgw - \ylabsep},-\ystep) {$x$};

    \node at (1*\xstep,-\ystep) {\includegraphics[width=\imgw,height=\imgh]{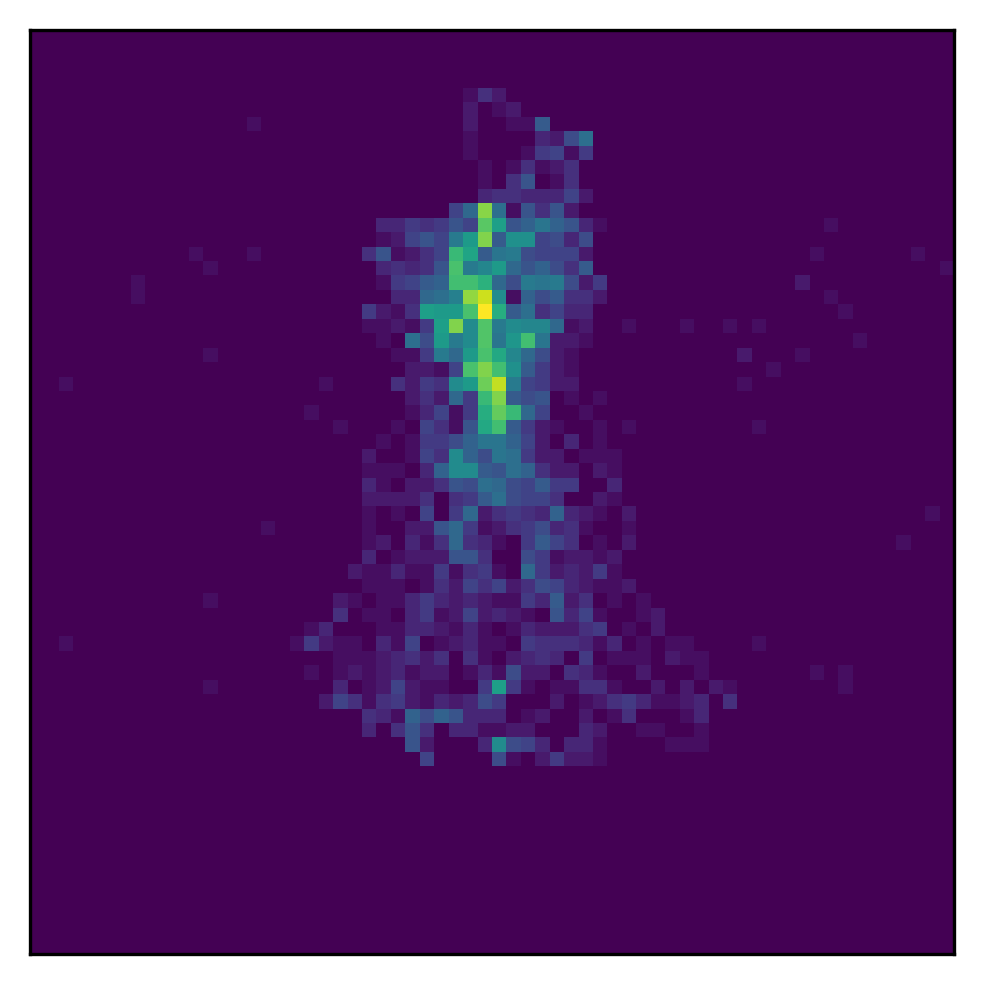}};
    \node[anchor=north, font=\tiny] at (1*\xstep,{-\ystep - 0.5*\imgh - \xlabsep}) {$\partial u/\partial x$};
    \node[rotate=90, anchor=south, font=\tiny] at ({1*\xstep - 0.5*\imgw - \ylabsep},-\ystep) {$\partial u/\partial t$};

    \node at (2*\xstep,-\ystep) {\includegraphics[width=\imgw,height=\imgh]{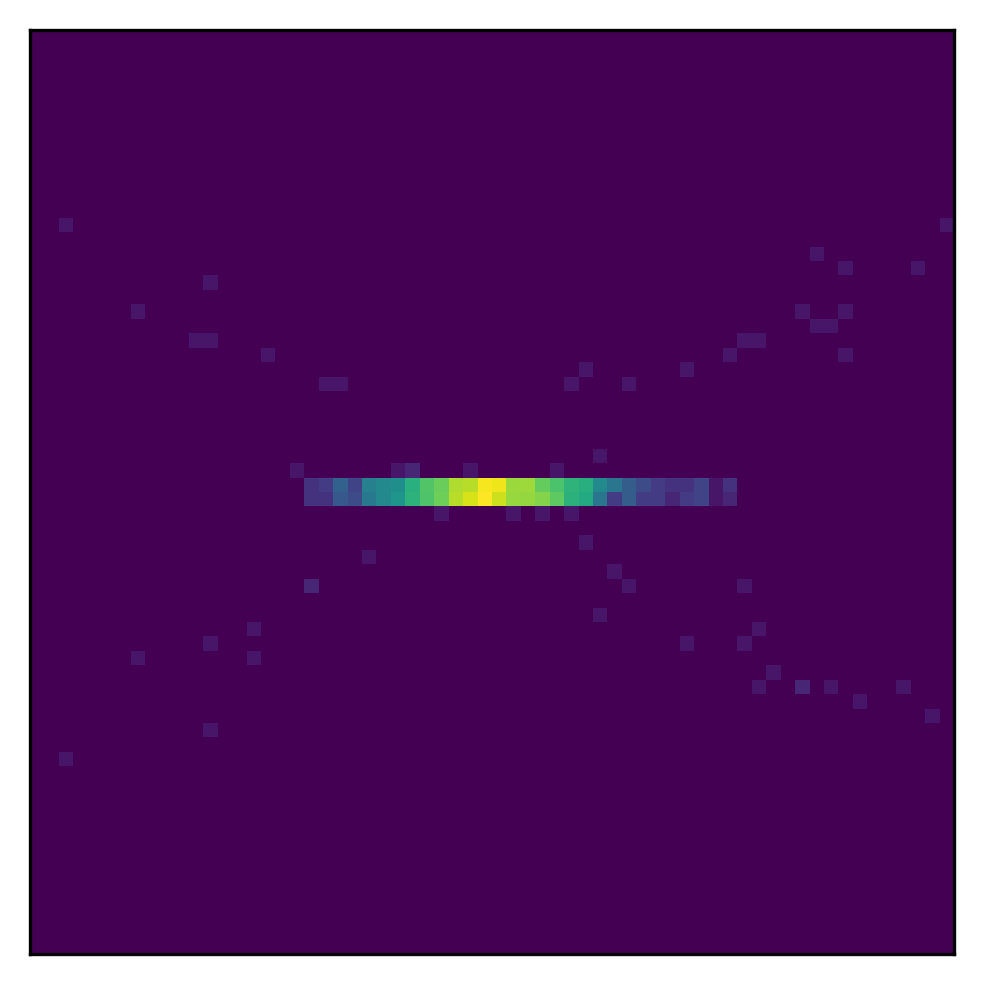}};
    \node[anchor=north, font=\tiny] at (2*\xstep,{-\ystep - 0.5*\imgh - \xlabsep}) {$\partial u/\partial x$};
    \node[rotate=90, anchor=south, font=\tiny] at ({2*\xstep - 0.5*\imgw - \ylabsep},-\ystep) {$\partial^2 u/\partial x^2$};

    \node at (3*\xstep,-\ystep) {\includegraphics[width=\imgw,height=\imgh]{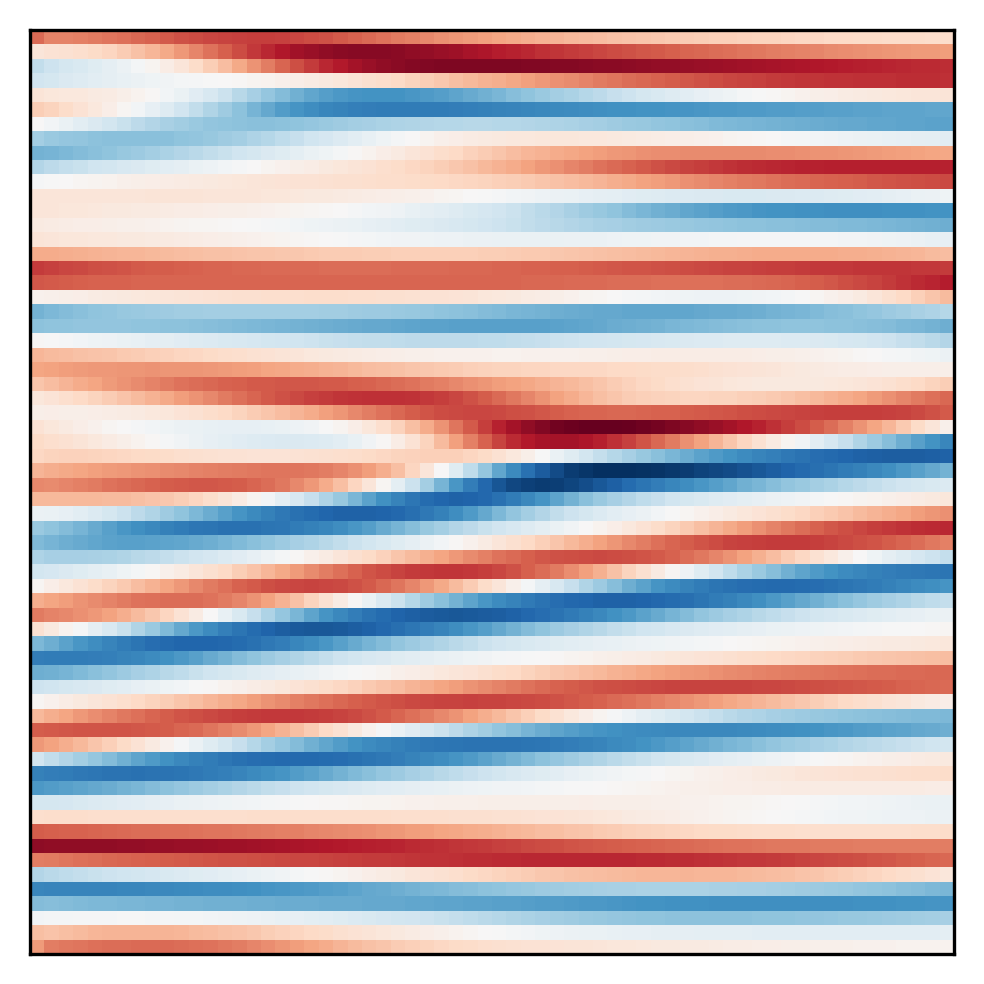}};
    \node[anchor=north, font=\tiny] at (3*\xstep,{-\ystep - 0.5*\imgh - \xlabsep}) {$t$};
    \node[rotate=90, anchor=south, font=\tiny] at ({3*\xstep - 0.5*\imgw - \ylabsep},-\ystep) {$x$};

    \node at (4*\xstep,-\ystep) {\includegraphics[width=\imgw,height=\imgh]{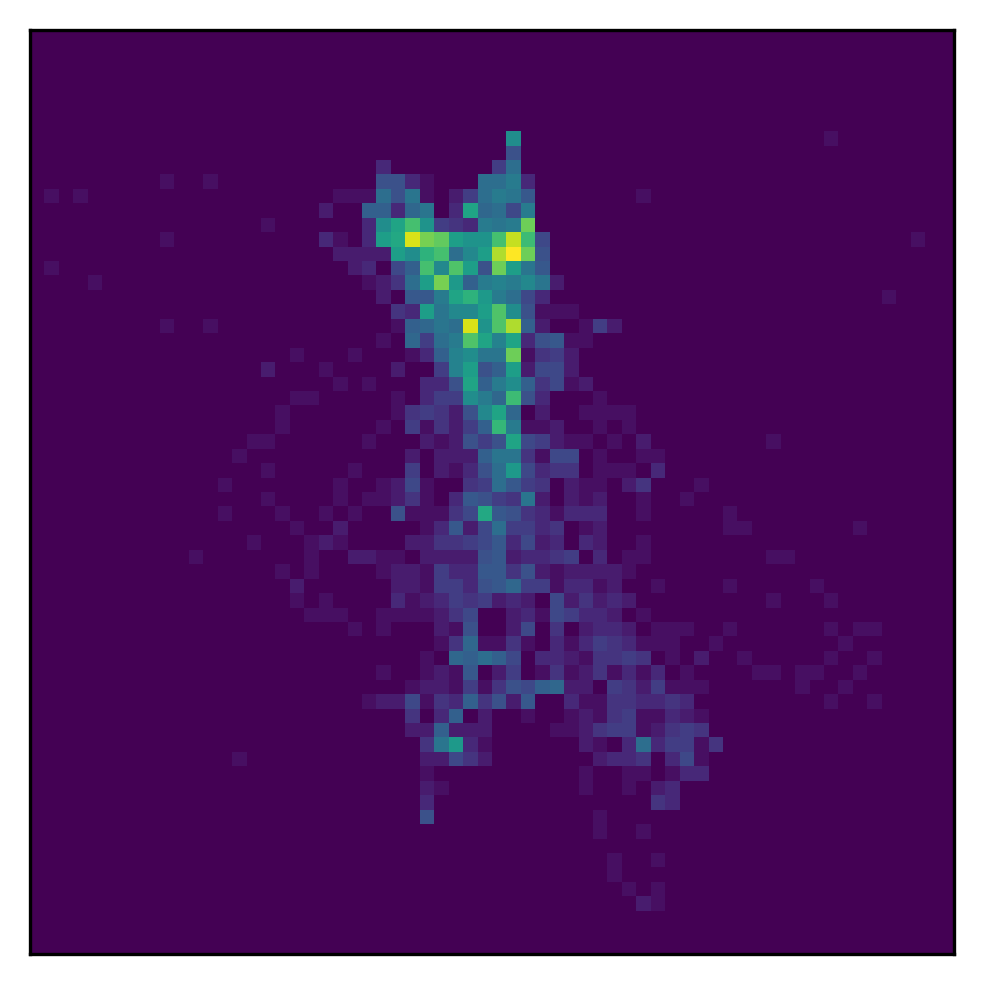}};
    \node[anchor=north, font=\tiny] at (4*\xstep,{-\ystep - 0.5*\imgh - \xlabsep}) {$\partial u/\partial x$};
    \node[rotate=90, anchor=south, font=\tiny] at ({4*\xstep - 0.5*\imgw - \ylabsep},-\ystep) {$\partial u/\partial t$};

    \node at (5*\xstep,-\ystep) {\includegraphics[width=\imgw,height=\imgh]{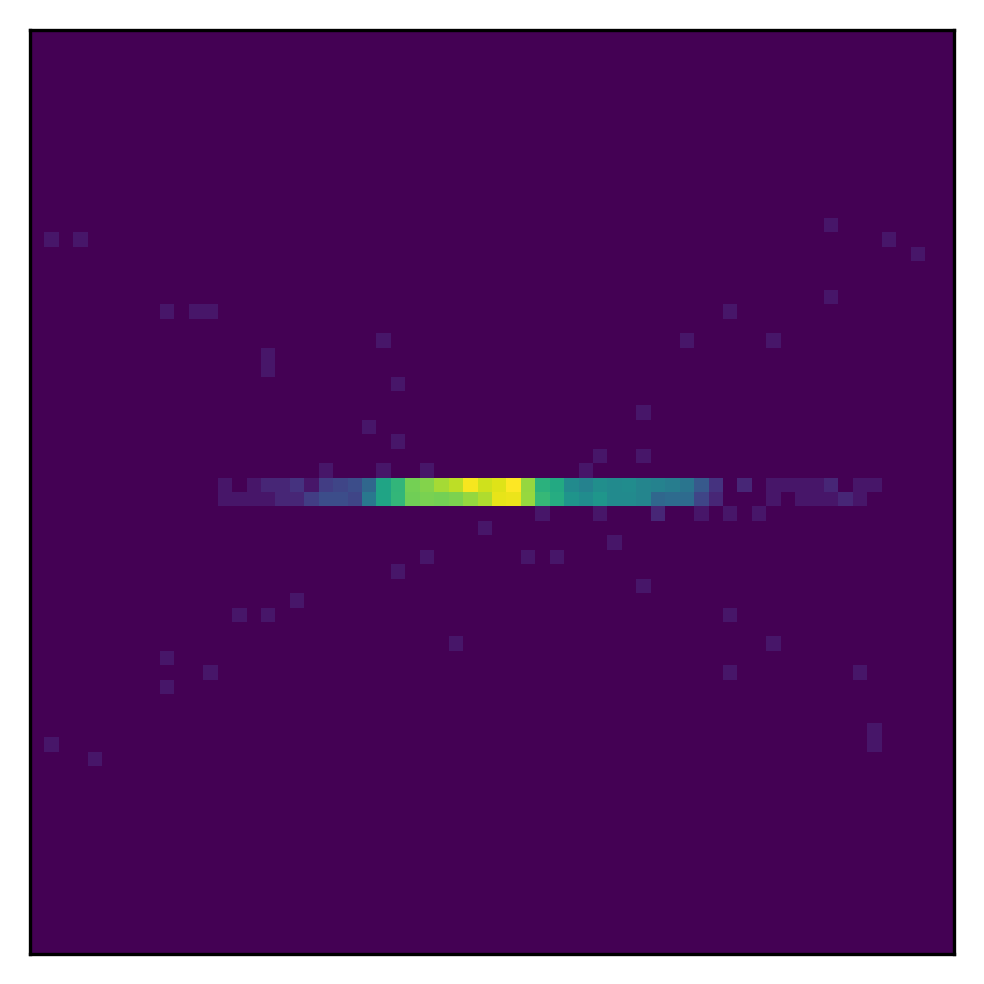}};
    \node[anchor=north, font=\tiny] at (5*\xstep,{-\ystep - 0.5*\imgh - \xlabsep}) {$\partial u/\partial x$};
    \node[rotate=90, anchor=south, font=\tiny] at ({5*\xstep - 0.5*\imgw - \ylabsep},-\ystep) {$\partial^2 u/\partial x^2$};

    \draw[black, dashed, rounded corners=2pt, line width=1pt]
      ({-0.5*\imgw - \gpadx}, {0.5*\imgh + 0.2*\gpady})
      rectangle
      ({2*\xstep + 0.5*\imgw + 0*\gpadx}, {-0.5*\imgh - 2.2*\gpady});

    \draw[black, dashed, rounded corners=2pt, line width=1pt]
      ({3*\xstep - 0.5*\imgw - \gpadx}, {0.5*\imgh +  0.2*\gpady})
      rectangle
      ({5*\xstep + 0.5*\imgw + 0*\gpadx}, {-0.5*\imgh - 2.2*\gpady});

    \draw[black, dashed, rounded corners=2pt, line width=1pt]
      ({-0.5*\imgw - \gpadx}, {-\ystep + 0.5*\imgh +  0.2*\gpady})
      rectangle
      ({2*\xstep + 0.5*\imgw + 0*\gpadx}, {-\ystep - 0.5*\imgh - 2.2*\gpady});

    \draw[black, dashed, rounded corners=2pt, line width=1pt]
      ({3*\xstep - 0.5*\imgw - \gpadx}, {-\ystep + 0.5*\imgh +  0.2*\gpady})
      rectangle
      ({5*\xstep + 0.5*\imgw +0* \gpadx}, {-\ystep - 0.5*\imgh - 2.2*\gpady});
    \end{tikzpicture}$}
    \caption{Single generated sample and its summary statistic in Case 1. For each three-image set, the first image shows sampled emulator dynamics, and the last two show histograms of summary statistic distributions. First row: samples generated by the generative models. Left: FM, $\delta_{L_n} = 0.0421>\tau$. Right: PDFM, $\delta_{L_n} = 0.0256<\tau$. Second row: simulator (truth) samples initialized from the same initial states in the first row. }
    \label{fig:case1_individual}
\end{figure}

\begin{figure}[ht]
    \centering
    \resizebox{\textwidth}{!}{$
    \begin{tikzpicture}

    \def\imgw{0.16\textwidth}
    \def\imgh{2.2cm}
    \def\xsep{0.2cm}
    \def\ysep{0.45cm}
    \def\xlabsep{-0.08cm}
    \def\ylabsep{-0.15cm}

    \def\gpadx{0.12cm}
    \def\gpady{0.12cm}

    \pgfmathsetlengthmacro{\xstep}{\imgw + \xsep}
    \pgfmathsetlengthmacro{\ystep}{\imgh + \ysep}


    \node at (0*\xstep,0) {\includegraphics[width=\imgw,height=\imgh]{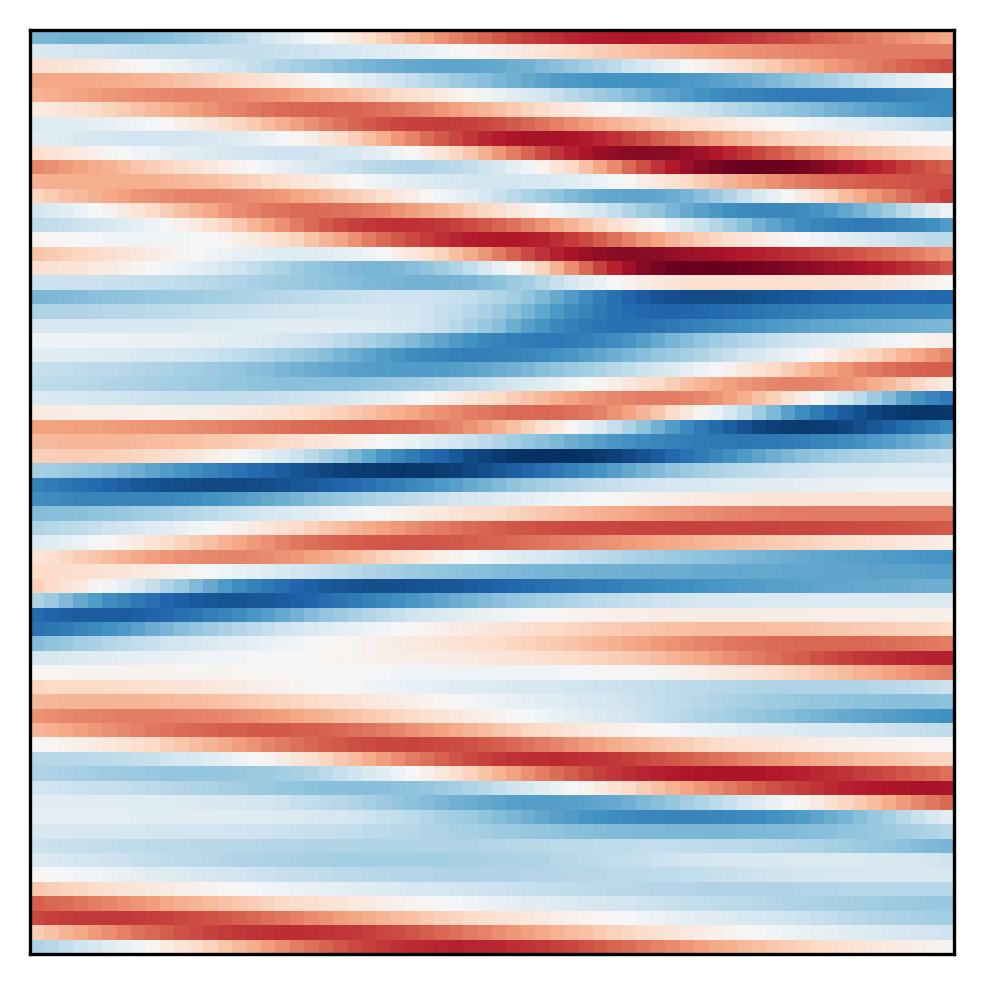}};
    \node[anchor=north, font=\tiny] at (0*\xstep,{0 - 0.5*\imgh - \xlabsep}) {$t$};
    \node[rotate=90, anchor=south, font=\tiny] at ({0*\xstep - 0.5*\imgw - \ylabsep},0) {$x$};

    \node at (1*\xstep,0) {\includegraphics[width=\imgw,height=\imgh]{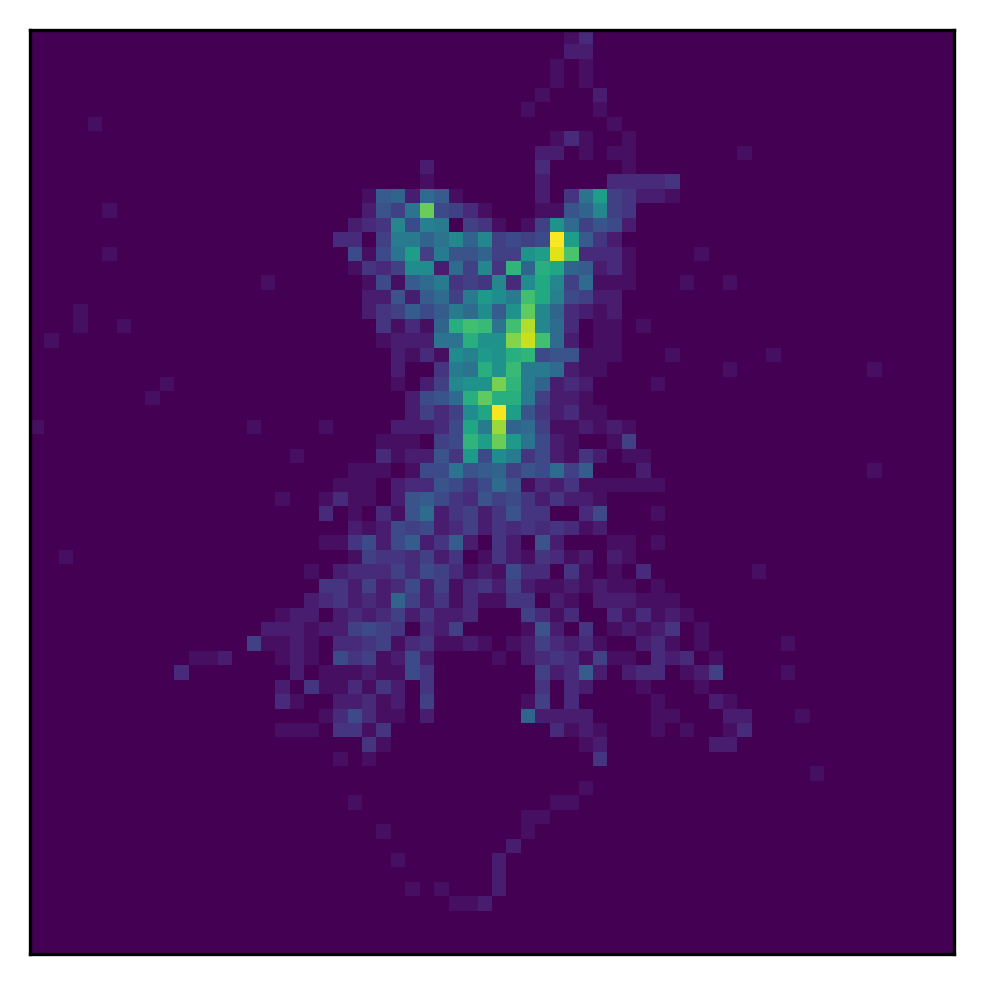}};
    \node[anchor=north, font=\tiny] at (1*\xstep,{0 - 0.5*\imgh - \xlabsep}) {$\partial u/\partial x$};
        \node[anchor=north, font=\small] at (1*\xstep,{0 + 0.7*\imgh - \xlabsep}) {FM};
    \node[rotate=90, anchor=south, font=\tiny] at ({1*\xstep - 0.5*\imgw - \ylabsep},0) {$\partial u/\partial t$};

    \node at (2*\xstep,0) {\includegraphics[width=\imgw,height=\imgh]{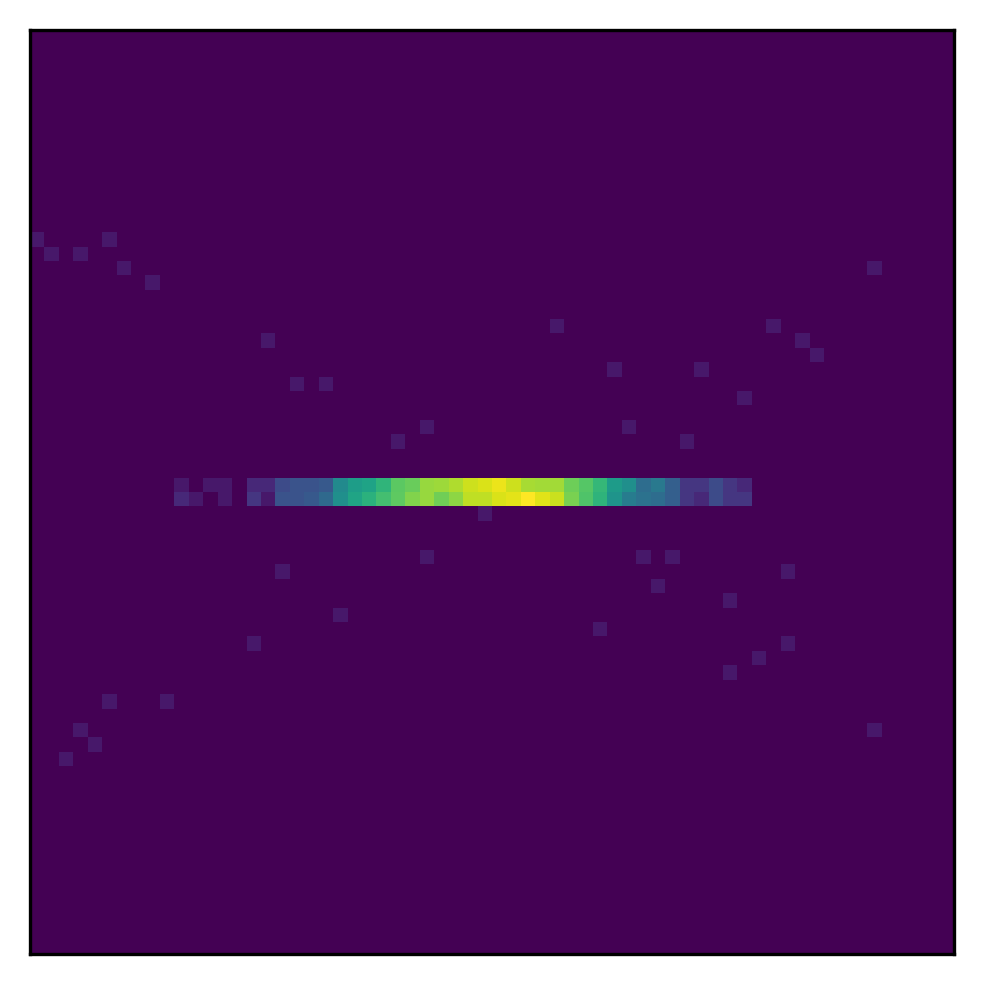}};
    \node[anchor=north, font=\tiny] at (2*\xstep,{0 - 0.5*\imgh - \xlabsep}) {$\partial u/\partial x$};
    \node[rotate=90, anchor=south, font=\tiny] at ({2*\xstep - 0.5*\imgw - \ylabsep},0) {$\partial^2 u/\partial x^2$};

    \node at (3*\xstep,0) {\includegraphics[width=\imgw,height=\imgh]{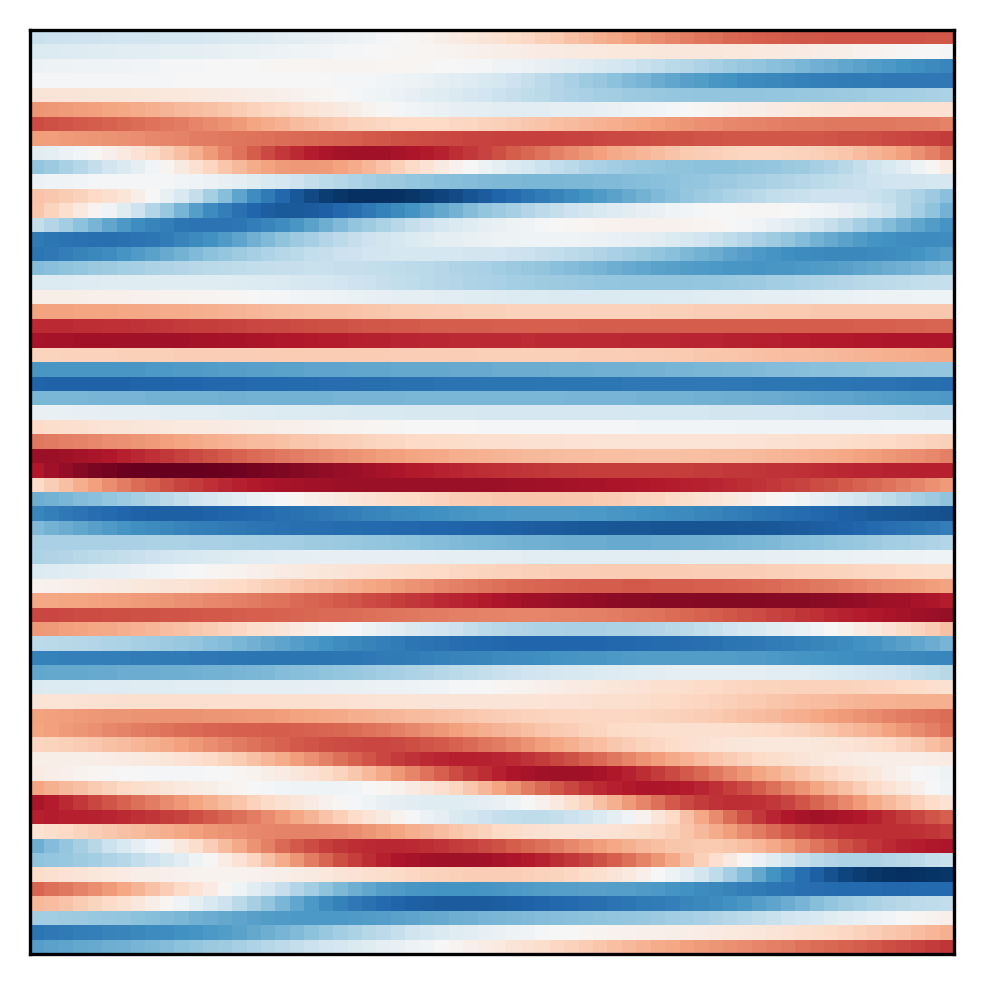}};
    \node[anchor=north, font=\tiny] at (3*\xstep,{0 - 0.5*\imgh - \xlabsep}) {$t$};
    \node[rotate=90, anchor=south, font=\tiny] at ({3*\xstep - 0.5*\imgw - \ylabsep},0) {$x$};

    \node at (4*\xstep,0) {\includegraphics[width=\imgw,height=\imgh]{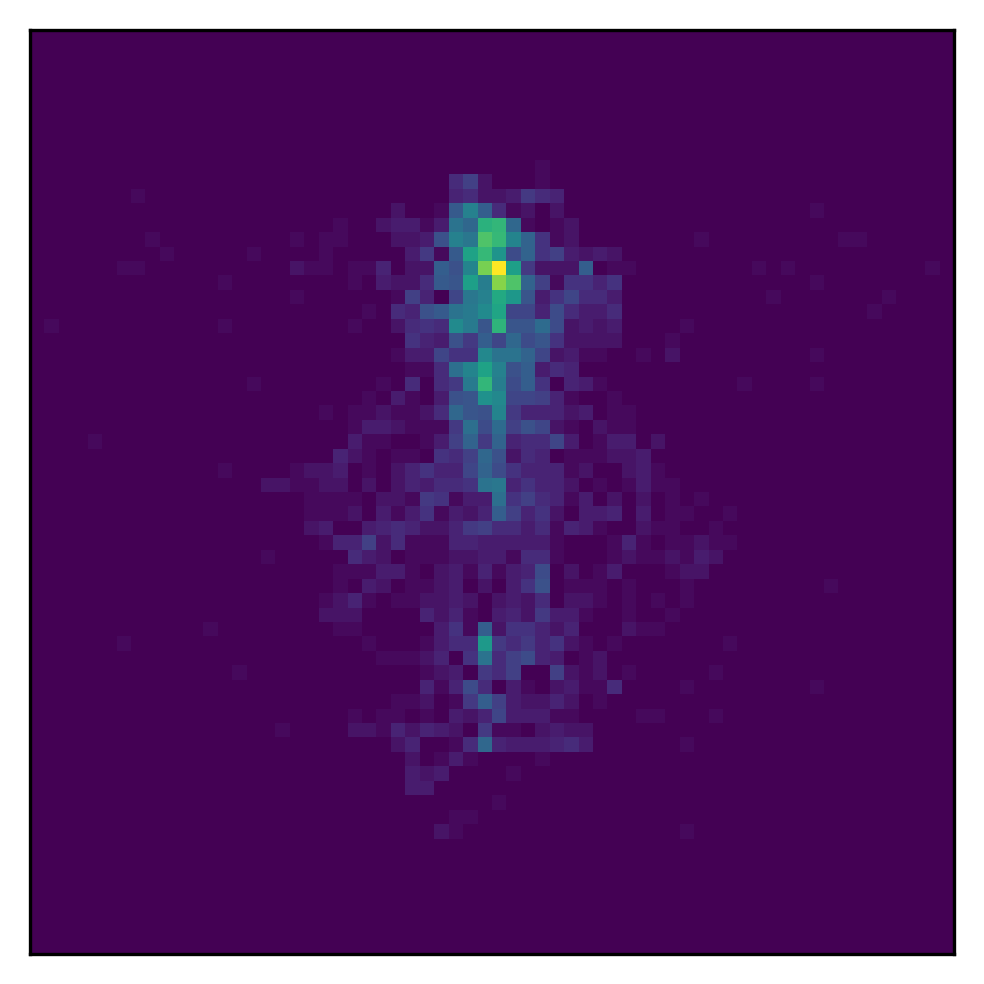}};
    \node[anchor=north, font=\tiny] at (4*\xstep,{0 - 0.5*\imgh - \xlabsep}) {$\partial u/\partial x$};
        \node[anchor=north, font=\small] at (4*\xstep,{0 + 0.7*\imgh - \xlabsep}) {PDFM};
    \node[rotate=90, anchor=south, font=\tiny] at ({4*\xstep - 0.5*\imgw - \ylabsep},0) {$\partial u/\partial t$};

    \node at (5*\xstep,0) {\includegraphics[width=\imgw,height=\imgh]{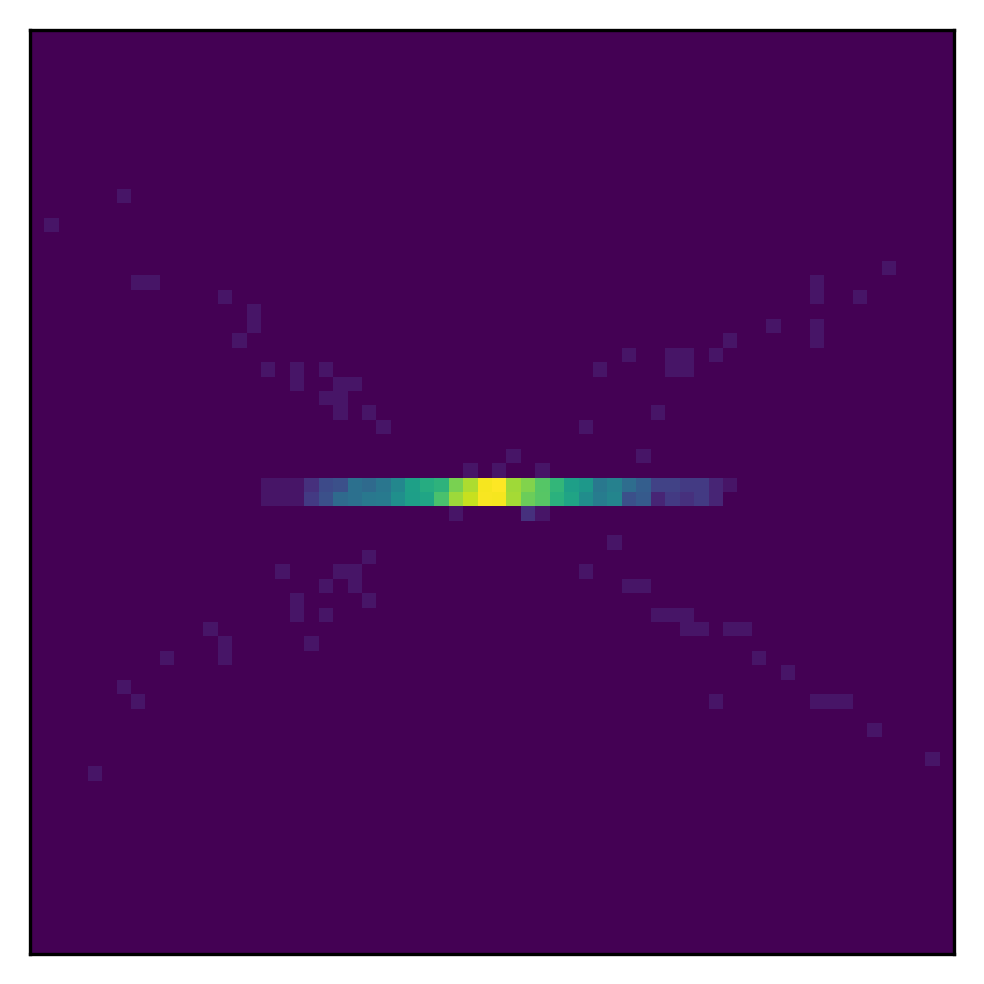}};
    \node[anchor=north, font=\tiny] at (5*\xstep,{0 - 0.5*\imgh - \xlabsep}) {$\partial u/\partial x$};
    \node[rotate=90, anchor=south, font=\tiny] at ({5*\xstep - 0.5*\imgw - \ylabsep},0) {$\partial^2 u/\partial x^2$};

    \node at (0*\xstep,-\ystep) {\includegraphics[width=\imgw,height=\imgh]{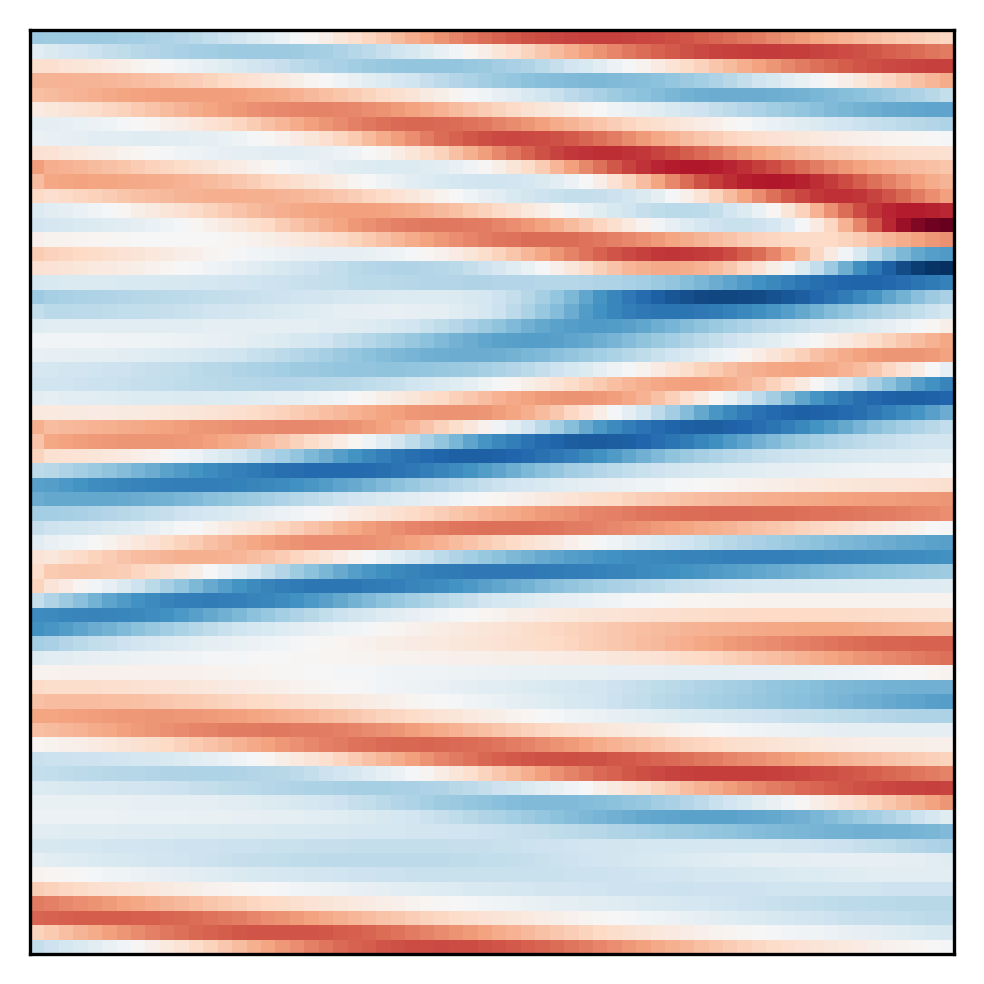}};
    \node[anchor=north, font=\tiny] at (0*\xstep,{-\ystep - 0.5*\imgh - \xlabsep}) {$t$};
    \node[rotate=90, anchor=south, font=\tiny] at ({0*\xstep - 0.5*\imgw - \ylabsep},-\ystep) {$x$};

    \node[rotate=90, anchor=south, font=\small] at ({0*\xstep - 0.6*\imgw - \ylabsep},-\ystep) {Truth};
    
    \node[rotate=90, anchor=south, font=\small] at ({0*\xstep - 0.6*\imgw - \ylabsep},0) {Generated};
    
    \node at (1*\xstep,-\ystep) {\includegraphics[width=\imgw,height=\imgh]{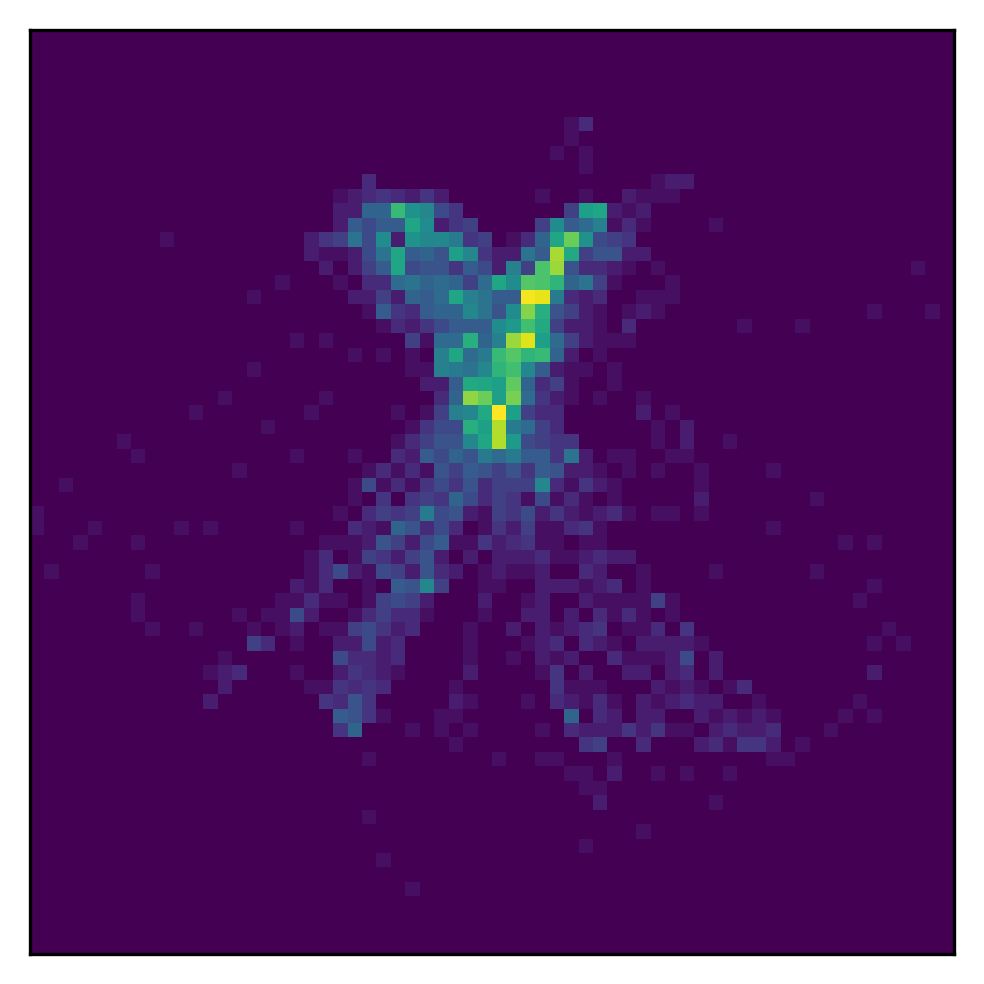}};
    \node[anchor=north, font=\tiny] at (1*\xstep,{-\ystep - 0.5*\imgh - \xlabsep}) {$\partial u/\partial x$};
    \node[rotate=90, anchor=south, font=\tiny] at ({1*\xstep - 0.5*\imgw - \ylabsep},-\ystep) {$\partial u/\partial t$};

    \node at (2*\xstep,-\ystep) {\includegraphics[width=\imgw,height=\imgh]{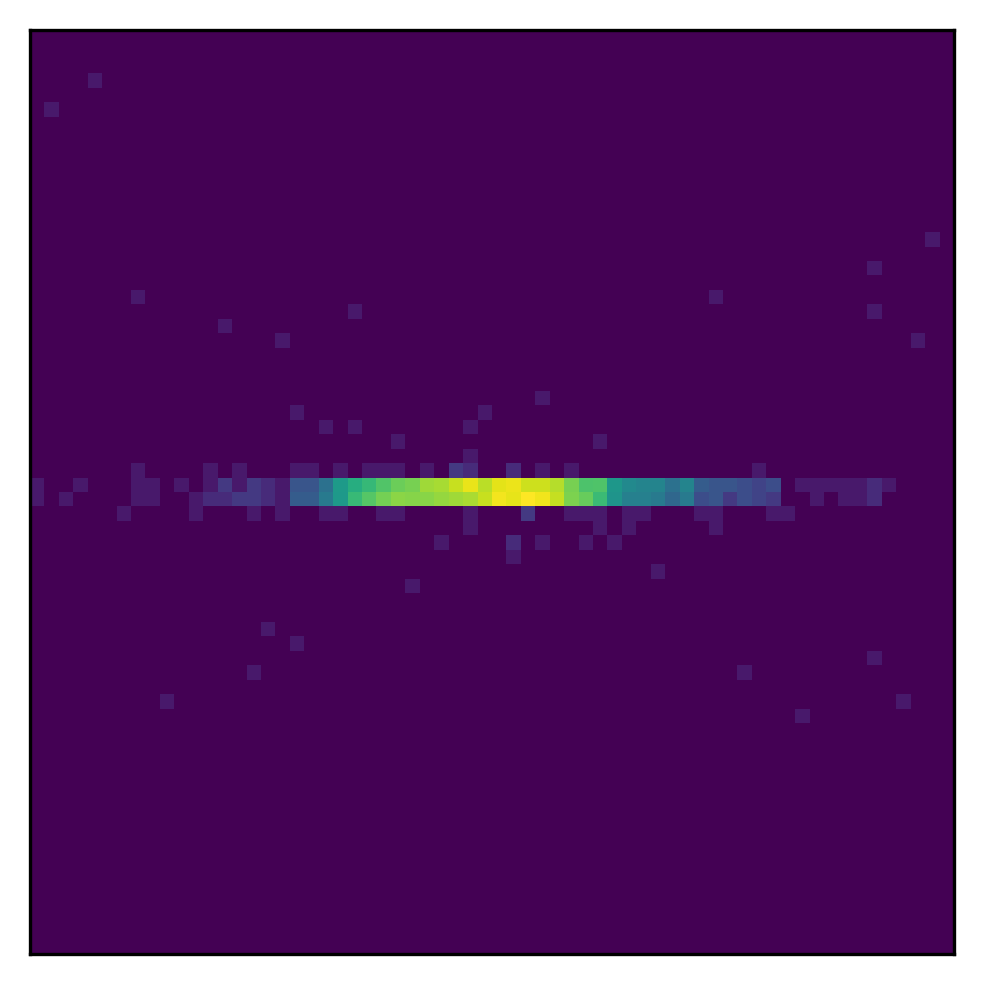}};
    \node[anchor=north, font=\tiny] at (2*\xstep,{-\ystep - 0.5*\imgh - \xlabsep}) {$\partial u/\partial x$};
    \node[rotate=90, anchor=south, font=\tiny] at ({2*\xstep - 0.5*\imgw - \ylabsep},-\ystep) {$\partial^2 u/\partial x^2$};

    \node at (3*\xstep,-\ystep) {\includegraphics[width=\imgw,height=\imgh]{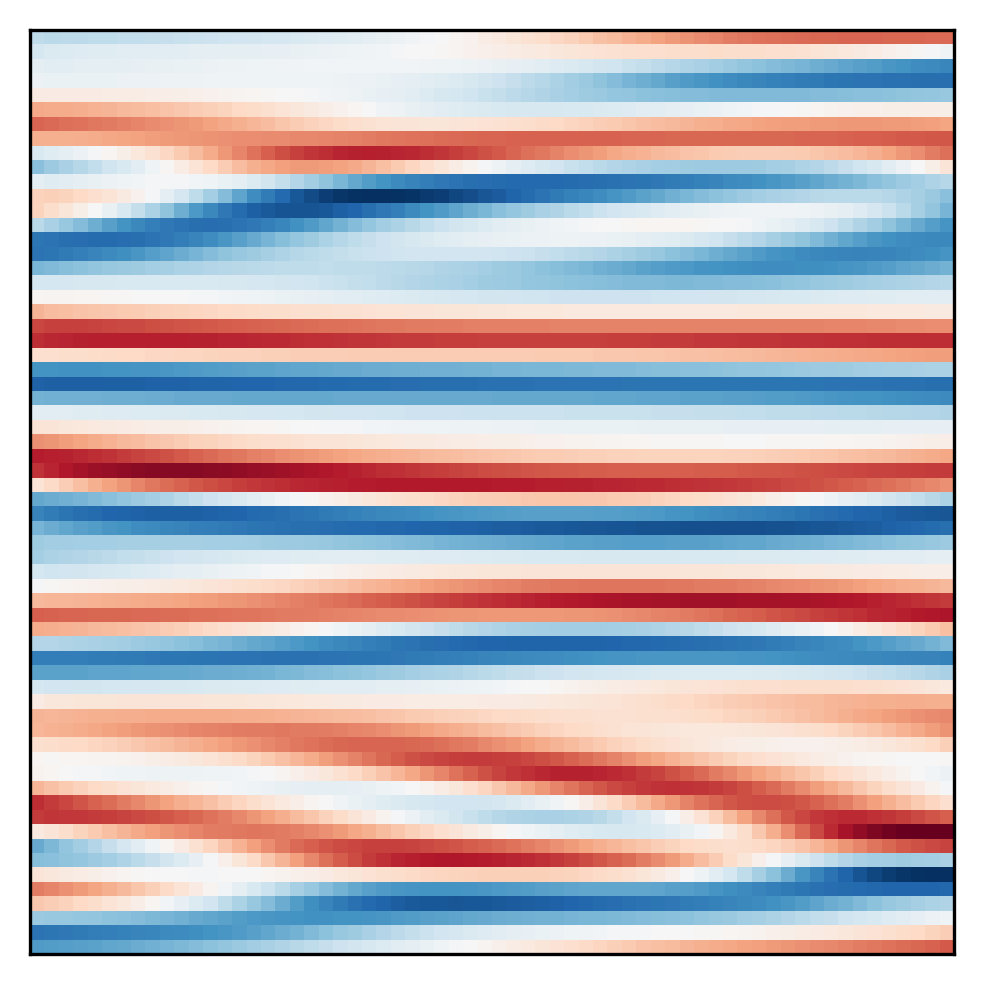}};
    \node[anchor=north, font=\tiny] at (3*\xstep,{-\ystep - 0.5*\imgh - \xlabsep}) {$t$};
    \node[rotate=90, anchor=south, font=\tiny] at ({3*\xstep - 0.5*\imgw - \ylabsep},-\ystep) {$x$};

    \node at (4*\xstep,-\ystep) {\includegraphics[width=\imgw,height=\imgh]{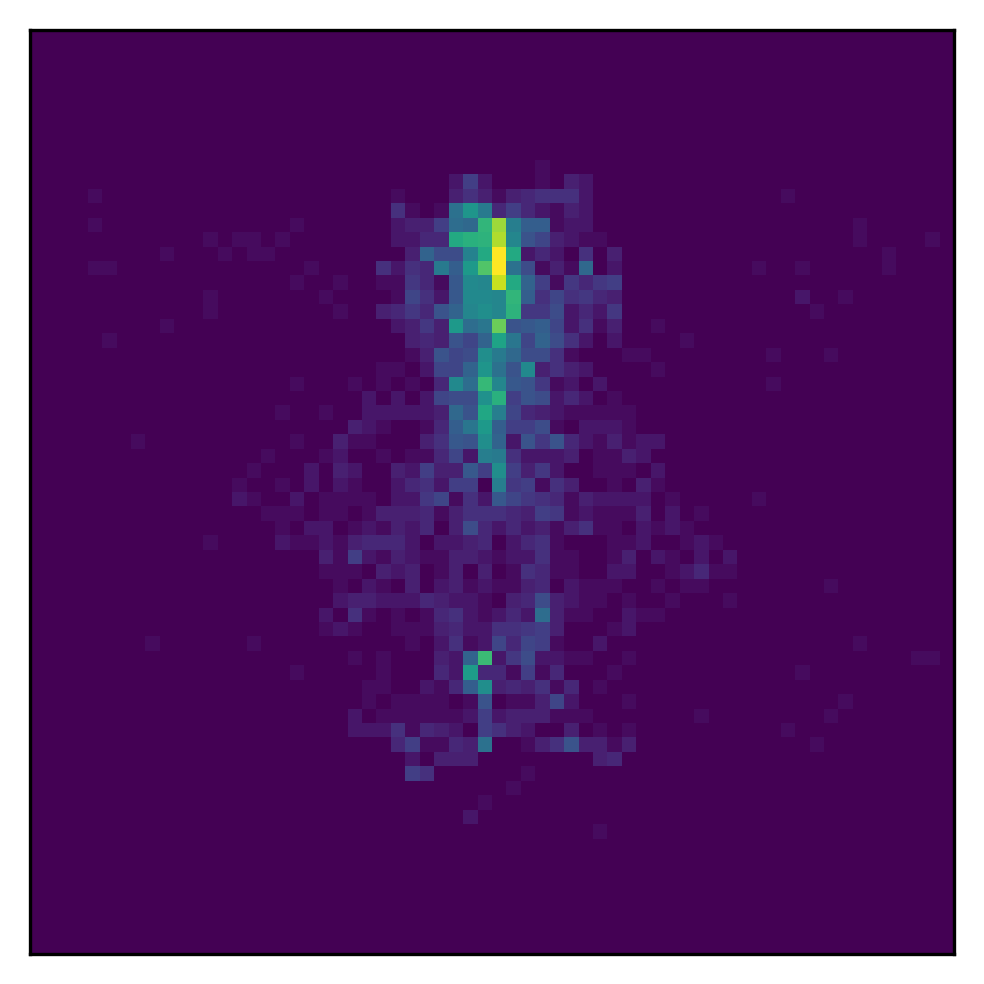}};
    \node[anchor=north, font=\tiny] at (4*\xstep,{-\ystep - 0.5*\imgh - \xlabsep}) {$\partial u/\partial x$};
    \node[rotate=90, anchor=south, font=\tiny] at ({4*\xstep - 0.5*\imgw - \ylabsep},-\ystep) {$\partial u/\partial t$};

    \node at (5*\xstep,-\ystep) {\includegraphics[width=\imgw,height=\imgh]{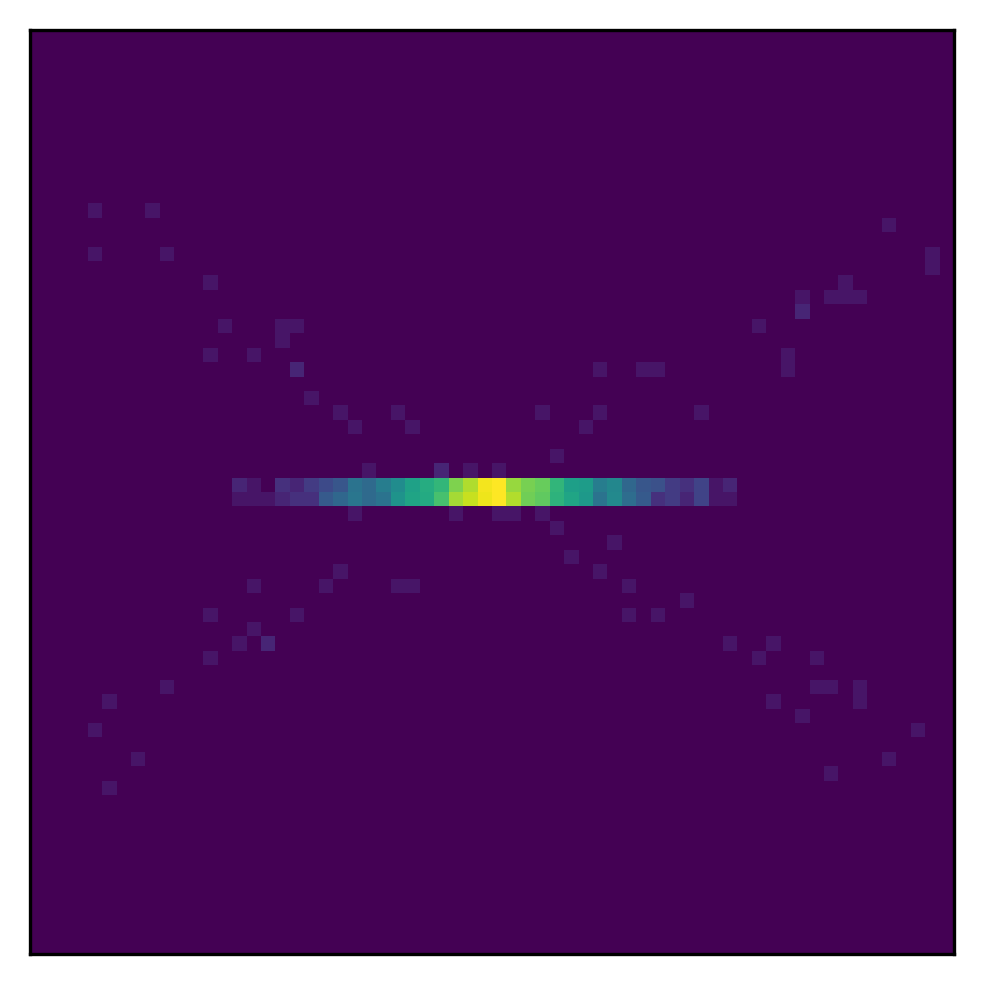}};
    \node[anchor=north, font=\tiny] at (5*\xstep,{-\ystep - 0.5*\imgh - \xlabsep}) {$\partial u/\partial x$};
    \node[rotate=90, anchor=south, font=\tiny] at ({5*\xstep - 0.5*\imgw - \ylabsep},-\ystep) {$\partial^2 u/\partial x^2$};

    \draw[black, dashed, rounded corners=2pt, line width=1pt]
      ({-0.5*\imgw - \gpadx}, {0.5*\imgh + 0.2*\gpady})
      rectangle
      ({2*\xstep + 0.5*\imgw + 0*\gpadx}, {-0.5*\imgh - 2.2*\gpady});

    \draw[black, dashed, rounded corners=2pt, line width=1pt]
      ({3*\xstep - 0.5*\imgw - \gpadx}, {0.5*\imgh +  0.2*\gpady})
      rectangle
      ({5*\xstep + 0.5*\imgw + 0*\gpadx}, {-0.5*\imgh - 2.2*\gpady});

    \draw[black, dashed, rounded corners=2pt, line width=1pt]
      ({-0.5*\imgw - \gpadx}, {-\ystep + 0.5*\imgh +  0.2*\gpady})
      rectangle
      ({2*\xstep + 0.5*\imgw + 0*\gpadx}, {-\ystep - 0.5*\imgh - 2.2*\gpady});

    \draw[black, dashed, rounded corners=2pt, line width=1pt]
      ({3*\xstep - 0.5*\imgw - \gpadx}, {-\ystep + 0.5*\imgh +  0.2*\gpady})
      rectangle
      ({5*\xstep + 0.5*\imgw +0* \gpadx}, {-\ystep - 0.5*\imgh - 2.2*\gpady});
    \end{tikzpicture}$}
    \caption{Single generated sample and its summary statistic in Case 2. For each three-image set, the first shows sampled emulator dynamics, and the last two show histograms of summary statistic distributions. First row: samples generated by the generative models. Left: FM, $\delta_{L_n} = 0.0734>\tau$. Right: PDFM, $\delta_{L_n} = 0.0354<\tau$. Second row: simulator (truth) samples initialized from the same initial states in the first row. }
    \label{fig:case2_individual}
\end{figure}

\begin{figure}[ht]
    \centering
    \begin{tikzpicture}

    \def\imgw{0.3\textwidth}
    \def\imgh{0.3\textwidth}
    \def\xsep{0.35cm}
    \def\ysep{0.55cm}
    \def\xlabsep{-0.08cm}
    \def\ylabsep{-0.15cm}

    \pgfmathsetlengthmacro{\xstep}{\imgw + \xsep}
    \pgfmathsetlengthmacro{\ystep}{\imgh + \ysep}

    \node at (0*\xstep,0) {\includegraphics[width=\imgw,height=\imgh]{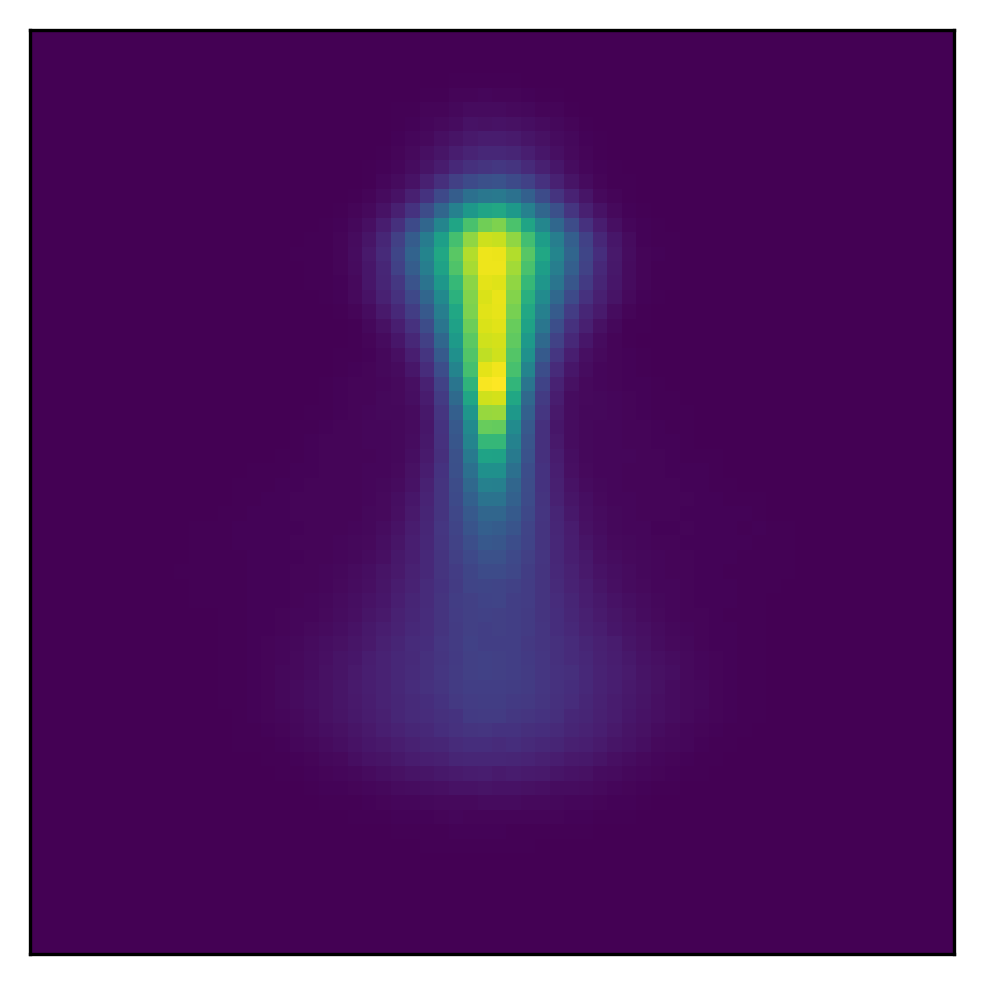}};
    \node[anchor=north, font=\tiny] at (0*\xstep,{0 - 0.5*\imgh - \xlabsep}) {$\partial u/\partial x$};
        \node[anchor=north, font=\small] at (0*\xstep,{0 + 0.6*\imgh - \xlabsep}) {$Q$};
    \node[rotate=90, anchor=south, font=\tiny] at ({0*\xstep - 0.5*\imgw - \ylabsep},0) {$\partial u/\partial t$};

    \node at (1*\xstep,0) {\includegraphics[width=\imgw,height=\imgh]{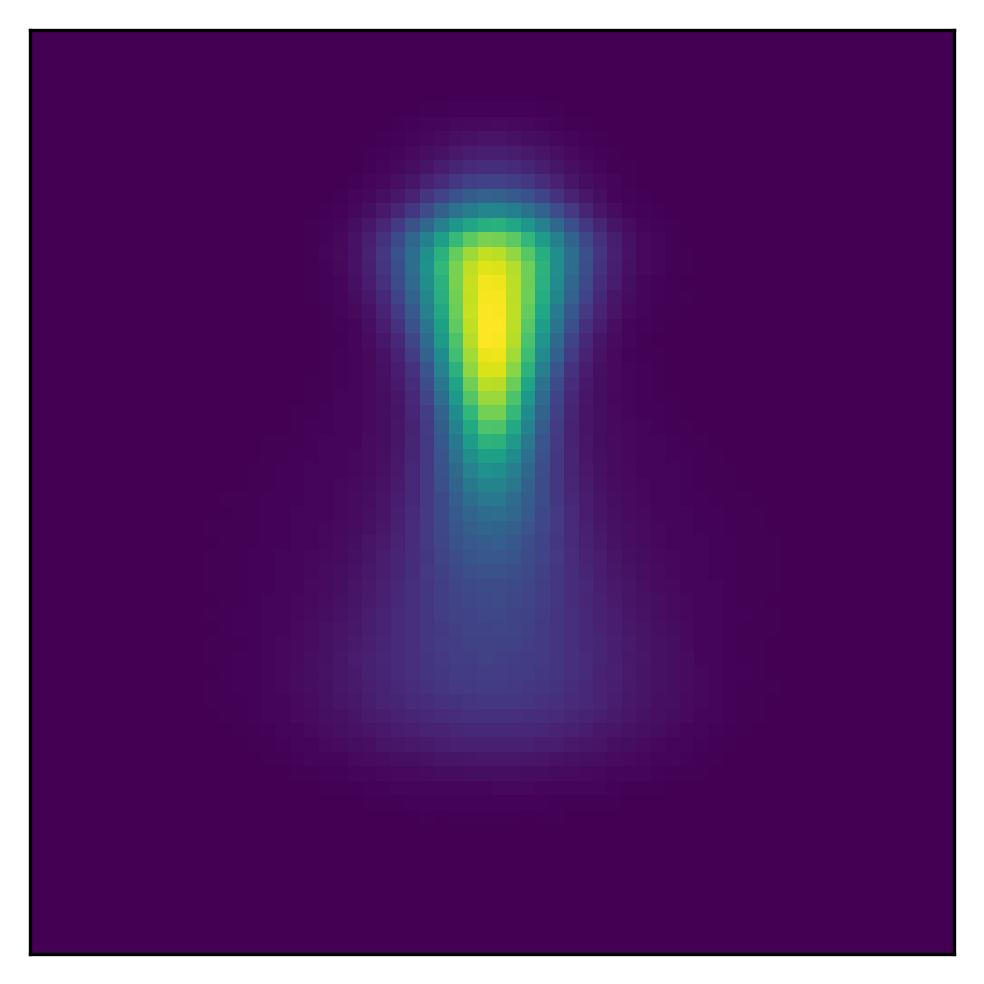}};
    \node[anchor=north, font=\tiny] at (1*\xstep,{0 - 0.5*\imgh - \xlabsep}) {$\partial u/\partial x$};

        \node[anchor=north, font=\small] at (1*\xstep,{0 + 0.6*\imgh - \xlabsep}) {FM};
        
    \node[rotate=90, anchor=south, font=\tiny] at ({1*\xstep - 0.5*\imgw - \ylabsep},0) {$\partial u/\partial t$};

    \node at (2*\xstep,0) {\includegraphics[width=\imgw,height=\imgh]{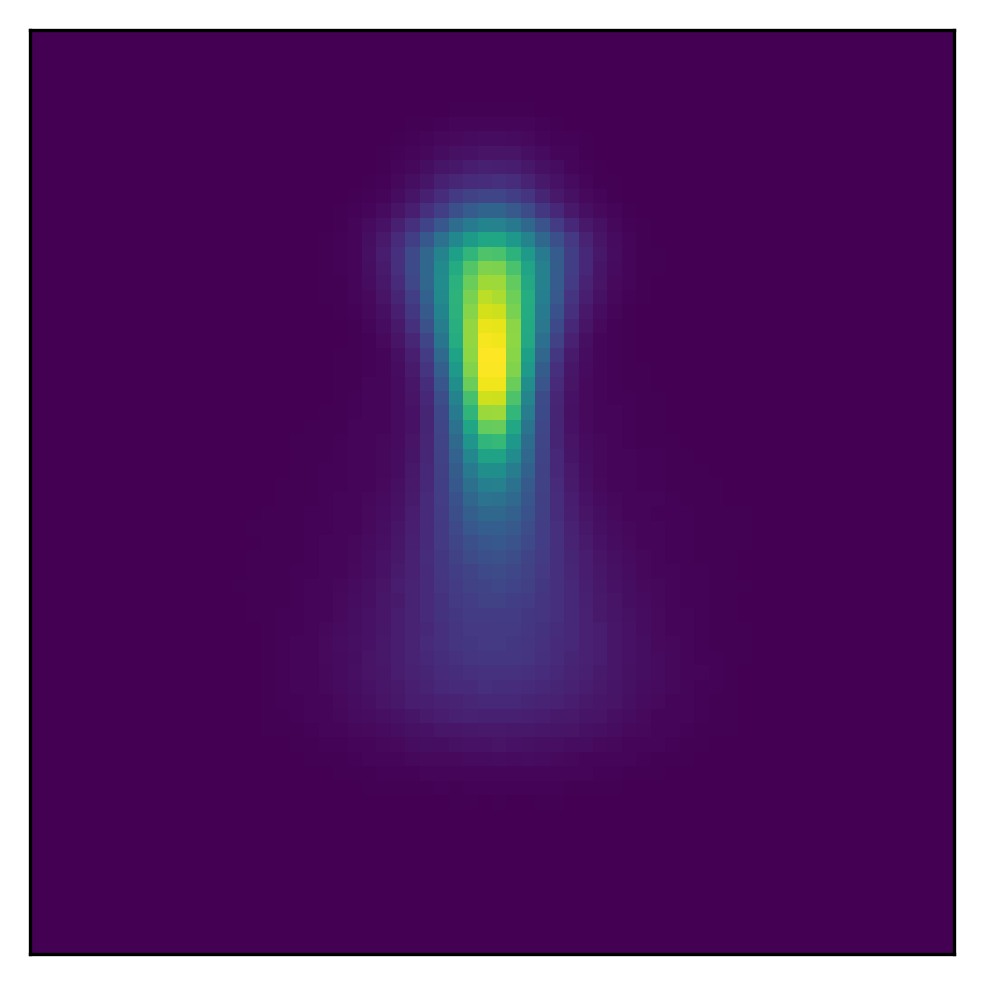}};
    \node[anchor=north, font=\tiny] at (2*\xstep,{0 - 0.5*\imgh - \xlabsep}) {$\partial u/\partial x$};

        \node[anchor=north, font=\small] at (2*\xstep,{0 + 0.6*\imgh - \xlabsep}) {PDFM};
        
    \node[rotate=90, anchor=south, font=\tiny] at ({2*\xstep - 0.5*\imgw - \ylabsep},0) {$\partial u/\partial t$};

    \node at (0*\xstep,-\ystep) {\includegraphics[width=\imgw,height=\imgh]{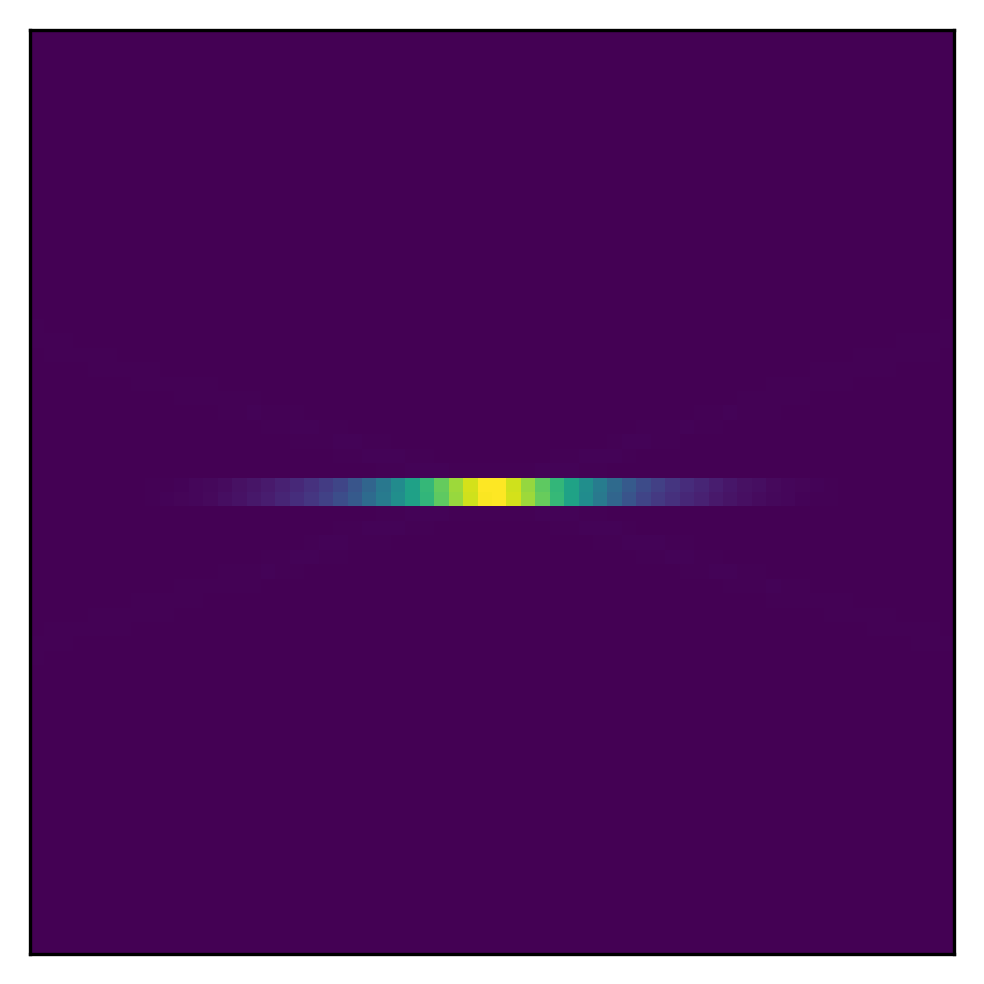}};
    \node[anchor=north, font=\tiny] at (0*\xstep,{-\ystep - 0.5*\imgh - \xlabsep}) {$\partial u/\partial x$};
    \node[rotate=90, anchor=south, font=\tiny] at ({0*\xstep - 0.5*\imgw - \ylabsep},-\ystep) {$\partial^2 u/\partial x^2$};

    \node at (1*\xstep,-\ystep) {\includegraphics[width=\imgw,height=\imgh]{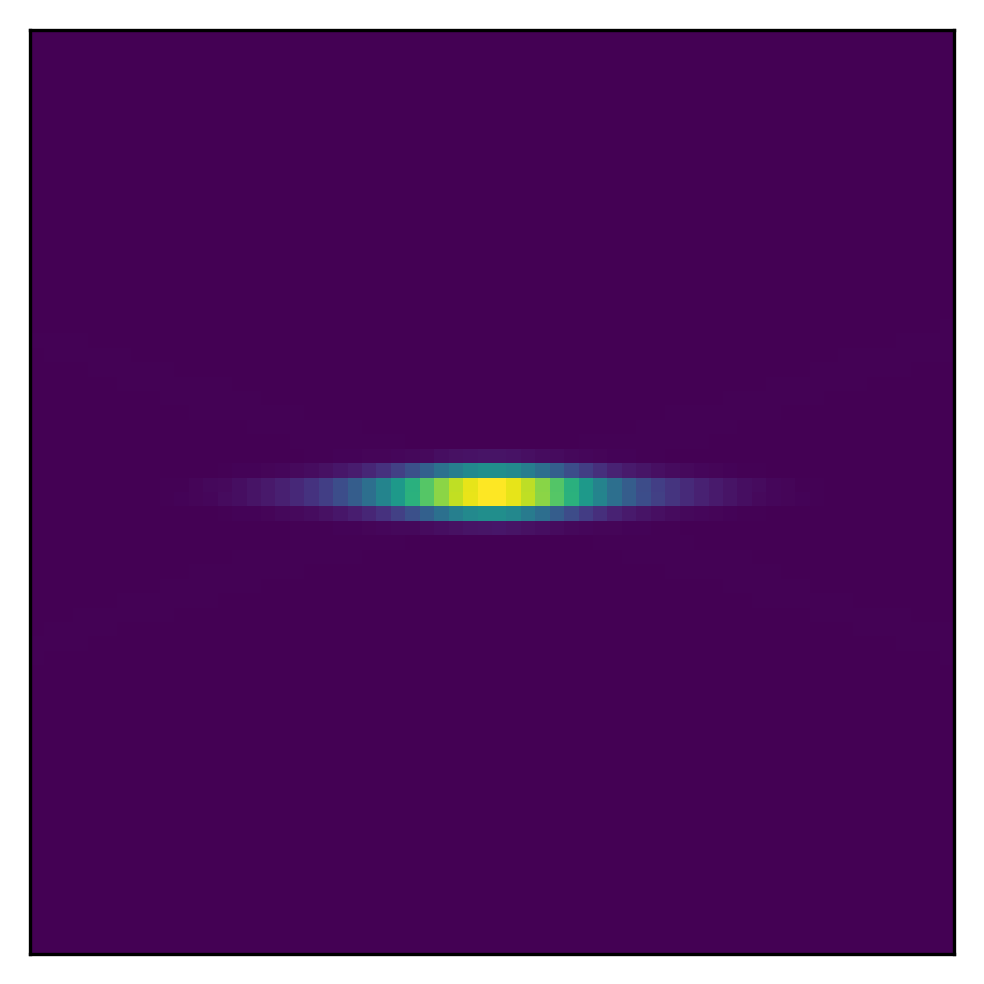}};
    \node[anchor=north, font=\tiny] at (1*\xstep,{-\ystep - 0.5*\imgh - \xlabsep}) {$\partial u/\partial x$};
    \node[rotate=90, anchor=south, font=\tiny] at ({1*\xstep - 0.5*\imgw - \ylabsep},-\ystep) {$\partial^2 u/\partial x^2$};

    \node at (2*\xstep,-\ystep) {\includegraphics[width=\imgw,height=\imgh]{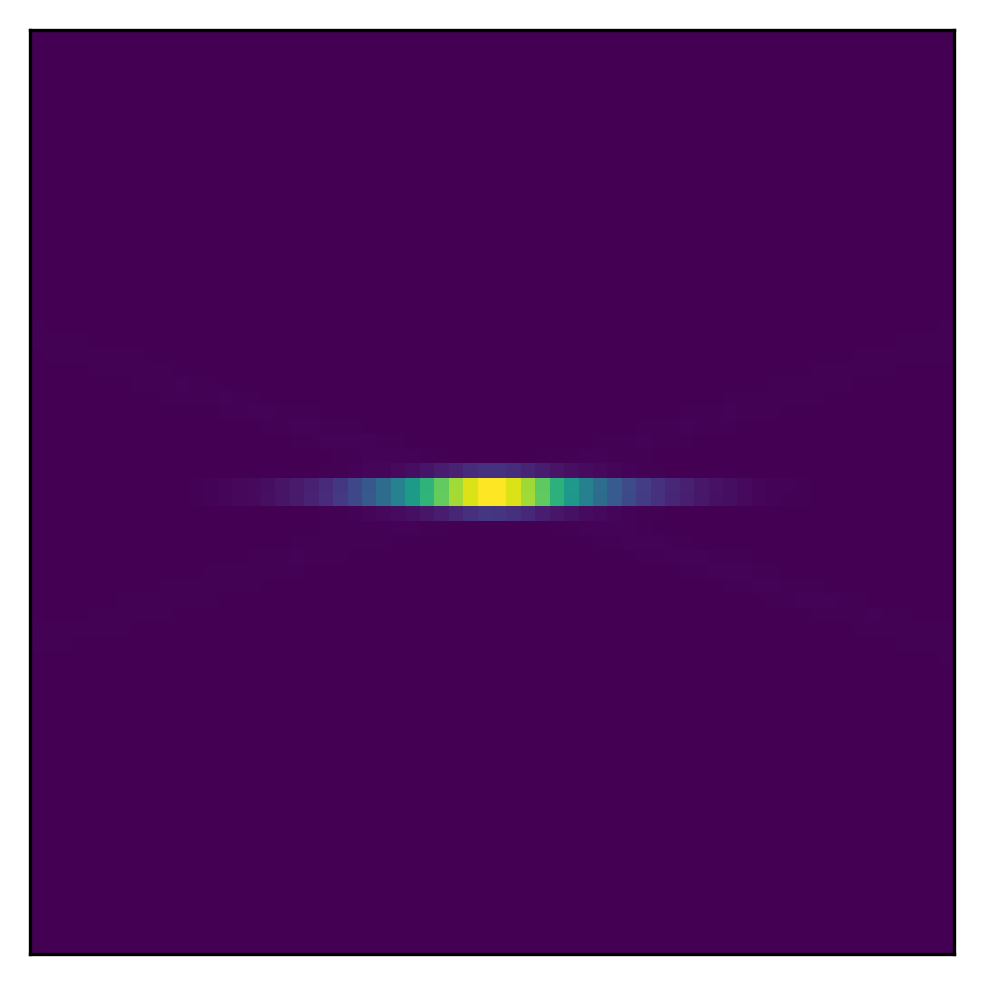}};
    \node[anchor=north, font=\tiny] at (2*\xstep,{-\ystep - 0.5*\imgh - \xlabsep}) {$\partial u/\partial x$};
    \node[rotate=90, anchor=south, font=\tiny] at ({2*\xstep - 0.5*\imgw - \ylabsep},-\ystep) {$\partial^2 u/\partial x^2$};

    \end{tikzpicture}
    \caption{Summary statistic over $2000$ samples in Case 1. First column: training samples. Second column: FM's generated samples. Third column: PDFM's generated samples.}
    \label{fig:case1_batch}
\end{figure}

\begin{figure}[ht]
    \centering
    \begin{tikzpicture}

    \def\imgw{0.22\textwidth}
    \def\imgh{0.22\textwidth}
    \def\xsep{0.3cm}
    \def\ysep{0.55cm}
    \def\xlabsep{-0.08cm}
    \def\ylabsep{-0.15cm}

    \pgfmathsetlengthmacro{\xstep}{\imgw + \xsep}
    \pgfmathsetlengthmacro{\ystep}{\imgh + \ysep}

    \node at (0*\xstep,0) {\includegraphics[width=\imgw,height=\imgh]{figs/KSPDE/batch_truth_ut_vs_ux.png}};
    \node[anchor=north, font=\tiny] at (0*\xstep,{0 - 0.5*\imgh - \xlabsep}) {$\partial u/\partial x$};

        \node[anchor=north, font=\small] at (0*\xstep,{0 + 0.6*\imgh - \xlabsep}) {Clean};
        \node[anchor=north, font=\small] at (1*\xstep,{0 + 0.6*\imgh - \xlabsep}) {$Q$};
        \node[anchor=north, font=\small] at (2*\xstep,{0 + 0.6*\imgh - \xlabsep}) {FM};
        \node[anchor=north, font=\small] at (3*\xstep,{0 + 0.6*\imgh - \xlabsep}) {PDFM};
        
    \node[rotate=90, anchor=south, font=\tiny] at ({0*\xstep - 0.5*\imgw - \ylabsep},0) {$\partial u/\partial t$};

    \node at (1*\xstep,0) {\includegraphics[width=\imgw,height=\imgh]{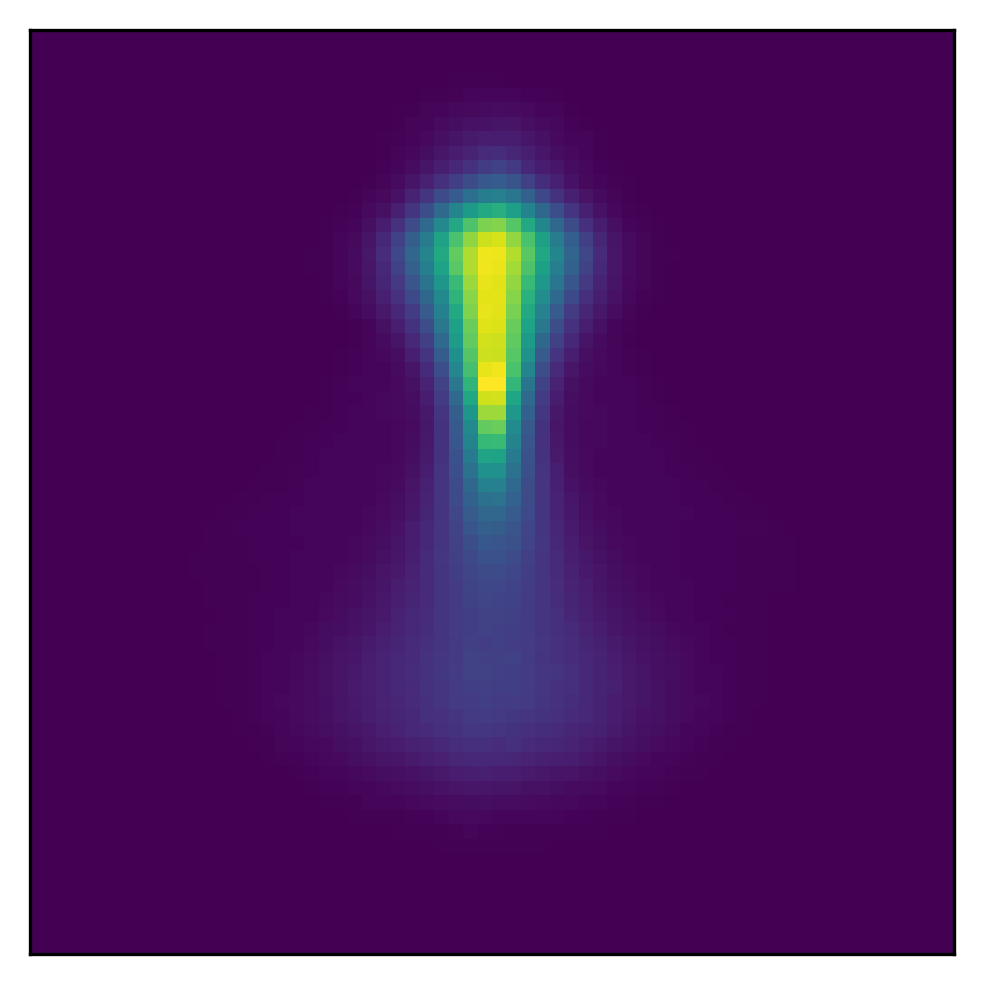}};
    \node[anchor=north, font=\tiny] at (1*\xstep,{0 - 0.5*\imgh - \xlabsep}) {$\partial u/\partial x$};
    \node[rotate=90, anchor=south, font=\tiny] at ({1*\xstep - 0.5*\imgw - \ylabsep},0) {$\partial u/\partial t$};

    \node at (2*\xstep,0) {\includegraphics[width=\imgw,height=\imgh]{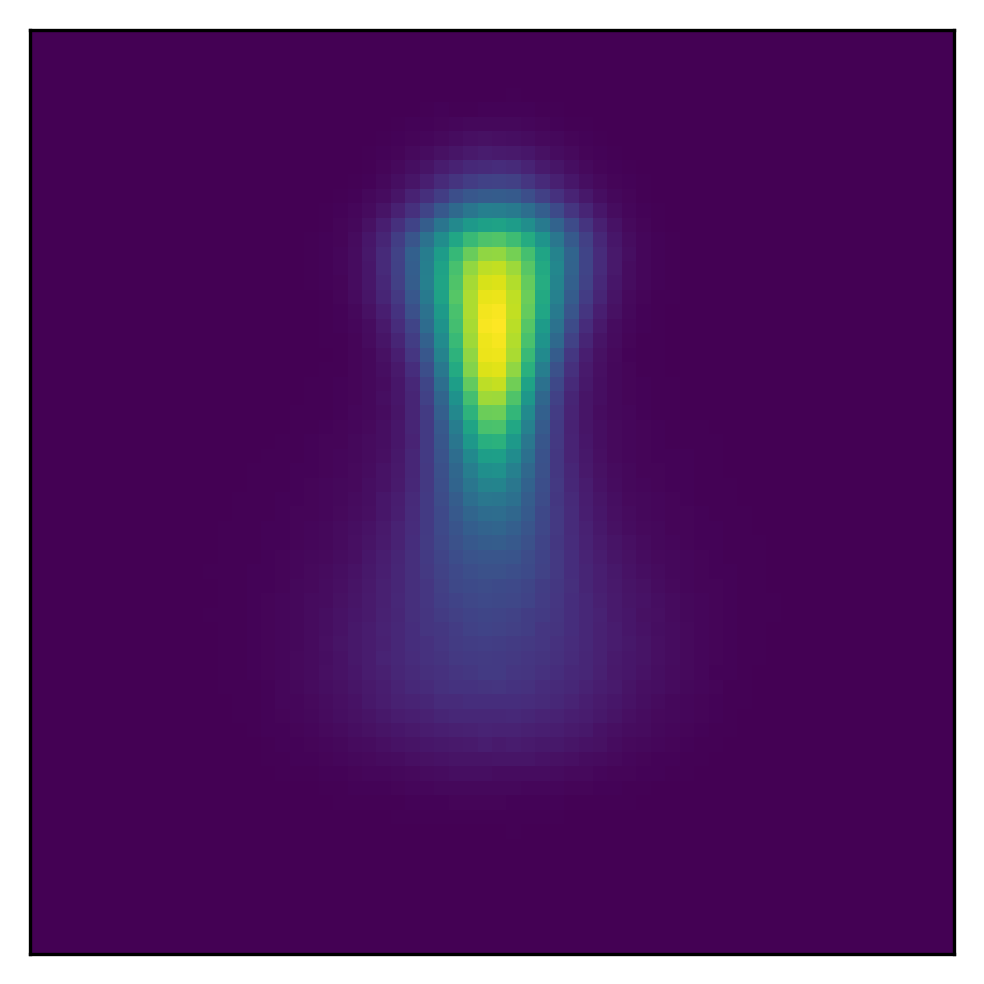}};
    \node[anchor=north, font=\tiny] at (2*\xstep,{0 - 0.5*\imgh - \xlabsep}) {$\partial u/\partial x$};
    \node[rotate=90, anchor=south, font=\tiny] at ({2*\xstep - 0.5*\imgw - \ylabsep},0) {$\partial u/\partial t$};

    \node at (3*\xstep,0) {\includegraphics[width=\imgw,height=\imgh]{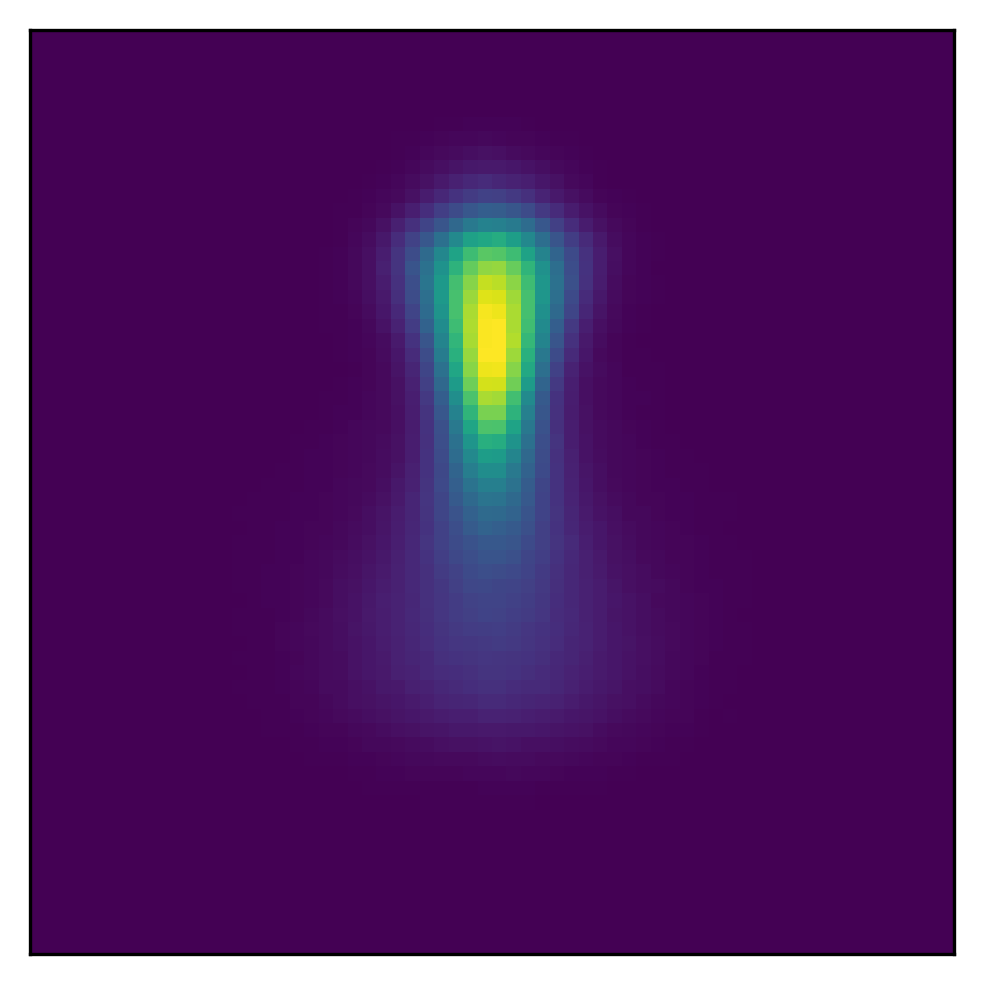}};
    \node[anchor=north, font=\tiny] at (3*\xstep,{0 - 0.5*\imgh - \xlabsep}) {$\partial u/\partial x$};
    \node[rotate=90, anchor=south, font=\tiny] at ({3*\xstep - 0.5*\imgw - \ylabsep},0) {$\partial u/\partial t$};

    \node at (0*\xstep,-\ystep) {\includegraphics[width=\imgw,height=\imgh]{figs/KSPDE/batch_truth_uxx_vs_ux.png}};
    \node[anchor=north, font=\tiny] at (0*\xstep,{-\ystep - 0.5*\imgh - \xlabsep}) {$\partial u/\partial x$};
    \node[rotate=90, anchor=south, font=\tiny] at ({0*\xstep - 0.5*\imgw - \ylabsep},-\ystep) {$\partial^2 u/\partial x^2$};

    \node at (1*\xstep,-\ystep) {\includegraphics[width=\imgw,height=\imgh]{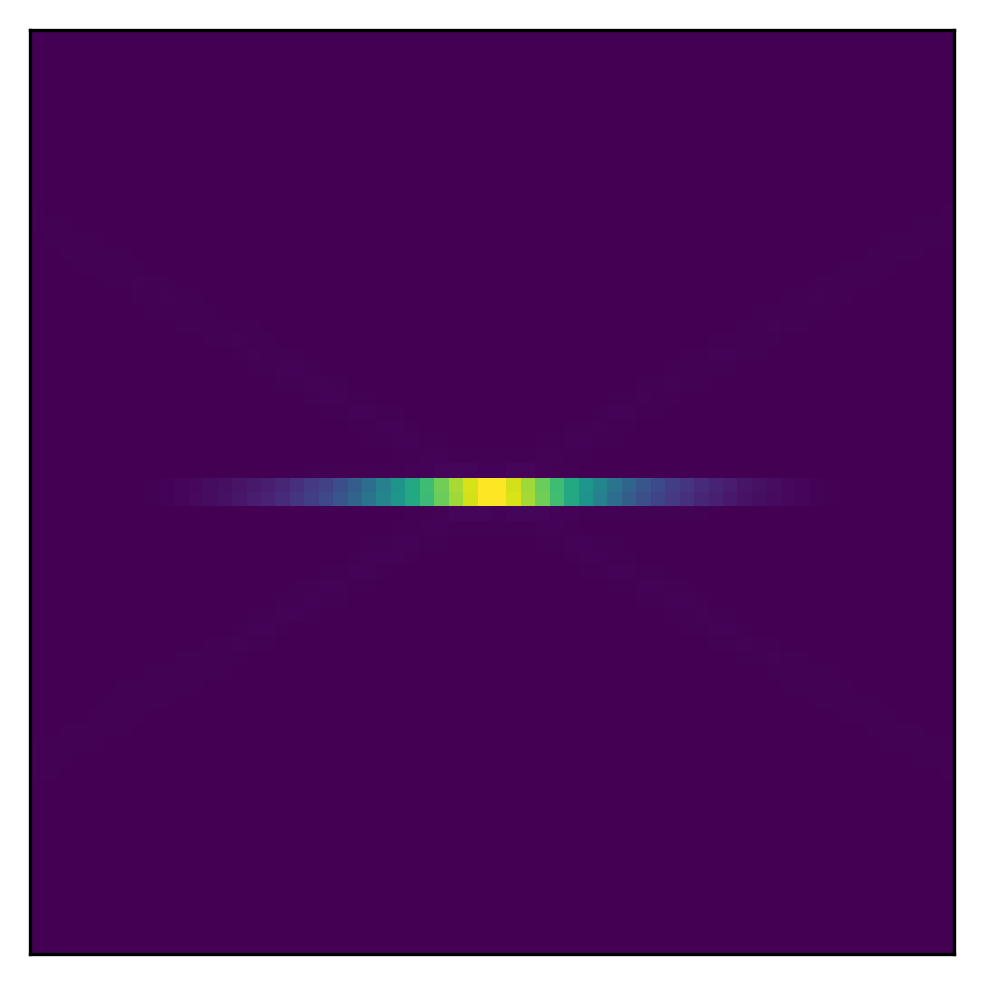}};
    \node[anchor=north, font=\tiny] at (1*\xstep,{-\ystep - 0.5*\imgh - \xlabsep}) {$\partial u/\partial x$};
    \node[rotate=90, anchor=south, font=\tiny] at ({1*\xstep - 0.5*\imgw - \ylabsep},-\ystep) {$\partial^2 u/\partial x^2$};

    \node at (2*\xstep,-\ystep) {\includegraphics[width=\imgw,height=\imgh]{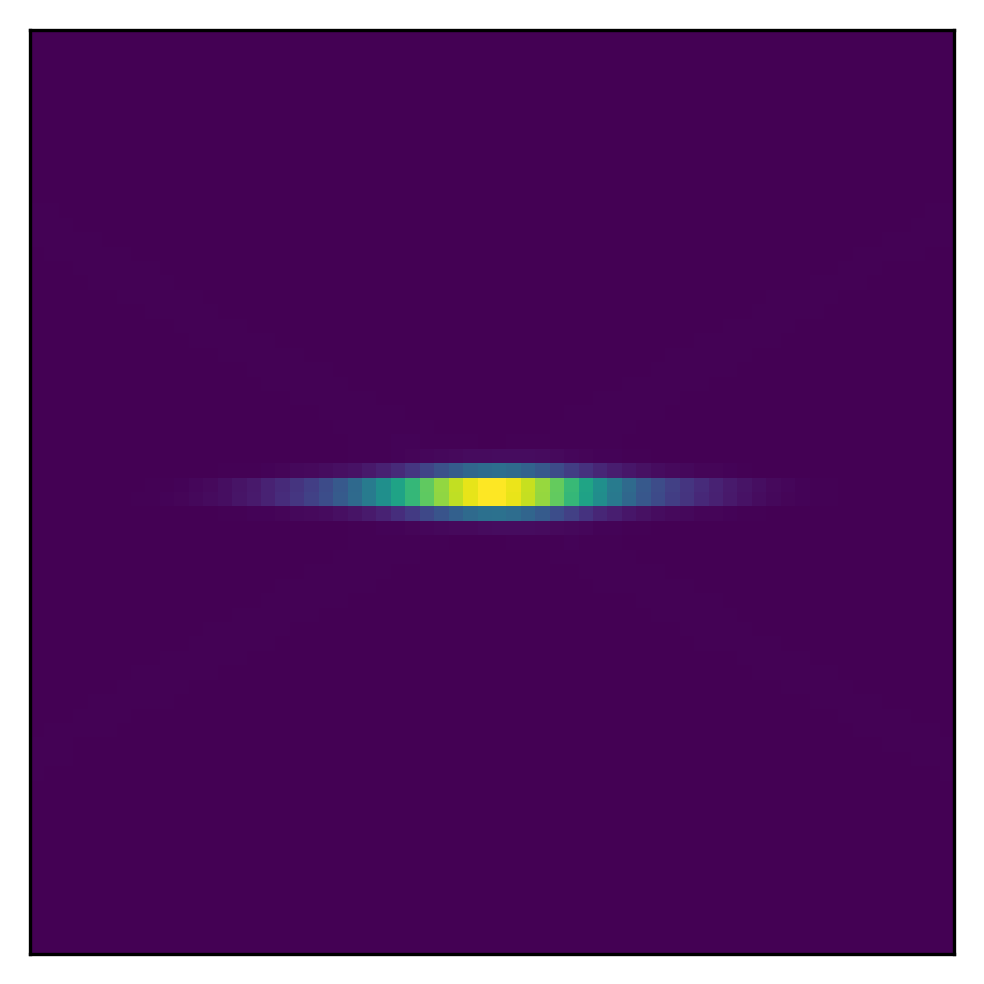}};
    \node[anchor=north, font=\tiny] at (2*\xstep,{-\ystep - 0.5*\imgh - \xlabsep}) {$\partial u/\partial x$};
    \node[rotate=90, anchor=south, font=\tiny] at ({2*\xstep - 0.5*\imgw - \ylabsep},-\ystep) {$\partial^2 u/\partial x^2$};

    \node at (3*\xstep,-\ystep) {\includegraphics[width=\imgw,height=\imgh]{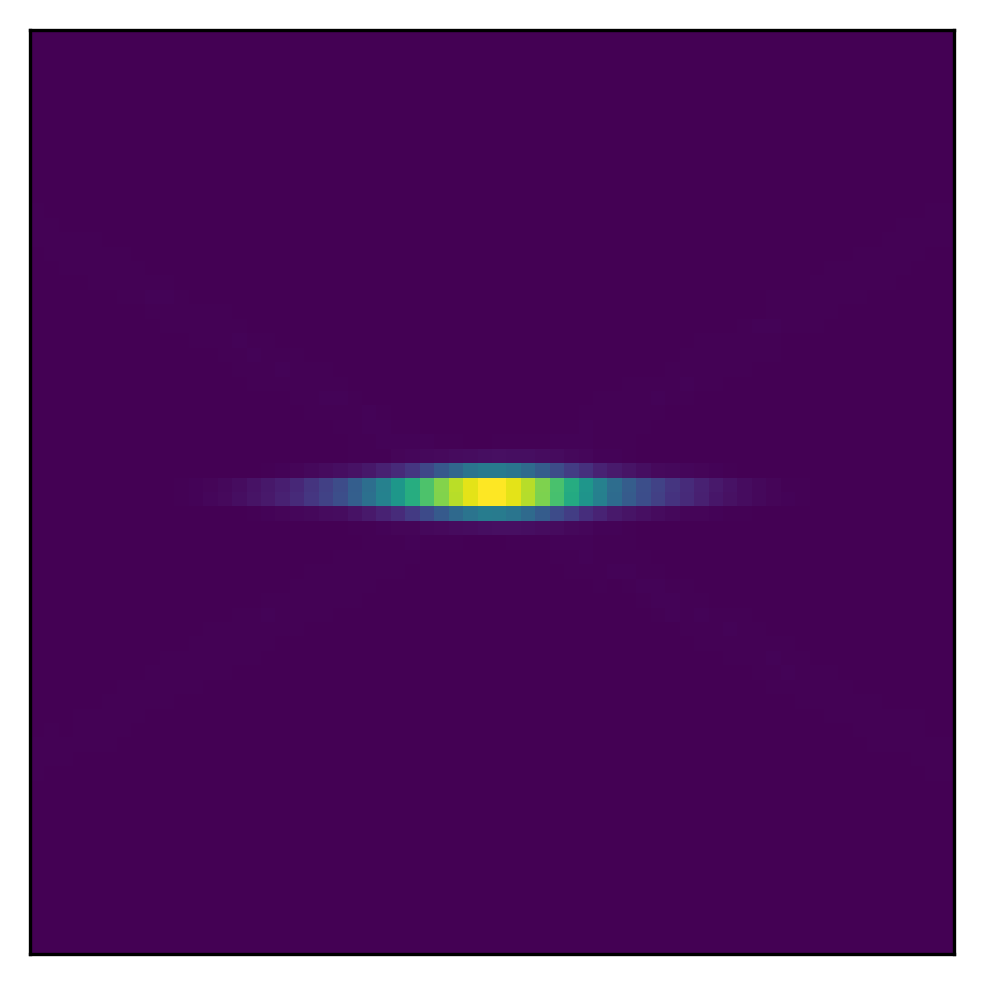}};
    \node[anchor=north, font=\tiny] at (3*\xstep,{-\ystep - 0.5*\imgh - \xlabsep}) {$\partial u/\partial x$};
    \node[rotate=90, anchor=south, font=\tiny] at ({3*\xstep - 0.5*\imgw - \ylabsep},-\ystep) {$\partial^2 u/\partial x^2$};

    \end{tikzpicture}
        \caption{Summary statistic over $2000$ samples in Case 2. First column: clean samples. Second column: degraded samples used during training. Third column: FM's generated samples. Fourth column: PDFM's generated samples.}
    \label{fig:case2_batch}
\end{figure}

\begin{figure}[htb]
    \centering
    \begin{tikzpicture}
        \node[anchor=center, rotate=90] at (0,0) {\textbf{Train}};
        \node[anchor=west] at (0.35,0)
        {\includegraphics[width=0.95\linewidth]{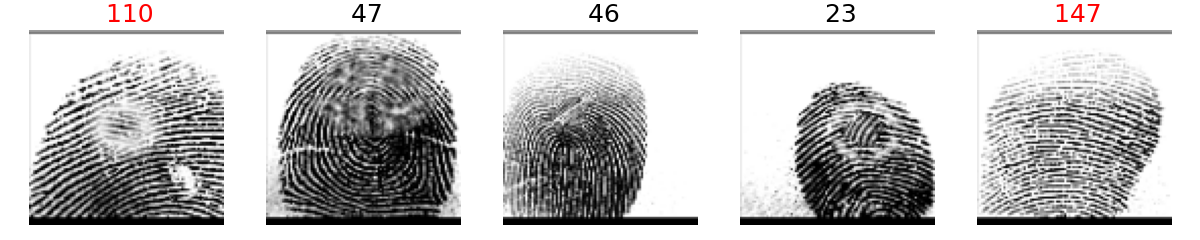}};

        \node[anchor=center, rotate=90] at (0,-2.2) {\textbf{FM}};
        \node[anchor=west] at (0.35,-2.2)
        {\includegraphics[width=0.95\linewidth]{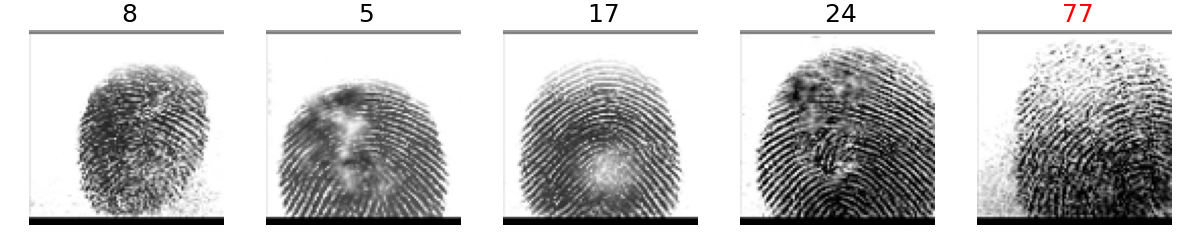}};

        \node[anchor=center, rotate=90] at (0,-4.4) {\textbf{FM-RE}};
        \node[anchor=west] at (0.35,-4.4)
        {\includegraphics[width=0.95\linewidth]{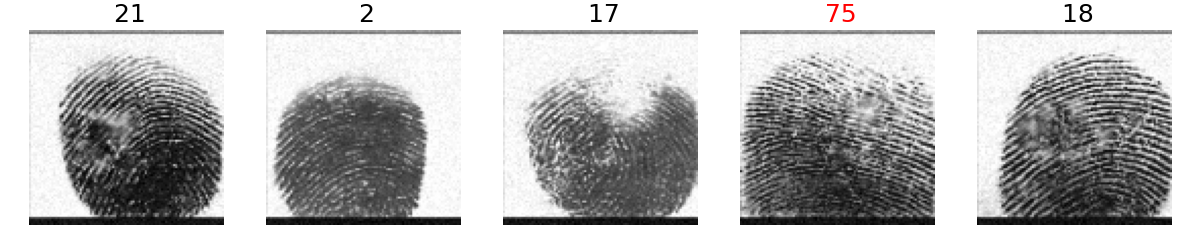}};

        \node[anchor=center, rotate=90] at (0,-6.6) {\textbf{PDFM}};
        \node[anchor=west] at (0.35,-6.6)
        {\includegraphics[width=0.95\linewidth]{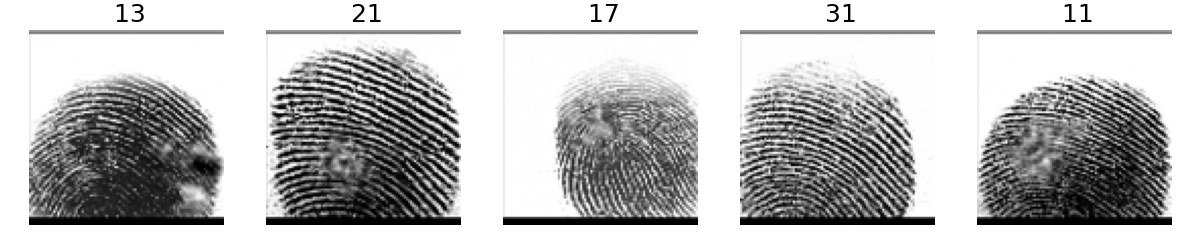}};
    \end{tikzpicture}
    \caption{Generated fingerprint samples from different methods. The title of each sample reports its number of connected components, with infeasible samples highlighted in red.}
    \label{fig:fingerprint}
\end{figure}

\begin{figure}
    \centering
    \includegraphics[width=1\linewidth]{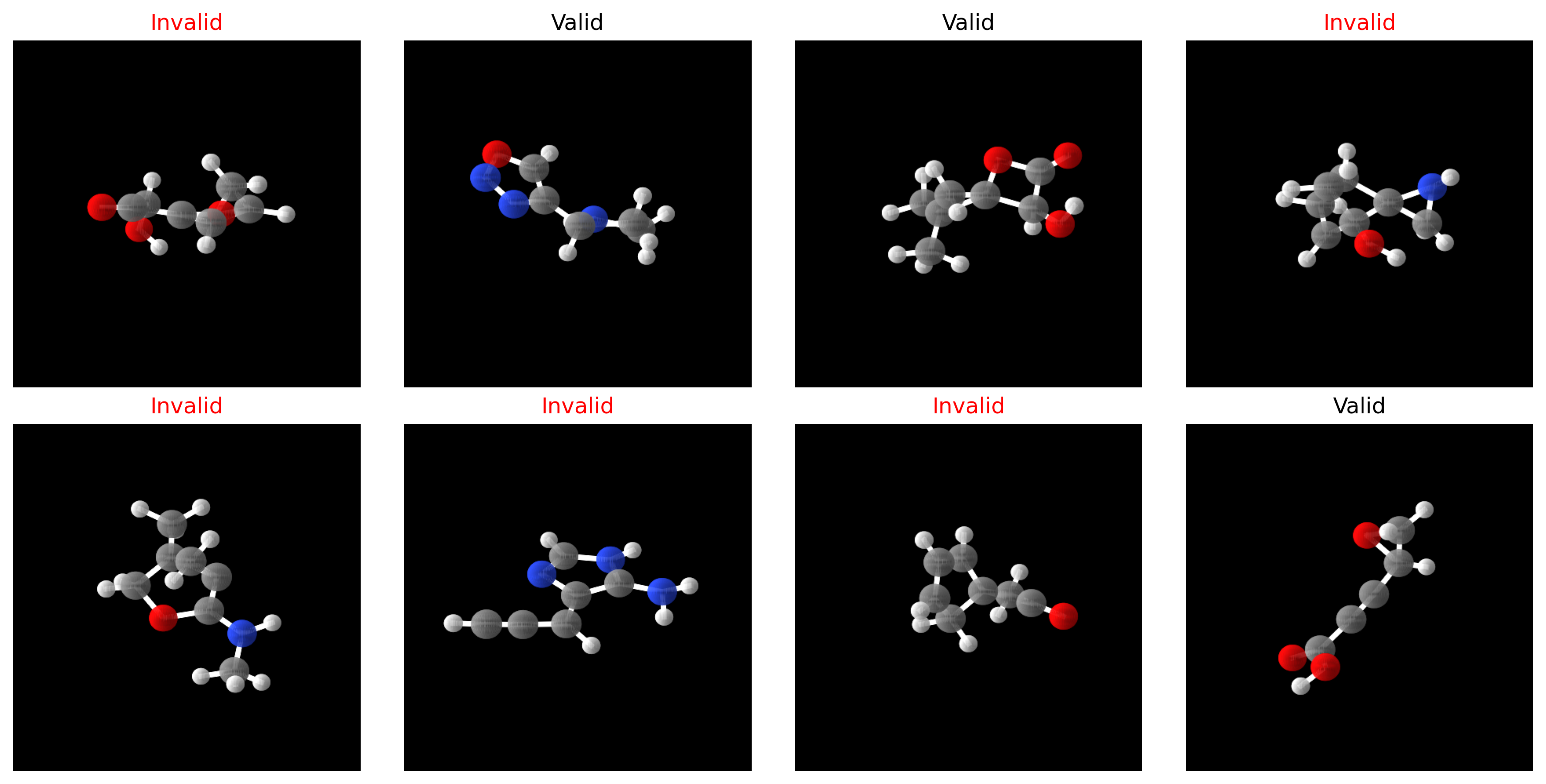}
    \caption{Visualization of the $3$-D molecules generated by FM. The titles indicate whether the generated sample satisfies the molecule-stability constraint.}
    \label{fig:FM_mole}
\end{figure}

\begin{figure}
    \centering
    \includegraphics[width=1\linewidth]{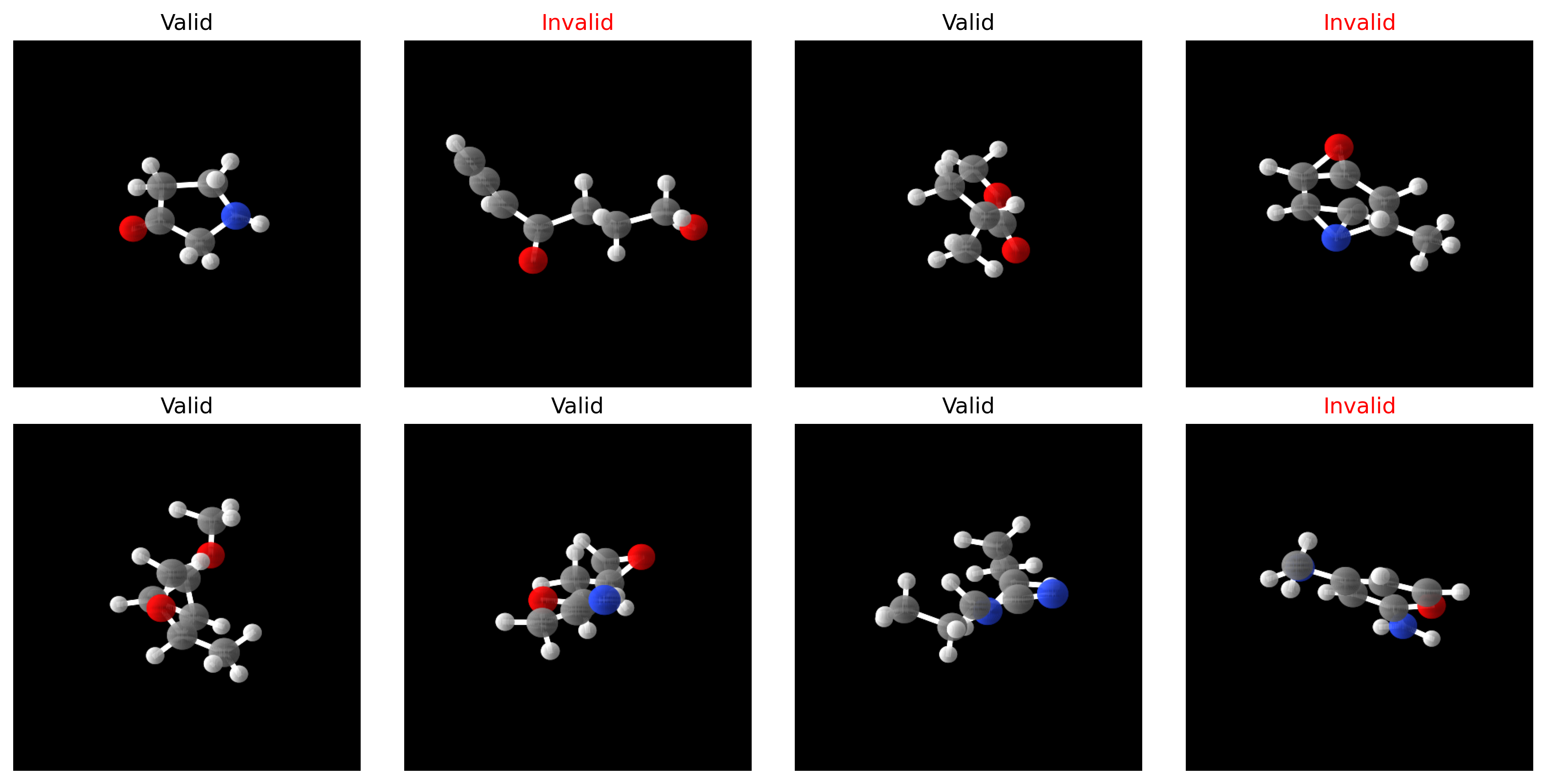}
    \caption{Visualization of the $3$-D molecules generated by PDFM. The titles indicate whether the generated sample satisfies the molecule-stability constraint.}
    \label{fig:PDFM_mole}
\end{figure}

~
\clearpage
\section*{NeurIPS Paper Checklist}


\begin{enumerate}

\item {\bf Claims}
    \item[] Question: Do the main claims made in the abstract and introduction accurately reflect the paper's contributions and scope?
    \item[] Answer: \answerYes{} 
    \item[] Justification: The theoretical and empirical claims in the abstract match the content in the main text.
    \item[] Guidelines:
    \begin{itemize}
        \item The answer \answerNA{} means that the abstract and introduction do not include the claims made in the paper.
        \item The abstract and/or introduction should clearly state the claims made, including the contributions made in the paper and important assumptions and limitations. A \answerNo{} or \answerNA{} answer to this question will not be perceived well by the reviewers. 
        \item The claims made should match theoretical and experimental results, and reflect how much the results can be expected to generalize to other settings. 
        \item It is fine to include aspirational goals as motivation as long as it is clear that these goals are not attained by the paper. 
    \end{itemize}

\item {\bf Limitations}
    \item[] Question: Does the paper discuss the limitations of the work performed by the authors?
    \item[] Answer: \answerYes{}{} 
    \item[] Justification: Limitations are discussed in detail in Appendix~\ref{app:discussion}.
    \item[] Guidelines:
    \begin{itemize}
        \item The answer \answerNA{} means that the paper has no limitation while the answer \answerNo{} means that the paper has limitations, but those are not discussed in the paper. 
        \item The authors are encouraged to create a separate ``Limitations'' section in their paper.
        \item The paper should point out any strong assumptions and how robust the results are to violations of these assumptions (e.g., independence assumptions, noiseless settings, model well-specification, asymptotic approximations only holding locally). The authors should reflect on how these assumptions might be violated in practice and what the implications would be.
        \item The authors should reflect on the scope of the claims made, e.g., if the approach was only tested on a few datasets or with a few runs. In general, empirical results often depend on implicit assumptions, which should be articulated.
        \item The authors should reflect on the factors that influence the performance of the approach. For example, a facial recognition algorithm may perform poorly when image resolution is low or images are taken in low lighting. Or a speech-to-text system might not be used reliably to provide closed captions for online lectures because it fails to handle technical jargon.
        \item The authors should discuss the computational efficiency of the proposed algorithms and how they scale with dataset size.
        \item If applicable, the authors should discuss possible limitations of their approach to address problems of privacy and fairness.
        \item While the authors might fear that complete honesty about limitations might be used by reviewers as grounds for rejection, a worse outcome might be that reviewers discover limitations that aren't acknowledged in the paper. The authors should use their best judgment and recognize that individual actions in favor of transparency play an important role in developing norms that preserve the integrity of the community. Reviewers will be specifically instructed to not penalize honesty concerning limitations.
    \end{itemize}
    
\item {\bf Theory assumptions and proofs}
    \item[] Question: For each theoretical result, does the paper provide the full set of assumptions and a complete (and correct) proof?
    \item[] Answer:\answerYes{}
    \item[] Justification: The assumptions are provided in the main text, with detailed proofs provided in the appendix.
    \item[] Guidelines:
    \begin{itemize}
        \item The answer \answerNA{} means that the paper does not include theoretical results. 
        \item All the theorems, formulas, and proofs in the paper should be numbered and cross-referenced.
        \item All assumptions should be clearly stated or referenced in the statement of any theorems.
        \item The proofs can either appear in the main paper or the supplemental material, but if they appear in the supplemental material, the authors are encouraged to provide a short proof sketch to provide intuition. 
        \item Inversely, any informal proof provided in the core of the paper should be complemented by formal proofs provided in appendix or supplemental material.
        \item Theorems and Lemmas that the proof relies upon should be properly referenced. 
    \end{itemize}
    \item {\bf Experimental result reproducibility}
    \item[] Question: Does the paper fully disclose all the information needed to reproduce the main experimental results of the paper to the extent that it affects the main claims and/or conclusions of the paper (regardless of whether the code and data are provided or not)?
    \item[] Answer: \answerYes{} 
\item[] Justification: The code is released.
 \item[] Guidelines:
    \begin{itemize}
        \item The answer \answerNA{} means that the paper does not include experiments.
        \item If the paper includes experiments, a \answerNo{} answer to this question will not be perceived well by the reviewers: Making the paper reproducible is important, regardless of whether the code and data are provided or not.
        \item If the contribution is a dataset and\slash or model, the authors should describe the steps taken to make their results reproducible or verifiable. 
        \item Depending on the contribution, reproducibility can be accomplished in various ways. For example, if the contribution is a novel architecture, describing the architecture fully might suffice, or if the contribution is a specific model and empirical evaluation, it may be necessary to either make it possible for others to replicate the model with the same dataset, or provide access to the model. In general. releasing code and data is often one good way to accomplish this, but reproducibility can also be provided via detailed instructions for how to replicate the results, access to a hosted model (e.g., in the case of a large language model), releasing of a model checkpoint, or other means that are appropriate to the research performed.
        \item While NeurIPS does not require releasing code, the conference does require all submissions to provide some reasonable avenue for reproducibility, which may depend on the nature of the contribution. For example
        \begin{enumerate}
            \item If the contribution is primarily a new algorithm, the paper should make it clear how to reproduce that algorithm.
            \item If the contribution is primarily a new model architecture, the paper should describe the architecture clearly and fully.
            \item If the contribution is a new model (e.g., a large language model), then there should either be a way to access this model for reproducing the results or a way to reproduce the model (e.g., with an open-source dataset or instructions for how to construct the dataset).
            \item We recognize that reproducibility may be tricky in some cases, in which case authors are welcome to describe the particular way they provide for reproducibility. In the case of closed-source models, it may be that access to the model is limited in some way (e.g., to registered users), but it should be possible for other researchers to have some path to reproducing or verifying the results.
        \end{enumerate}
    \end{itemize}

\item {\bf Open access to data and code}
    \item[] Question: Does the paper provide open access to the data and code, with sufficient instructions to faithfully reproduce the main experimental results, as described in supplemental material?
    \item[] Answer: \answerYes{} 
    \item[] Justification: The PDFM for implementation is provided in Alg.~\ref{alg:pd_practical}, with more details discussed in Appendix~\ref{app:discussion}. We will release the code if the paper is accepted.
    \begin{itemize}
        \item The answer \answerNA{} means that paper does not include experiments requiring code.
        \item Please see the NeurIPS code and data submission guidelines (\url{https://neurips.cc/public/guides/CodeSubmissionPolicy}) for more details.
        \item While we encourage the release of code and data, we understand that this might not be possible, so \answerNo{} is an acceptable answer. Papers cannot be rejected simply for not including code, unless this is central to the contribution (e.g., for a new open-source benchmark).
        \item The instructions should contain the exact command and environment needed to run to reproduce the results. See the NeurIPS code and data submission guidelines (\url{https://neurips.cc/public/guides/CodeSubmissionPolicy}) for more details.
        \item The authors should provide instructions on data access and preparation, including how to access the raw data, preprocessed data, intermediate data, and generated data, etc.
        \item The authors should provide scripts to reproduce all experimental results for the new proposed method and baselines. If only a subset of experiments are reproducible, they should state which ones are omitted from the script and why.
        \item At submission time, to preserve anonymity, the authors should release anonymized versions (if applicable).
        \item Providing as much information as possible in supplemental material (appended to the paper) is recommended, but including URLs to data and code is permitted.
    \end{itemize}
    \item[] Guidelines:
    \begin{itemize}
        \item The answer \answerNA{} means that the paper does not include experiments.
        \item The experimental setting should be presented in the core of the paper to a level of detail that is necessary to appreciate the results and make sense of them.
        \item The full details can be provided either with the code, in appendix, or as supplemental material.
    \end{itemize}
\item {\bf Experimental setting/details}
    \item[] Question: Does the paper specify all the training and test details (e.g., data splits, hyperparameters, how they were chosen, type of optimizer) necessary to understand the results?
    \item[] Answer: \answerYes{} 
    \item[] Justification: The settings are described in the main text as well as in the appendix.
        \item[] Guidelines:
    \begin{itemize}
        \item The answer \answerNA{} means that the paper does not include experiments.
        \item The experimental setting should be presented in the core of the paper to a level of detail that is necessary to appreciate the results and make sense of them.
        \item The full details can be provided either with the code, in appendix, or as supplemental material.
    \end{itemize}

\item {\bf Experiment statistical significance}
    \item[] Question: Does the paper report error bars suitably and correctly defined or other appropriate information about the statistical significance of the experiments?
    \item[] Answer: \answerYes{} 
    \item[] Justification: We report means and standard deviations over repeated trials in the results.
        \item[] Guidelines:
    \begin{itemize}
        \item The answer \answerNA{} means that the paper does not include experiments.
        \item The authors should answer \answerYes{} if the results are accompanied by error bars, confidence intervals, or statistical significance tests, at least for the experiments that support the main claims of the paper.
        \item The factors of variability that the error bars are capturing should be clearly stated (for example, train/test split, initialization, random drawing of some parameter, or overall run with given experimental conditions).
        \item The method for calculating the error bars should be explained (closed form formula, call to a library function, bootstrap, etc.)
        \item The assumptions made should be given (e.g., Normally distributed errors).
        \item It should be clear whether the error bar is the standard deviation or the standard error of the mean.
        \item It is OK to report 1-sigma error bars, but one should state it. The authors should preferably report a 2-sigma error bar than state that they have a 96\% CI, if the hypothesis of Normality of errors is not verified.
        \item For asymmetric distributions, the authors should be careful not to show in tables or figures symmetric error bars that would yield results that are out of range (e.g., negative error rates).
        \item If error bars are reported in tables or plots, the authors should explain in the text how they were calculated and reference the corresponding figures or tables in the text.
    \end{itemize}

\item {\bf Experiments compute resources}
    \item[] Question: For each experiment, does the paper provide sufficient information on the computer resources (type of compute workers, memory, time of execution) needed to reproduce the experiments?
    \item[] Answer: \answerYes{} 
    \item[] Justification: The computation resource information is provided in Appendix~\ref{app:exp}.
        \item[] Guidelines:
    \begin{itemize}
        \item The answer \answerNA{} means that the paper does not include experiments.
        \item The paper should indicate the type of compute workers CPU or GPU, internal cluster, or cloud provider, including relevant memory and storage.
        \item The paper should provide the amount of compute required for each of the individual experimental runs as well as estimate the total compute. 
        \item The paper should disclose whether the full research project required more compute than the experiments reported in the paper (e.g., preliminary or failed experiments that didn't make it into the paper). 
    \end{itemize}
    
\item {\bf Code of ethics}
    \item[] Question: Does the research conducted in the paper conform, in every respect, with the NeurIPS Code of Ethics \url{https://neurips.cc/public/EthicsGuidelines}?
    \item[] Answer: \answerYes{} 
    \item[] Justification: This paper conforms with the NeurIPS Code of Ethics.
        \item[] Guidelines:
    \begin{itemize}
        \item The answer \answerNA{} means that the authors have not reviewed the NeurIPS Code of Ethics.
        \item If the authors answer \answerNo, they should explain the special circumstances that require a deviation from the Code of Ethics.
        \item The authors should make sure to preserve anonymity (e.g., if there is a special consideration due to laws or regulations in their jurisdiction).
    \end{itemize}

\item {\bf Broader impacts}
    \item[] Question: Does the paper discuss both potential positive societal impacts and negative societal impacts of the work performed?
    \item[] Answer: \answerYes{} 
    \item[] Justification: The impact is discussed in the abstract, as well as in the contributions.
        \item[] Guidelines:
    \begin{itemize}
        \item The answer \answerNA{} means that there is no societal impact of the work performed.
        \item If the authors answer \answerNA{} or \answerNo, they should explain why their work has no societal impact or why the paper does not address societal impact.
        \item Examples of negative societal impacts include potential malicious or unintended uses (e.g., disinformation, generating fake profiles, surveillance), fairness considerations (e.g., deployment of technologies that could make decisions that unfairly impact specific groups), privacy considerations, and security considerations.
        \item The conference expects that many papers will be foundational research and not tied to particular applications, let alone deployments. However, if there is a direct path to any negative applications, the authors should point it out. For example, it is legitimate to point out that an improvement in the quality of generative models could be used to generate Deepfakes for disinformation. On the other hand, it is not needed to point out that a generic algorithm for optimizing neural networks could enable people to train models that generate Deepfakes faster.
        \item The authors should consider possible harms that could arise when the technology is being used as intended and functioning correctly, harms that could arise when the technology is being used as intended but gives incorrect results, and harms following from (intentional or unintentional) misuse of the technology.
        \item If there are negative societal impacts, the authors could also discuss possible mitigation strategies (e.g., gated release of models, providing defenses in addition to attacks, mechanisms for monitoring misuse, mechanisms to monitor how a system learns from feedback over time, improving the efficiency and accessibility of ML).
    \end{itemize}
    
\item {\bf Safeguards}
    \item[] Question: Does the paper describe safeguards that have been put in place for responsible release of data or models that have a high risk for misuse (e.g., pre-trained language models, image generators, or scraped datasets)?
    \item[] Answer: \answerNA{} 
    \item[] Justification:The paper poses no such risks.
        \item[] Guidelines:
    \begin{itemize}
        \item The answer \answerNA{} means that the paper poses no such risks.
        \item Released models that have a high risk for misuse or dual-use should be released with necessary safeguards to allow for controlled use of the model, for example by requiring that users adhere to usage guidelines or restrictions to access the model or implementing safety filters. 
        \item Datasets that have been scraped from the Internet could pose safety risks. The authors should describe how they avoided releasing unsafe images.
        \item We recognize that providing effective safeguards is challenging, and many papers do not require this, but we encourage authors to take this into account and make a best faith effort.
    \end{itemize}

\item {\bf Licenses for existing assets}
    \item[] Question: Are the creators or original owners of assets (e.g., code, data, models), used in the paper, properly credited and are the license and terms of use explicitly mentioned and properly respected?
    \item[] Answer: \answerYes{} 
    \item[] Justification: Related works are properly cited.
        \item[] Guidelines:
    \begin{itemize}
        \item The answer \answerNA{} means that the paper does not use existing assets.
        \item The authors should cite the original paper that produced the code package or dataset.
        \item The authors should state which version of the asset is used and, if possible, include a URL.
        \item The name of the license (e.g., CC-BY 4.0) should be included for each asset.
        \item For scraped data from a particular source (e.g., website), the copyright and terms of service of that source should be provided.
        \item If assets are released, the license, copyright information, and terms of use in the package should be provided. For popular datasets, \url{paperswithcode.com/datasets} has curated licenses for some datasets. Their licensing guide can help determine the license of a dataset.
        \item For existing datasets that are re-packaged, both the original license and the license of the derived asset (if it has changed) should be provided.
        \item If this information is not available online, the authors are encouraged to reach out to the asset's creators.
    \end{itemize}

\item {\bf New assets}
    \item[] Question: Are new assets introduced in the paper well documented and is the documentation provided alongside the assets?
    \item[] Answer: \answerNA{} 
    \item[] Justification:  Code will be released on GitHub once the paper is accepted.
        \item[] Guidelines:
    \begin{itemize}
        \item The answer \answerNA{} means that the paper does not release new assets.
        \item Researchers should communicate the details of the dataset\slash code\slash model as part of their submissions via structured templates. This includes details about training, license, limitations, etc. 
        \item The paper should discuss whether and how consent was obtained from people whose asset is used.
        \item At submission time, remember to anonymize your assets (if applicable). You can either create an anonymized URL or include an anonymized zip file.
    \end{itemize}

\item {\bf Crowdsourcing and research with human subjects}
    \item[] Question: For crowdsourcing experiments and research with human subjects, does the paper include the full text of instructions given to participants and screenshots, if applicable, as well as details about compensation (if any)? 
    \item[] Answer: \answerNA{} 
    \item[] Justification: The paper does not involve crowdsourcing nor research with human subjects.
        \item[] Guidelines:
    \begin{itemize}
        \item The answer \answerNA{} means that the paper does not involve crowdsourcing nor research with human subjects.
        \item Including this information in the supplemental material is fine, but if the main contribution of the paper involves human subjects, then as much detail as possible should be included in the main paper. 
        \item According to the NeurIPS Code of Ethics, workers involved in data collection, curation, or other labor should be paid at least the minimum wage in the country of the data collector. 
    \end{itemize}
    
\item {\bf Institutional review board (IRB) approvals or equivalent for research with human subjects}
    \item[] Question: Does the paper describe potential risks incurred by study participants, whether such risks were disclosed to the subjects, and whether Institutional Review Board (IRB) approvals (or an equivalent approval/review based on the requirements of your country or institution) were obtained?
    \item[] Answer: \answerNA{} 
    \item[] Justification: The paper does not involve crowdsourcing nor research with human subjects.
        \item[] Guidelines:
    \begin{itemize}
        \item The answer \answerNA{} means that the paper does not involve crowdsourcing nor research with human subjects.
        \item Depending on the country in which research is conducted, IRB approval (or equivalent) may be required for any human subjects research. If you obtained IRB approval, you should clearly state this in the paper. 
        \item We recognize that the procedures for this may vary significantly between institutions and locations, and we expect authors to adhere to the NeurIPS Code of Ethics and the guidelines for their institution. 
        \item For initial submissions, do not include any information that would break anonymity (if applicable), such as the institution conducting the review.
    \end{itemize}

\item {\bf Declaration of LLM usage}
    \item[] Question: Does the paper describe the usage of LLMs if it is an important, original, or non-standard component of the core methods in this research? Note that if the LLM is used only for writing, editing, or formatting purposes and does \emph{not} impact the core methodology, scientific rigor, or originality of the research, declaration is not required.
    \item[] Answer: \answerNA{} 
    \item[] Justification: The core method development in this research does not involve LLMs as any important, original, or non-standard components.
        \item[] Guidelines:
    \begin{itemize}
        \item The answer \answerNA{} means that the core method development in this research does not involve LLMs as any important, original, or non-standard components.
        \item Please refer to our LLM policy in the NeurIPS handbook for what should or should not be described.
    \end{itemize}

\end{enumerate}

\end{document}